\documentclass{article} % For LaTeX2e
\usepackage{nips15submit_e,times}
\usepackage{hyperref}
\usepackage{url}

\usepackage{graphicx}
\usepackage{amsmath,amssymb,amsthm}
\usepackage{algorithm,algorithmic}
\usepackage{verbatim}

\title{Stop Wasting My Gradients: Practical SVRG}

\author{
Reza Babanezhad\textsuperscript{1}, Mohamed Osama Ahmed\textsuperscript{1}, Alim Virani\textsuperscript{2}, Mark Schmidt\textsuperscript{1}\\
Department of Computer Science\\
University of British Columbia\\
\textsuperscript{1}\{rezababa, moahmed, schmidtm\}@cs.ubc.ca,\textsuperscript{2}alim.virani@gmail.com 
\And
Jakub Kone{\v{c}}n{\'y}\\
School of Mathematics\\
University of Edinburgh\\
kubo.konecny@gmail.com
\And
Scott Sallinen\\
Department of Electrical and Computer Engineering\\
University of British Columbia\\
scotts@ece.ubc.ca 
}

% The \author macro works with any number of authors. There are two commands
% used to separate the names and addresses of multiple authors: \And and \AND.
%
% Using \And between authors leaves it to \LaTeX{} to determine where to break
% the lines. Using \AND forces a linebreak at that point. So, if \LaTeX{}
% puts 3 of 4 authors names on the first line, and the last on the second
% line, try using \AND instead of \And before the third author name.

\newtheorem{thm}{Proposition}
\newtheorem{lem}{Lemma}
\def\norm#1{\|#1\|}
\def\batch{\mathcal{B}}

\definecolor{blu}{rgb}{0,0,1}

\def\ex#1{\mathbb E\left[#1\right]}
\newcommand{\argmin}[1]{\mathop{\hbox{argmin}}_{#1}}

\def\batch{\mathcal{B}}
\newcommand{\prox}{\mathop{\rm prox}}

\def\x{x^*}
\def\la{\bar{L}}
\def\lr{\bar{L}_r}
\def\xt{x_t}
\def\tx{x^s}

\def\mb{\mu^s}
\def\tm{\mu^s}

\def\ex#1{\mathbb E\left[#1\right]}

\newcommand{\Exp}{\mathbb{E}}

\newcommand\numberthis{\addtocounter{equation}{1}\tag{\theequation}}

\nipsfinalcopy % Uncomment for camera-ready version

\begin{document}

\maketitle

\begin{abstract}
We present and analyze several strategies for improving the performance of stochastic variance-reduced gradient (SVRG) methods. We first show that the convergence rate of these methods can be preserved under a decreasing sequence of errors in the control variate, and use this to derive variants of SVRG that use growing-batch strategies to reduce the number of gradient calculations required in the early iterations. We further (i) show how to exploit support vectors to  reduce the number of gradient computations in the later iterations, (ii) prove that the commonly--used regularized SVRG iteration is justified and improves the convergence rate, (iii) consider alternate mini-batch selection strategies, and (iv) consider the generalization error of the method.
%The last three years have seen tremendous advances in the problem of minimizing
%a finite sum of smooth functions, a core optimization problem at the heart
%of many machine learning models. These advances are due to the development
%of linearly-convergent methods which only process one training example at a time.
%However, many of these these methods require a memory of previous gradient
%values and this makes them infeasible for many practical problems. Several authors have shown
%that this memory requirement can be relaxed, at the cost of evaluating two gradients per iteration
%and using occasional full passes through the data. 
%Stochastic variance-reduced gradient (SVRG) methods achieve a linear convergence rate
%However, these extra gradient evaluations
%significantly slow down the algorithm in the early iterations. In this work, we show that a fast
%linear convergence rate can be achieved even if the full-gradient evaluations are only done approximately,
%and we further propose a variant that typically only requires one gradient evaluation and has a faster convergence rate when far from the solution.
%We further show how identifying support vectors can further reduce the number of gradient evaluations, and 
%show that an implementation trick that is commonly-used for handling sparse datasets actually improves the convergence rate.
%The latter analysis also suggests a new more effective strategy for constructing mini-batches. Finally,
%while previous works consider the training objective, we also give a result showing the method has
%appealing generalization error properties.
\end{abstract}

\section{Introduction}

We consider the problem of optimizing the average of a finite but large sum of smooth functions,
\begin{equation}
\label{eq:one}
 \min_{x\in \mathbb R^d} f(x)=\frac 1 n \sum_{i=1}^{n} f_{i}(x).
\end{equation}
A huge proportion of the model-fitting procedures in machine learning can be mapped to this problem. This includes classic models like
least squares and logistic regression but also includes more advanced methods like conditional random fields and deep neural network models.
In the high-dimensional setting (large $d$), the traditional approaches for solving~\eqref{eq:one} are: \emph{full gradient} (FG) methods which have linear convergence rates but need to evaluate the gradient $f_i$ for all $n$ examples on every iteration, and \emph{stochastic gradient} (SG) methods which make rapid initial progress as they only use a single gradient on each iteration but ultimately have slower sublinear convergence rates.

Le Roux et al. \cite{roux2012stochastic} proposed the first general method, \emph{stochastic average gradient} (SAG), that only considers one training example on each iteration but still achieves a linear convergence rate. Other methods have subsequently been shown to have this property~\cite{schwartz12,mairal2013surrogate,defazio2014saga}, but these all
%such as stochastic dual coordinate ascent (SDCA)~\cite{schwartz12}, incremental surrogate optimization (MISO)~\cite{mairal2013surrogate}, and SAGA~\cite{defazio2014saga}. 
%However, these methods 
require storing a previous evaluation of the gradient $f_i'$ or the dual variables for each $i$. For many objectives this only requires $O(n)$ space, but for general problems this requires $O(np)$ space making them impractical.

Recently, several methods have been proposed with similar convergence rates to SAG but without the memory requirements~\cite{mahdavi2013mixedGrad,johnson2013accelerating,zhang2013linear,konevcny2013semi}. They are known as \emph{mixed gradient}, \emph{stochastic variance-reduced gradient} (SVRG), and \emph{semi-stochastic gradient} methods (we will use SVRG). We give a canonical SVRG algorithm in the next section, but the salient features of these methods are that they evaluate two gradients on each iteration and occasionally must compute the gradient on all examples. SVRG methods often dramatically outperform classic FG and SG methods, but these extra evaluations mean that SVRG is slower than SG methods in the important early iterations. They also mean that SVRG methods are typically slower than memory-based methods like SAG.

In this work we first show that SVRG is robust to inexact calculation of the full gradients it requires (\S\ref{sec:error}), provided the accuracy increases over time. We use this to explore growing-batch strategies that require fewer gradient evaluations when far from the solution, and we propose a mixed SG/SVRG method that may improve performance in the early iterations (\S\ref{sec:batching}). We next explore  using support vectors to  reduce the number of gradients required when close to the solution (\S\ref{sec:support}), give a justification for the regularized SVRG update that is commonly used in practice (\S\ref{sec:exact}), consider alternative mini-batch strategies (\S\ref{sec:mini}), and finally consider the generalization error of the method (\S\ref{sec:test}).

\section{Notation and SVRG Algorithm}

SVRG assumes $f$ is $\mu$-strongly convex, each $f_i$ is convex, and each gradient $f_i'$ is Lipschitz-continuous with constant $L$. The method begins with an initial estimate $x^0$, sets $x_0 = x^0$ and then generates a sequence of iterates $x_t$ using
\begin{equation}
\label{eq:SVRG}
x_t = x_{t-1} - \eta(f_{i_t}'(x_{t-1}) - f_{i_t}'(x^s) + \mu^s),
\end{equation}
where $\eta$ is the positive step size, we set $\mu^s = f'(x^s)$, and $i_t$ is chosen uniformly from $\{1,2,\dots,n\}$. After every $m$ steps, we set $x^{s+1} =x_t$ for a random $t \in \{1,\dots,m\}$, and we reset $t=0$ with $x_0 = x^{s+1}$. 

To analyze the convergence rate of SVRG, we will find it convenient to define the function
\[
\rho(a,b) = \frac{1}{1-2\eta a} \left(\frac1{m\mu\eta}+ 2b\eta\right).
\]
as it appears repeatedly in our results. We will use $\rho(a)$ to indicate the value of $\rho(a,b)$ when $a=b$, and we will simply use $\rho$ for the special case when $a=b=L$. Johnson \& Zhang~\cite{johnson2013accelerating} show that if $\eta$ and $m$ are chosen such that $ 0< \rho < 1$, the algorithm achieves a linear convergence rate of the form
\[
 \mathbb{E}[f(x^{s+1}) - f(x^*) ] \leq \rho \mathbb{E}[f(x^s)- f(x^*)],
\]
where $x^*$ is the optimal solution.
This convergence rate is very fast for appropriate $\eta$ and $m$. While this result relies on constants we may not know in general, practical choices with good empirical performance include setting $m=n$, $\eta = 1/L $, and using $x^{s+1} = x_m$ rather than a random iterate.

Unfortunately, the SVRG algorithm requires $2m + n$ gradient evaluations for every $m$ iterations of~\eqref{eq:SVRG}, since updating $x_t$ requires two gradient evaluations and computing $\mu^s$ require $n$ gradient evaluations. We can reduce this to $m+n$ if we store the gradients $f_i'(x^s)$, but this is not practical in most applications.
Thus, SVRG requires many more gradient evaluations than classic SG iterations of memory-based methods like SAG.

\section{SVRG with Error}
\label{sec:error}

We first give a result for the SVRG method where we assume that $\mu^s$ is equal to $f'(x^s)$ up to some error $e^s$. This is in the spirit of the analysis of~\cite{SchmidtLeRouxBach11}, who analyze FG methods under similar assumptions. We assume that $\norm{x_t-x^*} \leq Z$ for all $t$, which has been used in related work~\cite{kwok2009asg} and is reasonable because of the coercity implied by strong-convexity.
\begin{thm}
\label{thm:thm1}
If $\mu^s = f'(x^s) + e^s$ and we set $\eta$ and $m$ so that $\rho < 1$, then the SVRG algorithm~\eqref{eq:SVRG} with $x^{s+1}$ chosen randomly from $\{x_1,x_2,\dots,x_m\}$ satisfies
\[
\mathbb{E}[f(x^{s+1})-f(x^{*})] \leq \rho \mathbb E[f(x^s)-f(x^{*})] +  \frac {Z \mathbb E \|e^s\|+\eta \mathbb E\|e^s\|^{2}}{1-2\eta L}.
\]
\end{thm}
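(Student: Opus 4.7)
The plan is to follow the original Johnson--Zhang proof of SVRG, tracking how the error term $e^s$ propagates through the analysis. Define the stochastic update direction $v_t = f'_{i_t}(x_{t-1}) - f'_{i_t}(x^s) + \mu^s$, and observe that conditional on $x_{t-1}$ and $x^s$, we have $\mathbb{E}[v_t] = f'(x_{t-1}) + e^s$ since $\mu^s - f'(x^s) = e^s$. Expanding $\|x_t - x^*\|^2 = \|x_{t-1} - x^* - \eta v_t\|^2$ and taking the conditional expectation produces the usual linear term $-2\eta\langle f'(x_{t-1}), x_{t-1}-x^*\rangle$ plus an extra cross-term $-2\eta\langle e^s, x_{t-1}-x^*\rangle$, which I control by Cauchy--Schwarz together with the assumption $\|x_{t-1}-x^*\|\le Z$ to get a $2\eta Z\|e^s\|$ penalty.

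The next step is to bound the second moment $\mathbb{E}\|v_t\|^2$. I would split $v_t$ as $(f'_{i_t}(x_{t-1})-f'_{i_t}(x^*)) - (f'_{i_t}(x^s) - f'_{i_t}(x^*) - \mu^s)$ and apply $\|a-b\|^2 \le 2\|a\|^2 + 2\|b\|^2$. The first squared term is handled as in Johnson--Zhang via the smoothness/convexity inequality $\tfrac{1}{n}\sum_i\|f'_i(x)-f'_i(x^*)\|^2 \le 2L(f(x)-f(x^*))$. For the second, using the variance decomposition $\mathbb{E}\|X - y\|^2 = \mathbb{E}\|X-\mathbb{E}X\|^2 + \|\mathbb{E}X - y\|^2$ with $X = f'_{i_t}(x^s)-f'_{i_t}(x^*)$ and $y=\mu^s = f'(x^s)+e^s$, and using $\mathbb{E}X = f'(x^s)$ (since $f'(x^*)=0$), isolates a clean error term $\|e^s\|^2$ plus a standard piece bounded by $4L(f(x^s)-f(x^*))$.

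Combining these into the recursion gives, after rearranging,
\[
\mathbb{E}\|x_t-x^*\|^2 \le \|x_{t-1}-x^*\|^2 - 2\eta(1-2\eta L)(f(x_{t-1})-f(x^*)) + 4\eta^2 L(f(x^s)-f(x^*)) + 2\eta Z\|e^s\| + 2\eta^2\|e^s\|^2.
\]
Summing over $t=1,\dots,m$ telescopes the $\|x_t-x^*\|^2$ terms, after which I drop the nonnegative $\mathbb{E}\|x_m-x^*\|^2$ on the left and use strong convexity $\|x^s-x^*\|^2 \le \tfrac{2}{\mu}(f(x^s)-f(x^*))$ on the right. Dividing by $2m\eta(1-2\eta L)$ and using that $x^{s+1}$ is chosen uniformly from $\{x_1,\dots,x_m\}$ so its objective value equals the average, yields exactly the factor $\rho = \tfrac{1}{1-2\eta L}(\tfrac{1}{m\mu\eta}+2\eta L)$ multiplying $(f(x^s)-f(x^*))$, plus the error residual $(Z\|e^s\|+\eta\|e^s\|^2)/(1-2\eta L)$. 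A final outer expectation over $x^s$ gives the stated inequality.

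The only delicate step is the second-moment bound: naively pulling the error through $\|v_t\|^2$ gives a stray cross-term that is not absorbable into the $Z$-term. The variance decomposition trick is what avoids this, producing only a squared-error contribution and keeping the first-order $Z\|e^s\|$ penalty confined to the inner-product expansion, which is why both an $\|e^s\|$ and an $\|e^s\|^2$ term appear in the final bound.
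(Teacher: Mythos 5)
Your proposal is correct and follows essentially the same route as the paper's Appendix~A argument: the same decomposition of $\nu_t$, the same smoothness lemma bounding $\frac{1}{n}\sum_i\|f_i'(x)-f_i'(x^*)\|^2$ by $2L[f(x)-f(x^*)]$, the same Cauchy--Schwarz-plus-$Z$ treatment of the first-order error term, and the same summation/strong-convexity finish; your variance-decomposition step is just a repackaging of the paper's observation that the cross-term with $e^s$ has zero expectation, yielding the identical $2\|e^s\|^2$ contribution. No gaps.
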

We give the proof in Appendix~A. This result implies that SVRG does not need a very accurate approximation of $f'(x^s)$ in the crucial early iterations since the first term in the bound will dominate. Further, this result implies that we can maintain the \emph{exact} convergence rate of SVRG as long as the errors $e^s$ decrease at an appropriate rate. For example, we obtain the same convergence rate provided that $\max\{\mathbb{E}\norm{e^s},\mathbb{E}\norm{e^s}^2\} \leq \gamma\tilde{\rho}^s$ for any $\gamma \geq 0$ and some $\tilde{\rho} < \rho$. Further, we still obtain a linear convergence rate as long as $\norm{e^s}$ converges to zero with a linear convergence rate.

\subsection{Non-Uniform Sampling}

Xiao \& Zhang~\cite{xiao2014proximal} show that non-uniform sampling (NUS) improves the performance of SVRG. They assume each $f_i'$ is $L_i$-Lipschitz continuous, and sample $i_t = i$ with probability $L_i/n\bar{L}$ where $\bar{L}=\frac 1 n \sum_{i=1}^n L_i$. The iteration is then changed to
\[
x_{t}=x_{t-1}-\eta\left(\frac {\bar{L}} {L_{i_t}}[f_{i_{t}}^{'}(x_{t-1})-f_{i_{t}}^{'}(\tilde{x})]+\mu^s\right),
\]
%which maintains that the search direction is an unbiased approximation of the gradient provided that $\mu^s$ is the exact gradient $f'(x^s)$. 
which maintains that the search direction is unbiased.
In Appendix~A, we show that if $\mu^s$ is computed with error for this algorithm and if we set $\eta$ and $m$ so that $0 < \rho(\bar{L}) < 1$,
then we have a convergence rate of
\[
\mathbb{E}[f(x^{s+1})-f(x^{*})] \leq \rho(\bar{L}) \mathbb E[f(x^s)-f(x^{*})] + \frac {Z \mathbb E \|e^s\|+\eta \mathbb E\|e^s\|^{2}}{1-2\eta \bar{L}},
\]
which can be faster since the average $\bar{L}$ may be much smaller than the maximum value $L$.

\begin{algorithm}[tb]
 \caption{Batching SVRG}
   \label{alg:SVRGB}
 \begin{algorithmic}
 \STATE {\bfseries Input:} initial vector $x^0$, update frequency {\it m}, learning rate $\eta$. \FOR{$s=0,1,2,\dots$} 
	\STATE Choose batch size $|\batch^s|$
\STATE $\batch^s$ = $|\batch^s|$ elements sampled without replacement from $\{1,2,\dots,n\}$.
\STATE $\mu^s=\frac{1}{|\batch^s|}\sum_{i\in\batch^s}f_{i}^{'}(x^s)$\\
\STATE $x_{0}$=$x^s$\\
  \FOR{$t=1,2,\dots,m$} 
           \STATE Randomly pick $i_{t} \in {1,\dots, n}$\\
           \STATE $x_{t}=x_{t-1}-\eta(f_{i_{t}}^{'}(x_{t-1})-f_{i_{t}}^{'}(x^s)+\mu^s)\hfill(*)$\\
  \ENDFOR
 \STATE {\bfseries option I:} set $x^{s+1}=x_{m}$\\
\STATE {\bfseries option II:} set $x^{s+1}=x_{t}$ for random $t \in \{1,\dots, m\}$\\
\ENDFOR
\end{algorithmic}
\end{algorithm}

\subsection{SVRG with Batching}

There are many ways we could allow an error in the calculation of $\mu^s$ to speed up the algorithm. For example, if evaluating each $f_i'$ involves solving an optimization problem, then we could solve this optimization problem inexactly. For example, if we are fitting a graphical model with an iterative approximate inference method, we can terminate the iterations early to save time.

When the $f_i$ are simple but $n$ is large, a natural way to approximate $\mu^s$ is with a subset (or `batch') of training examples $\batch^s$ (chosen \emph{without} replacement),
\[
\mu^s = \frac{1}{|\batch^s|}\sum_{i\in\batch^s}f_i'(x^s).
\]
%This is justified because when we are far from the solution, most $f_i'(x^s)$ likely point in directions of progress. 
The batch size $|\batch^s|$ controls the error in the approximation, and we can drive the error to zero by increasing it to $n$. Existing SVRG methods correspond to the special case where $|\batch^s|=n$ for all $s$. 

Algorithm~\ref{alg:SVRGB} gives pseudo-code for an SVRG implementation that uses this sub-sampling strategy. 
If we assume that the sample variance of the norms of the gradients is bounded by $S^2$ for all $x^s$,
\[
\frac{1}{n-1}\sum_{i=1}^n\left[\norm{f_i'(x^s)}^2 - \norm{f'(x^s)}^2\right] \leq S^2,
\]
then we have that~\cite[Chapter~2]{lohr2009sampling}
\[ 
\mathbb{E}\norm{e^s}^2 \leq \frac{n-|\batch^s|}{n|\batch^s|}S^2.
\]
So if we want $\mathbb{E}\norm {e^s}^2 \leq \gamma\tilde{\rho}^{2s}$, where $\gamma \geq 0$ is a constant for some $\tilde{\rho} < 1$,  we need
 \begin{equation}
\label{eq:Bs}
 |\batch^s| \geq \frac{nS^2}{S^2 + n\gamma\tilde{\rho}^{2s}}. 
\end{equation}
If the batch size satisfies the above condition then 
\begin{align*}
 Z\mathbb E \|e^{s-1}\|+\eta \mathbb E\|e^{s-1}\|^{2} & \leq Z\sqrt{\gamma}\tilde{\rho}^s+\eta \gamma\tilde{\rho}^{2s}\\
%& \leq 2\max\{Z\sqrt{\gamma}\tilde{\rho}^s,\eta \gamma\tilde{\rho}^{2s}\}\\
& \leq 2\max\{Z\sqrt{\gamma},\eta\gamma\tilde{\rho}\}\tilde{\rho}^s,
\end{align*}
and \emph{the convergence rate of SVRG is unchanged compared to using the full batch} on all iterations. 

The condition~\eqref{eq:Bs}  guarantees a linear convergence rate under any exponentially-increasing sequence of batch sizes, the strategy suggested by~\cite{friedlander2011hybrid} for classic SG methods.
 However, a tedious calculation shows that~\eqref{eq:Bs} has an inflection point at $s = \log (S^2/\gamma n)/2\log(1/\tilde{\rho})$, corresponding to $|\batch^s| = \frac n2$. This was previously observed empirically~\cite[Figure~3]{aravkin2012robust}, and occurs because we are sampling without replacement. This transition means we don't need to increase the batch size exponentially.
% Because of this transition, it is only necessary to increase the batch size exponentially until we use half the data to compute $\mu^s$, and for larger values of $s$ the batch-size can increase more slowly. 

\section{Mixed SG and SVRG Method}
\label{sec:batching}

An approximate $\mu^s$ can drastically reduce the computational cost of the SVRG algorithm, but does not affect the $2$ in the $2m+n$ gradients required for $m$ SVRG iterations. This factor of $2$ is significant in the early iterations, since this is when stochastic methods make the most progress and when we typically see the largest reduction in the \emph{test} error.

To reduce this factor, we can consider a \emph{mixed} strategy: if $i_t$ is in the batch $\batch^s$ then perform an SVRG iteration, but if $i_t$ is not in the current batch then use a classic SG iteration. We illustrate this modification in Algorithm~\ref{alg:SVRSGD}. This modification allows the algorithm to take advantage of the rapid initial progress of SG, since it  predominantly uses SG iterations when far from the solution. Below we give a convergence rate for this mixed strategy.

\begin{algorithm}[tb]
 \caption{Mixed SVRG and SG Method}
  \label{alg:SVRSGD}
  \begin{algorithmic}
\STATE Replace (*) in Algorithm 1 with the following lines: 
 \IF {$f_{i_t} \in \batch^s$ }
           \STATE $x_{t}=x_{t-1}-\eta(f_{i_{t}}^{'}(x_{t-1})-f_{i_{t}}^{'}(x^s)+\mu^s)$
 \ELSE
           \STATE $x_{t}=x_{t-1}-\eta f_{i_{t}}^{'}(x_{t-1})$
 \ENDIF
  
\end{algorithmic}
\end{algorithm}

\begin{thm}
\label{thm:thm2}
Let $\mu^s = f'(x^s) + e^s$ and we set $\eta$ and  $m$ so that $0 < \rho(L,\alpha L) < 1$ with $\alpha = |\batch^s|/n$. If we assume $\mathbb E \norm{f_i'(x)}^2 \leq \sigma^2$ then Algorithm~\ref{alg:SVRSGD} has
\[
\mathbb{E}[f(x^{s+1})-f(x^{*})] \leq \rho(L,\alpha L) \mathbb E[f(x^s)-f(x^{*})] + \frac{Z \mathbb E \|e^{s}\|+\eta \mathbb E\|e^{s}\|^{2} + \frac{\eta\sigma^2}{2}(1-\alpha)}{1-2\eta L}
%+ \frac {1-2\eta L}\left(Z \mathbb E \|e^{s}\|+\eta \mathbb E\|e^{s}\|^{2} + \frac{\eta(1-|\batch^s|/n)\sigma^2}{2}\right).
\]
\end{thm}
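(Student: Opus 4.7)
My plan is to follow the proof of Proposition 1, treating the mixed step as a convex combination (in expectation over $i_t$) of an SVRG update and a plain SG update. Writing $x_t = x_{t-1} - \eta v_t$, I would first expand $\|x_t-x^*\|^2 = \|x_{t-1}-x^*\|^2 - 2\eta\langle v_t,x_{t-1}-x^*\rangle + \eta^2\|v_t\|^2$ and take conditional expectation over $i_t$, which is uniform on $\{1,\dots,n\}$. A short calculation splitting the $i \in \batch^s$ and $i \notin \batch^s$ contributions gives $\mathbb{E}[v_t \mid x_{t-1}] = f'(x_{t-1}) + \alpha(\mu^s - \bar{\mu}^s)$, where $\bar{\mu}^s = \tfrac{1}{|\batch^s|}\sum_{i\in\batch^s} f_i'(x^s)$. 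Under the assumption $\mu^s = f'(x^s) + e^s$, the bias is $O(\|e^s\|)$, so convexity handles the $f'(x_{t-1})$ part while Cauchy-Schwarz with $\|x_{t-1}-x^*\|\leq Z$ absorbs the remainder into a $Z\|e^s\|$ contribution, exactly as in Proposition 1.

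For the variance, I would split the sum by whether $i_t$ is in the batch: $\mathbb{E}\|v_t\|^2 = \tfrac{1}{n}\sum_{i\in\batch^s}\|f_i'(x_{t-1})-f_i'(x^s)+\mu^s\|^2 + \tfrac{1}{n}\sum_{i\notin\batch^s}\|f_i'(x_{t-1})\|^2$. For the SG part, taking expectation over the uniform random batch makes each index excluded with probability $1-\alpha$, and $\mathbb{E}\|f_i'(x_{t-1})\|^2 \leq \sigma^2$ yields at most $(1-\alpha)\sigma^2$. For the SVRG part, substituting $\mu^s = f'(x^s)+e^s$ and splitting via $\|a+b\|^2 \leq 2\|a\|^2+2\|b\|^2$, I would apply the Johnson-Zhang smoothness/convexity bound on the error-free piece. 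Because the sum is restricted to $\batch^s$ (a fraction $\alpha$ of indices, in expectation over the batch), the coefficient on $f(x^s)-f(x^*)$ acquires an extra $\alpha$ while the coefficient on $f(x_{t-1})-f(x^*)$ is unchanged, and the residual $\|e^s\|^2$ contribution matches Proposition 1.

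Assembling these pieces, rearranging to isolate $[f(x_{t-1})-f(x^*)]$ with the $1-2\eta L$ factor on its left, telescoping over $t=1,\dots,m$ so the distance-squared terms collapse via $\|x^s-x^*\|^2 \leq \tfrac{2}{\mu}[f(x^s)-f(x^*)]$ (strong convexity), and averaging over the uniform random choice of $x^{s+1}$ reproduces the Johnson-Zhang structure with $2L\eta$ replaced by $2\alpha L\eta$ in the numerator of $\rho$, plus the $\tfrac{\eta\sigma^2}{2}(1-\alpha)$ term from the SG noise. The main obstacle is the asymmetric appearance of $\alpha$ in the SVRG-variance step: it multiplies only $f(x^s)-f(x^*)$, because that term stems from the part of the sum restricted to $\batch^s$, whereas the $f(x_{t-1})-f(x^*)$ piece survives a relaxation to a full sum over all $i \in \{1,\dots,n\}$. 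Verifying this cleanly requires expanding around $x^*$ to decouple the $x_{t-1}$- and $x^s$-dependencies before invoking the smoothness/convexity inequality, and carefully tracking that the expectation over the uniformly sampled batch gives each index the probability $\alpha$ of contributing to the restricted sum.
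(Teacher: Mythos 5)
Your proposal is correct and takes essentially the same route as the paper: the appendix likewise writes $\mathbb E\|x_t-x^*\|^2$ as the convex combination $\alpha\{\text{SVRG bound of Lemma~3}\}+(1-\alpha)\{\text{SG bound via convexity and }\sigma^2\}$ --- which is exactly your batch/non-batch split of the sum over $i$ --- and then telescopes and applies strong convexity as in Proposition~1 to obtain $\rho(L,\alpha L)$ plus the $\frac{\eta\sigma^2}{2}(1-\alpha)$ term. The only cosmetic difference is that you extract the $\alpha$ and $1-\alpha$ weights by averaging over the random batch and by relaxing the restricted sum in the $f(x_{t-1})$ term to a full sum, whereas the paper obtains them directly as the probabilities $P(i_t\in\batch^s)=\alpha$ and $1-\alpha$ of the two branches.
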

We give the proof in Appendix~B. The extra term depending on the variance $\sigma^2$ is typically the bottleneck for SG methods. Classic SG methods require the step-size $\eta$ to converge to zero because of this term. However, the mixed SG/SVRG method can keep the fast progress from using a constant $\eta$ since the term depending on $\sigma^2$ converges to zero as $\alpha$ converges to one.
Since $\alpha < 1$ implies that  $\rho(L,\alpha L) < \rho$, this result implies that when $[f(x^s) - f(x^*)]$ is large compared to $e^s$ and $\sigma^2$ that the mixed SG/SVRG method actually converges faster.

Sharing a single step size $\eta$ between the SG and SVRG iterations in Proposition~\ref{thm:thm2} is sub-optimal. For example, if $x$ is close to $x^*$ and $|\batch^s| \approx n$, then the SG iteration might actually take us far away from the minimizer.  Thus, we may want to use a decreasing sequence of step sizes for the SG iterations.
%Thus, it may make sense to use a decreasing sequence of step sizes for the SG iterations. 
In Appendix~B, we show that using $\eta = O^*(\sqrt{(n-|\mathcal B|)/n|\mathcal B|})$ for the SG iterations can improve the dependence on the error $e^s$ and variance $\sigma^2$.

\section{Using Support Vectors}
\label{sec:support}

Using a batch $\batch^s$ decreases the number of gradient evaluations required when SVRG is far from the solution, but its benefit diminishes over time. However, for certain objectives we can further reduce the number of gradient evaluations by identifying \emph{support vectors}. For example, consider minimizing the Huberized hinge loss (HSVM) with threshold $\epsilon$~\cite{rosset2007piecewise},
\[
\min_{x\in\mathbb{R}^d}  \frac{1}{n}\sum_{i=1}^nf(b_ia_i^Tx), \quad
f(\tau) = \begin{cases}
0 & \text{if $ \tau > 1 + \epsilon$,}\\
1 - \tau & \text{if $\tau < 1 - \epsilon$,}\\
\frac{(1 + \epsilon - \tau)^2}{4\epsilon} & \text{if $|1-\tau| \leq \epsilon$,}
\end{cases}
\]
In terms of~\eqref{eq:one}, we have $f_i(x) = f(b_ia_i^Tx)$.
The performance of this loss function is similar to logistic regression and the hinge loss, but it has the appealing properties of both: it is \emph{differentiable} like logistic regression meaning we can apply methods like SVRG, but it has \emph{support vectors} like the hinge loss meaning that many examples will have $f_i(x^*)=0$ and $f_i'(x^*) = 0$. We can also construct Huberized variants of many non-smooth losses for regression and multi-class classification.

If we knew the support vectors where $f_i(x^*) > 0$, we could solve the problem faster by ignoring the non-support vectors. For example, if there are $100000$ training examples but only $100$ support vectors in the optimal solution, we could solve the problem $1000$ times faster.
While we typically don't know the support vectors, in this section we outline a heuristic that gives large practical improvements by trying to identify them as the algorithm runs.

Our heuristic has two components. The first component is maintaining the \emph{list of non-support vectors at $x^s$}. Specifically, we maintain a list of examples $i$ where $f_i'(x^s)=0$. When SVRG picks an example $i_t$ that is part of this list, we know that $f_{i_t}'(x^s)=0$ and thus the iteration only needs one gradient evaluation. This modification is not a heuristic, in that it still applies the exact SVRG algorithm. However, at best it can only cut the runtime in half. 

The heuristic part of our strategy is to skip $f_i'(x^s)$ or $f_i'(x_t)$ if our evaluation of $f_i'$ has been zero more than two consecutive times (and skipping it an exponentially larger number of times each time it remains zero). Specifically, for each example $i$ we maintain two variables, $sk_i$ (for `skip') and $ps_i$ (for `pass'). Whenever we need to evaluate $f_i'$ for some $x^s$ or $x_t$, we run Algorithm~\ref{alg:SV} which may skip the evaluation. This strategy can lead to huge computational savings in later iterations if there are few support vectors, since many iterations will require no gradient evaluations.

\begin{algorithm}[tb]
 \caption{Heuristic for skipping evaluations of $f_i$ at $x$}
  \label{alg:SV}
  \begin{algorithmic}
 \IF {$sk_i = 0$ }
 	\STATE compute $f_i'(x)$.
	\IF {$f_i'(x) = 0$}
	\STATE $ps_i = ps_i + 1$. \hfill\{Update the number of consecutive times $f_i'(x)$ was zero.\}
	\STATE $sk_i = 2^{\max\{0,ps_i-2\}}$. \hfill\{Skip exponential number of future evaluations if it remains zero.\}
	\ELSE
	\STATE $ps_i = 0$. \hfill\{This could be a support vector, do not skip it next time.\}
	\ENDIF
 	\STATE return $f_i'(x)$.
 \ELSE
\STATE $sk_i = sk_i - 1$. \hfill\{In this case, we skip the evaluation.\}
\STATE return 0.
 \ENDIF
  
\end{algorithmic}
\end{algorithm}

%\textbf{1. Maintaining a list of support vectors at $x^s$}: Let $\tau_i^s = b_ia_i^Tx^s$ and $\tau_i^t = b_ia_i^Tx_t$. When computing $\mu^s$, it is straightforward to maintain a set of $n$ indicator variables that measure whether $f'(\tau_i^s) = 0$ for each $i$. Using these indicators, we can skip evaluating $f'(\tau_{i_t}^s)$ when we know it will be zero. These SVRG iterations thus only require one gradient evaluation, even when $i_t \in \batch^s$.

%\textbf{2. Bounding potential support vectors at $x_t$}: By storing the scalars $\tau_i^s$, we can use them to bound whether $\tau_i^t > 1+\epsilon$ and hence $f'(\tau_i^t) = 0$. With such a bound, the SVRG iteration may require $0$ gradients (i.e., we ignore the example) and we can also reduce the cost of computing $\mu^s$ if we know that $f'(\tau_i^{s+1})=0$. A simple bound of this type is based on  Cauchy-Schwartz is (using $d_t = x^s - x_t$),
%\[
%\tau_i^t = b_ia_i^Tx_t= b_ia_i^T(x^s - d_t) = \tau_i^s - b_ia_i^Td_t \geq \tau_i^s - |a_i^Td_t| \geq \tau_i^s -\norm{a_i}\norm{d_t}.
%\]
%We can compute this bound in $O(1)$ given $\norm{d_t}$ and $\norm{a_i}$.
%If we can't efficiently track $\norm{d_t}$, the triangle inequality can provide more crude bounds that are easier to track such as 
%\begin{equation}
%\norm{d_t} = \norm{x^s - x_t} = \left|\left|\sum_{j=1}^t [x_{t-1}-x_t]\right|\right| \leq \sum_{j=1}^t\norm{x_{t-1}-x_t}.
%\label{eq:dBound}
%\end{equation}
%This weakens over time, but we can always reset it by computing $d_t = x^s-x_t$ for the current $t$.

Identifying support vectors to speed up computation has long been an important part of SVM solvers, and is related to the classic shrinking heuristic~\cite{Joachims99a}. While it has previously been explored in the context of dual coordinate ascent methods~\cite{usunier2010guarantees}, this is the first work exploring it for linearly-convergent stochastic gradient methods.
%The inner-outer iteration structure of SVRG seems well-suited to exploit support vectors, and this is the first work to explore utilizing support vectors to speed up computation in the context of linearly-convergent stochastic optimization methods. However, note that the above strategies are not heuristics: 
%using them yields the exact same algorithm but with fewer gradient calculations. We should cite Bordes and Bottou here.

\section{Regularized SVRG}
\label{sec:exact}

We are often interested in the special case where problem~\eqref{eq:one} has the decomposition
\begin{equation}
\label{eq:tps}
 \min_{x\in \mathbb R^d} \;f(x)\equiv h(x)+ \frac 1 n \sum_{i=1}^{n} g_{i}(x).
\end{equation}
A common choice of $h$ is a scaled $1$-norm of the parameter vector, $h(x) = \lambda\norm{x}_1$. This non-smooth regularizer encourages sparsity in the parameter vector, and can be addressed with the proximal-SVRG method  of Xiao \& Zhang~\cite{xiao2014proximal}. Alternately, if we want an explicit $Z$ we could set $h$ to the indicator function for a $2$-norm ball containing $x^*$. In Appendix~C, we give a variant of Proposition~\ref{thm:thm1} that allows errors in the proximal-SVRG method for non-smooth/constrained settings like this.

Another common choice is the $\ell_2$-regularizer, $h(x) = \frac{\lambda}{2}\norm{x}^2$. With this regularizer, the SVRG updates can be equivalently written in the form
%Although this regularizer can also be addressed with proximal-gradient methods, practical implementations of SVRG instead change the iteration to use
\begin{equation}
\label{eq:regSVRG}
x_{t+1}=x_t-\eta\left( h'(x_t) + g'_{i_t}(x_t)-g'_{i_t}(x^s)+\mu^s\right),
\end{equation}
where $\mu^s = \frac 1n\sum_{i=1}^ng_i(x^s)$.
That is, they take an exact gradient step with respect to the regularizer and an SVRG step with respect to the $g_i$ functions. When the $g_i'$ are sparse, this form of the update allows us to implement the iteration without needing full-vector operations. A related update is used by Le Roux et al. to avoid full-vector operations in the SAG algorithm~\cite[\S 4]{roux2012stochastic}. In Appendix~C, we prove the below convergence rate for this update.
\begin{thm}
\label{thm:er}
Consider instances of problem~\eqref{eq:one} that can be written in the form~\eqref{eq:tps} where $h'$ is $L_h$-Lipschitz continuous and each $g_i'$ is $L_g$-Lipschitz continuous, and assume that we set $\eta$ and $m$ so that $0 < \rho(L_m) < 1$ with $L_m=\max\{ L_g, L_h\}$. Then the regularized SVRG iteration~\eqref{eq:regSVRG} has
\[
\mathbb{E}[f(x^{s+1})-f(x^{*})] \leq \rho(L_m) \mathbb{E}[f(x^s)-f(x^{*})],
\]
\end{thm}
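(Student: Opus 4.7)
The plan is to recognize that the regularized update~\eqref{eq:regSVRG} is really just the ordinary SVRG iteration applied to the functions $\phi_i(x) \eqdef h(x) + g_i(x)$. Indeed, the standard SVRG estimator for $f = \frac{1}{n}\sum_i \phi_i$ is $\phi'_{i_t}(x_t) - \phi'_{i_t}(x^s) + f'(x^s)$, and since $f'(x^s) = h'(x^s) + \mu^s$, the $h'(x^s)$ term cancels the one inside $\phi'_{i_t}(x^s)$, leaving exactly~\eqref{eq:regSVRG}. So the task reduces to running the Johnson--Zhang analysis on this SVRG estimator, but bookkeeping carefully enough that $h$ contributes only through its own Lipschitz constant $L_h$ and not through the naive sum $L_h + L_g$.

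Following the standard recipe, I would first write $\mathbb{E}\|x_{t+1}-x^*\|^2 = \|x_t - x^*\|^2 - 2\eta f'(x_t)^\top(x_t-x^*) + \eta^2 \mathbb{E}\|v_t\|^2$, and use convexity to replace the inner product by $f(x_t) - f(x^*)$. The bulk of the work is the variance bound on $\mathbb{E}\|v_t\|^2$. Here I decompose $v_t$ by adding and subtracting $h'(x^*)$ and $g'_{i_t}(x^*)$, using $h'(x^*) = -g'(x^*)$ (because $f'(x^*)=0$), to rewrite $v_t = \bigl[h'(x_t)-h'(x^*)\bigr] + \bigl[g'_{i_t}(x_t)-g'_{i_t}(x^*)\bigr] + \bigl[g'_{i_t}(x^*)-g'_{i_t}(x^s) + g'(x^s) - g'(x^*)\bigr]$, whose last bracket has mean zero. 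I then apply co-coercivity $\|h'(x)-h'(y)\|^2 \le 2L_h A_h(x;y)$ and $\|g'_i(x)-g'_i(y)\|^2 \le 2L_g A_{g_i}(x;y)$ to the deterministic and stochastic pieces, use $\mathbb{E}\|X-\mathbb{E}X\|^2 \le \mathbb{E}\|X\|^2$ on the centered bracket, and finally collapse $A_h(x_t) + A_g(x_t) = f(x_t) - f(x^*)$ (a consequence of $f'(x^*) = 0$) together with $A_g(x^s) \le f(x^s) - f(x^*)$ (since $A_h(x^s) \ge 0$ by convexity of $h$). Keeping $L_h, L_g \le L_m$ throughout yields the target bound $\mathbb{E}\|v_t\|^2 \le 4L_m\bigl(f(x_t)-f(x^*) + f(x^s)-f(x^*)\bigr)$.

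The remainder is routine: plug this into the one-step recursion to get $\mathbb{E}\|x_{t+1}-x^*\|^2 \le \|x_t - x^*\|^2 - 2\eta(1-2\eta L_m)\bigl(f(x_t)-f(x^*)\bigr) + 4\eta^2 L_m\bigl(f(x^s)-f(x^*)\bigr)$, telescope over $t = 0,\dots,m-1$ with $x_0 = x^s$, drop the non-negative $\mathbb{E}\|x_m-x^*\|^2$, use $\mu$-strong convexity of $f$ to bound $\|x^s-x^*\|^2 \le \frac{2}{\mu}(f(x^s)-f(x^*))$, and divide by $2\eta(1-2\eta L_m)m$. Since $x^{s+1}$ is the average iterate $\frac{1}{m}\sum_t x_t$ in expectation (or a random iterate), this gives $\mathbb{E}[f(x^{s+1}) - f(x^*)] \le \rho(L_m)\,\mathbb{E}[f(x^s) - f(x^*)]$, as claimed.

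The hard part is the variance step. The danger is that a carefree $\|a+b\|^2 \le 2\|a\|^2 + 2\|b\|^2$ on $v_t = h'(x_t) + (\text{SVRG part})$ inflates constants by a factor of two and yields $\rho(2L_m)$ instead of $\rho(L_m)$; the centering around $h'(x^*) = -g'(x^*)$ is what lets the $h$ and $g$ Bregman divergences recombine into a single $f$-Bregman, avoiding this loss and ensuring that $h$ enters the rate only through $L_h \le L_m$ rather than additively with $L_g$.
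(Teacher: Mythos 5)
Your proposal is correct in outline and takes essentially the same route as the paper's Appendix~C proof: the paper likewise expands $\nu_t$ by adding and subtracting $h'(x^*)$, $g'_{i_t}(x^*)$ (using $h'(x^*)+g'(x^*)=f'(x^*)=0$), bounds the mean-zero bracket by its uncentered second moment, applies co-coercivity to each piece, recombines the $h$- and $g$-Bregman divergences into $f(x_t)-f(x^*)$ and $f(x^s)-f(x^*)$, and finishes with the standard telescoping/strong-convexity argument. One caveat: your closing claim that the centering is what avoids the factor-of-two inflation is not quite justified---after centering you still have a sum of three terms, and bounding its squared norm by twice the sum of the three squared norms (which is also what the paper's displayed inequality does) is not a valid Young-type step; a careful split (e.g.\ $2\|a+b\|^2+2\|c\|^2$ with $a+b$ the gradient difference of $h+g_{i_t}$) yields $4(L_g+L_h)$ rather than $4L_m$ in front of $f(x_t)-f(x^*)$, so on this specific constant your write-up inherits exactly the same looseness as the paper's own argument.
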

Since $L_m \leq L$, and strictly so in the case of $\ell_2$-regularization, this result shows that for $\ell_2$-regularized problems SVRG actually converges faster than the standard analysis would indicate (a similar result appears in Kone\v{c}n\'{y} et al.~\cite{konevcny2014ms2gd}).
%While the regularized iteration is appealing due to its low cost for sparse data sets, since $L_m \leq L$ this result shows that the trick can actually improve the convergence rate. 
Further, this result gives a theoretical justification for using the update~\eqref{eq:regSVRG} for other $h$ functions where it is not equivalent to the original SVRG method.

\section{Mini-Batching Strategies}
\label{sec:mini}

Kone\v{c}n\'{y} et al.~\cite{konevcny2014ms2gd}  have also recently considered using batches of data within SVRG. They consider using `mini-batches' in the inner iteration (the update of $x_t$) to decrease the variance of the method, but still use full passes through the data to compute $\mu^s$. This prior work is thus complimentary to the current work (in practice, both strategies can be used to improve performance). In Appendix~D we show that sampling the inner mini-batch proportional to $L_i$ achieves a convergence rate of
\[
\ex{f(x^{s+1})-f(x^*)} \leq \rho_M\ex{f(x^{s})-f(x^*)},
\]
where $M$ is the size of the mini-batch while
\[
\rho_M = \frac{1}{M-2\eta \bar{L}}\left(\frac {M}{m\mu\eta}+ 2\bar{L}\eta\right),
\]
and we assume $0 < \rho_M < 1$.
This generalizes the standard rate of SVRG and improves on the result of Kone\v{c}n\'{y} et al.~\cite{konevcny2014ms2gd} in the smooth case. This rate can be faster than the rate of the standard SVRG method at the cost of a more expensive iteration, and may be clearly advantageous in settings where parallel computation allows us to compute several gradients simultaneously.

The regularized SVRG form~\eqref{eq:regSVRG} suggests an alternate mini-batch strategy for problem~\eqref{eq:one}: consider a mini-batch that contains a `fixed' set $\mathcal B_f$ and a `random' set $\mathcal B_t$. Without loss of generality, assume that we sort the  $f_i$ based on their $L_i$ values so that $L_1 \geq L_2 \geq \dots \geq L_n$. For the fixed $\mathcal B_f$ we will \emph{always} choose the $M_f$ values with the largest $L_i$, $\mathcal B_f=\{f_1,f_2,\dots,f_{M_f}\}$. In contrast, we choose the members of the random set $\mathcal B_t$ by sampling from $B_r = \{ f_{M_f+1},\dots, f_n\}$ proportional to their Lipschitz constants, $p_i = \frac {L_i}{(M_r)\bar{L}_r}$ with $\bar{L}_r=(1/M_r)\sum_{i=M_f+1}^n L_i$. In Appendix~D, we show the following convergence rate for this strategy:
%In real scenario setting, if we have a GPU that can compute several gradients extremely quickly but it is expensive to copy data into the GPU from memory, the GPU can repeatedly compute the gradients for the same examples. In this setting, $\mathcal B_f$ would be the examples on the GPU and CPU can process examples in $\mathcal B_r$. 
\begin{thm}
\label{thm:bmb}
Let $g(x)=(1/n)\sum_{i \notin [\batch_f]} f_i(x)$ and $h(x)=(1/n)\sum_{i\in[\batch_f]} f_i(x)$. If we replace the SVRG update with 
\[
x_{t+1}=x_t-\eta\left( h'(x_t) +(1/M_r) \sum_{i \in \mathcal B_t }\frac{\bar{L}_r}{L_i}(f'_i(x_t)-f'_i(x^s))+g'(x^s)\right), 
\]
 then the convergence rate is 
\[
\mathbb E[{f(x^{s+1})-f(x^*)}] \leq \rho(\kappa,\zeta)\mathbb E[{F(x^s)-f(x^*)}].
\]
where $\zeta=\frac{(n-M_f)\bar{L}_r}{(M-M_f)n} $ and $\kappa = \max\{\frac {L_1}{n},\zeta\}$. 
\end{thm}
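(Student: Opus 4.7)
The strategy is to view Theorem~\ref{thm:bmb} as a fusion of the regularized SVRG analysis (Proposition~\ref{thm:er}) and the non-uniform mini-batch SVRG analysis that yields the $\rho_M$ rate in Section~\ref{sec:mini}. The decomposition $f = h + g$ with $h = (1/n)\sum_{i \in \batch_f} f_i$ and $g = (1/n)\sum_{i \notin \batch_f} f_i$ aligns exactly with the structure of the regularized update~\eqref{eq:regSVRG}: the $h'(x_t)$ piece is an exact gradient step on the deterministic part, while the remaining term is a mini-batch SVRG estimator for $g$. So I would first check unbiasedness: under $p_i = L_i / (M_r \bar{L}_r)$ on $\batch_r$, a direct computation shows that $(1/M_r)\sum_{i \in \batch_t} (\bar{L}_r / L_i)(f_i'(x_t)-f_i'(x^s))$ has conditional expectation equal to the correct difference $g'(x_t) - g'(x^s)$ (modulo the normalization implied by the definition of $\bar L_r$), so that the whole bracket is unbiased for $f'(x_t)$.

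Next, the technical core is bounding the conditional second moment of the stochastic part of the update. Because the $M_r$ indices are drawn independently proportional to $L_i$, I would use the standard identity $\mathbb E\|\bar Y\|^2 = (1/M_r)\mathbb E\|Y_1 - \mathbb E Y_1\|^2 + \|\mathbb E Y_1\|^2$ with $Y_i = (\bar L_r / L_i)(f_i'(x_t) - f_i'(x^s))$ to split variance and bias. Plugging in $p_i$ kills one factor of $L_i$ and replaces the sum by a weighted average, after which the cocoercivity inequality $\|f_i'(x) - f_i'(x^*)\|^2 \leq 2 L_i [f_i(x) - f_i(x^*) - f_i'(x^*)^T(x - x^*)]$ converts the bound into a multiple of $[f(x_t)-f(x^*)] + [f(x^s)-f(x^*)]$. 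The constant that emerges is precisely $\zeta = (n - M_f)\bar L_r / ((M - M_f) n)$: the $(n-M_f)/n$ factor is the fraction of the data $g$ covers, the $\bar L_r$ is the average Lipschitz constant contributed by non-uniform sampling in $\batch_r$, and the $1/M_r$ is the mini-batch variance reduction.

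With this variance bound in hand, I would run the Johnson--Zhang template exactly as in the proof of Proposition~\ref{thm:er}: expand $\|x_{t+1} - x^*\|^2$, absorb the $h'(x_t)$ contribution using smoothness of $h$ (which is bounded by $L_1 / n$ via the individual per-example Lipschitz bound, giving the first argument of $\kappa$), absorb the stochastic contribution using the $\zeta$ bound just derived, telescope $t = 0, \dots, m-1$, invoke strong convexity of $f$, and finally average over the random choice of $t$ used to define $x^{s+1}$. The rate $\rho(\kappa, \zeta)$ then appears in the same way that $\rho(L_m)$ appeared in Proposition~\ref{thm:er}: the first argument governs the $1/(1-2\eta a)$ prefactor coming from the smooth part of the bound, and the second argument governs the $2b\eta$ contribution from the stochastic variance.

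The main obstacle will be Step~2: tracking how the three separate reductions (restricting to $\batch_r$, non-uniform sampling within $\batch_r$, and mini-batch averaging) compose so that the constant in the variance bound is exactly $\zeta$ rather than something looser like $(n-M_f)\bar L_r / n$ or $\bar L_r / M_r$. A secondary subtlety is showing that the Lipschitz contribution of $h$ truly collapses to $L_1/n$ rather than $\sum_{i \in \batch_f} L_i / n$; this should follow by applying the individual-$f_i$ smoothness inequalities term-by-term inside the $\|x_{t+1}-x^*\|^2$ expansion, rather than using a single aggregate smoothness bound for $h$. Once these two ingredients are in place the remainder of the argument is a routine adaptation of the proofs already appearing in Appendix~C.
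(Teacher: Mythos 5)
Your overall architecture (unbiasedness of $\nu_t$, a second-moment bound, then the Johnson--Zhang telescoping with strong convexity) is the same as the paper's proof in Appendix~D.2, and your secondary remark is on target: the paper does handle the $h$-part term-by-term, bounding each $\|f_i'(x_t)-f_i'(x^*)\|^2$ with the individual cocoercivity inequality and $L_i\leq L_1$, which yields the coefficient $4L_1/n$ times the Bregman divergence of $h$ at $x^*$. The genuine gap is in the step you yourself flag as the main obstacle. The tool you propose --- the identity $\mathbb E\|\bar Y\|^2=\tfrac1{M_r}\mathbb E\|Y_1-\mathbb E Y_1\|^2+\|\mathbb E Y_1\|^2$ applied to $Y_i=\tfrac{\bar L_r}{L_i}\bigl(f_i'(x_t)-f_i'(x^s)\bigr)$ --- does not deliver the constant $\zeta$: only the centered (variance) piece carries the $1/M_r$ factor, while the bias piece $\|\mathbb E Y_1\|^2=\|g'(x_t)-g'(x^s)\|^2$ does not. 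Bounding that bias by smoothness of $g$ costs a factor of order $\tfrac{(n-M_f)\bar L_r}{n}=M_r\zeta$, and absorbing it instead into $\|f'(x_t)\|^2$ (i.e.\ the full mean--variance decomposition of $\nu_t$) costs the Lipschitz constant of $f'$; either way the first argument of $\rho$ comes out much larger than $\kappa=\max\{L_1/n,\zeta\}$, so the stated rate $\rho(\kappa,\zeta)$ does not follow from your sketch as written.

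The paper's proof sidesteps the bias by re-centering at $x^*$ \emph{before} applying any norm inequalities: each $f_i'(x_t)-f_i'(x^s)$ is split as $[f_i'(x_t)-f_i'(x^*)]-[f_i'(x^s)-f_i'(x^*)]$, the $x_t$-anchored fixed and random pieces are bounded directly via cocoercivity (producing the $L_1/n$ and $\zeta$ coefficients on $f(x_t)-f(x^*)$, the latter by a term-by-term bound on the mini-batch sum), and the inequality $\mathbb E\|X-\mathbb E X\|^2\leq\mathbb E\|X\|^2$ is applied only to the $x^s$-anchored average, whose mean cancels exactly against $h'(x^s)-h'(x^*)-\mu^s=g'(x^*)-g'(x^s)$; convexity of $h$ then upgrades the resulting $g$-suboptimality at $x^s$ to $f(x^s)-f(x^*)$. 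This exact cancellation --- rather than a generic bias/variance split of the gradient-difference estimator --- is the missing ingredient, and your (correct) worry about how the three reductions compose to give exactly $\zeta$ is resolved in the paper only through this re-centering; note also that even there the $x_t$-anchored random part attains the $1/M_r$ factor only via the term-by-term bound on the sampled sum, so this composition is genuinely the delicate point of the argument and cannot be dismissed as routine.
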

If $L_1 \leq n\bar{L}/M$ and $M_f < \frac {(\alpha-1)nM}{\alpha n - M}$ with $\alpha = \frac {\bar{L}}{\bar{L}_r}$, then we get a faster convergence rate than SVRG with a mini-batch of size $M$.
The scenario where this rate is slower than the existing mini-batch SVRG strategy is when $L_1 \leq n\bar{L}/M$. But we could relax this assumption by dividing each element of the fixed set $\mathcal B_f$ into two functions: $\beta f_i$ and $(1-\beta) f_i$, where $\beta = 1/M$, then replacing each function $f_i$ in $\mathcal B_f$ with $\beta f_i$ and adding $(1-\beta) f_i$ to the random set $B_r$. This result may be relevant if we have access to a field-programmable gate array (FPGA) or graphical processing unit (GPU) that can compute the gradient for a fixed subset of the examples very efficiently. However, our experiments (Appendix~F) indicate this strategy only gives marginal gains. 

In Appendix~F, we also consider constructing mini-batches by sampling proportional to $f_i(x^s)$ or $\norm{f_i'(x^s)}$. These seemed to work as well as Lipschitz sampling on all but one of the datasets in our experiments, and this strategy is appealing because we have access to these values while we may not know the $L_i$ values. However, these strategies diverged on one of the datasets.

%. In these settings, we can often compute the gradient for a subset of the examples $h$ very efficiently, but it is expensive to transfer examples onto the FPGA or GPU. In this setting, we can view $h$ as the examples we compute on the FPGA/GPU and this result shows that it reasonable to have these computations focus on a subset of the training data.

\section{Learning efficiency}
\label{sec:test}

In this section we compare the performance of SVRG as a large-scale learning algorithm compared to FG and SG methods. Following Bottou \& Bousquet~\cite{bottou-bousquet-2011},
%A more detialed explanation is provided in the Appendix. 
we can formulate the generalization error $\mathcal{E}$ of a learning algorithm as the sum of three terms
$$ \mathcal{E} = \mathcal{E}_\text{app}+\mathcal{E}_\text{est}+\mathcal{E}_\text{opt} $$
where the approximation error $\mathcal{E}_\text{app}$ measures the effect of using a limited class of models, the estimation error  $\mathcal{E}_\text{est}$ measures the effect of using a finite training set, and the optimization error $\mathcal{E}_\text{opt}$ measures the effect of inexactly solving problem~\eqref{eq:one}.
Bottou \& Bousquet~\cite{bottou-bousquet-2011} study asymptotic performance of various algorithms for a fixed approximation error and under certain conditions on the distribution of the data depending on parameters $\alpha$ or $\nu$. In Appendix~E, we discuss how SVRG can be analyzed in their framework. The table below includes SVRG among their results.
\begin{table}[!h]
\begin{center}
\begin{tabular}{c c c c}
\hline
Algorithm & Time to reach $ \mathcal{E}_{\text{opt}} \leq \epsilon$ & Time to reach $\mathcal{E} = O(\mathcal{E}_{app} + \epsilon)$ & Previous with $\kappa \sim n$ \\
\hline
FG & $\mathcal{O} \left( n \kappa d \log \left(\frac{1}{\epsilon}\right) \right)$ & $\mathcal{O} \left( \frac{d^2 \kappa}{\epsilon^{1/\alpha}} \log^2 \left(\frac{1}{\epsilon}\right) \right) $ & $\mathcal{O} \left( \frac{d^3}{\epsilon^{2/\alpha}} \log^3 \left(\frac{1}{\epsilon}\right) \right) $ \\
SG & $\mathcal{O} \left( \frac{d \nu \kappa^2}{\epsilon} \right) $ & $ \mathcal{O} \left( \frac{d \nu \kappa^2}{\epsilon} \right) $ & $ \mathcal{O} \left( \frac{d^3 \nu}{\epsilon} \log^2 \left(\frac{1}{\epsilon}\right) \right) $ \\
SVRG & $\mathcal{O} \left( (n + \kappa)d \log \left(\frac{1}{\epsilon}\right) \right) $ & $ \mathcal{O} \left( \frac{d^2}{\epsilon^{1/\alpha}} \log^2 \left(\frac{1}{\epsilon} \right) + \kappa d \log \left(\frac{1}{\epsilon} \right) \right) $ & $ \mathcal{O} \left( \frac{d^2}{\epsilon^{1/\alpha}} \log^2 \left(\frac{1}{\epsilon} \right) \right) $ \\
\hline
\end{tabular}
\label{tbl:learningComplexity}
\end{center}
\end{table}

\noindent In this table, the condition number is $\kappa = L/\mu$. In this setting, linearly-convergent stochastic gradient methods can obtain better bounds for ill-conditioned problems, with a better dependence on the dimension and without depending on the noise variance $\nu$.

\section{Experimental Results}

\begin{figure*}[!ht]
\centering
\includegraphics[width=.4\textwidth]{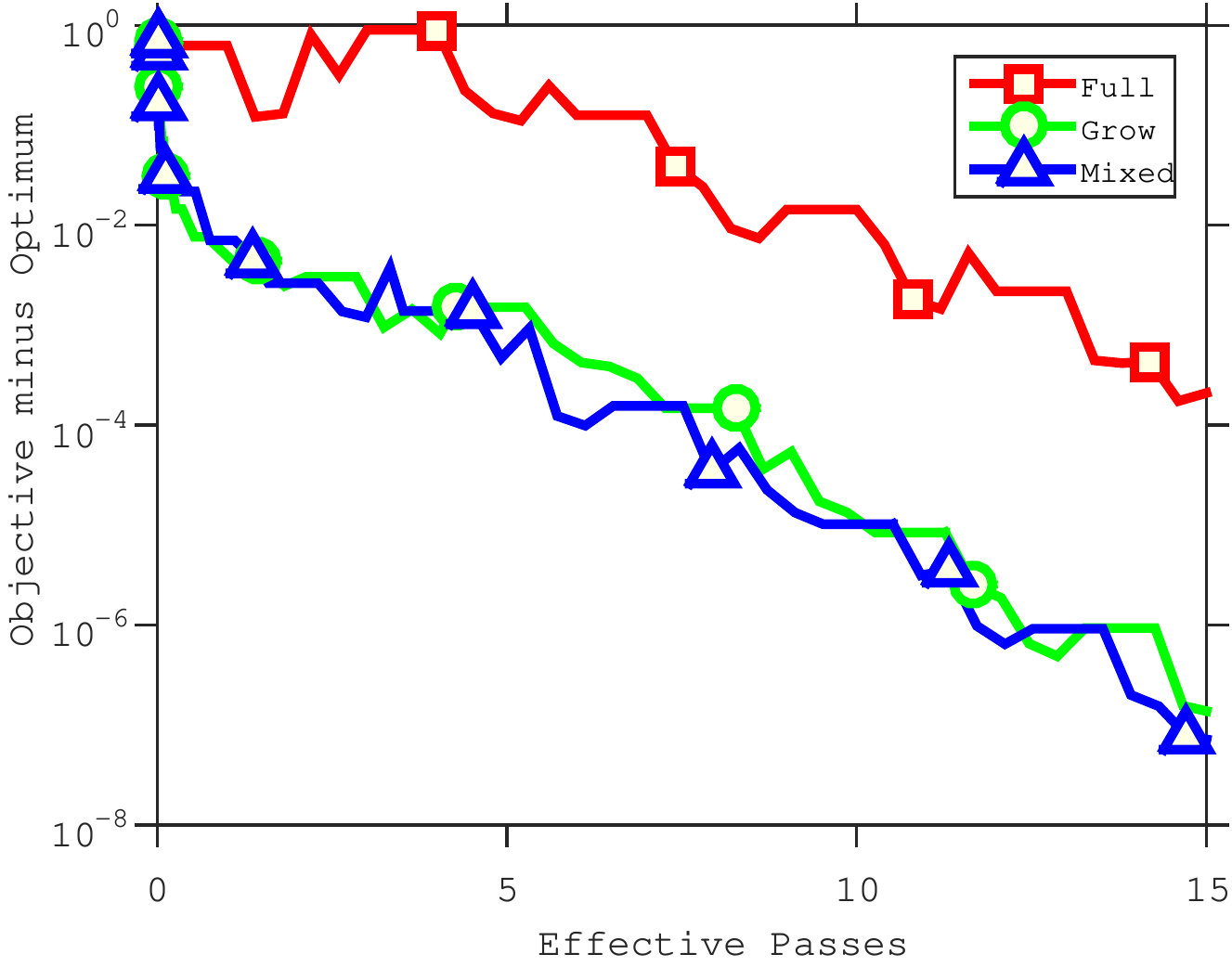}
\includegraphics[width=.4\textwidth]{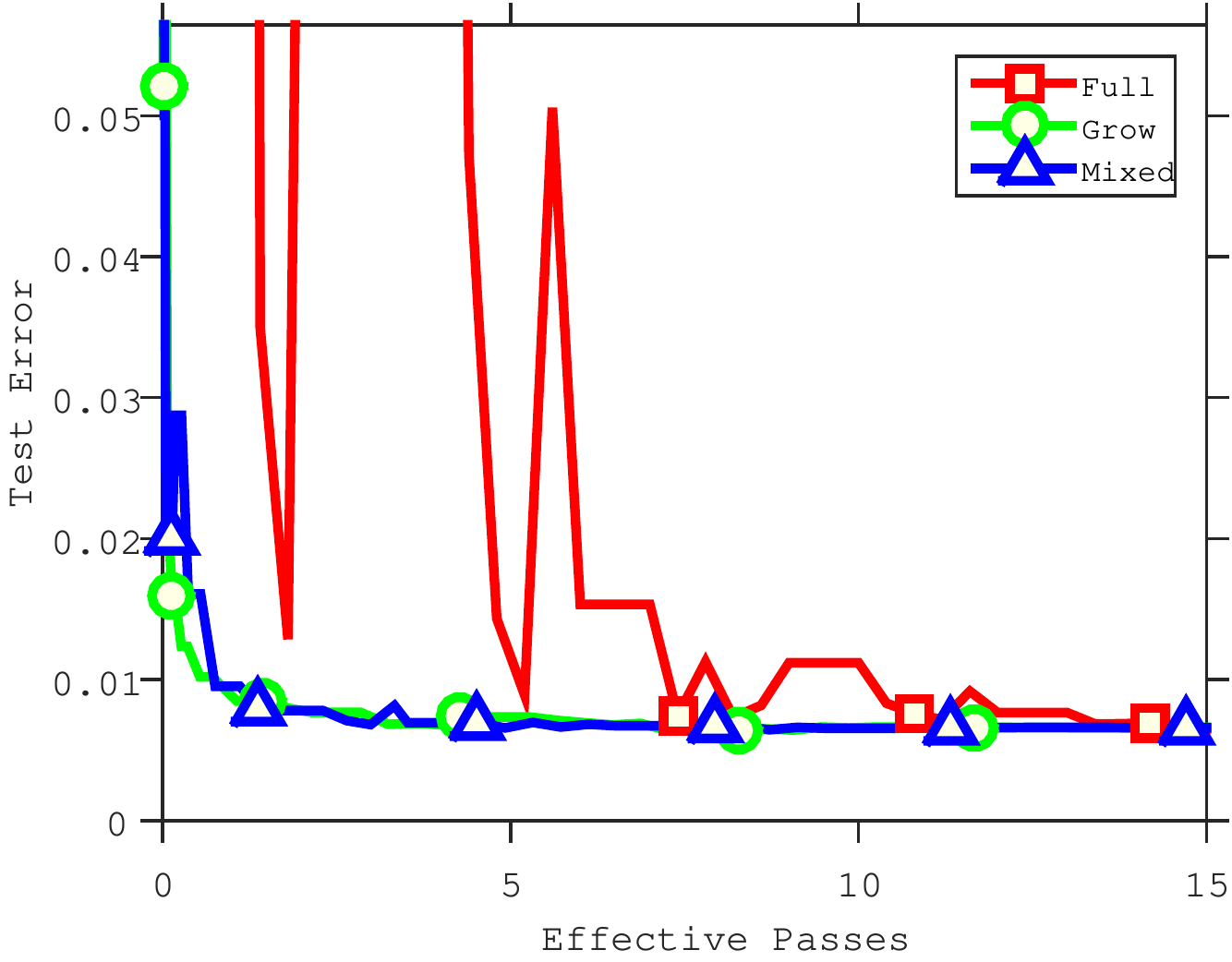}
\includegraphics[width=.4\textwidth]{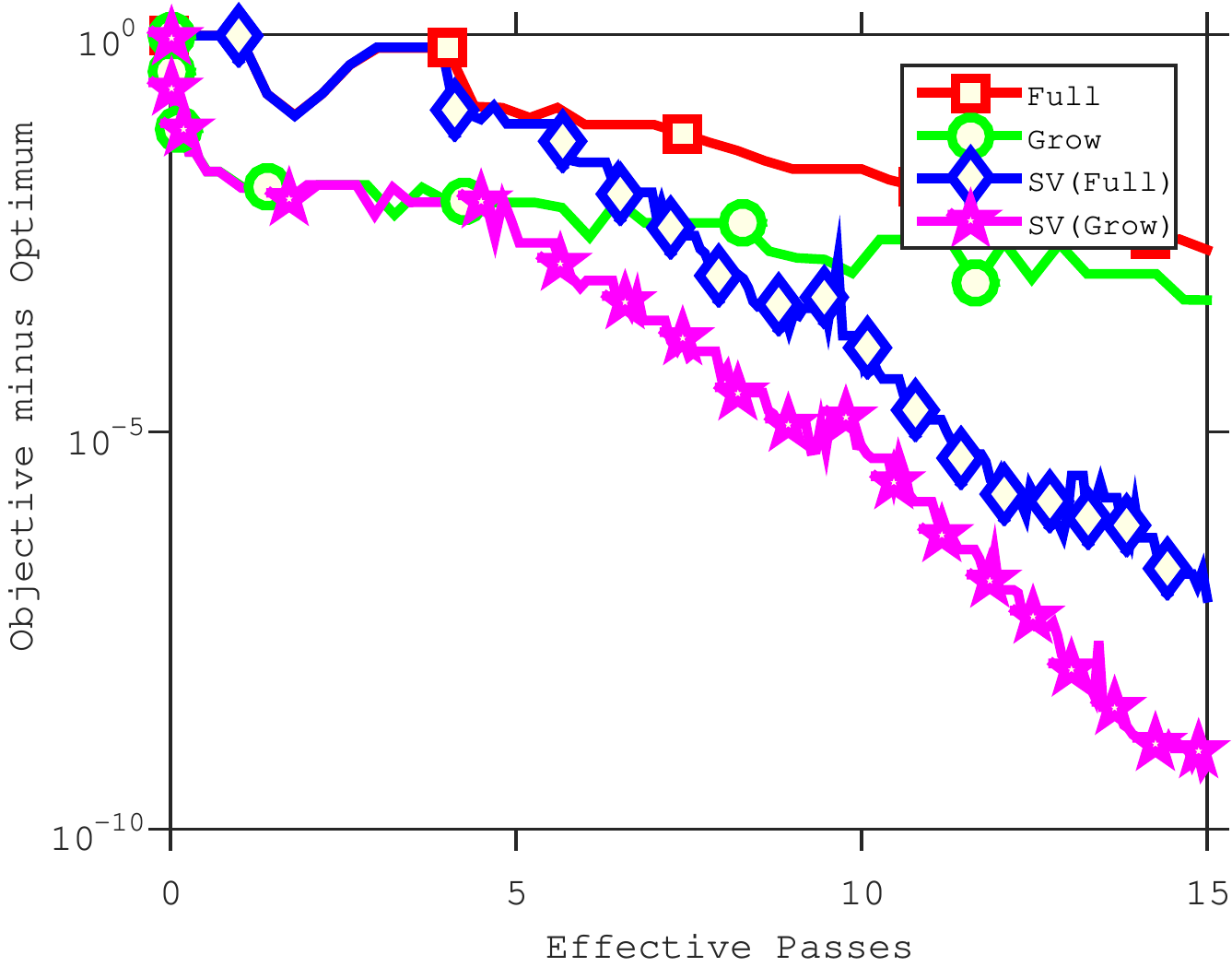}
\includegraphics[width=.4\textwidth]{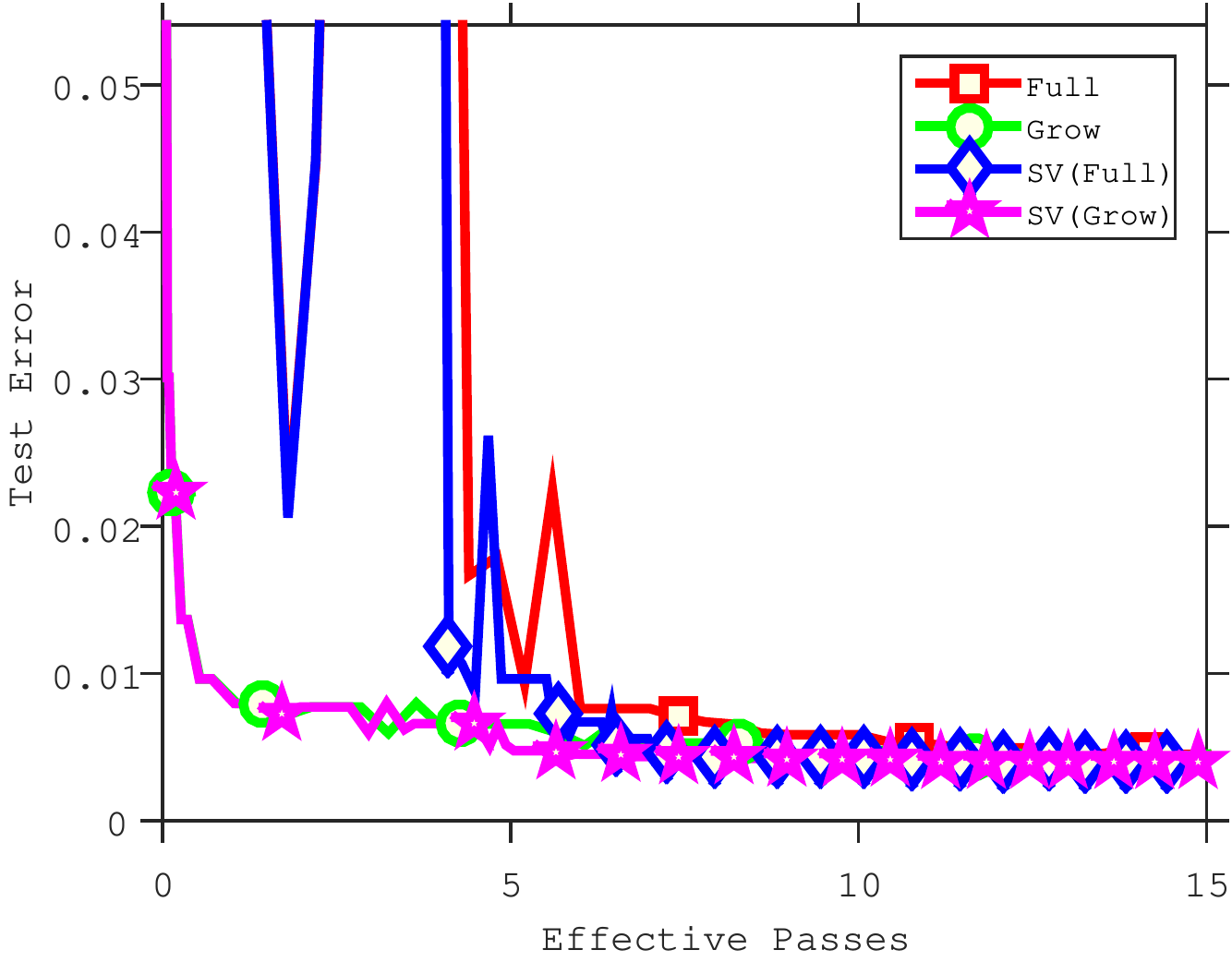}

\caption{Comparison of training objective (left) and test error (right) on the \emph{spam} dataset for the logistic regression (top) and the HSVM (bottom) losses under different batch strategies for choosing $\mu^s$ (\emph{Full}, \emph{Grow}, and \emph{Mixed}) and whether we attempt to identify support vectors (\emph{SV}).}
\label{fig:1}
\end{figure*}

In this section, we present experimental results that evaluate our proposed variations on the SVRG method.
We focus on logistic regression classification: given a set of training data $(a_1,b_1)\dots(a_n,b_n)$ where $a_i \in \mathbb R^d$ and $b_i \in \{-1,+1\}$, the goal is to find the $x \in \mathbb R^d$ solving
\[
\argmin{x\in \mathbb R^d}\; \frac \lambda 2 \|x\|^2 +  \frac 1 n \sum _{i=1}^n \log(1+\exp(-b_ia_i^Tx)),
\] 
We consider the datasets used by~\cite{roux2012stochastic}, whose properties are listed in the supplementary material. As in their work we add a bias variable, normalize dense features, and set the regularization parameter  $\lambda$ to $1/n$. We used a step-size of $\alpha=1/L$ and we used $m=|\batch^s|$ which gave good performance across methods and datasets.
In our first experiment, we compared three variants of SVRG: the original strategy that uses all $n$ examples to form $\mu^s$ (\emph{Full}), a growing batch strategy that sets $|\batch^s|=2^s$ (\emph{Grow}), and the mixed SG/SVRG described by~Algorithm~\ref{alg:SVRSGD} under this same choice (\emph{Mixed}). While a variety of practical batching methods have been proposed in the literature~\cite{friedlander2011hybrid,byrd2012sample,van2012adaptive}, we did not find that any of these strategies consistently outperformed the doubling used by the simple \emph{Grow} strategy.
Our second experiment focused on the $\ell_2$-regularized HSVM on the same datasets, and we compared the original SVRG algorithm with variants that try to identify the support vectors (\emph{SV}).
% In Appendix~F, we report a third experiment based on the new non-uniform sampling scheme from Section~\ref{sec:mini}.

We plot the experimental results for one run of the algorithms on one dataset in Figure~\ref{fig:1}, while Appendix~F reports results on the other $8$ datasets over $10$ different runs. In our results, the growing batch strategy (\emph{Grow}) always had better test error performance than using the full batch, while for large datasets it also performed substantially better in terms of the training objective. In contrast, the \emph{Mixed} strategy sometimes helped performance and sometimes hurt performance. Utilizing support vectors often improved the training objective, often by large margins, but its effect on the test objective was smaller. 

%Note that the first three data sets have a number of samples $n$ much larger than the dimensionality $p$ while the remaining three data sets have more variables than samples.
%All data sets have an added bias term and they are all normalized to a mean of zero and variance of one. To make the results independents of implementation of the algorithms, we evaluates objective functions as a function of effective passes through the data which equal to number of times we evaluates a gradient over data set size $n$. 

\section{Discussion}

As SVRG is the only memory-free method among the new stochastic linearly-convergent methods, it represents the natural method to use for a huge variety of machine learning problems.
In this work we show that the convergence rate of the SVRG algorithm can be preserved even under an inexact approximation to the full gradient. We also showed that using mini-batches to approximate $\mu^s$ gives a natural way to do this, explored the use of support vectors to further reduce the number of gradient evaluations, gave an analysis of the regularized SVRG update, and considered several new mini-batch strategies. 
Our theoretical and experimental results indicate that many of these simple modifications should be considered in any practical implementation of SVRG.

%We have considered the basic SVRG algorithm, and showed that the algorithm's convergence rate can be preserved even with an inexact approximation to the full gradient. We subsequently considered using sub-sampling of the training data to yield a cheap approximation, and in this context showed that using a mixture of SG and SVRG steps leads to much stronger practical performance. Although we have only evaluated this variant of SVRG in a simple classification scenario, this approach should lead to large performance improvements in any application where SVRG is the current state-of-the-art.

%Our analysis also allows other sources of error in the calculation of the full gradient. For example, it allows us to analyze the effect of using an approximation to the log-partition function in graphical models~\cite{wainwright2003tree}. Further, we expect that our key message that the cost of SVRG can be significantly reduced is also likely to be true for the various extensions of SVRG. We have explicitly shown that it is true under non-uniform sampling and proximal-gradient variants, but it should also be true for accelerated versions of the method~\cite{nitanda2014stochastic}, for coordinate-wise variants~\cite{konevcny2014semi}, and under weaker assumptions than strong convexity~\cite{gong2014linear}.

\section*{Acknowledgements}

We would like to thank the reviewers for their helpful comments. This research was supported by the Natural Sciences and Engineering Research Council of Canada (RGPIN 312176-2010, RGPIN 311661-08, RGPIN-06068-2015). Jakub Kone{\v{c}}n{\'y} is supported by a Google European Doctoral Fellowship.

\pagebreak

\appendix

\section{Convergence Rate of SVRG with Error}

We first give the proof of Proposition~1, which gives a convergence rate for SVRG with an error and uniform sampling. We then turn to the case of non-uniform sampling.
%\begin{thm}
%\label{thm:thm1}
%If we have $\tilde{\mu}_s = f'(\tilde{x}_s) + \tilde{e}_s$ and set the step-size $\eta$ and number of inner iterations $m$ so that $\rho < 1$, we have
%\[
%\mathbb{E}[f(x_{s+1})-f(x^{*})] \leq \rho E[f(\tilde{x}_{s})-f(x^{*})] +C,
%\]
%where 
%\[
% C = \frac 1 {1-2\eta L}(Z \mathbb E \|\tilde{e}_{s}\|+\eta \mathbb E\|\tilde{e}_{s}\|^{2}).
%\]
%\end{thm}
%We give the proof of this proposition below. This result implies that we can maintain the convergence rate of SVRG, provided that the errors $\tilde{e}_s$ decrease at the appropriate rate. In particular, this result implies that we obtain the same convergence rate as the exact SVRG algorithm provided that we have $\max\{\mathbb{E}\norm{\tilde{e}_s},\mathbb{E}\norm{\tilde{e}_s}^2\} = \gamma\tilde{\rho}^k$ for some $\gamma \geq 0$ and some $\tilde{\rho} < \rho$. Further, we still obtain a linear convergence rate as long as $\tilde{e}_s$ converges to zero with a linear convergence rate.

\subsection{Proof of Proposition 1}

We follow a similar argument to Johnson \& Zhang~\cite{johnson2013accelerating}, but propagating the error $e^s$ through the analysis. We begin by deriving a simple bound on the variance of the sub-optimality of the gradients.
\begin{lem}
\label{lem:lem1}
 For any $x$,
\[
\frac{1}{n}\sum_{i=1}^{n}\|f'_{i}(x)-f'_{i}(x^{*})\|^2\leq2L[f(x)-f(x^{*})].
\]
\end{lem}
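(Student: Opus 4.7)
The plan is to apply, to each summand separately, the standard inequality that an $L$-smooth convex function with a global minimizer satisfies a quadratic lower bound on the suboptimality in terms of the gradient norm (sometimes called co-coercivity at the minimizer, or the ``easy direction'' of the equivalence between smoothness and gradient-norm bounds). Since the right-hand side involves $f(x)-f(x^*)$ and the left-hand side uses $f_i'(x^*)$ rather than just $f_i'$, the natural trick is to shift each $f_i$ by an affine term anchored at $x^*$ so that the shifted function has $x^*$ as its minimizer.

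Concretely, the first step is to define, for each $i$,
\[
h_i(x) \eqdef f_i(x) - f_i(x^*) - f_i'(x^*)^T(x-x^*).
\]
Each $h_i$ is convex (it differs from $f_i$ by an affine function), its gradient $h_i'(x) = f_i'(x) - f_i'(x^*)$ is still $L$-Lipschitz, and $h_i'(x^*) = 0$, so $x^*$ is a global minimizer of $h_i$. The second step is to invoke the standard smoothness-plus-convexity inequality for a function minimized at $x^*$, namely $h_i(x) - h_i(x^*) \geq \tfrac{1}{2L}\|h_i'(x)\|^2$, which rearranges to
\[
\|f_i'(x) - f_i'(x^*)\|^2 \leq 2L\bigl(f_i(x) - f_i(x^*) - f_i'(x^*)^T(x-x^*)\bigr).
\]

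The third step is simply to average over $i$ and use the optimality of $x^*$ for $f$: since $f'(x^*) = \tfrac{1}{n}\sum_i f_i'(x^*) = 0$, the linear term $\tfrac{1}{n}\sum_i f_i'(x^*)^T(x-x^*)$ vanishes, and the averaged right-hand side collapses to $2L[f(x)-f(x^*)]$, which is exactly the claim.

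I do not expect any real obstacle here; the only subtlety worth flagging is justifying the smoothness inequality $h_i(x) - h_i(x^*) \geq \tfrac{1}{2L}\|h_i'(x)\|^2$ for a convex $L$-smooth function at its minimizer. This follows by minimizing both sides of the standard descent lemma $h_i(y) \leq h_i(x) + h_i'(x)^T(y-x) + \tfrac{L}{2}\|y-x\|^2$ over $y$ (plugging in $y = x - h_i'(x)/L$) and using $h_i(y) \geq h_i(x^*)$; this is a one-line computation that we can either cite from a standard reference or include inline.
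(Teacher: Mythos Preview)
Your proposal is correct and is essentially the same argument as the paper's: the paper directly cites the co-coercivity inequality $f_i(x) \geq f_i(y) + \langle f_i'(y), x-y\rangle + \tfrac{1}{2L}\|f_i'(x)-f_i'(y)\|^2$ from Nesterov, sets $y=x^*$, averages, and uses $f'(x^*)=0$, while you reprove that same inequality inline via the affine-shift trick before averaging. The only difference is that you spell out the justification of the key inequality rather than citing it.
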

\begin{proof}
Because each $f_i'$ is $L$-Lipschitz continuous, we have~\cite[Theorem~2.1.5]{nesterov2004introductory}
\[
f_{i}(x)\geq f_{i}(y)+\left\langle f'_{i}(x),x-y\right\rangle+\frac{1}{2L}\|f'_{i}(x)-f'_{i}(y)\|^2. 
\]
Setting $y=x^*$ and summing this inequality times $(1/n)$ over all $i$ we obtain the result.
\end{proof} 
\noindent In this section we'll use $\tilde{x}$ to denote $x^s$, $e$ to denote $e^s$, and we'll use $\nu_t$ to denote the search direction at iteration $t$,
\[
\nu_t = f'_{i_{t}}(x_{t-1})-f'_{i_{t}}(\tilde{x})+\tilde{\mu}+e.
\]
Note that $\mathbb{E}[\nu_t] = f'(x_{t-1}) + e$ and the next lemma bounds the variance of this value.
\begin{lem}
\label{lem:lem2}
In each iteration $t$ of the inner loop, 
\[ \mathbb E\|\nu _{t}\|^{2}
\leq4L[f(x_{t-1})-f(x^{*})]+4L[f(\tilde{x})-f(x^{*})]+2\|e\|^2
\]
\end{lem}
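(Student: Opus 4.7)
The plan is to mirror the standard Johnson--Zhang variance bound but to carefully isolate the deterministic error term $e$ so that only the coefficient $2$ appears on $\|e\|^2$ rather than $4$. Since the lemma's expectation is taken over the random index $i_t$ (with $x_{t-1}$, $\tilde{x}$, and $e$ fixed), the key observation is that $e$ is constant and $f'(x^*) = 0$, so I can introduce $f'_{i_t}(x^*)$ freely.

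First I would decompose
\[
\nu_t = A + B, \qquad A := f'_{i_t}(x_{t-1}) - f'_{i_t}(x^*), \qquad B := f'_{i_t}(x^*) - f'_{i_t}(\tilde{x}) + f'(\tilde{x}) + e,
\]
and apply the elementary bound $\|A+B\|^2 \leq 2\|A\|^2 + 2\|B\|^2$, giving $\mathbb{E}\|\nu_t\|^2 \leq 2\mathbb{E}\|A\|^2 + 2\mathbb{E}\|B\|^2$. The term $\mathbb{E}\|A\|^2$ is immediate from Lemma~1 applied at $x_{t-1}$: $\mathbb{E}\|A\|^2 \leq 2L[f(x_{t-1}) - f(x^*)]$.

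The main step is the bound on $\mathbb{E}\|B\|^2$, and this is where the deterministic nature of $e$ matters. I would compute $\mathbb{E}[B] = -f'(\tilde{x}) + f'(\tilde{x}) + e = e$, then use the variance identity $\mathbb{E}\|B\|^2 = \mathbb{E}\|B - \mathbb{E}[B]\|^2 + \|\mathbb{E}[B]\|^2 = \mathbb{E}\|B - e\|^2 + \|e\|^2$. Now $B - e$ is the zero-mean variable $f'_{i_t}(x^*) - f'_{i_t}(\tilde{x}) + f'(\tilde{x})$, and using $\mathbb{E}\|X - \mathbb{E} X\|^2 \leq \mathbb{E}\|X\|^2$ with $X = f'_{i_t}(x^*) - f'_{i_t}(\tilde{x})$ gives
\[
\mathbb{E}\|B - e\|^2 \leq \mathbb{E}\|f'_{i_t}(\tilde{x}) - f'_{i_t}(x^*)\|^2 \leq 2L[f(\tilde{x}) - f(x^*)],
\]
where the last step is Lemma~1 applied at $\tilde{x}$. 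Putting everything together yields exactly the claimed $4L[f(x_{t-1}) - f(x^*)] + 4L[f(\tilde{x}) - f(x^*)] + 2\|e\|^2$.

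The only pitfall to watch out for is the temptation to split $\nu_t$ as $U_t + e$ with $U_t = f'_{i_t}(x_{t-1}) - f'_{i_t}(\tilde{x}) + f'(\tilde{x})$ and then apply $\|U_t + e\|^2 \leq 2\|U_t\|^2 + 2\|e\|^2$; this yields the suboptimal constants $8L$ and $2\|e\|^2$. Absorbing $e$ into $B$ instead lets the cross-term inside $\|B\|^2$ be handled by the variance-about-the-mean identity rather than by a crude $\|\cdot\|^2$ inequality, which is what keeps the coefficient on $\|e\|^2$ at $2$ and matches the constants needed downstream in the proof of Proposition~1.
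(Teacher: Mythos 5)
Your proof is correct and follows essentially the same route as the paper: the same split of $\nu_t$ into $f'_{i_t}(x_{t-1})-f'_{i_t}(x^*)$ plus the remaining term, the inequality $\|a+b\|^2\le 2\|a\|^2+2\|b\|^2$, Lemma~1 applied at $x_{t-1}$ and $\tilde x$, and $\mathbb{E}\|X-\mathbb{E}X\|^2\le\mathbb{E}\|X\|^2$. The only cosmetic difference is that you invoke the bias--variance identity $\mathbb{E}\|B\|^2=\mathbb{E}\|B-\mathbb{E}B\|^2+\|\mathbb{E}B\|^2$, whereas the paper expands the square and notes the cross term with the deterministic $e$ vanishes in expectation -- the same cancellation written differently.
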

\begin{proof}
By using the inequality 
$\|x+y\|^{2}\leq2\|x\|^{2}+2\|y\|^{2}$ and the property 
$$\mathbb{E}[f_{i_t}'(\tilde{x}) - f_{i_t}'(x^*)] = f'(\tilde{x}),$$ we have
\begin{align*}
\mathbb E\|\nu _{t}\|^{2} 
&=\mathbb E\|f'_{i_{t}}(x_{t-1})-f'_{i_{t}}(\tilde{x})+\tilde{\mu}+e\|^2
\\
&\leq2\mathbb E\|f'_{i_{t}}(x_{t-1})-f'_{i_{t}}(x^{*})\|^{2} 
+2\mathbb E\|[f'_{i_{t}}(\tilde{x})-f'_{i_{t}}(x^{*})]-f'(\tilde{x})-e\|^{2}
\\
&=2\mathbb E\|f_{i_{t}}'(x_{t-1})-f_{i_{t}}'(x^{*})\|^{2} + 2\mathbb E\|[f_{i_{t}}'(\tilde{x})-f_{i_{t}}'(x^{*})]-\mathbb E[f_{i_{t}}'(\tilde{x})-f_{i_{t}}'(x^{*})]-e\|^{2}
\\
&=2\mathbb E\|f_{i_{t}}'(x_{t-1})-f_{i_{t}}'(x^{*})\|^{2} + 2\mathbb E\|[f_{i_{t}}'(\tilde{x})-f_{i_{t}}'(x^{*})]-\mathbb E[f_{i_{t}}'(\tilde{x})-f_{i_{t}}'(x^{*})]\|^2 + 2\|e\|^{2} 
\\
& \qquad - 4\mathbb E\left\langle [f_{i_{t}}'(\tilde{x})-f_{i_{t}}'(x^{*})]-\mathbb E[f_{i_{t}}'(\tilde{x})-f_{i_{t}}'(x^{*})],e\right\rangle \\
& = 2\mathbb E\|f_{i_{t}}'(x_{t-1})-f_{i_{t}}'(x^{*})\|^{2} + 2\mathbb E\|[f_{i_{t}}'(\tilde{x})-f_{i_{t}}'(x^{*})]-\mathbb E[f_{i_{t}}'(\tilde{x})-f_{i_{t}}'(x^{*})]\|^2 + 2\|e\|^{2},
\end{align*}
If we now use that $\mathbb{E}[\|X-\mathbb E[X]\|^{2}]\leq\mathbb{E}\|X\|^{2}$ for any random variable $X$, we obtain the result by applying Lemma~\ref{lem:lem1} to bound $\mathbb{E} \|f_{i_{t}}^{'}(x_{t-1})-f_{i_{t}}^{'}(x^{*})\|^{2}$ and $\mathbb{E} \|[f_{i_{t}}^{'}(\tilde{x})-f_{i_{t}}^{'}(x^{*})]\|^2$.
\end{proof}
\noindent The following Lemma gives a bound on the distance to the optimal solution.
\begin{lem}
\label{lem:lem3}
In every iteration $t$ of the inner loop,
\begin{align*}
\mathbb E\|x_ {t}-x^{*}\|^{2} &\leq \|x_{t-1} - x^{*}\|^{2} - 2\eta(1-2\eta L)[f(x_{t-1})-f(x^{*})]
\\
& \qquad + 4L\eta^{2}[f(\tilde{x})-f(x^{*})]+2\eta(Z\|e\|+\eta\|e\|^{2}).
\end{align*}
\end{lem}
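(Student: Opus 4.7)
The plan is to follow the standard SVRG one-step analysis of Johnson \& Zhang and simply track where the error $e$ enters. First I would expand the squared distance after the update:
\[
\|x_t - x^*\|^2 = \|x_{t-1} - x^*\|^2 - 2\eta \langle \nu_t, x_{t-1} - x^*\rangle + \eta^2 \|\nu_t\|^2,
\]
and take expectation over the random index $i_t$ (conditional on $x_{t-1}$ and $\tilde{x}$). Since $\mathbb{E}[\nu_t] = f'(x_{t-1}) + e$, the inner-product term splits as
\[
\mathbb{E}\langle \nu_t, x_{t-1}-x^*\rangle = \langle f'(x_{t-1}), x_{t-1}-x^*\rangle + \langle e, x_{t-1}-x^*\rangle.
\]

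Next I would bound the two pieces separately. For the first piece, convexity of $f$ gives
$\langle f'(x_{t-1}), x_{t-1}-x^*\rangle \geq f(x_{t-1}) - f(x^*)$. For the cross term involving the error, Cauchy--Schwarz together with the standing assumption $\|x_{t-1} - x^*\| \leq Z$ yields
$\langle e, x_{t-1} - x^*\rangle \geq -Z\|e\|$, so altogether
\[
-2\eta\,\mathbb{E}\langle \nu_t, x_{t-1}-x^*\rangle \leq -2\eta[f(x_{t-1})-f(x^*)] + 2\eta Z\|e\|.
\]
For the quadratic term I would plug in Lemma~\ref{lem:lem2} directly, obtaining
\[
\eta^2 \mathbb{E}\|\nu_t\|^2 \leq 4L\eta^2[f(x_{t-1})-f(x^*)] + 4L\eta^2[f(\tilde{x})-f(x^*)] + 2\eta^2\|e\|^2.
\]

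Finally I would combine the two bounds and collect the $[f(x_{t-1}) - f(x^*)]$ contributions into the coefficient $-2\eta(1-2\eta L)$, which yields the stated inequality after grouping $2\eta Z\|e\| + 2\eta^2\|e\|^2 = 2\eta(Z\|e\| + \eta\|e\|^2)$. The only nontrivial step is the handling of the cross term $\langle e, x_{t-1}-x^*\rangle$: without the a priori bound $\|x_{t-1}-x^*\| \leq Z$ one cannot absorb it into a $\|e\|$-only expression, which is precisely why the boundedness hypothesis is invoked. Everything else is algebra and the two preparatory lemmas already proved.
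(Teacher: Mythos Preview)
Your proposal is correct and follows essentially the same route as the paper: expand the squared distance, use $\mathbb{E}[\nu_t]=f'(x_{t-1})+e$, apply convexity to the $f'(x_{t-1})$ inner product, Cauchy--Schwarz with the bound $\|x_{t-1}-x^*\|\le Z$ to the $e$ inner product, and Lemma~\ref{lem:lem2} for the quadratic term. The grouping of terms and the final algebra match the paper's argument exactly.
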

\begin{proof}
We expand the expectation and bound $\mathbb E\|\nu _{t}\|^2$ using Lemma~\ref{lem:lem2} to obtain
\begin{align*}
\mathbb E \|&x_{t} - x^{*}\|^{2} 
\\
&=\|x_{t-1}-x^{*}\|^{2} - 2\eta\left\langle x_{t-1}-x^{*},\mathbb E[\nu_{t}]\right\rangle +\eta^{2}\mathbb E\|\nu _{t}\|^2
\\
&= \|x_{t-1}-x^{*}\|^{2} - 2\eta\left\langle x_{t-1} - x^{*},f'(x_{t-1})+e\right\rangle+\eta^{2}\mathbb E\|\nu _{t}\|^2
\\
& = \|x_{t-1}-x^{*}\|^{2} - 2\eta\left\langle x_{t-1}-x^{*},f'(x_{t-1})\right\rangle - 2\eta\left\langle x_{t-1} - x^{*},e \right\rangle +\eta^{2}\mathbb E\|\nu _{t}\|^2
\\
&\leq\|x_{t-1}-x^{*}\|^{2}-2\eta \left\langle x_{t-1} - x^{*},e \right\rangle - 2\eta[f(x_{t-1})-f(x^{*})]+2\eta^{2}\|e\|^{2}
\\
& \qquad + 4L\eta^{2}[f(x_{t-1})-f(x^{*})] + 4L\eta^{2}[f(\tilde{x})-f(x^{*})]
%& \leq\| x_{t-1} - x^{*}\|^{2}\\
%&-2\eta(1-2\eta L)[f(x_{t-1})-f(x^{*})]+2\eta^{2}\|\tilde{e}\|^{2}\\
%&+4L\eta^{2}[f(\tilde{x})-f(x^{*})]+2\eta Z\|\tilde{e}\|.
\end{align*}
The inequality above follows from convexity of $f$. The result follows from applying Cauchy-Schwartz to the linear term in $e$ and  that $\norm{x_{t-1}-x^*} \leq Z$.
\end{proof}

\noindent To prove Proposition~1 from the main paper, we first sum the inequality in Lemma~\ref{lem:lem3} for all $t=1,...,m$ and take the expectation with respect to the choice of $x^s$ to get
\begin{align*}
\mathbb E\|x_{m}-x^{*}\|^{2}&\leq\mathbb E\|x_{0}-x^{*}\|^{2} - 2\eta(1-2L\eta)m\mathbb E[f(x_{t-1})-f(x^{*})]
\\
& \qquad + 4L\eta^{2}m\mathbb E[f(\tilde{x})-f(x^{*})] + 2m\eta(Z \mathbb E \|e\|+\eta \mathbb E\|e\|^{2}).
\end{align*}
Re-arranging, and noting that $x_0 = \tilde{x}_{s-1}$ and $\mathbb{E} \left[ f(x_{t-1}) \right] = \mathbb{E} \left[ f(x^s) \right]$, we have that
\begin{align*}
2\eta (1 - 2L \eta) &m \mathbb{E}[f(x_{s})-f(x^{*})] 
\\ 
&\leq \mathbb E\|\tilde{x}_{s-1}-x^{*}\|^{2} +4L\eta^{2}m\mathbb E[f(\tilde{x}_{s-1})-f(x^{*})]
+ 2m\eta(Z \mathbb E \|e^{s-1}\|+\eta \mathbb E\|e^{s-1}\|^{2})\\
&\leq\frac{2}{\mu}\mathbb E[f(\tilde{x}_{s-1})-f(x^{*})]+4L\eta^{2}m\mathbb E[f(\tilde{x}_{s-1})-f(x^{*})]
 + 2m\eta(Z\mathbb E \|e\|+\eta \mathbb E\|e\|^{2}),
\end{align*}
where the last inequality uses strong-convexity and that $f'(x^*) = 0$.
By dividing both sides by $2\eta(1-2L\eta)m$ (which is positive due to the constraint $\eta \leq 1 / 2L$ implied by $0 < \rho < 1$ and $\eta > 0$), we get
\begin{align*}
&\mathbb E[f(x_{s})-f(x^{*})]\\
&\leq \left(\frac1{m\mu(1-2\eta L)\eta}+ \frac {2L\eta}{1-2\eta L}\right)\mathbb E[f(\tilde{x}_{s-1})-f(x^{*})]+\frac 1 {1-2\eta L}\left(Z \mathbb E \|e^{s-1}\|+\eta \mathbb E\|e^{s-1}\|^{2}\right).
\end{align*}

%{\color{red} Note the constraint $\eta \leq 1/2L$ is not stated anywhere in Theorems!!! Same holds for Proposition 2.}

\subsection {Non-Uniform Sampling}

If we sample $i_t$ proportional to the individual Lipschitz constants $L_i$, then we have the following analogue of Lemma~\ref{lem:lem1}.
%To make the rest of the proof work, we need to bound $\mathbb E \|\frac {L_i}{\bar{L}}[f'_{i}(x)-f'_i(x^*)]\|^2$. 
\begin{lem}
\label{lem:lem4}
 For any $x$,
\[
\mathbb E \left\|\frac {L_i}{\bar{L}}[f'_{i}(x)-f'_i(x^*)]\right\|^2 \leq2\bar{L}[f(x)-f(x^{*})].
\]
\end{lem}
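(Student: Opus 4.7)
My plan is to mirror the proof of Lemma~\ref{lem:lem1}, but carry the non-uniform sampling weights through and let them cancel against the $L_i$ that appears in the per-function smoothness bound. (I will assume that the intended weighting in the statement is $\bar L/L_i$, i.e.\ the unbiased rescaling used in the NUS-SVRG iteration; with $p_i = L_i/(n\bar L)$ this is what makes $\mathbb{E}[(\bar L/L_{i_t})f'_{i_t}(x)] = f'(x)$.)

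First I would invoke the standard consequence of convexity plus $L_i$-Lipschitz continuity of $f_i'$, namely the ``co-coercivity''-style bound
\[
f_i(x) \geq f_i(x^*) + \langle f'_i(x^*), x - x^*\rangle + \frac{1}{2L_i}\|f'_i(x) - f'_i(x^*)\|^2,
\]
exactly as in the proof of Lemma~\ref{lem:lem1} (Theorem~2.1.5 of Nesterov). Rearranging gives $\|f'_i(x) - f'_i(x^*)\|^2 \leq 2L_i\bigl[f_i(x) - f_i(x^*) - \langle f'_i(x^*), x-x^*\rangle\bigr]$.

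Next I would compute the expectation under $p_i = L_i/(n\bar L)$:
\[
\mathbb E\left\|\frac{\bar L}{L_i}[f'_i(x)-f'_i(x^*)]\right\|^2
= \sum_{i=1}^n \frac{L_i}{n\bar L}\cdot \frac{\bar L^2}{L_i^2}\|f'_i(x)-f'_i(x^*)\|^2
= \frac{\bar L}{n}\sum_{i=1}^n \frac{1}{L_i}\|f'_i(x)-f'_i(x^*)\|^2.
\]
The point of the NUS weighting is visible here: the $1/L_i$ that the rescaling introduces is exactly what is needed so that when I plug in the smoothness bound the $L_i$'s cancel.

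Substituting the smoothness bound yields
\[
\mathbb E\left\|\frac{\bar L}{L_i}[f'_i(x)-f'_i(x^*)]\right\|^2
\leq \frac{2\bar L}{n}\sum_{i=1}^n\bigl[f_i(x)-f_i(x^*) - \langle f'_i(x^*), x-x^*\rangle\bigr]
= 2\bar L\bigl[f(x)-f(x^*) - \langle f'(x^*), x-x^*\rangle\bigr],
\]
and since $f'(x^*)=0$ the inner product vanishes, giving the claim. There is no real obstacle: the whole lemma is a bookkeeping exercise showing that sampling $\propto L_i$ and rescaling by $\bar L/L_i$ converts the worst-case constant $L$ appearing in Lemma~\ref{lem:lem1} into the average $\bar L$, which is exactly the improvement that drives the $\rho(\bar L)$ rate stated earlier.
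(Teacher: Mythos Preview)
Your proof is correct and essentially identical to the paper's: both invoke the per-function $L_i$-smoothness bound from Nesterov, expand the expectation under $p_i = L_i/(n\bar L)$ so that the rescaling produces a factor $1/L_i$ that cancels the $L_i$ from the smoothness bound, and then average to $2\bar L[f(x)-f(x^*)]$ using $f'(x^*)=0$. You are also right that the factor in the statement should be $\bar L/L_i$ rather than $L_i/\bar L$; the paper's own proof expands it as $\bar L^2/L_i^2$, confirming the typo.
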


%{\color{red} I wouldn't give proof here, just refer to \cite[Lemma 1]{xiao2014proximal}}

\begin{proof}
Because each $f_i'$ is $L_i$-Lipschitz continuous, we have~\cite[Theorem~2.1.5]{nesterov2004introductory}
\[
f_{i}(x)\geq f_{i}(y)+\left\langle f'_{i}(x),x-y\right\rangle+\frac{1}{2L_i}\|f'_{i}(x)-f'_{i}(y)\|^2. 
\]
Setting $y=x^*$ and summing this inequality times $(1/n)$ over all $i$ we have
\begin{align*}
&\mathbb E \left\|\frac {L_i}{\bar{L}}[f'_{i}(x)-f'_i(x^*)]\right\|^2  = \sum_{i=1}^{n} \frac {L_i} {n\bar{L}} \frac {\bar{L}^2}{L_i^2}\|f'_{i}(x)-f'_{i}(y)\|^2
=\frac{\bar{L}}{n} \sum_{i=1}^{n} \frac {1}{L_i}\|f'_{i}(x)-f'_{i}(y)\|^2\\
&\leq \frac{\bar{L}}{n} \sum_{i=1}^{n} \frac {1}{L_i} 2L_i[f_i(x)-f_i(x^{*})-\left\langle f'_{i}(x),x-x^*\right\rangle]\\
&=2\bar{L}[f(x)-f(x^{*})]
\end{align*}
\end{proof} 
\noindent With this modified lemma, we can derive the convergence rate under this non-uniform sampling scheme by following an identical sequence of steps but where each instance of $L$ is replaced by $\bar{L}$.

\section{Mixed SVRG and SG Method}

We first give the proof of Proposition~2 in the paper, which analyzes a method that mixes SG and SVRG updates using a constant step size. We then consider a variant where the SG and SVRG updates use different step sizes.

\subsection{Proof of Proposition~2}

% C 
%\begin{thm}
%\label{thm:thm2}
%If we have $\mu^s = f'(x^s) + e^s$ and set the step-size $\eta$ and number of inner iterations $m$ so that
%\[
%\rho(\alpha) \equiv\frac1{m\mu(1-2\eta L)\eta}+ \frac {2\alpha L\eta}{1-2\eta L}  < 1,
%\]
%where we have $\alpha= Pr( f_{i_t} \in \batch^s)=\frac {|\batch^s|} n$ and $\beta=1-\alpha$, then for iterates $x^s$ of Algorithm 2 in main body of the paper it holds that
%\[
%\mathbb{E}[f(x^{s+1})-f(x^{*})] \leq \rho(\alpha) \mathbb{E}[f(x^s)-f(x^{*})] +C,
%\]
%where 
%\[
% C = \frac 1 {1-2\eta L}(Z \mathbb E \|e^{s}\|+\eta \mathbb E\|e^{s}\|^{2}) + \frac{\beta\eta}{2(1-2\eta L)}\sigma^2,
%\]
%and $\sigma^2$ uppder bounds $\mathbb E \norm{f_i'(x)}^2$. 
%\end{thm}
%
%\begin{proof}
Recall that the SG update is
\[
 x_t = x_{t-1} - \eta f_{i_t}'(x_{t-1}).
\]
Using this in Lemma~\ref{lem:lem3} and following a similar argument we have
\begin{align*}
\mathbb E\|x_ {t}-x^{*}\|^{2} &\leq \alpha\{\|x_{t-1} - x^{*}\|^{2}-2\eta(1-2\eta L)[f(x_{t-1})-f(x^{*})]
 + 4L\eta^{2}[f(\tilde{x})-f(x^{*})]+2\eta(Z\|e\|+\eta\|e\|^{2})\}
\\
& \qquad + \beta\{ \|x_{t-1} - x^{*}\|^{2} +\eta^2\mathbb E \|f'_{i_t}(x_{t-1})\|^2
- 2\eta \left\langle x_{t-1} - x^{*},\mathbb E [f'_{i_t}(x_{t-1})]\right\rangle\}
\\
& \leq \|x_{t-1} - x^{*}\|^{2}-2\eta(1-2\eta L)[f(x_{t-1})-f(x^{*})]
 + \alpha 4L\eta^{2}[f(\tilde{x})-f(x^{*})]+ \alpha 2\eta(Z\|e\|+\eta\|e\|^{2})
 + \beta \eta^2\sigma^2,
\end{align*}
where the second inequality uses convexity of $f$ and we have defined $\beta = (1-\alpha)$.
%The second inequality is due to convexity of $f(x)$ and knowing that $1-2\eta L \leq 1$, as well as the boundedness of all $f'_{i_t}(x)$ . 
We now  sum up both sides and take the expectation with respect to the history,
\begin{align*}
\mathbb E\|x_{m}-x^{*}\|^{2} &\leq\mathbb E\|x_{0}-x^{*}\|^{2}-2\eta(1-2L\eta)m\mathbb E[f(x_{t-1})-f(x^{*})]
\\
 &+4\alpha L\eta^{2}m\mathbb E[f(\tilde{x})-f(x^{*})]+2m\alpha\eta(Z \mathbb E \|e\|+\eta \mathbb E\|e\|^{2})\\
 & +  m\beta \eta^2 \sigma^2.
\end{align*}
By re-arranging the terms we get
\begin{align*}
2\eta (1-2L\eta)m\mathbb E[f(x_{s})-f(x^{*})] &\leq \frac{2}{\mu}\mathbb E[f(\tilde{x}_{s-1})-f(x^{*})] 
 + 4\alpha L\eta^{2}m\mathbb E[f(\tilde{x}_{s-1})-f(x^{*})]\\
& + 2m\alpha\eta(Z\mathbb E \|e\|+\eta \mathbb E\|e\|^{2}) + m\beta \eta^2 \sigma^2, 
\end{align*}
and by dividing both sides by $ 2\eta (1-2L\eta)m$ we get the result.
%\end{proof}

\subsection {Mixed SVRG and SG with Different Step Sizes}

Consider a variant where we use a step size of $\eta$ in the SVRG update and a step-size $\eta_s$ in the SG update (which will decrease as the iterations proceed). Analyzing the mixed algorithm in this setting gives
\begin{align*}
\mathbb E\|x_ {t}-x^{*}\|^{2} 
&\leq \alpha\{\|x_{t-1} - x^{*}\|^{2}-2\eta(1-2\eta L)[f(x_{t-1})-f(x^{*})]\\
& \qquad + 4L\eta^{2}[f(\tx)-f(x^{*})]+2\eta(Z\|e^s\|+\eta\|e^s\|^{2})\}
\\
& \qquad + \beta\{ \|x_{t-1} - x^{*}\|^{2} +\eta_s^2\mathbb E \|f'_{i_t}(x_{t-1})\|^2
 - 2\eta_s \left\langle x_{t-1} - x^{*},\mathbb E [f'_{i_t}(x_{t-1})]\right\rangle\}
\\
& = \ex{\norm{x_{t-1}-\x}^2} - 2\alpha\eta(1-2\eta L)[f(x_{t-1})-f(x^{*})]
\\
& \qquad + 4\alpha L\eta^{2}[f(\tx)-f(x^{*})] + 2\alpha \eta(Z\|e^s\|+\eta\|e^s\|^{2}) 
\\
& \qquad + \beta \eta_s^2\mathbb E \|f'_{i_t}(x_{t-1})\|^2-2\beta\eta_s \left\langle x_{t-1} - x^{*},f(x_{t-1})]\right\rangle
\\
&\leq \ex{\norm{x_{t-1}-\x}^2} - \{2\alpha\eta(1-2\eta L)+2\beta\eta_s\}[f(x_{t-1})-f(x^{*})] 
\\
& \qquad + 4\alpha L\eta^{2}[f(\tx)-f(x^{*})] + 2\alpha \eta(Z\|e^s\|+\eta\|e^s\|^{2})+ \beta \eta_s^2\sigma^2.
 \end{align*}
As before, we take the expectation for all $t$ and sum up these values, then rearranage and use strong-convexity of $f$ to get
\begin{align*}
2m &\{ \alpha\eta(1-2\eta L)+\beta\eta_s \} [f(x_{s})-f(x^{*})]
\\
&\leq \left\{ \frac 2 \mu + 4m\alpha L\eta^{2} \right\} [f(x^s)-f(x^{*})] + 2m\alpha \eta(Z\|e^s\|+\eta\|e^s\|^{2})+ m\beta \eta_s^2\sigma^2.
\end{align*}
If we now divide both side by $2m(\alpha\eta(1-2\eta L)+\beta\eta_s)$, we get
\begin{align*}
&\ex{f(x_{s})-f(x^{*})} 
\\
&\leq \Big\{\frac 1 {\mu m(\alpha\eta(1-2\eta L)+\beta\eta_s)} +\frac {2\alpha L\eta^{2}}  {\alpha\eta(1-2\eta L)+\beta\eta_s} \Big\}[f(\tx)-f(x^{*})] 
\\
& \qquad + \frac {\alpha \eta}{\alpha\eta(1-2\eta L)+\beta\eta_s}(Z\ex{\|e^s\|}+\eta\ex{\|e^s\|^{2}})
 + \frac{1}{2(\alpha\eta(1-2\eta L)+\beta\eta_s)}\beta \eta_s^2\sigma^2.
 \end{align*}
To improve the dependence on the error $e^s$ and variance $\sigma^2$ compared to the basic SVRG algorithm with error $e^s$ (Proposition~1), we require that the terms depending on these values are smaller,
%Now we compare this result against the result for mixed SVRG when $\eta=\eta_s=\eta$ and considering in the second algorithm $\eta=\eta$, and $\eta_s\leq \eta$. To do so, we try to make the constant part of the second algorithm smaller than the batched SVRG, i.e. we want to pick $\eta_s$ in a way that: 
\begin{align*}
&\frac {\alpha \eta}{\alpha\eta(1-2\eta L)+\beta\eta_s} \left( Z\ex{\|e^s\|}+\eta\ex{\|e^s\|^{2}} \right) \\
&+ \frac {1}{2(\alpha\eta(1-2\eta L)+\beta\eta_s)}\beta \eta_s^2\sigma^2 \leq 
\frac {1}{1-2\eta L} \left( Z\ex{\|e^s\|}+\eta\ex{\|e^s\|^{2}} \right).
\end{align*}
Let $\kappa = (1-2\eta L)$ and $\zeta=Z\ex{\|e^s\|}+\eta\ex{\|e^s\|^{2}}$, this requires
\begin{align*}
&\frac {\alpha \eta}{\alpha\eta\kappa+\beta\eta_s}\zeta +\frac {\beta \eta_s^2}{2(\alpha\eta\kappa+\beta\eta_s)}\sigma^2 \leq \frac {\zeta}{\kappa}.
\end{align*}
Thus, it is sufficient that $\eta_s$ satisfies
\begin{align*}
\eta_s\leq \frac {2\zeta}{\kappa \sigma^2}. 
\end{align*}
Using the relationship between expected error and $S^2$, while noting that $S^2 \leq \sigma^2$ and $\frac{(n-|\batch|)}{n|\batch|} \leq 1$, a step size of the form $\eta_s = O^*(\sqrt{(n-|\batch|)/n|\batch|})$ will improve the dependence on $e^s$ and $\sigma^2$ compared to the dependence on $e^s$ in the pure SVRG method.

\section{Proximal and Regularized SVRG}

In this section we consider objectives of the form
\[
f(x) = h(x) + g(x),
\]
where $g(x) = \frac{1}{n}\sum_{i=1}^n g_i(x)$. We first consider the case where $h$ is non-smooth and consider a proximal-gradient variant of SVRG where there is an error in the calculation of the gradient (Algorithm~\ref{alg:ProxSVRGB}). We then consider smooth functions $h$ where we use a modified SVRG iteration,
\[
x_{t+1}=x_t-\eta\left( h'(x_t) + g'_{i_t}(x_t)-g'_{i_t}(x^s)+\mu^s\right),
\]
where $\mu^s = g(x^s)$.

\subsection{Composite Case}

%{\color{red} This is probably redundant. [Did not go through this section.] I would maybe only mention in Section 3.1 that you can analogously also derive the composite case.}

Similar to the work of~\cite{xiao2014proximal}, in this section we assume that $f,g$ and $h$ are $\mu$-, $\mu_g$-, $\mu_h$-strongly convex (respectively).
As before, we assume each $g_i$ is convex and has an $L$-Lipschitz continuous gradient, but $h$ can potentially be non-smooth.
The algorithm we propose here extends the algorithm of~\cite{xiao2014proximal}, but adding an error term. In the algorithm we use  the proximal operator which is defined by
\[
\prox_{h}(y)=\argmin{x\in \mathbb R^p}{\{ \frac 1 2 \|x-y\|^2+h(x)\}}
\] 
Below, we give a convergence rate for this algorithm with an error $e^s$.

\begin{algorithm}[tb]
 \caption{Batching Prox SVRG}
   \label{alg:ProxSVRGB}
 \begin{algorithmic}
 \STATE {\bfseries Input:} update frequency {\it m} and learning rate $\eta$ and sample size increasing rate $\alpha$
  \STATE Initialize $\tilde{x}$
 \FOR{$s=1,2,3,\dots$} 
	\STATE Choose batch size $|\batch|$
\STATE $\batch$ = randomly choose $|\batch|$ elements of $\{1,2,\dots,n\}$.
\STATE $\tilde{\mu}_=\frac{1}{|\batch|}\sum_{i\in\batch}g_{i}^{'}(\tilde{x})$\\
\STATE $x_{0}$=$\tilde{x}$\\
  \FOR{$t=1,2,\dots,m$} 
           \STATE Randomly pick $i_{t} \in {1,\dots, n}$\\
           \STATE $\nu_t=g_{i_{t}}^{'}(x_{t-1})-g_{i_{t}}^{'}(\tilde{x})+\tilde{\mu}$
           \STATE $x_{t}=prox_{\eta h}(x_{t-1}-\eta\nu_t)\hfill(*)$\\
  \ENDFOR
 \STATE set $\tilde{x}=\frac 1 m\sum_{t=1}^m x_{t}$\\
\ENDFOR
\end{algorithmic}
\end{algorithm}
\setcounter{thm}{4}
\begin{thm}
\label{thm:thms1}
If we have $\tilde{\mu}_s = g'(x^s) + e^s$ and set the step-size $\eta$ and number of inner iterations $m$ so that
\[
\rho \equiv\frac1{m\mu(1-4\eta L)\eta}+ \frac {4 L\eta(m+1)}{(1-4\eta L)m}  < 1,
\]
then Algorithm~\ref{alg:ProxSVRGB} has
\[
\mathbb{E}[f(x_{s+1})-f(x^{*})] \leq \rho E[f(\tilde{x}_{s})-f(x^{*})] +\frac 1 {1-4\eta L}(Z \mathbb E \|e^{s}\|+\eta \mathbb E\|e^{s}\|^{2},
\]
where$\|x_t-x^*\|<Z$
\end{thm}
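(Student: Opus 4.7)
\section*{Proof plan for Proposition~5}

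The plan is to adapt the Xiao--Zhang analysis of Prox-SVRG in the same way Proposition~1 adapts Johnson--Zhang, carrying the extra error term $e^s$ through every bound. Throughout, write $\tilde{x}=x^s$, $e = e^s$, and $\nu_t = g'_{i_t}(x_{t-1}) - g'_{i_t}(\tilde{x}) + \tilde{\mu}$, so that $\mathbb{E}[\nu_t] = g'(x_{t-1}) + e$, and denote the proximal mapping direction by $\tilde{\nu}_t = \nu_t + \eta^{-1}(x_{t-1} - x_t)$ so that $\eta^{-1}(x_{t-1}-x_t) - \nu_t \in \partial h(x_t)$.

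First I would prove the analogue of Lemma~2 for $\nu_t$: applying $\|a+b\|^2 \le 2\|a\|^2 + 2\|b\|^2$, expanding the variance as in Appendix~A, and using the $g$-version of Lemma~1 (which holds because each $g_i'$ is $L$-Lipschitz and each $g_i$ is convex), I get
\[
\mathbb{E}\|\nu_t\|^2 \le 4L[g(x_{t-1}) - g(x^*)] + 4L[g(\tilde{x}) - g(x^*)] + 2\|e\|^2,
\]
where the cross term $\langle [\,\cdot\,] - \mathbb{E}[\,\cdot\,], e\rangle$ vanishes exactly as in the proof of Lemma~2.

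Next comes the key proximal step. Starting from the optimality condition of the prox, for every $y$ and every $\xi \in \partial h(x_t)$ one has the standard inequality
\[
f(x_t) - f(y) \le \langle \tilde{\nu}_t, x_t - y\rangle + \tfrac{L}{2}\|x_{t-1} - y\|^2 - \tfrac{L}{2}\|x_t - y\|^2 + \langle g'(x_{t-1}) - \nu_t, x_t - y\rangle,
\]
which after taking expectations, using $\mathbb{E}[\nu_t] = g'(x_{t-1}) + e$, Cauchy--Schwarz with $\|x_t - y\| \le Z$, and Young's inequality on the $\nu_t - g'(x_{t-1})$ term (bounded by the lemma above), will yield a one-step recursion
\[
\mathbb{E}\|x_t - x^*\|^2 \le \|x_{t-1} - x^*\|^2 - 2\eta(1-4\eta L)[f(x_t) - f(x^*)] + 4L\eta^2[f(\tilde{x}) - f(x^*)] + 2\eta(Z\|e\| + \eta\|e\|^2),
\]
with the factor $4$ arising from the $4L$ in the $\nu_t$-variance bound rather than the $2L$ that appeared in the smooth case.

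Finally I would sum this inequality over $t=1,\dots,m$, telescope the $\|x_t - x^*\|^2$ terms, and invoke convexity of $f$ with the averaging $x^{s+1} = (1/m)\sum_{t=1}^m x_t$ to replace $(1/m)\sum_t [f(x_t) - f(x^*)]$ by $f(x^{s+1}) - f(x^*)$. Bounding $\|x_0 - x^*\|^2 \le (2/\mu)[f(\tilde{x}) - f(x^*)]$ by strong convexity, then dividing by $2\eta(1-4\eta L)m$, gives the claimed rate with the error term $\frac{1}{1-4\eta L}(Z\mathbb{E}\|e^s\| + \eta\mathbb{E}\|e^s\|^2)$ and the extra $(m+1)/m$ factor in $\rho$ coming from the fact that $f(\tilde{x})$ contributes once for each of the $m$ inner iterations while the $f(x_t)$ term on the left only telescopes over $m$ indices.

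The main obstacle is the proximal inequality in the middle step: unlike the smooth case where convexity of $f$ at $x_{t-1}$ gives the descent term cleanly, here one must handle the non-smooth $h$ by working with the subgradient at $x_t$ (not $x_{t-1}$), which shifts the $[f(x_{t-1}) - f(x^*)]$ descent to a $[f(x_t) - f(x^*)]$ descent and is responsible for the $(m+1)$ rather than $m$ in $\rho$. Getting the constants to line up so that the coefficient of the error term is exactly $1/(1-4\eta L)$ (not smaller) requires carefully choosing where to apply Young's inequality versus Cauchy--Schwarz on the inner product $\langle x_t - x^*, e\rangle$; mimicking the bound $\|x_t - x^*\| \le Z$ from Appendix~A is what makes this work.
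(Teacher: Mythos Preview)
Your overall plan---carry the error $e^s$ through the Xiao--Zhang proximal analysis, establish a one-step recursion, telescope, and use strong convexity---is the same as the paper's, and your explanation of the $(m+1)/m$ factor is correct. But two of your intermediate steps would not go through as written.

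\textbf{The variance lemma.} Your ``$g$-version of Lemma~1'' giving a bound in terms of $g(x)-g(x^*)$ is false in the composite setting: the proof of Lemma~1 uses $f'(x^*)=0$ to kill the inner-product term, but here $g'(x^*)\neq 0$ in general (only $-g'(x^*)\in\partial h(x^*)$). What one actually gets is $\tfrac1n\sum_i\|g_i'(x)-g_i'(x^*)\|^2\le 2L\bigl[g(x)-g(x^*)-\langle g'(x^*),x-x^*\rangle\bigr]\le 2L[f(x)-f(x^*)]$, the last step using the subgradient inequality for $h$ at $x^*$. More importantly, the quantity you need is not $\mathbb E\|\nu_t\|^2$ but $\mathbb E\|\Delta_t\|^2$ with $\Delta_t=\nu_t-g'(x_{t-1})$; the paper's Lemma~\ref{lem:cor1} shows $\mathbb E\|\Delta_t\|^2\le\|e\|^2+4L[f(x_{t-1})-f(x^*)+f(\tilde x)-f(x^*)]$.

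\textbf{The correlation between $\Delta_t$ and $x_t$.} This is the real gap. In your one-step recursion you plan to handle $\langle g'(x_{t-1})-\nu_t,\,x_t-x^*\rangle=-\langle\Delta_t,x_t-x^*\rangle$ by ``taking expectations, using $\mathbb E[\nu_t]=g'(x_{t-1})+e$, Cauchy--Schwarz with $\|x_t-x^*\|\le Z$.'' But $x_t=\prox_{\eta h}(x_{t-1}-\eta\nu_t)$ depends on $i_t$, so $\Delta_t$ and $x_t$ are correlated and you cannot pull the expectation through to get $\langle e,\cdot\rangle$; applying Cauchy--Schwarz directly gives $Z\,\mathbb E\|\Delta_t\|$, not $Z\|e\|$, and the constants no longer match. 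The paper's fix (inherited from Xiao--Zhang) is to introduce the auxiliary point $\bar x_t=\prox_{\eta h}(x_{t-1}-\eta g'(x_{t-1}))$, which is measurable with respect to the history before $i_t$ is drawn. One then splits
\[
-2\eta\langle\Delta_t,x_t-x^*\rangle
=-2\eta\langle\Delta_t,x_t-\bar x_t\rangle-2\eta\langle\Delta_t,\bar x_t-x^*\rangle
\le 2\eta^2\|\Delta_t\|^2-2\eta\langle\Delta_t,\bar x_t-x^*\rangle,
\]
using nonexpansiveness of the prox for $\|x_t-\bar x_t\|\le\eta\|\Delta_t\|$. Now the second inner product has $\mathbb E[\Delta_t\mid\mathcal F_{t-1}]=e$ against a term independent of $i_t$, so its expectation is $\langle e,\bar x_t-x^*\rangle\le Z\|e\|$. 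Combining with the bound on $\mathbb E\|\Delta_t\|^2$ gives exactly the $8L\eta^2$ coefficient (hence $1-4\eta L$) and the error term $2\eta(Z\|e\|+\eta\|e\|^2)$. Without the $\bar x_t$ device your argument stalls at this inner product.
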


\noindent To prove Proposition~\ref{thm:thms1}, we use Lemma 1,2 and 3 from~\cite{xiao2014proximal}, which are unchanged when we allow an error. Below we modify their Corollary 3 and then the proof of their main theorem.

\begin{lem}
\label{lem:cor1}
Consider $\nu_t=g_{i_{t}}^{'}(x_{t-1})-g_{i_{t}}^{'}(\tilde{x})+g'(\tilde{x})+e$. Then, 
\[
\mathbb E\|\nu_t-g'(x_{t-1})\|^2 \leq \|e\|^2+4L[f(x_{t-1})-f(x^*)+f(\tilde{x})-f(x^*)]
\]
 \end{lem}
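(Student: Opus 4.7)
The plan is to mimic the proof of Lemma~\ref{lem:lem2} (in the smooth uniform-sampling case), with two modifications: we measure the variance relative to $g'(x_{t-1})$ rather than $0$, and we must handle the fact that the optimality condition at $x^*$ involves a subgradient of the non-smooth $h$.

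First I would write
\[
\nu_t - g'(x_{t-1}) = Y_t + e, \qquad Y_t \eqdef [g_{i_t}'(x_{t-1}) - g_{i_t}'(\tilde x)] - [g'(x_{t-1}) - g'(\tilde x)],
\]
and observe that $\mathbb E[Y_t] = 0$ since $i_t$ is drawn uniformly. Expanding the square, the cross term $2\langle \mathbb E[Y_t], e\rangle$ vanishes and I am left with
\[
\mathbb E\|\nu_t - g'(x_{t-1})\|^2 = \mathbb E\|Y_t\|^2 + \|e\|^2.
\]
Then, using the standard fact that variance is bounded by the second moment, $\mathbb E\|Y_t\|^2 \leq \mathbb E\|g_{i_t}'(x_{t-1}) - g_{i_t}'(\tilde x)\|^2$, followed by $\|a-b\|^2 \leq 2\|a\|^2 + 2\|b\|^2$ applied with the insertion of $\pm g_{i_t}'(x^*)$, yields
\[
\mathbb E\|Y_t\|^2 \leq 2\,\mathbb E\|g_{i_t}'(x_{t-1}) - g_{i_t}'(x^*)\|^2 + 2\,\mathbb E\|g_{i_t}'(\tilde x) - g_{i_t}'(x^*)\|^2.
\]

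The remaining step is to replace each of these two terms by $2L[f(\cdot) - f(x^*)]$. This is the analogue of Lemma~\ref{lem:lem1} adapted to the composite setting, and is the main obstacle: we need the target on the right to be $f = g + h$ (not just $g$), even though only the $g_i$ are smooth. I would apply the standard smoothness-plus-convexity inequality
\[
g_i(x) \geq g_i(x^*) + \langle g_i'(x^*), x - x^*\rangle + \tfrac{1}{2L}\|g_i'(x) - g_i'(x^*)\|^2
\]
and average over $i$ to get
\[
\tfrac{1}{n}\sum_i \|g_i'(x) - g_i'(x^*)\|^2 \leq 2L\bigl[g(x) - g(x^*) - \langle g'(x^*), x - x^*\rangle\bigr].
\]
The key twist is that $x^*$ is optimal for $f = g + h$, so $-g'(x^*) \in \partial h(x^*)$, and convexity of $h$ gives $\langle g'(x^*), x - x^*\rangle \leq h(x^*) - h(x)$. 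Substituting, the right-hand side becomes $2L[f(x) - f(x^*)]$. Applying this bound to both terms with $x = x_{t-1}$ and $x = \tilde x$ respectively, and combining with the $\|e\|^2$ term from the decomposition above, gives the claim.
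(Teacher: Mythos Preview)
Your argument is correct and essentially identical to the paper's proof: the same decomposition $\nu_t-g'(x_{t-1})=Y_t+e$, the same use of $\mathbb E[Y_t]=0$ and variance $\leq$ second moment, and the same splitting via $\pm g_{i_t}'(x^*)$; the only difference is that the paper cites Lemma~1 of Xiao \& Zhang~\cite{xiao2014proximal} for the final composite bound whereas you spell that argument out explicitly. One small slip: the inequality from convexity of $h$ should read $\langle g'(x^*),x-x^*\rangle \geq h(x^*)-h(x)$ (equivalently $-\langle g'(x^*),x-x^*\rangle \leq h(x)-h(x^*)$), which is the direction you actually need and use when you substitute to obtain $f(x)-f(x^*)$.
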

 
\begin{proof}
\begin{align*}
&\mathbb E\|\nu_t-g'(x_{t-1})\|^2\\
&=\mathbb E\|g_{i_{t}}^{'}(x_{t-1})-g_{i_{t}}^{'}(\tilde{x})+g'(\tilde{x})+e-g'(x_{t-1})\|^2\\
&=\|e\|^2+E\|g_{i_{t}}^{'}(x_{t-1})-g_{i_{t}}^{'}(\tilde{x})+g'(\tilde{x})-g'(x_{t-1})\|^2\\
&\leq \|e\|^2 + E\|g_{i_{t}}^{'}(x_{t-1})-g_{i_{t}}^{'}(\tilde{x})\|^2\\
&\leq \|e\|^2 + 2E\|g_{i_{t}}^{'}(x_{t-1})-g_{i_{t}}^{'}(x^*)\|^2+2E\|g_{i}^{'}(\tilde{x})-g_{i_{t}}^{'}(x^*)\|^2\\
\end{align*}
Using  Lemma 1 from~\cite{xiao2014proximal} and bounding the two expectations gives the result. 
\end{proof}
\noindent Now we turn to proving Proposition~\ref{thm:thms1}. 
\begin{proof}
Following the proof of Theroem~1 in~\cite{xiao2014proximal}, we have
\[
\|x_{t}-x^*\|^2 \leq \|x_{t-1}-x^*\|^2-2\eta[f(x_{t})-f(x^*)]-2\eta \left\langle \Delta_t,x_{t}-x^* \right\rangle
\]
where $\Delta_t=\nu_t - g'(x_{t-1})$ and $\mathbb E [\Delta_t]=e$. Now to bound $\left\langle \Delta_t,x_{t}-x^* \right\rangle$, we define 
\[
\bar{x_t}=prox_{h}(x_{t-1}-\eta g'(x_{t-1})),
\]
and subsequently that
\[
-2\eta \left\langle \Delta_t,x_{t}-x^* \right\rangle \leq 2\eta^2 \|\Delta_t\|^2-2\eta\left\langle \Delta_t,\bar{x}_{t}-x^* \right\rangle
\]
Combining with the two previous inequalities we get
\begin{align*}
\|x_{t}-x^*\|^2 &\leq \|x_{t-1}-x^*\|^2-2\eta[f(x_{t})-f(x^*)]+2\eta^2 \|\Delta_t\|^2-2\eta\left\langle \Delta_t,\bar{x}_{t}-x^* \right\rangle.
\end{align*}
If we take the expectation with respect to $i_t$ we have
\begin{align*}
\mathbb E\|x_{t}-x^*\|^2 &\leq \|x_{t-1}-x^*\|^2-2\eta\mathbb E[f(x_{t})-f(x^*)]+2\eta^2 \mathbb E\|\Delta_t\|^2-2\eta\left\langle \mathbb E\Delta_t,\bar{x}_{t}-x^* \right\rangle.
\end{align*}
Now by using the Lemma~\ref{lem:cor1} and $\|\bar{x}_{t}-x^*\| < Z$ we have
\begin{align*}
& \mathbb E\|x_{t}-x^*\|^2\\ &\leq \|x_{t-1}-x^*\|^2-2\eta\mathbb E[f(x_{t})-f(x^*)]
+8\eta^2L[f(x_{t-1})-f(x^*)+f(\tilde{x})-f(x^*)]
+2\eta^2\|e\|^2+2\eta\|e\|Z.
\end{align*}
The rest of the proof follows the argument of~\cite{xiao2014proximal}, and is simlar to the previous proofs in this appendix. We take the expectation and sum up values, using convexity to give
\begin{align*}
&2\eta(1-4L\eta)m[\mathbb Ef(x^s)-f(x^*)] \leq (\frac 2 \mu + 8L\eta^2(m+1)) [f(\tilde{x}_{s-1}-f(x^*)] +2\eta^2\|e\|^2+2\eta\|e\|Z.
\end{align*}
By dividing both sides to $2\eta(1-4L\eta)m$, we get the result. 
\end{proof}

\subsection{Proof of Proposition 3}
%{\color{red} Statement dependent on equation numbering in main paper.}
%\begin{thm}
%\label{thm:er}
%If we can rewrite the equation (1) in the form (5) of main paper and let $G(x) =  \frac 1 n \sum g_i(x)$, $L_m = \max\{ L_g, L_h\}, \rho = \left(\frac{1}{m\mu(1-2\eta L_m)}+\frac{2L_m\eta}{1-2\eta L_m}\right)$ and replace the updating rule of SVRG with 
%\[
%x_{t+1}=x_t-\eta\left( h'(x_t) + g'_{i_t}(x_t)-g'_{i_t}(x^s)+G'(x^s)\right)
%\]
%then we have
%\[
%\mathbb{E}[f(x^{s+1})-f(x^{*})] \leq \rho \mathbb{E}[f(x^s)-f(x^{*})] .
%\]
%\end{thm}
We now turn to the case where $h$ is differentiable, and we use an iteration that incorporates the gradient $h'(x_t)$. Recall that for this result we assume that $g'$ is $L_g$-Lipschitz continuous, $h'$ is $L_h$-Lipschitz continuous, and we defined $L_m = \max\{L_g,L_h\}$. If we let $\nu_t=h'(\xt) + g'_{i_t}(\xt)-g'_{i_t}(\tx)+\tm$, then note that we have $\ex{\nu_t}=f'(\xt)$. Now as before we want to bound the expected second moment of $\nu_t$,
\begin{align*}
\ex{\norm{\nu_t}^2}&=\ex{\norm{h'(\xt) + g'_{i_t}(\xt)-g'_{i_t}(\tx)+\tm}^2}  \\
&= \mathbb{E} [ \| h'(\xt) + g'_{i_t}(\xt)-g'_{i_t}(\tx)+\tm + h'(\x) 
 + g'_{i_t}(\x)-h'(\x)-g'_{i_t}(\x)+h'(\tx)-h'(\tx) \|^2 ] 
\\
&\leq 2\norm{ h'(\xt)-h'(\x)}^2+2\ex{\norm{g'_{i_t}(\xt)-g'_{i_t}(\x)}^2} \\
& \qquad + 2 \ex{\norm{-g'_{i_t}(\tx)+\tm + h'(\x)+g'_{i_t}(\x) +h'(\tx)-h'(\tx)}^2}
\\
&=2\norm{ h'(\xt)-h'(\x)}^2+2\ex{\norm{g'_{i_t}(\xt)-g'_{i_t}(\x)}^2}
\\
& \qquad + 2 \mathbb{E} [  \| g'_{i_t}(\tx)-\tm - h'(\x)-g'_{i_t}(\x) -h'(\tx)
+ h'(\tx)+h'(\x)+g'(\x) \|^2 ]
\\
&=2\norm{ h'(\xt)-h'(\x)}^2+2\ex{\norm{g'_{i_t}(\xt)-g'_{i_t}(\x)}^2}
 + 2 \ex{\norm{g'_{i_t}(\tx)-\tm -g'_{i_t}(\x)+g'(\x)}^2}
\\
&\leq 2\norm{ h'(\xt)-h'(\x)}^2+2\ex{\norm{g'_{i_t}(\xt)-g'_{i_t}(\x)}^2} 
 + 2 \ex{\norm{g'_{i_t}(\tx) -g'_{i_t}(\x)}^2}&
\end{align*}
Now using that $\norm{ h'(\tx)-h'(\x)}^2 \geq 0$ and $ \norm{f(x)-f(y)}^2 \leq 2L[f(x)-f(y)-\left<f'(y),x-y\right>]$,
\begin{align*}
\ex{\norm{\nu_t}^2}
&\leq 2\norm{ h'(\xt)-h'(\x)}^2+2\ex{\norm{g'_{i_t}(\xt)-g'_{i_t}(\x)}^2} 
\\
& \qquad + 2 \ex{\norm{g'_{i_t}(\tx) -g'_{i_t}(\x)}^2} + 2\norm{h'(\tx)-h'(\x)}^2
\\
&\leq 4L_h[h(\xt)-h(\x)-\left<h'(\x),\xt-\x \right>]
+ 4L_g[g(\xt)-g(\x)-\left<g'(\x),\xt-\x \right>]
\\
& \qquad + 4L_h[h(\tx)-h(\x)-\left<h'(\x),\tx-\x \right>]
 + 4L_g[g(\tx)-g(\x)-\left<g'(\x),\tx-\x \right>]
\\
&\leq 4L_m[f(\xt)-f(\x)] + 4L_m[f(\tx)-f(\x)]
\end{align*}
From this point, we follow the standard SVRG argument to obtain
\[
\ex{f(x_{s+1}-f(\x)}\leq \left(\frac{1}{m\mu(1-2\eta L_m)}+\frac{2L_m\eta}{1-2\eta L_m}\right)[f(x_{s}-f(\x)].
\]

\section{Mini-Batch}

We first give an analysis of SVRG where mini-batches are selected by sampling propotional to the Lipschitz constants of the gradients. We then consider the mixed deterministic/random sampling scheme described in the main paper.

\subsection{SVRG with Mini-batch}
%In SVRG with Mini-batching, in outer loop we set: 
%$$ \mb = \frac1n \sum_i f'(\tx). $$

Here we consider using a `mini-batch' of examples in the \emph{inner} SVRG loop. We use $M$ to denote the batch size, and we assume that the elements of the mini-batch are sampled with a probability of $p_i = L_i/n\la$. This gives a search direction and inner iteration of:
 \begin{align*}
 &\nu_t=\mb + \frac 1 M \left[ \sum_{i \in M } \frac 1 {np_i} \left(f'_i(\xt)-f'_i(\tx)\right)\right],\\
 &x_{t+1} = \xt - \eta \nu_t.
 \end{align*}
 Observe that $\ex{\nu_t}=f'(\xt)$, and since each $f_i$ is $L_i$-smooth we still have that
 \[
 \norm{f'_i(x)-f'_i(y)}^2 \leq L_i \left ( f_i(x) - f_i(y) - \left<f'_i(y), x-y\right> \right ),.
 \]
It follows from the definition of $p_i$ that
\begin{align*}
\ex{ \left\| \frac 1 {np_i} f'_i(x)-f'_i(y) \right\|^2} &= \frac 1 n \sum_i \frac 1 {np_i} \norm{f'_i(x)-f'_i(y)}^2 
\\
&\leq 2 \la \left ( f(x) - f(y) -\left <f'(y), x-y\right> \right ),
\end{align*}
which we use to bound $\ex{\norm{\nu_t}^2}$ as before,
\begin{align*}
\ex{\norm{\nu_t}^2}
&=\ex{ \left\| \frac 1 M \sum_i \left( \frac 1 {np_i} (f'_i(\xt)-f'_i(\tx)+\mb\right) \right\|^2}
\\
&= \ex{ \left\| \frac 1 M \sum_i \left( \frac 1 {np_i} (f'_i(\xt)-f'_i(\x)+f'_i(\x)-f'_i(\tx)+\mb\right) \right\|^2}
\\
&\leq \frac 2 {M^2} \sum_i \ex{ \left\| \left( \frac 1 {np_i} (f'_i(\xt)-f'_i(\x))\right) \right\|^2}
\\
& \qquad + \frac 2 {M^2} \sum_i \ex{ \left\| \left( \frac 1 {np_i} (f'_i(\tx)-f'_i(\x))-\mb\right) \right\|^2} 
\\
&\leq  \frac 2 {M^2} \sum_i \ex{ \left\| \left( \frac 1 {np_i} (f'_i(\xt)-f'_i(\x))\right) \right\|^2}
\\
& \qquad + \frac 2 {M^2} \sum_i \ex{ \left\| \left( \frac 1 {np_i} (f'_i(\tx)-f'_i(\x))\right) \right\|^2}
\\
&\leq \frac {4\la}{M}\left[ f(\xt)-f(\x) \right] + \frac {4\la}{M}\left[ f(\tx)-f(\x) \right]
\end{align*}
It subsequently follows that
\[
\ex{f(x^{s+1})-f(\x)} \leq \left( \frac {M}{m\mu(M-2\eta\la)\eta} + \frac{2\la\eta}{M-2\eta\la} \right)\ex{f(x^{s})-f(\x)}
\]

\subsection{Proof of Proposition 4}

We now consider the case where we have $g(x)=(1/n)\sum_{i \notin [\batch_f]} f_i(x)$ and $h(x)=(1/n)\sum_{i\in[\batch_f]} f_i(x)$ for some batch $\batch_f$. We assume that
%, and we replace the SVRG update with
%\[
%x_{t+1}=x_t-\eta\left( h'(x_t) + \frac{1}{M_r} \sum_{f_i \in \mathcal B_r } \frac 1 {np_i} (f'_{i_t}(x_t)-f'_{i_t}(x^s))+g'(x^s)\right).
%\]
% then the convergence rate is 
%\[
%\mathbb E[{f(x_{s+1})-f(x^*)}] \leq \left( \frac {1}{m\mu(1-2\eta\kappa)\eta} + \frac{2\zeta\eta}{1-2\eta\kappa} \right) \mathbb E[{F(x^s)-f(x^*)}].
%\]
%where $\zeta=\frac{(n-M_f)\bar{L}_r}{(M-M_f)n} $ and $\kappa = \max\{\frac {L_1}{n},\zeta\}$. 
%\end{thm}
we sample $M_r$ elements of $g$ with probability of $p_i=\frac {L_i}{(n-M_f)\lr}$ and that we use:
 \begin{align*}
 \nu_t& = g'(x^s) + h'(x_t)+\frac{1}{M_r} \left[ \sum_{i \in M_r } \frac 1 {np_i} \left(f'_i(\xt)-f'_i(\tx)\right)\right],\\
 & =\mb + h'(x_t)+\frac{1}{M_r} \left[ \sum_{i \in M_r } \frac 1 {np_i} \left(f'_i(\xt)-f'_i(\tx)\right)\right]-h'(\tx),\\
 &x_{t+1} = \xt - \eta \nu_t,
 \end{align*}
 where as usual $\mu^s = \frac{1}{n}\sum_{i=1}^nf_i'(x^s) = g'(x^s) + h'(x^s)$.
Note that $\ex{\nu_t}=f'(\xt)$. We first bound $\ex{\norm{\nu_t}^2}$,
\begin{align*}
&\ex{ \|\nu_t\|^2}=\ex{ \left\| \mb + h'(\xt)+1/M_r \left[ \sum_{i \in M_r } \frac 1 {np_i} \left(f'_i(\xt)-f'_i(\tx)\right)\right]-h'(\tx) \right\|^2}
\\
&=\mathbb E[\norm{\mb + h'(\xt)-h'(\x)+1/M_r \left[ \sum_{i \in M_r } \frac 1 {np_i} \left(f'_i(\xt)-f'_i(\x)\right)\right]
\\
&-1/M_r \left[ \sum_{i \in M_r } \frac 1 {np_i} \left(f'_i(\tx)-f'_i(\x)\right)\right]-h'(\tx)+h'(\x)}^2]
\\
&\leq  2/n^2 \underbrace{\sum_{i \in \batch_f}\norm{f'_i(\xt)-f'_i(\x)}^2 }_{\text{Fixed part }}+ \underbrace{ 2/M_r^2 \sum_{i \in \batch_r}\ex{ \left\| \frac 1 {np_i}(f'_i(\xt)-f'_i(\x)) \right\|^2} }_{\text{Random part}}
\\
&+ 2 \ex{ \left\| 1/M_r \left[ \sum_{i \in M_r } \frac 1 {np_i} \left(f'_i(\tx)-f'_i(\x)\right)\right]+h'(\tx)-h'(\x)-\mb \right\|^2},
\end{align*}
where the inequality uses $\|a+b\|^2 \leq 2\|a\|^2+2\|b\|^2$. Now we bound each of the above terms separately,
\begin{align*}
2/n^2 \sum_{i \in \batch_f}\norm{f'_i(\xt)-f'_i(\x)}^2 & \leq 2/n^2 \sum_{i \in \batch_f} 2L_i (f_i(\xt) - f_i(\x)-\left<f'_i(\x),\xt-\x \right>) \\
& \leq 4L_1/n\left(h(\xt)-h(\x)-\left<h'(\x),\xt-\x\right>\right),
\end{align*}

\begin{align*}
2/M_r^2 &\sum_{i \in \batch_r}\ex{\norm{ \frac 1 {np_i}(f'_i(\xt)-f'_i(\x))}^2} 
\\
&\leq 2/M_r^2 \sum_{i \in \batch_r} 1/n^2 \sum_{j \notin \batch_f} 1/p_i \norm{f'_i(\xt)-f'_i(\x)}^2\\
 &\leq 2/M_r^2 \sum_{i \in \batch_r} 1/n^2 \sum_{j \notin \batch_f} (n-M_f)\lr \left( f_i(\xt) - f_i(\x)-\left<f'_i(\x),\xt-\x \right>\right)\\
 & = \frac {4(n-M_f)\lr}{nM_r}{\left(g(\xt)-g(\x)-\left<g'(\x),\xt-\x \right>\right)}.
\end{align*}
Finally for the last term we have,
\begin{align*}
&2 \ex{ \left\| \frac1{M_r} \left[ \sum_{i \in M_r } \frac 1 {np_i} \left(f'_i(\tx)-f'_i(\x)\right)\right]+\underbrace{h'(\tx)-h'(\x)-\mb}_{=g'(\x)-g'(\tx)} \right\|^2}
\\
& \leq 2 \ex{ \left\| \frac1{M_r} \sum_{i \in M_r } \frac 1 {np_i} \left(f'_i(\tx)-f'_i(\x)\right) \right\|^2} \\
& \leq \frac {4(n-M_f)\lr}{nM_r}\left(g(\tx)-g(\x)-\left<g'(\x),\tx-\x \right>\right)
\end{align*}
where the first inequality uses  variance inequality ($\mathbb E \|X-\mathbb E X\|^2 \leq \mathbb E\|X\|^2 $) and the second one comes from Lemma~\ref{lem:lem1}. Since $h$ is convex we can add $\frac {4(n-M_f)\lr}{nM_r}(h(\tx)-h(\x)-\left<h'(\x),\tx-\x \right>)$ to the right side of the above term, giving
\begin{align*}
2 &\ex{ \left\| \frac1{M_r} \left[ \sum_{i \in M } \frac 1 {np_i} \left(f'_i(\tx)-f'_i(\x)\right) \right] + h'(\tx)-h'(\x)-\mb \right\|^2} 
\\
&\leq \frac {4(n-M_f)\lr}{nM_r}\left(f(\tx)-f(\x)\right).
\end{align*}
Now following the proof technique we used several times, we can show that: 
\[
\ex{f(x^{s+1})-f(\x)} \leq \left( \frac {1}{m\mu(1-2\eta\kappa)\eta} + \frac{2\zeta\eta}{1-2\eta\kappa} \right)\ex{f(x^{s})-f(\x)}
\]
where $\zeta=\frac{(n-M_f)\bar{L}_r}{(M-M_f)n} $ and $\kappa = \max\{\frac {L_1}{n},\zeta\}$.

\section{Learning efficiency}

%{\color{red} I would consider moving this to the top of Appendix, as it can grab attention when someone looks just very briefly.}

In this section we closely follow Bottou and Bousquet~\cite{bottou-bousquet-2011, bottou} to discuss the performance of SVRG, and other linearly-convergent stochastic methods, as learning algorithms.
In the typical supervised learning setting, we are giving $n$ independently drawn input-output pairs $(x_i,y_i)$ from some distribution $P(x,y)$ and we seek to minimize the empirical risk,
\[
E_n(f) = \frac{1}{n} \sum_{i = 1}^n \ell(f(x_i), y_i) = \Exp_n [\ell(f(x), y)],
\]
where $\ell$ is our loss function.
However, in machine learning this is typically just a surrogate for the objective we are ultimately interested in. In particular, we typically want to minimize the \emph{expected} risk,
\[
E(f) = \int \ell (f(x), y) dP(x, y) = \Exp [\ell(f(x), y)],
\]
which tells us how well we do on test data from the same distribution.
We use $f^*$ to denote the minimizer of the expected risk,
\[
f^*(x) = \argmin{\hat{y}} \Exp [\ell(\hat{y}, y) \,|\, x],
\]
which is the best that a learner can hope to achieve.

Consider a family $\mathcal{F}$ of possible functions that we use to predict $y_i$ from $x_i$. We write the minimizer of the expected risk over this restricted set as $f_\mathcal{F}^*$,
\[
f_\mathcal{F}^* = \argmin{f\in\mathcal{F}}E(f),
\]
 while we denote the empirical risk minimizer within this family as  $f_n$, 
 \[
 f_n = \argmin{f\in\mathcal{F}} E_n(f).
 \]
But, since we are applying a numerical optimizer we only assume that we find a $\rho$-optimal minimizer of the empirical risk $\tilde{f}_n$,
\[
E_n(\tilde{f}_n) < E_n(f_n) + \rho,
\]
In this setting, Bottou \& Bousquet consider writing the sub-optimality of the approximate empirical risk minimizer $\tilde{f}_n$ compared to the minimizer of the expected risk $f^*$ as
\begin{align*}
\mathcal{E} &= \Exp [E(\tilde{f}_n) - E(f^*)] \\
&= \underbrace{ \Exp [E(f_\mathcal{F}^*) - E(f^*)]}_{\mathcal{E}_{\text{app}}} + \underbrace{ \Exp [E(f_n) - E(f_\mathcal{F}^*)]}_{\mathcal{E}_{\text{est}}} + \underbrace{ \Exp [E(\tilde{f}_n) - E(f_n)]}_{\mathcal{E}_{\text{opt}}}, \numberthis
\end{align*}
where the expectation is taken with respect to the output of the algorithm and with respect to the training examples that we sample. This decomposition shows how three intuitive terms affect the sub-optimality:
\begin{enumerate}
\item $\mathcal{E}_{\text{app}}$ is the \emph{approximation error}: it measures the effect of restricting attention to the function class $\mathcal{F}$.
\item $\mathcal{E}_{\text{est}}$ is the \emph{estimation error}: it measures the effect of only using a finite number of samples.
\item $\mathcal{E}_{\text{opt}}$ is the \emph{optimization error}: it measures the effect of inexactly solving the optimization problem.
\end{enumerate}
While choosing the family of possible approximating functions $\mathcal{F}$ is an interesting and important issue, for the remainder of this section we will assume that we are given a fixed family. In particular, Bottou \& Bousquet's assumption is that $\mathcal{F}$ is linearly-parameterized by a vector $w \in \mathbb{R}^d$, and that all quantities are bounded ($x_i$, $y_i$, and $w$).
This means that the approximation error $\mathcal{E}_\text{app}$ is fixed so we can only focus on the trade-off between the estimation error $\mathcal{E}_{\text{est}}$ and the optimization error $\mathcal{E}_\text{opt}$. 

All other sections of this work focus on the case of \emph{finite} datasets where we can afford to do several passes through the data (\emph{small-scale learning problems} in the language of Bottou \& Bousquet). In this setting, $\mathcal{E}_{\text{est}}$ is fixed so all we can do to minimize $\mathcal{E}$ is drive the optimization error $\rho$ as small as possible. In this section we consider the case where we do not have enough time to process all available examples, or we have an infinite number of possible examples (\emph{large-scale learning problems} in the language of Bottou \& Bousquet). In this setting, the time restriction means we need to make a trade-off between the optimization error and the estimation error: should we increase $n$ in order to decrease the estimation error $\mathcal{E}_\text{est}$ or should we revisit examples to try to more quickly decrease the optimization error $\mathcal{E}_\text{opt}$ while keeping the estimation error fixed?

Bottou \& Bousquet discuss how under various assumptions we have the variance condition
\[
 \forall f \in \mathcal{F} \quad \Exp \left[ \left( \ell(f(X), Y) - \ell(f_{\mathcal{F}}^*(X), Y) \right)^2 \right] \leq c \left( E(f) - E(f_{\mathcal{F}}^*) \right)^{2 - \frac{1}{\alpha}},
 \]
% and how this implies the bound
 %\[
 % \Exp \left[ \sup_{f \in \mathcal{F}} | E(f) - E_n(f) | \right] \leq O\left(\left( \frac{d}{n} \log \frac{n}{d} \right)^{\alpha}\right).
  %\]
and how this implies the bound
\[
\mathcal{E}  = O\left(\mathcal{E}_\text{app} + \left(\frac d n \log \frac n d\right)^\alpha + \rho\right).
\]
To make the second and third terms comparable, we can take $\rho = \left(\frac d n \log \frac n d\right)^\alpha$. Then to achieve an accuracy of $O(\mathcal{E}_\text{app} + \epsilon)$ it is sufficient to take $n = O\left(\frac{d}{\epsilon^{1/\alpha}}\log(1/\epsilon)\right)$ samples:
\begin{align*}
\mathcal{E} & = O\left(\mathcal{E}_\text{app} + \left(\frac d n \log \frac n d\right)^\alpha + \rho\right)\\
& = O\left(\mathcal{E}_\text{app} + \left(\frac d n \log \frac n d\right)^\alpha + \left(\frac d n \log \frac n d\right)^\alpha\right)\\
& = O\left(\mathcal{E}_\text{app} + \left(\frac d n \log \frac n d\right)^\alpha\right)\\
& = O\left(\mathcal{E}_\text{app} +  \left(\frac {\epsilon^{\frac 1 \alpha}}{\log(\frac 1 \epsilon)}\log\left(\frac {\log(\frac 1 \epsilon)}{\epsilon^{\frac 1 \alpha}}\right)\right)^\alpha\right)\\
& = O\left(\mathcal{E}_\text{app} +  \epsilon\left(\frac{\log(\log(1/\epsilon)) - \frac{1}{\alpha}\log(\epsilon)}{\log(1/\epsilon)}\right)^\alpha\right)\\
& = O(\mathcal{E}_\text{app} +\epsilon).
\end{align*}
%In particular, if we replace $n$ in $\rho$ with $\frac{d}{\epsilon^{1/\alpha}}\log(1/\epsilon)$ then we have $\mathcal{E}  = O(\mathcal{E}_\text{app} + (\epsilon^{\frac 1 \alpha}(\log(\epsilon+\epsilon^{\frac 1 \alpha}\log(1/\epsilon)))^\alpha )$ which indicates the accuracy that we want. 
The results presented in the main paper follow from noting that (i) the iteration cost of SVRG is $O(d)$ and (ii) that the number of iterations for SVRG to reach an accuracy of $\rho$ is $O((n + \kappa)\log(1/\rho))$.%To achieve the accuracy of $\epsilon$ in the second column for SVRG in the main paper's table, we replace $n$ with $\frac{d}{\epsilon^{1/\alpha}}\log(1/\epsilon)$ in $O((n + \kappa)d\log(1/\epsilon))$, considering $d$ as cost of each iteration. 

%To get the $\epsilon < 1$ error:
%\begin{align*}
%&n = \frac{d}{\epsilon^{1/\alpha}}\log(1/\epsilon) \\ 
%&\rho = (\frac {\epsilon^{\frac 1 \alpha}}{\log(\frac 1 \epsilon)}\log(\frac {\log(\frac 1 \epsilon)}{\epsilon^{\frac 1 \alpha}}))^\alpha= ( \epsilon^{\frac 1 \alpha} \frac{\log(\log(\frac 1 \epsilon))-\frac 1 \alpha \log(\epsilon))}{\log(\frac 1 \epsilon)})^\alpha \leq \epsilon
%\end{align*} 

\section{Additional Experimental Results}

\begin{table*}
\centering
\begin{tabular}{|l|r|r|c|}
\hline
Data set & Data Points & Variables & Reference \\
\hline
\emph{quantum} & 50 000 & 78 & \cite{caruana2004kdd}\\
\hline
\emph{protein} & 145 751 & 74 &  \cite{caruana2004kdd}\\
\hline
\emph{sido} & 12 678 & 4 932  & \cite{SIDO}\\
\hline
\emph{rcv1} & 20 242 & 47 236  & \cite{lewis2004rcv1}\\
\hline
\emph{covertype} & 581 012 & 54  & \cite{blake1998uci}\\
\hline
\emph{news} & 19 996 & 1 355 191 & \cite{keerthi2005modified}\\
\hline
\emph{spam} & 92 189 & 823 470 & \cite{cormack2005spam,Carbonetto09}\\
\hline
\emph{rcv1Full} & 697 641 & 47 236  & \cite{lewis2004rcv1}\\
\hline
\emph{alpha} & 500 000 & 500  & Synthetic\\
\hline
\end{tabular}
\caption{Binary data sets used in the experiments.}
\label{table:data}
\end{table*}

We list properties of the dataset considered in the experiments in Table~\ref{table:data}. In Figures 1-4, we plot the performance on the various datasets in terms of both the training objective and test error, showing the maximum/mean/minimum performance across 10 random trials. In these plots, we see a clear advantage for the \emph{Grow} strategy on the largest datasets (bottom row), but less of an advantage or no advantage on the smaller datasets. The advantage of using support vectors seemed less dependent on the data size, as it helped in some small datasets as well as some large datasets, while in some small/large datasets it did not make a big difference.

In Figures 5-6, we give the result of experiments comparing different mini-batch selection strategies. In particular, we consider mini-batch SVRG with a batch size of $16$ and compare the following methods: uniform sampling of the mini-batch (\emph{Uniform}), sampling proportional to the Lipschitz constants (\emph{Lipschitz}), and a third strategy based on Proposition~4 in the main paper (\emph{Lipschitz+}). On each iteration, the \emph{Lipschitz+} strategy constructs the mini-batch using the $100$ examples with the largest Lipschitz constants (the `fixed' set) in addition to $16$ examples sampled according to their Lipschitz constants from among the remaining examples. We assume that the fixed set is computed `for free' by calculating these gradients on a GPU or FPGA. In these experiments, there was often no difference between the various methods because the rows of the data were normalized. For the two Lipschitz sampling strategies, we used a step size of $1/\la$.
In some cases, the new sampling scheme may have given a small improvement, but in general the theoretical advantage of this method was not reflected in our experiments.

In Figure 7-8, we repeat the mini-batch experiment but include two additional method: sampling example $i$ proportional to $f_i(x^s)$ (\emph{Function}) and sampling $i$ proportional to $\norm{f_i'(x^s)}$ (\emph{Gradient}). For these strategies we used a step size of $1/\la$, and on eight of the nine datasets we were surprised that these strategies had similar performance to the Lipschitz sampling strategy (even though they do not have access to the $L_i$). However, both of these strategies had strange behaviour on one of the datasets. On the \emph{covertype} dataset, the \emph{Function} method seemed to diverge in terms of training objective and test error while the \emph{Gradient} seemed to converge to a sub-optimal solution in terms of training objective but achieved close to the optimal test error.

\begin{figure*}
\includegraphics[width=.32\textwidth]{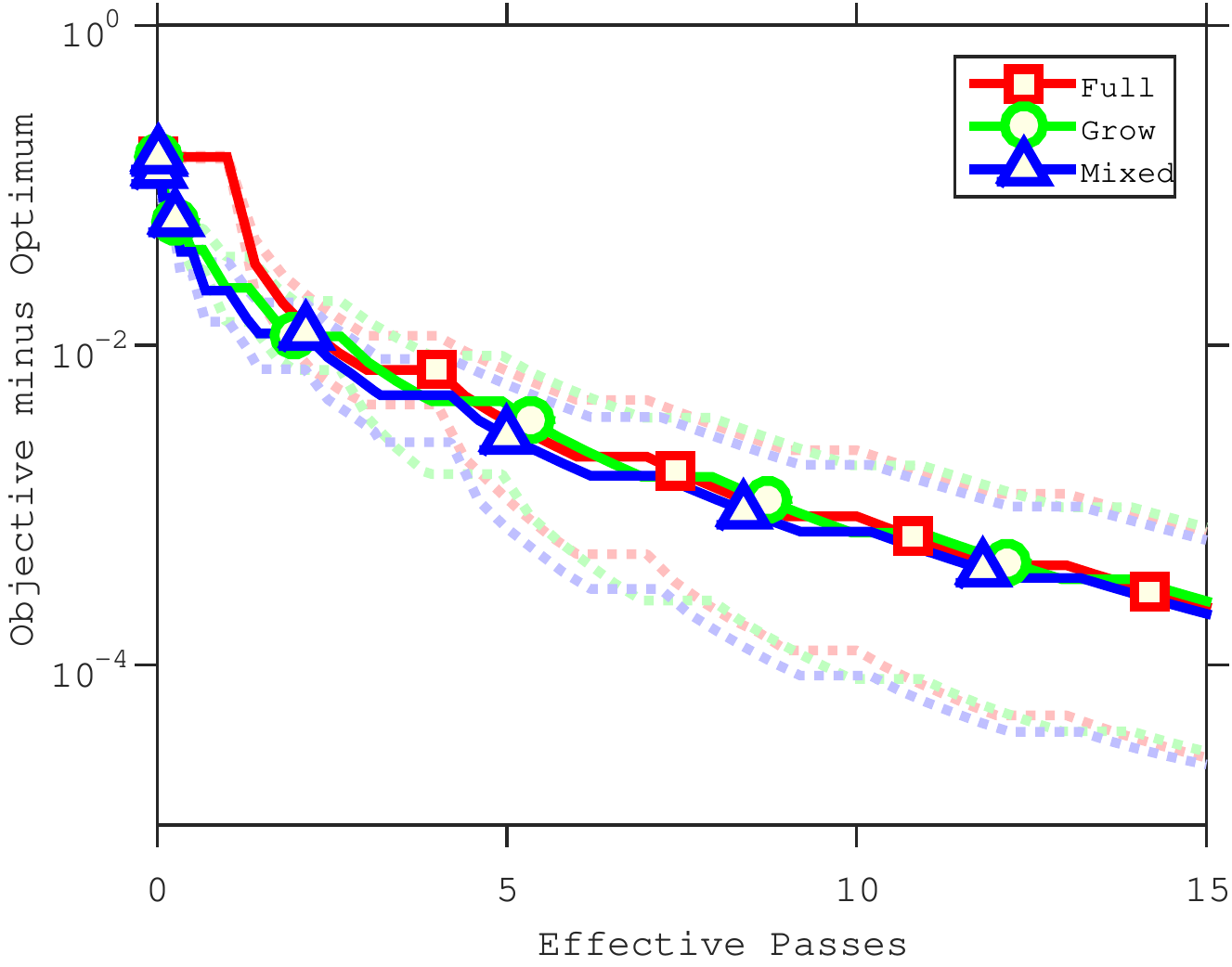}
\includegraphics[width=.32\textwidth]{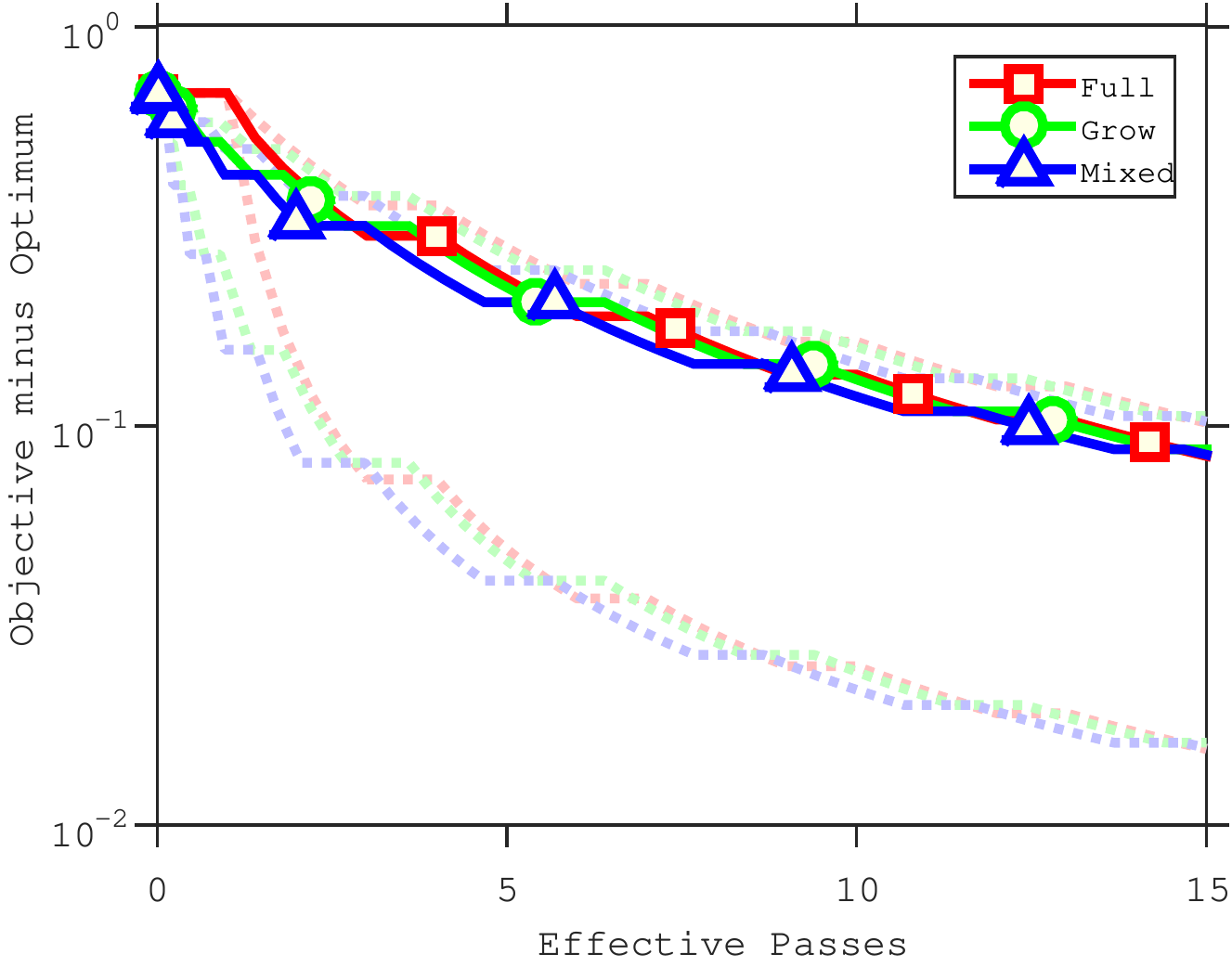}
\includegraphics[width=.32\textwidth]{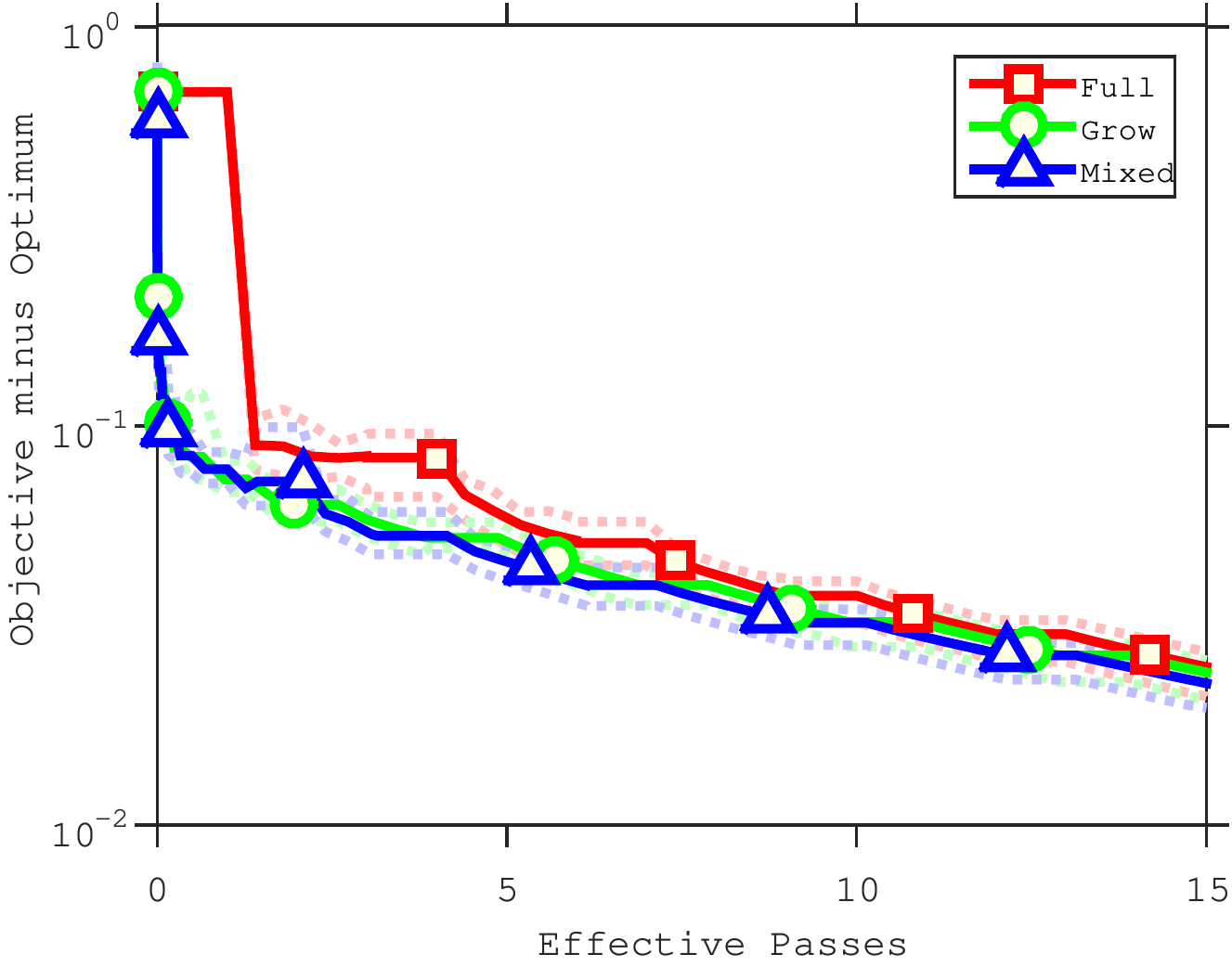}\\
\includegraphics[width=.32\textwidth]{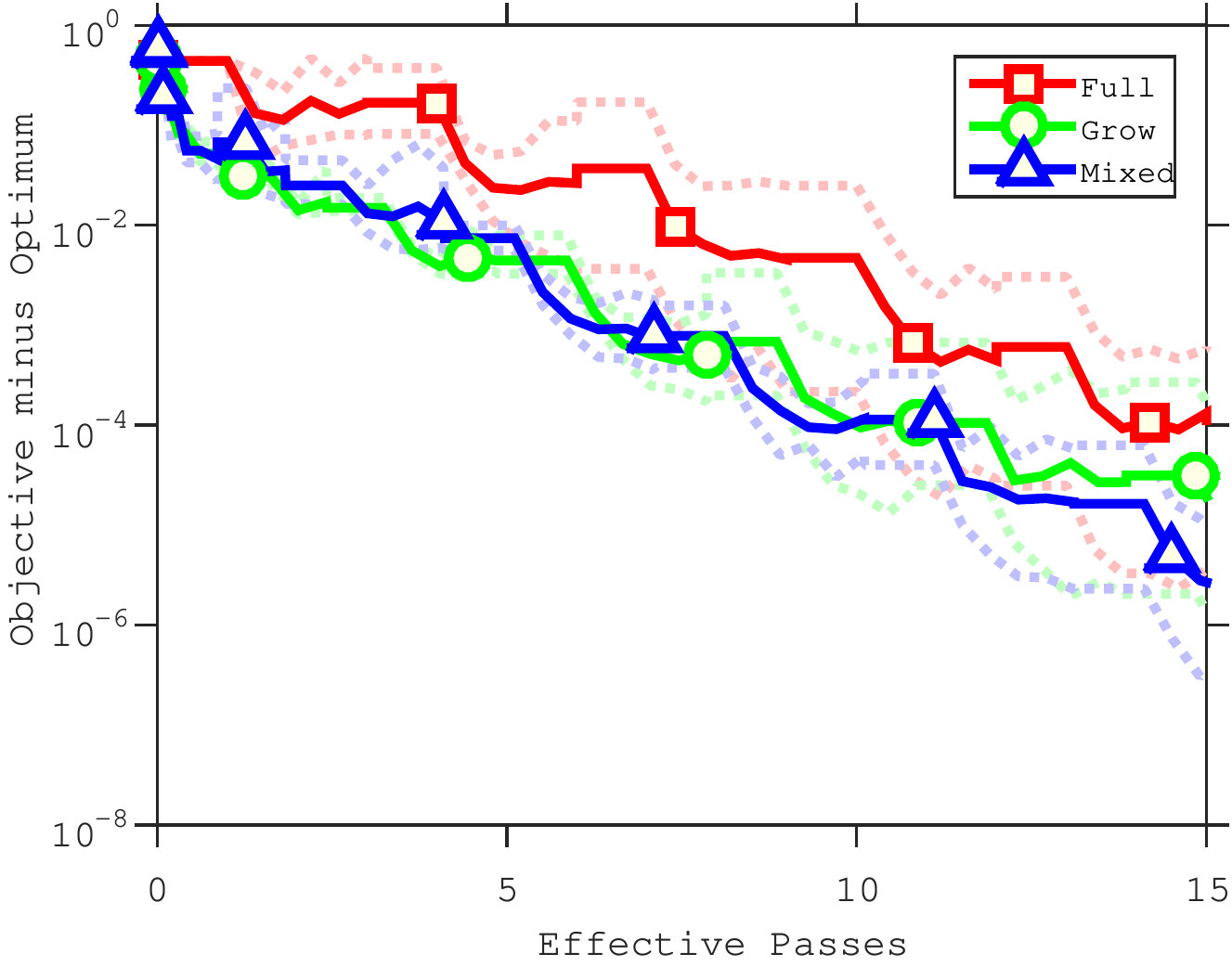}
\includegraphics[width=.32\textwidth]{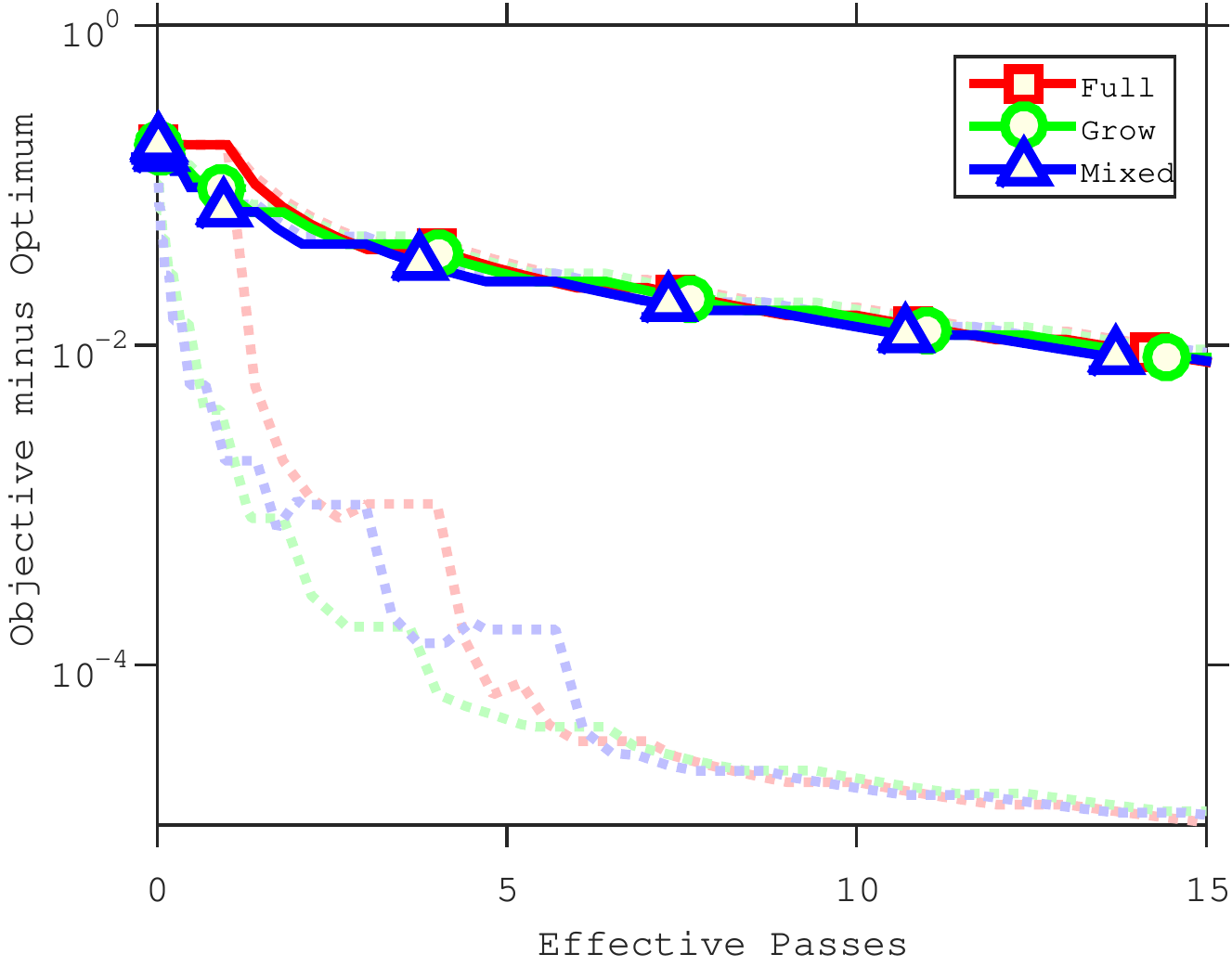}
\includegraphics[width=.32\textwidth]{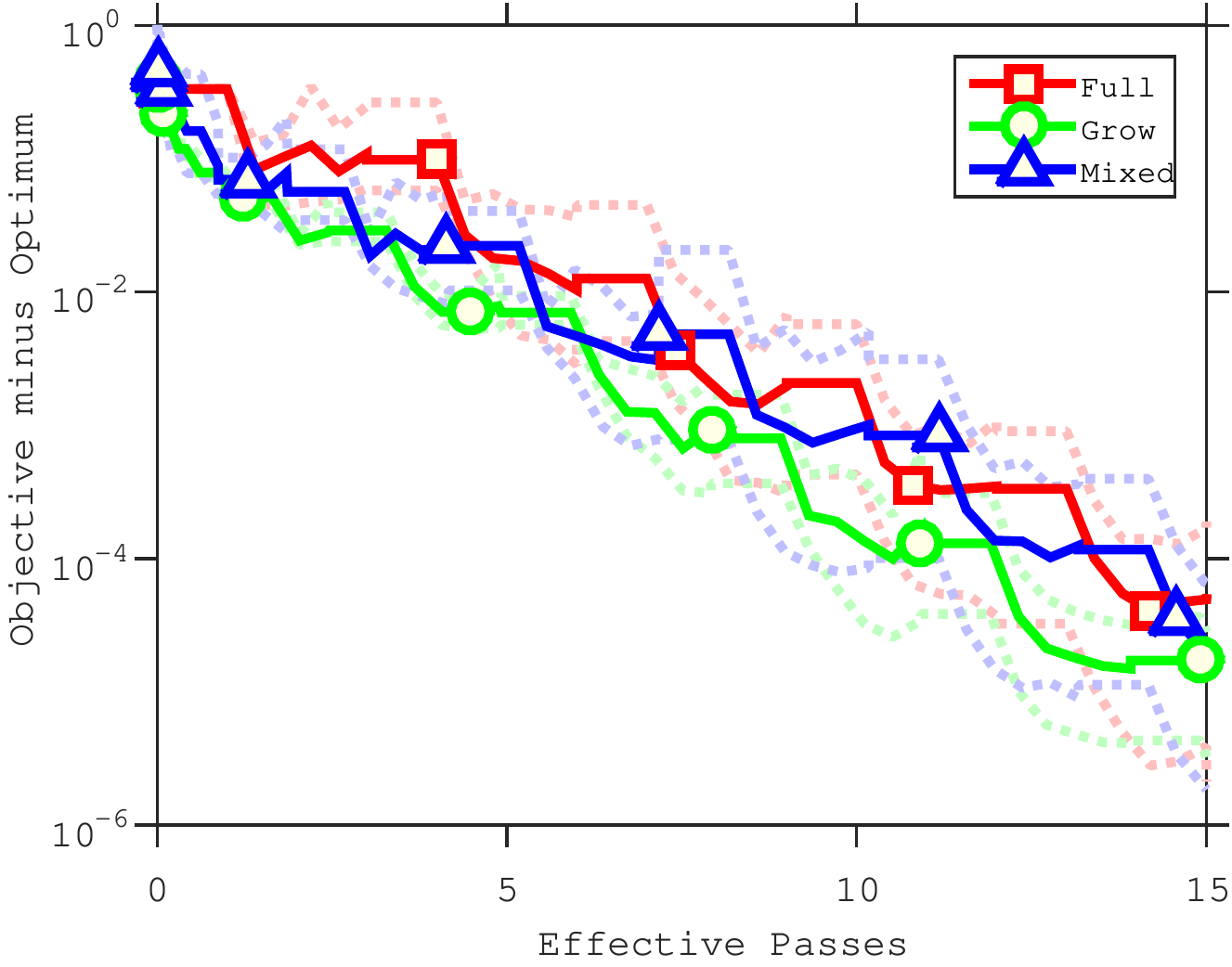}\\
\includegraphics[width=.32\textwidth]{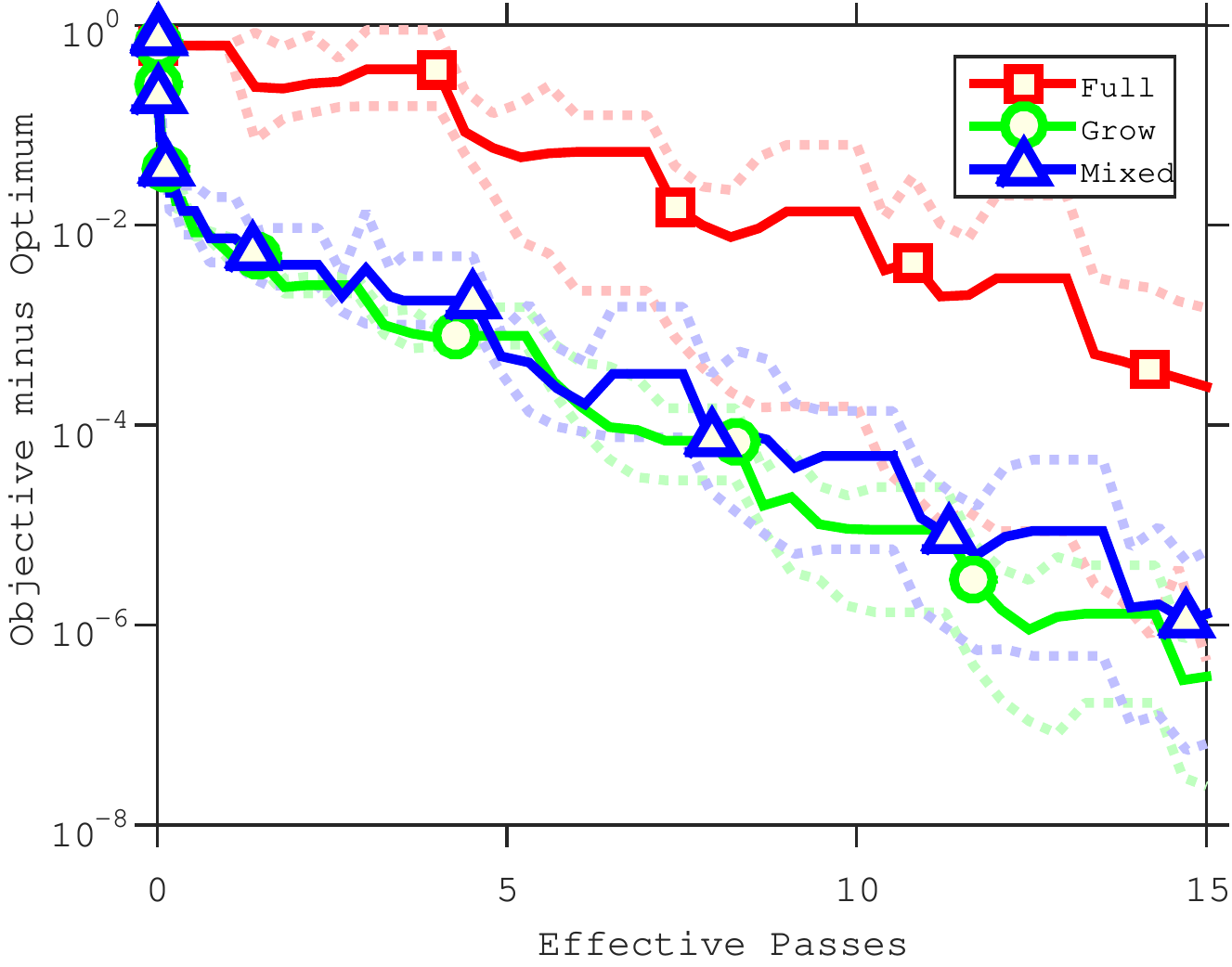}
\includegraphics[width=.32\textwidth]{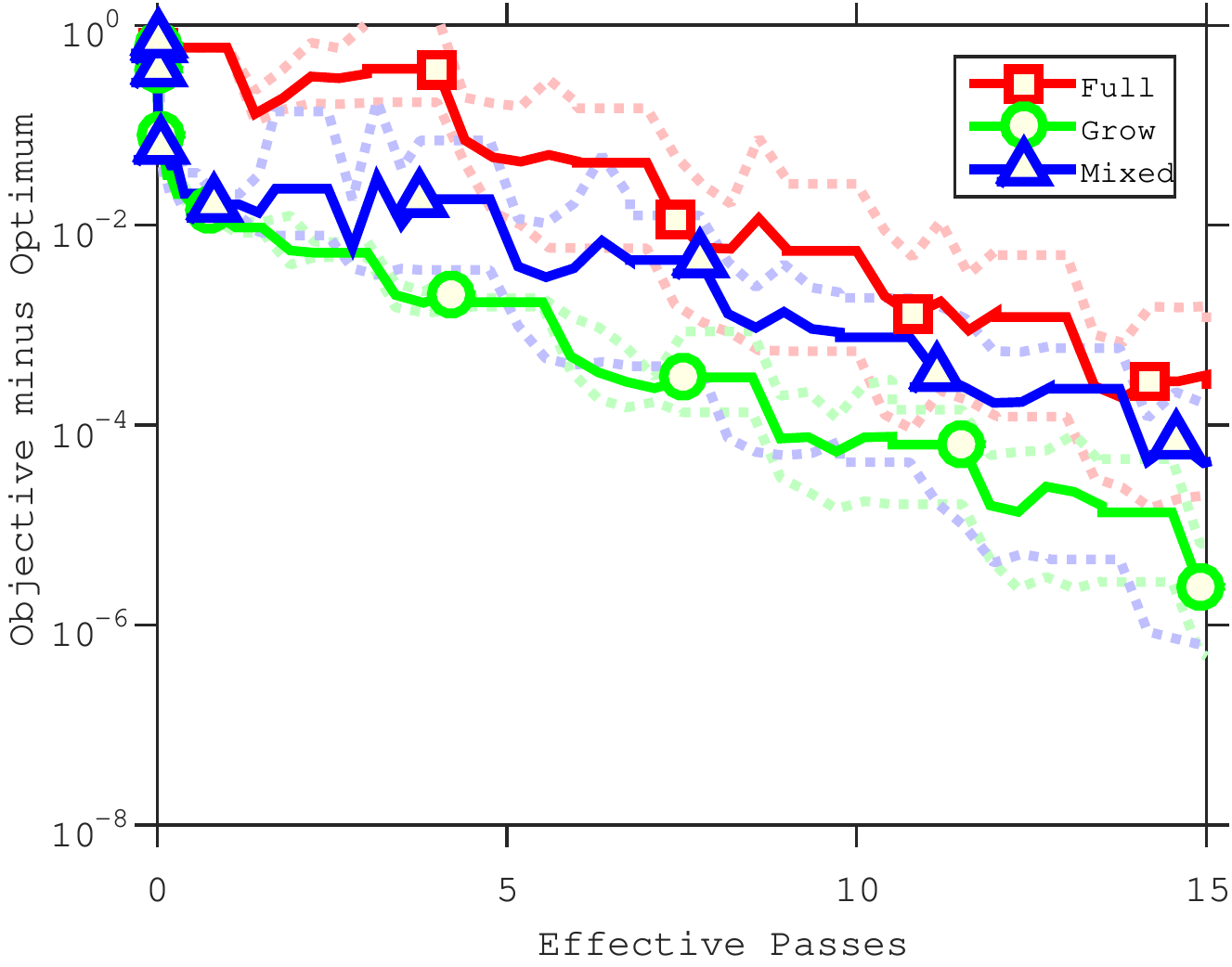}
\includegraphics[width=.32\textwidth]{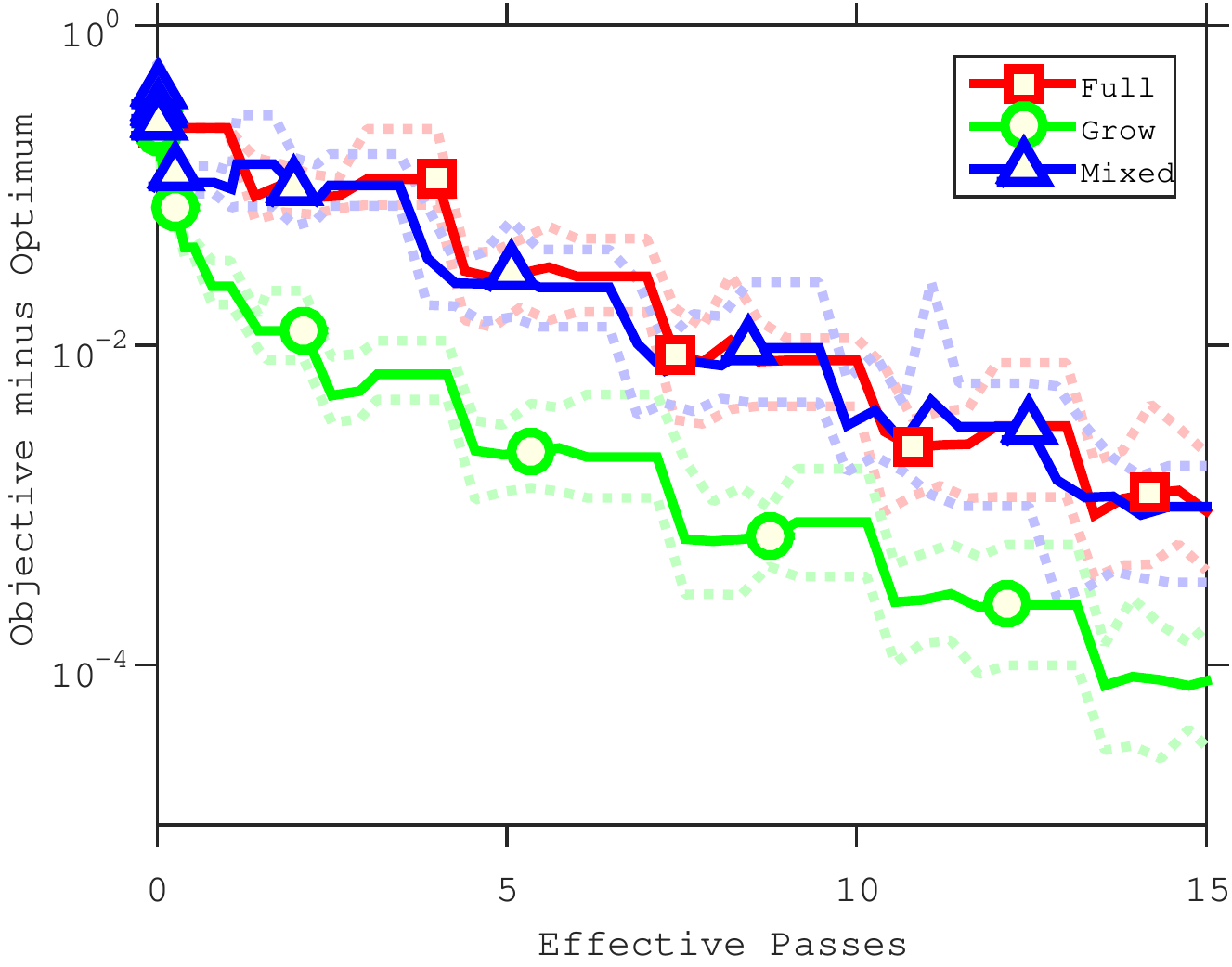}

\caption{Comparison of training objective of logistic regression for different datasets. The top row gives results on the \emph{quantum} (left), \emph{protein} (center) and \emph{sido} (right) datasets. The middle row gives results on the \emph{rcv11} (left), \emph{covertype} (center) and \emph{news} (right) datasets.  The bottom row gives results on the \emph{spam} (left), \emph{rcv1Full} (center), and \emph{alpha} (right) datasets.}
%\label{fig:1}
\end{figure*}

\begin{figure*}
\includegraphics[width=.32\textwidth]{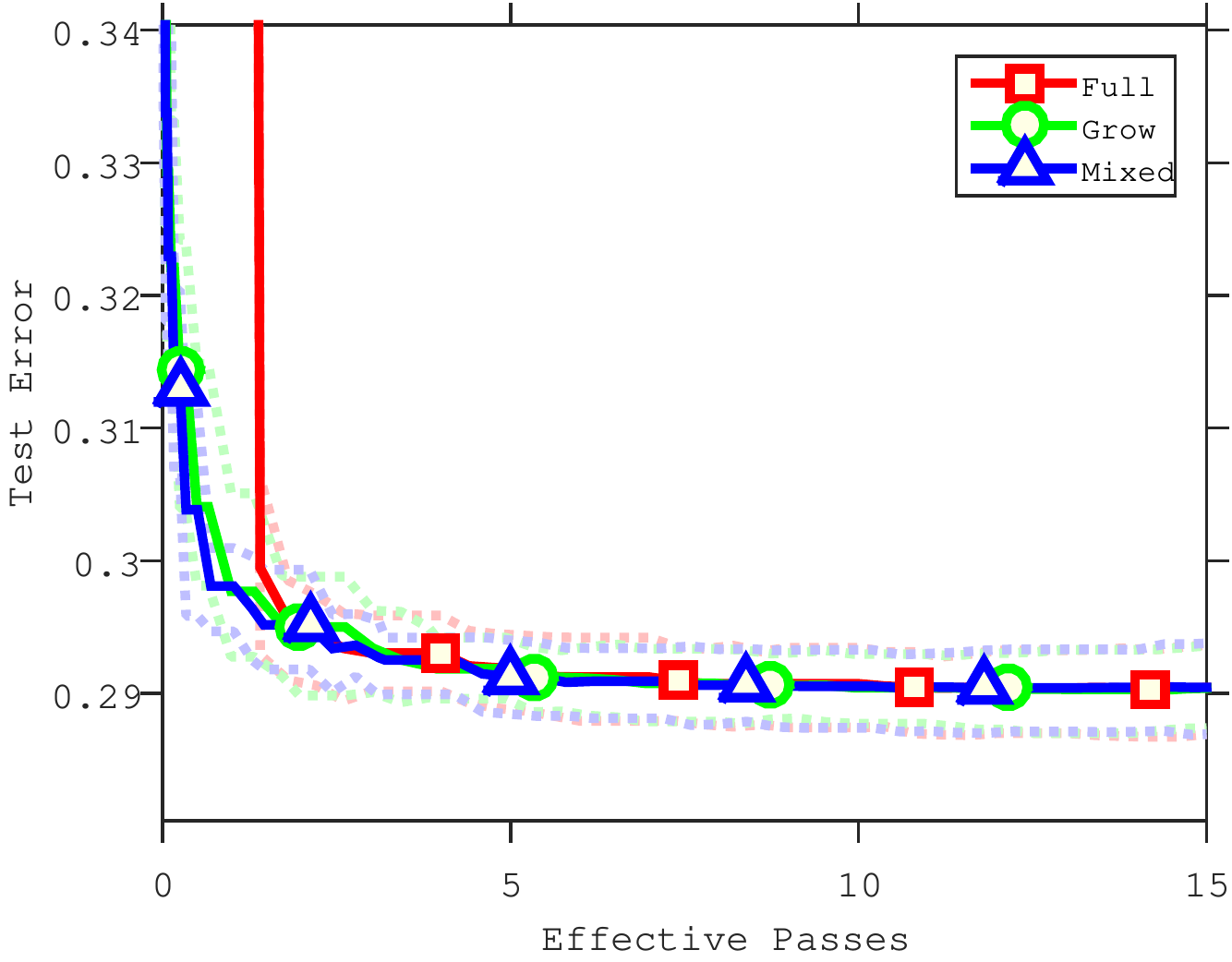}
\includegraphics[width=.32\textwidth]{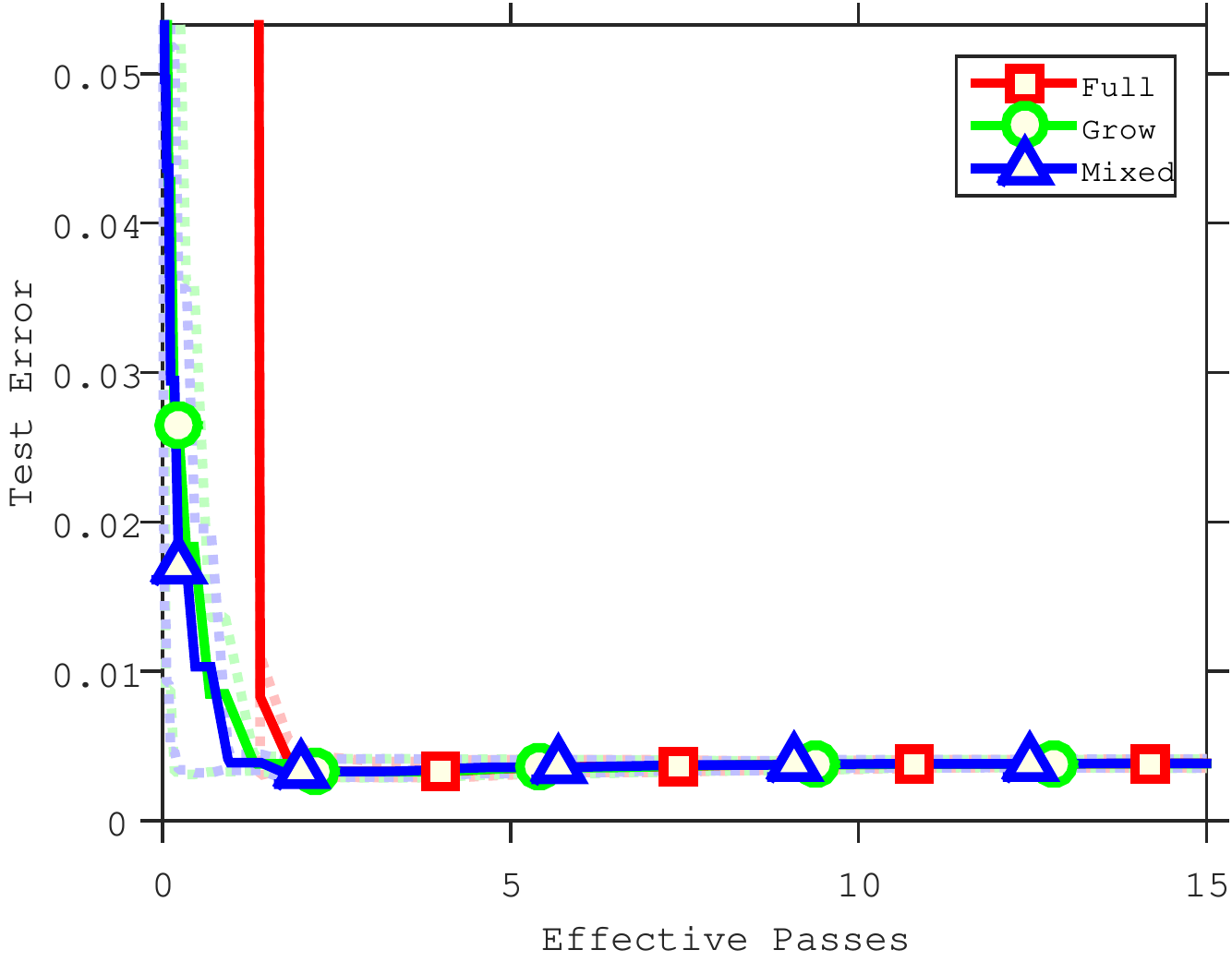}
\includegraphics[width=.32\textwidth]{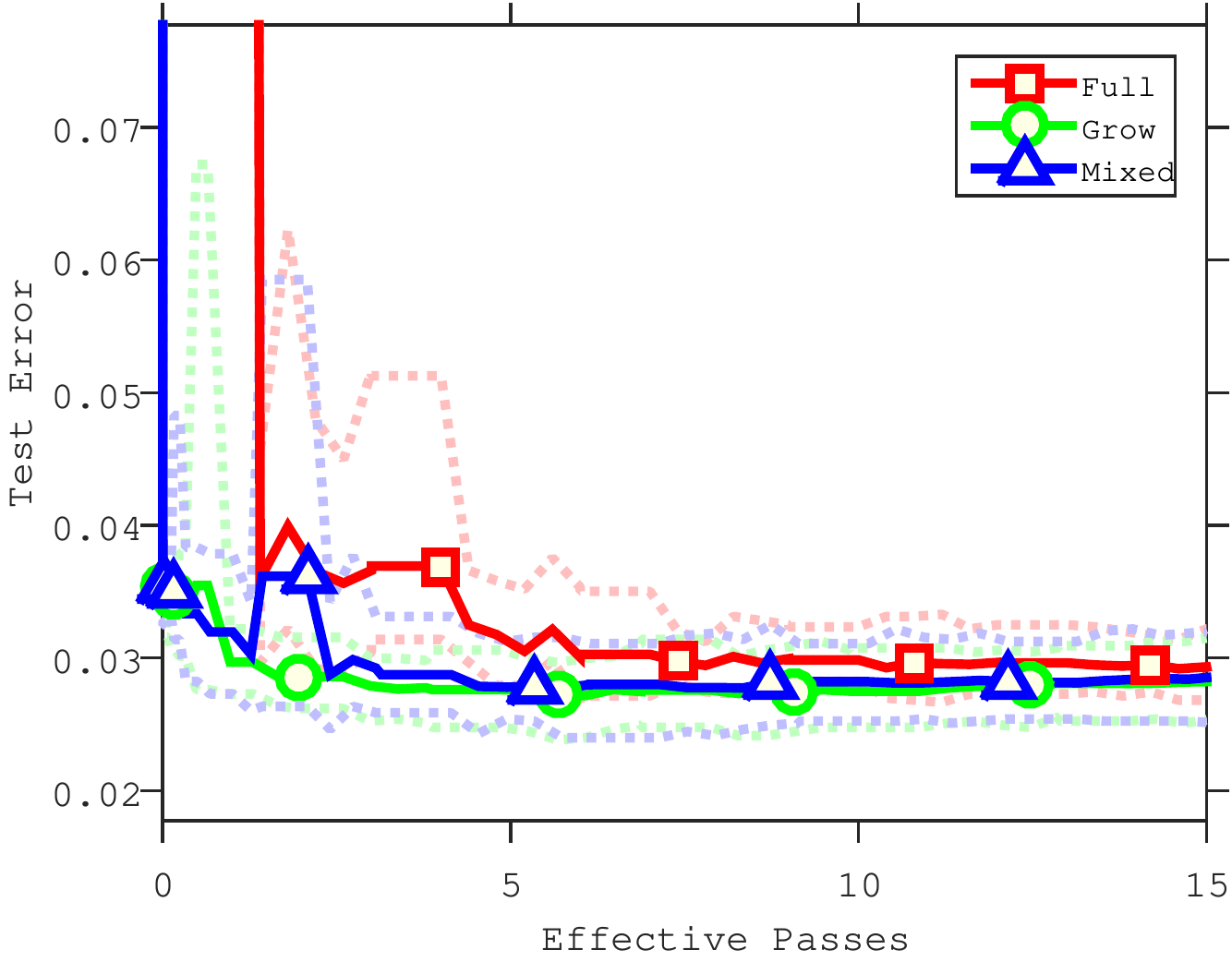}\\
\includegraphics[width=.32\textwidth]{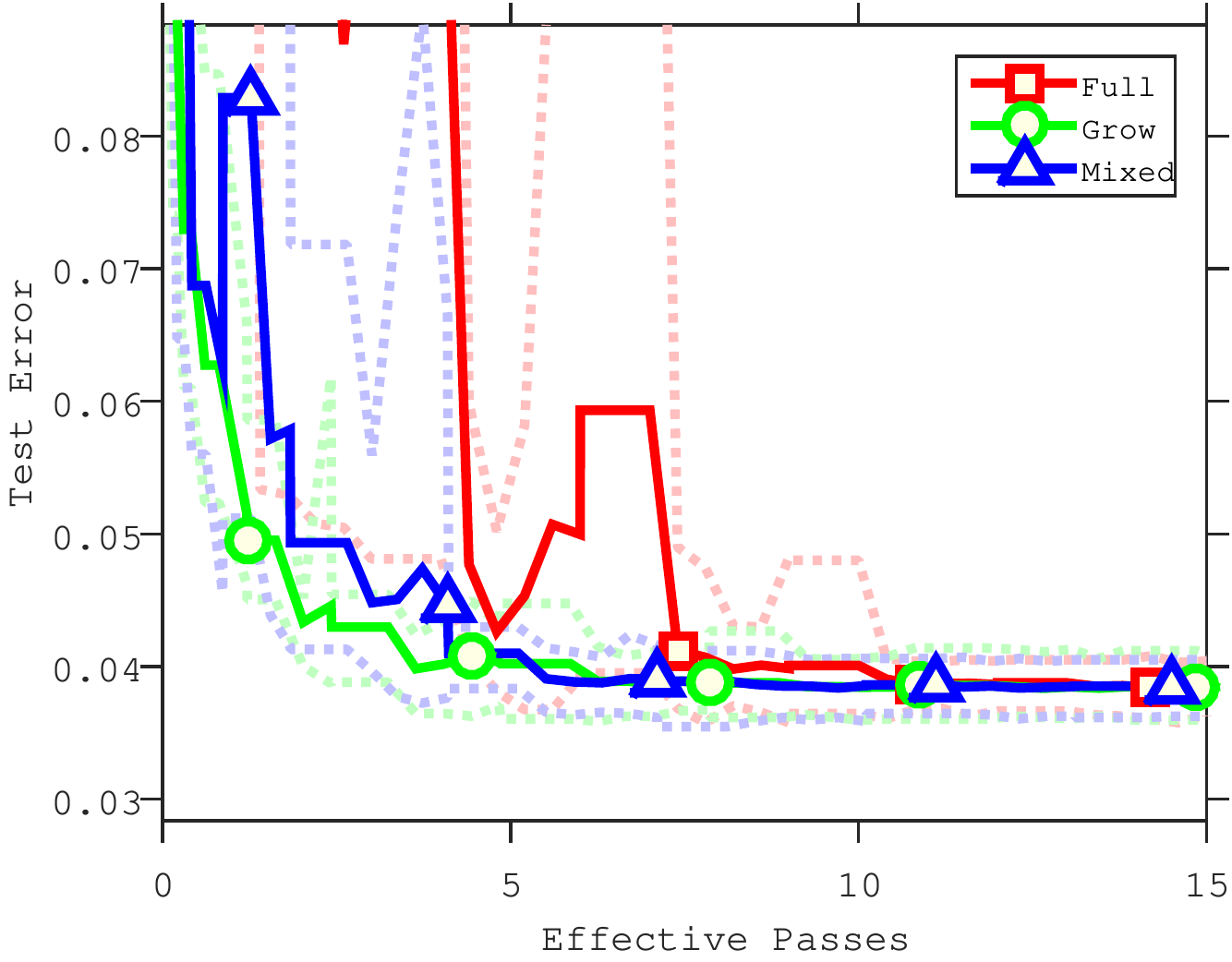}
\includegraphics[width=.32\textwidth]{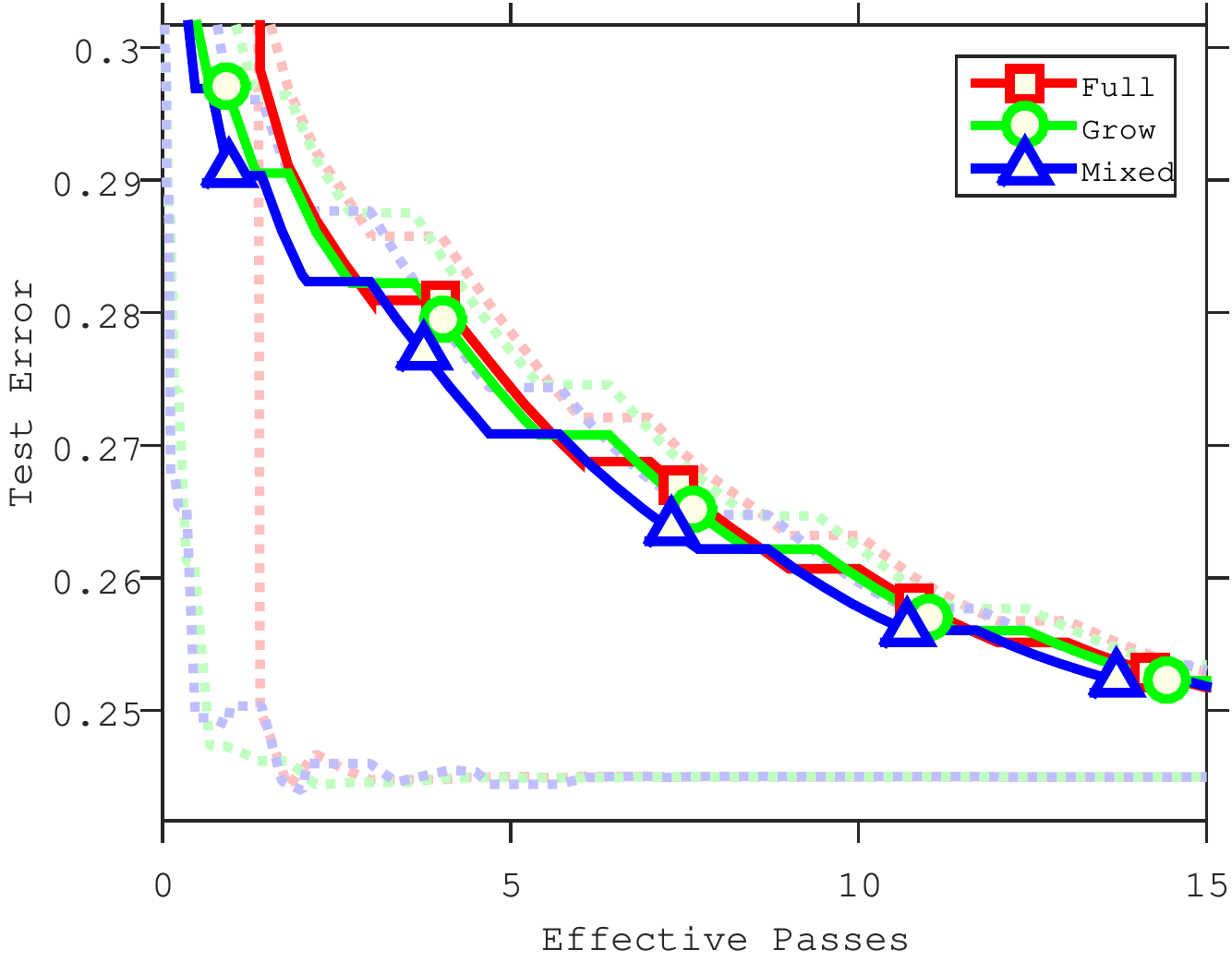}
\includegraphics[width=.32\textwidth]{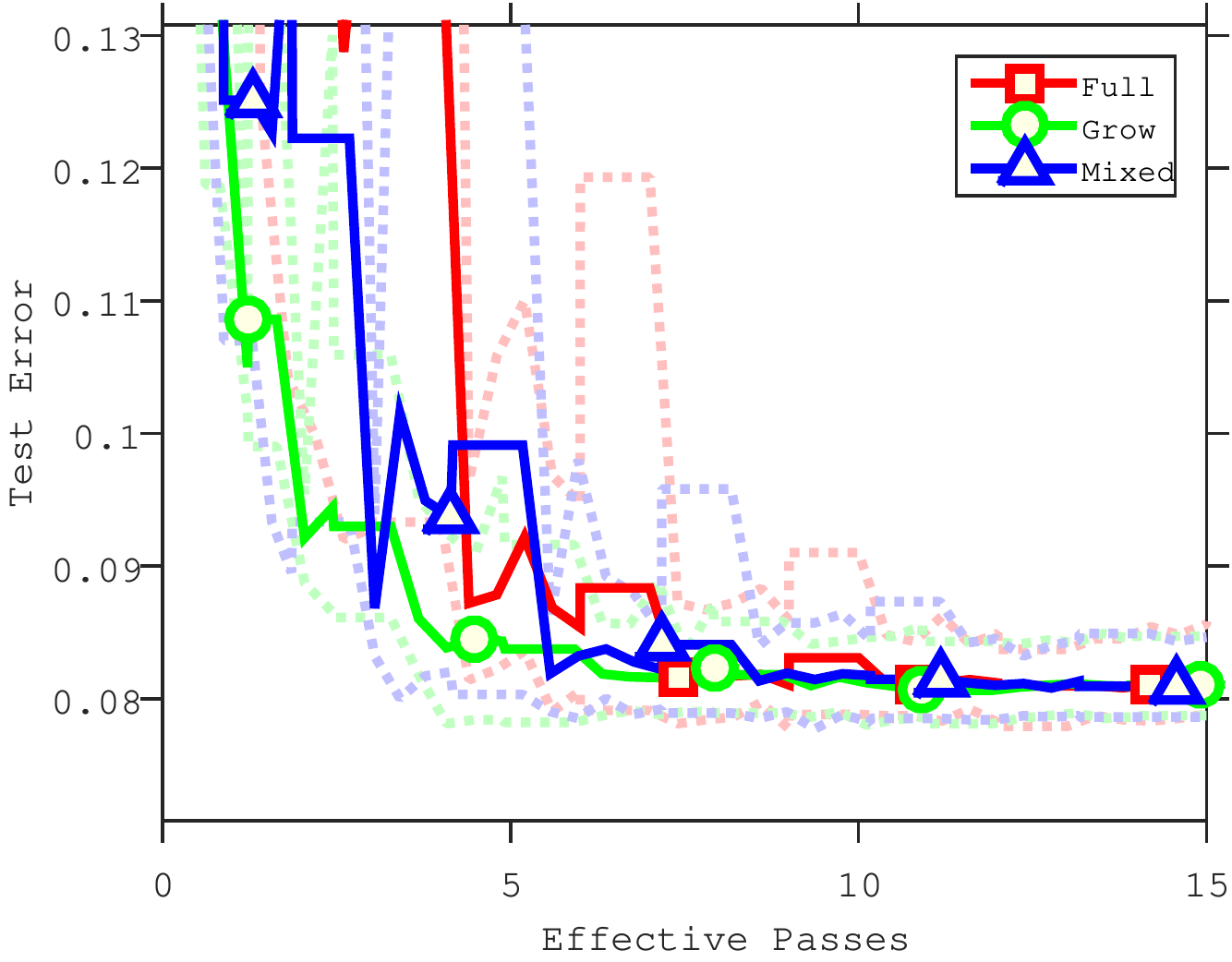}\\
\includegraphics[width=.32\textwidth]{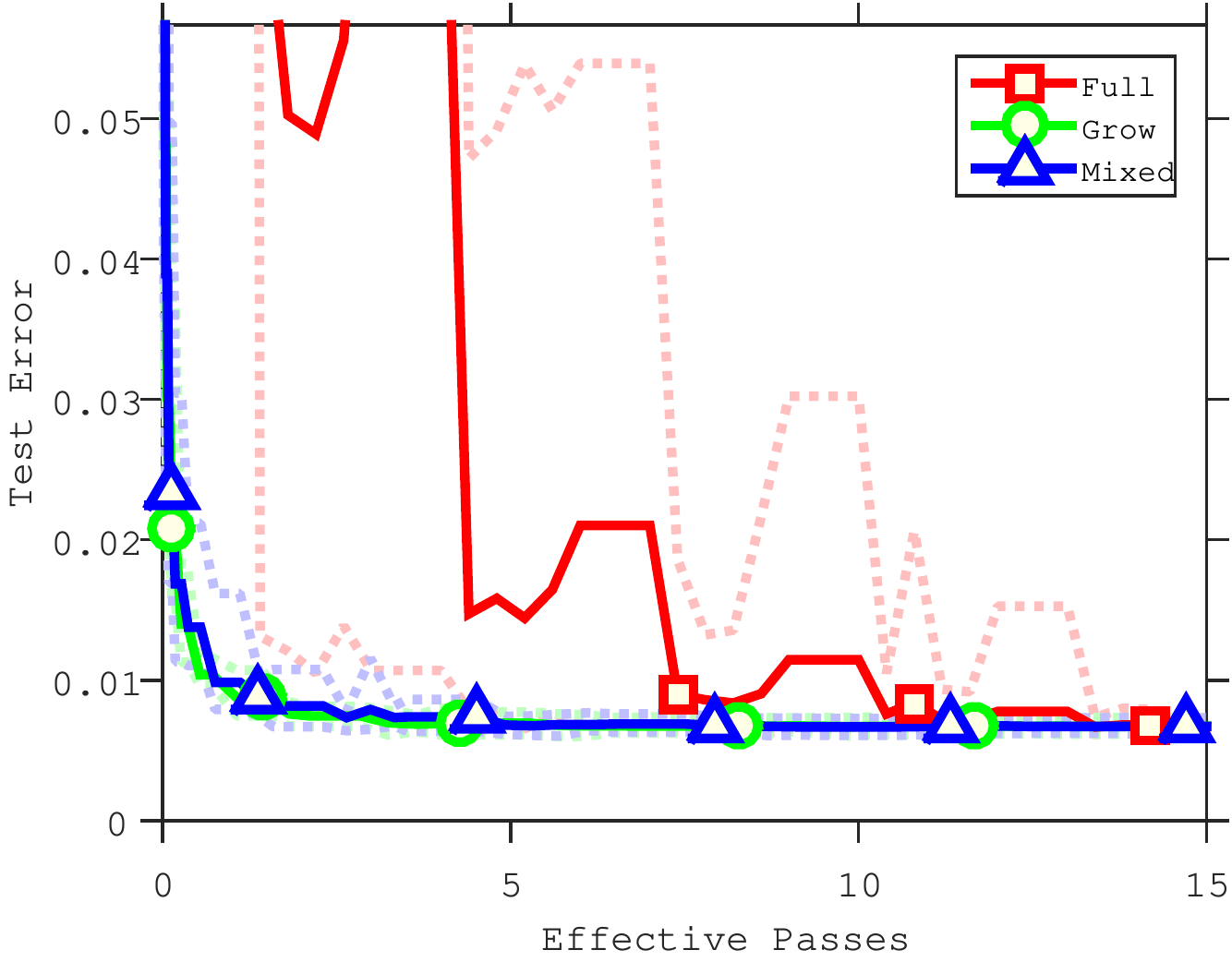}
\includegraphics[width=.32\textwidth]{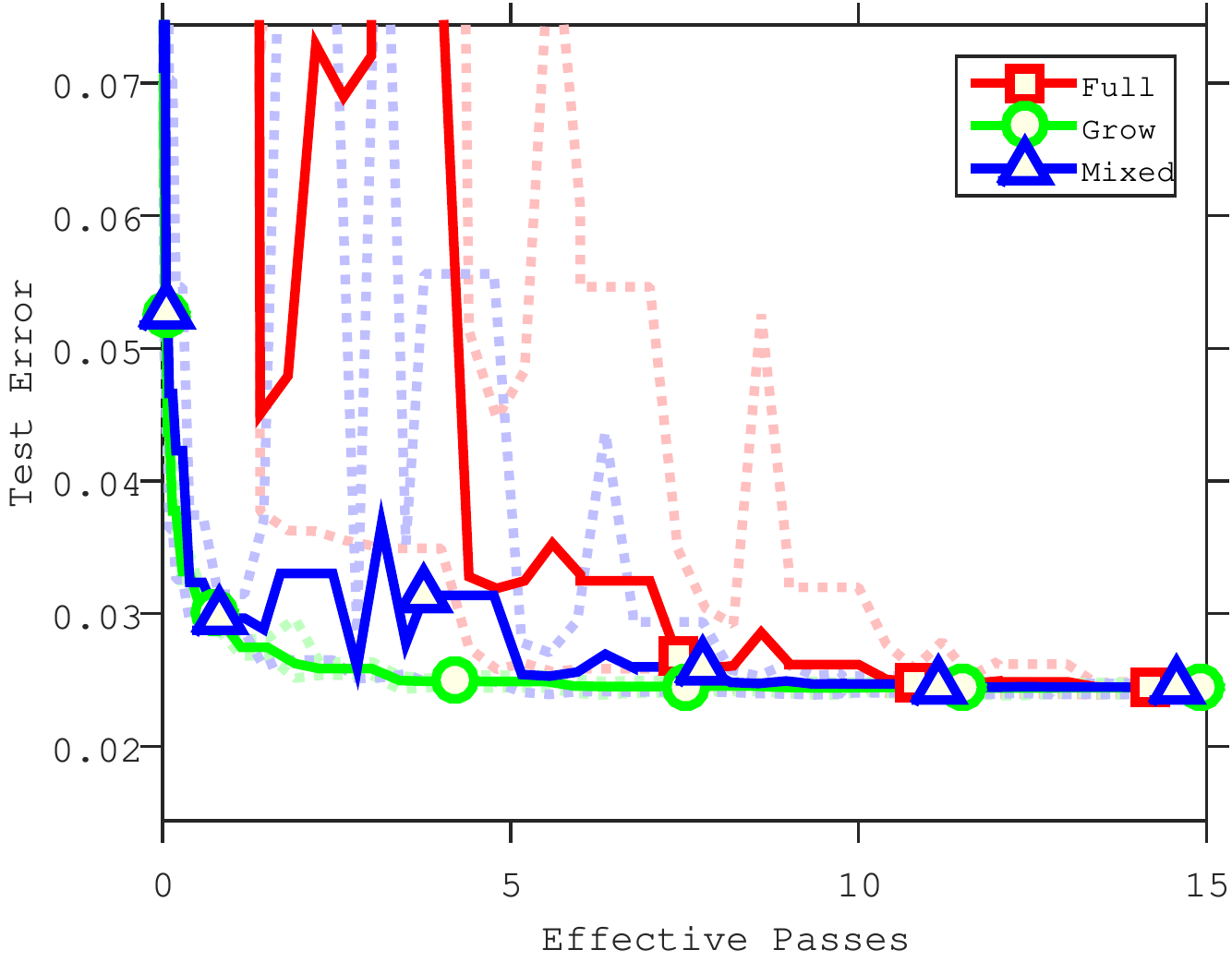}
\includegraphics[width=.32\textwidth]{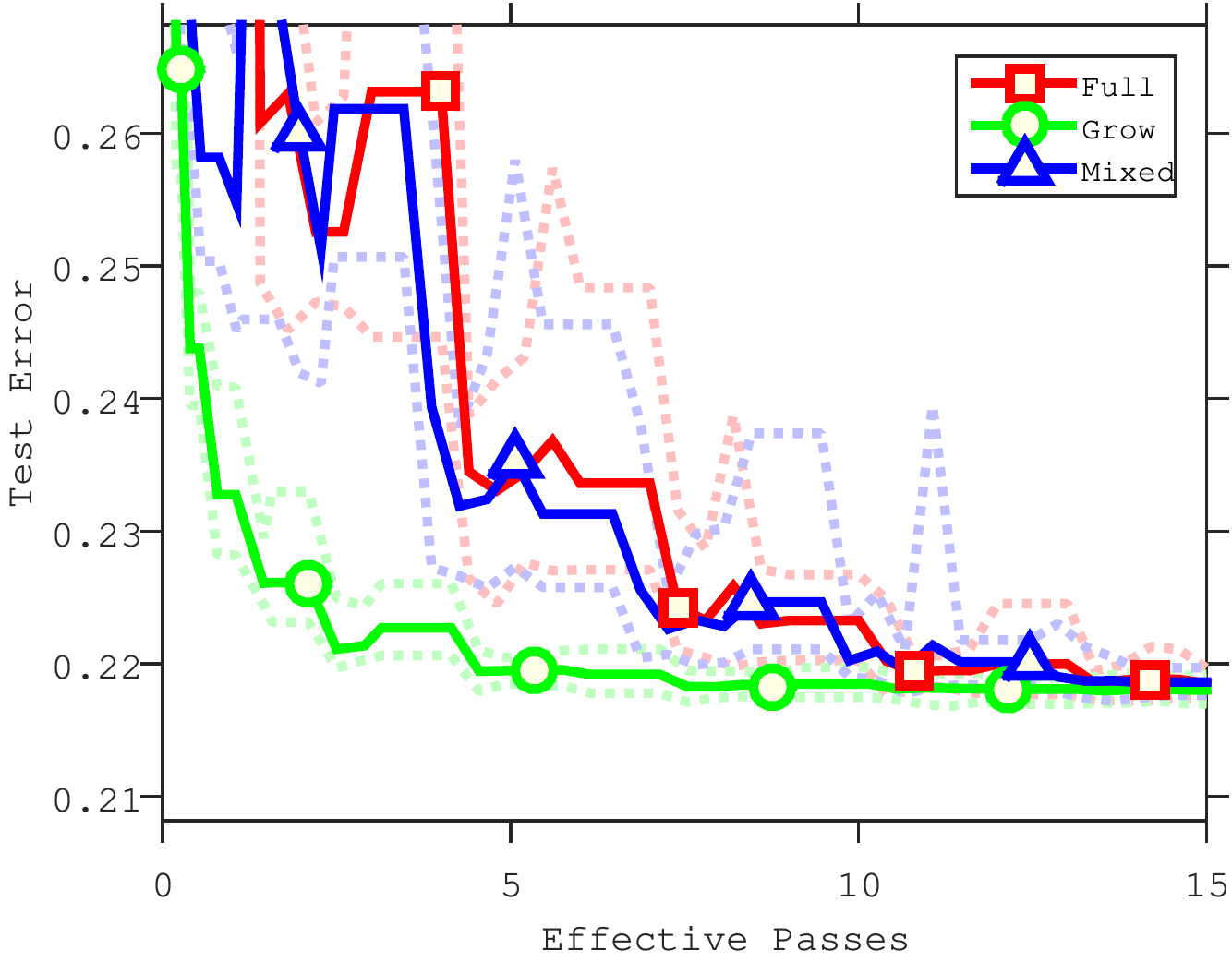}

\caption{Comparison of test error of logistic regression for different datasets. The top row gives results on the \emph{quantum} (left), \emph{protein} (center) and \emph{sido} (right) datasets. The middle row gives results on the \emph{rcv11} (left), \emph{covertype} (center) and \emph{news} (right) datasets.  The bottom row gives results on the \emph{spam} (left), \emph{rcv1Full} (center), and \emph{alpha} (right) datasets.}
%\label{fig:2}
\end{figure*}
%\end{figure*}

\begin{figure*}
\includegraphics[width=.32\textwidth]{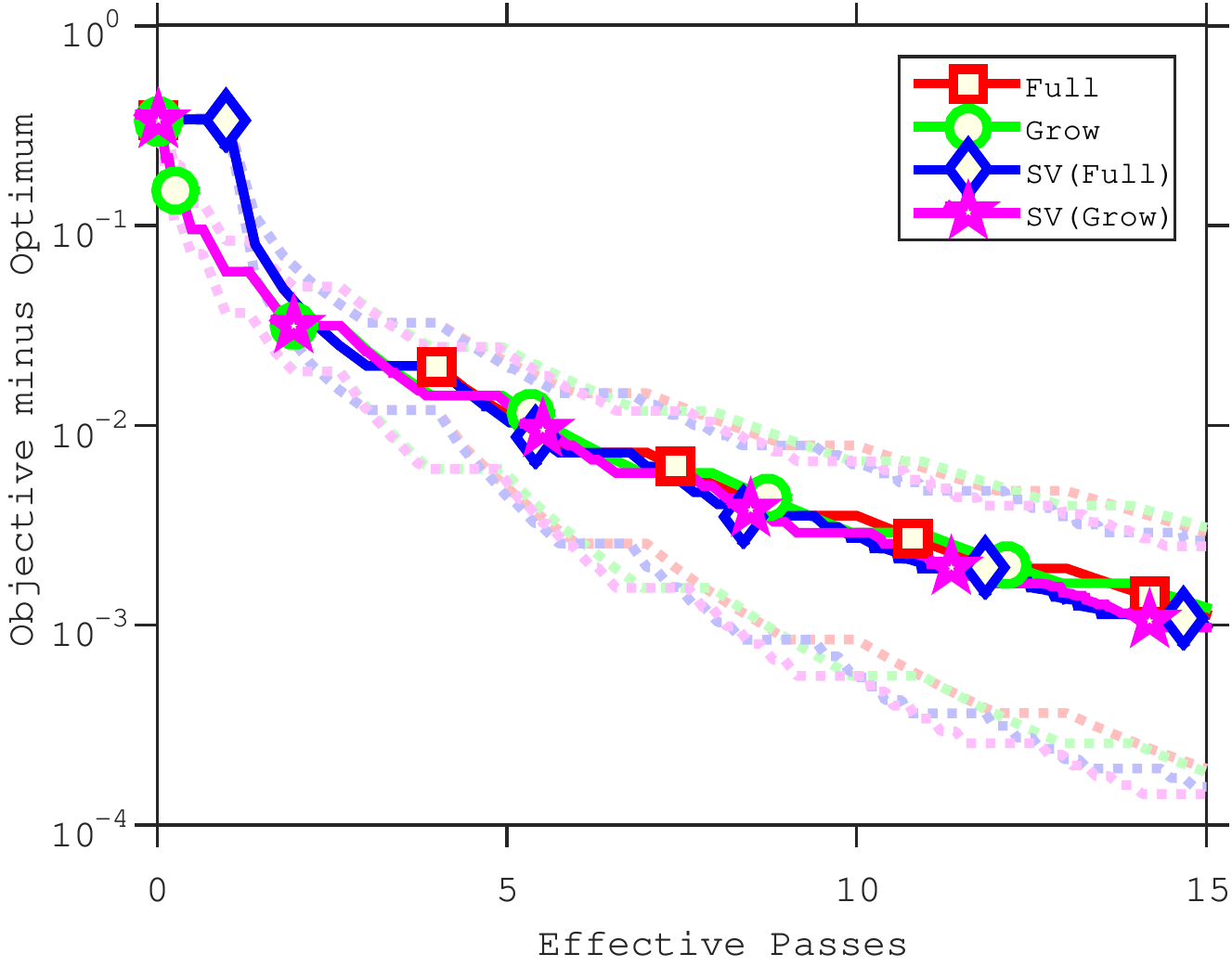}
\includegraphics[width=.32\textwidth]{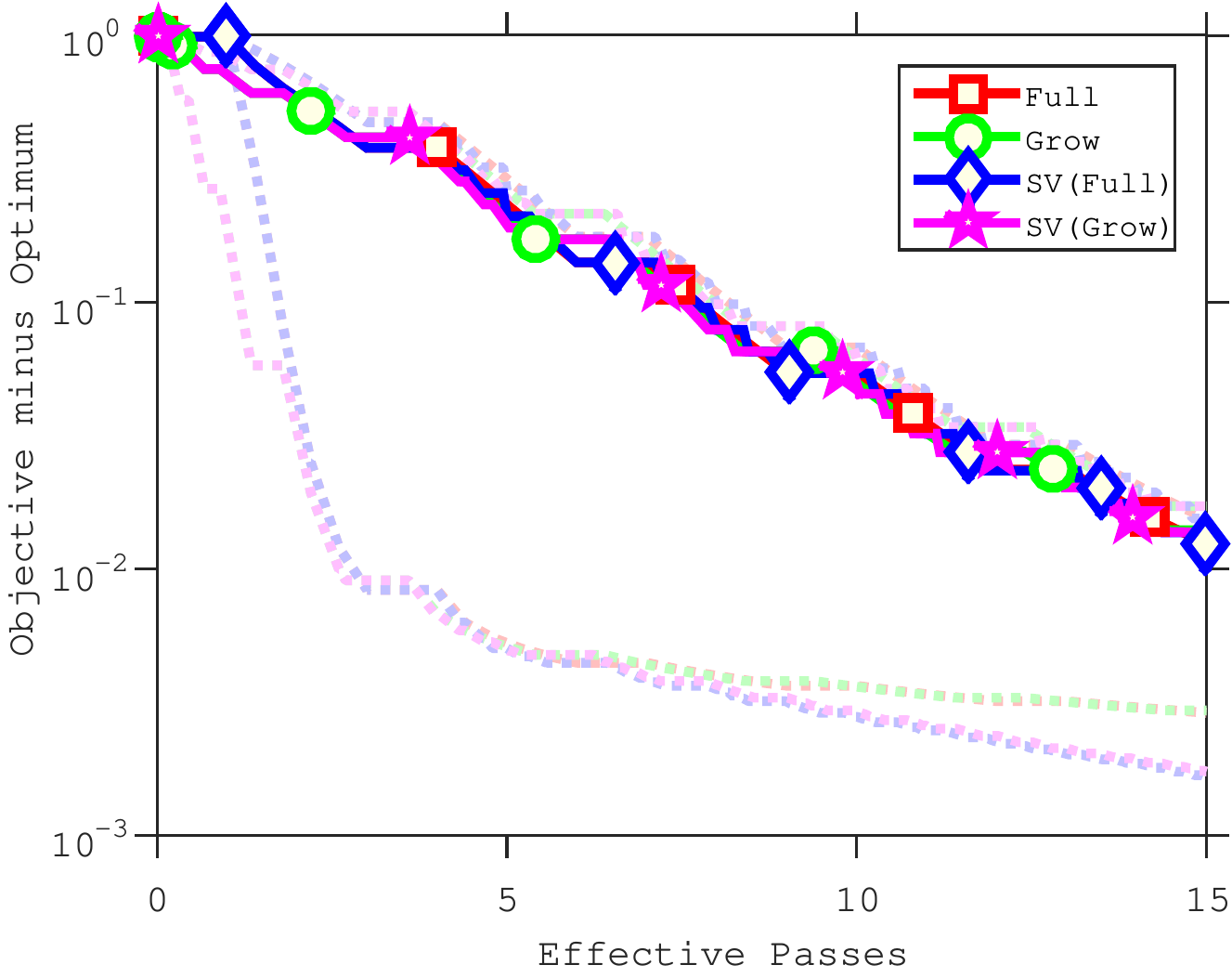}
\includegraphics[width=.32\textwidth]{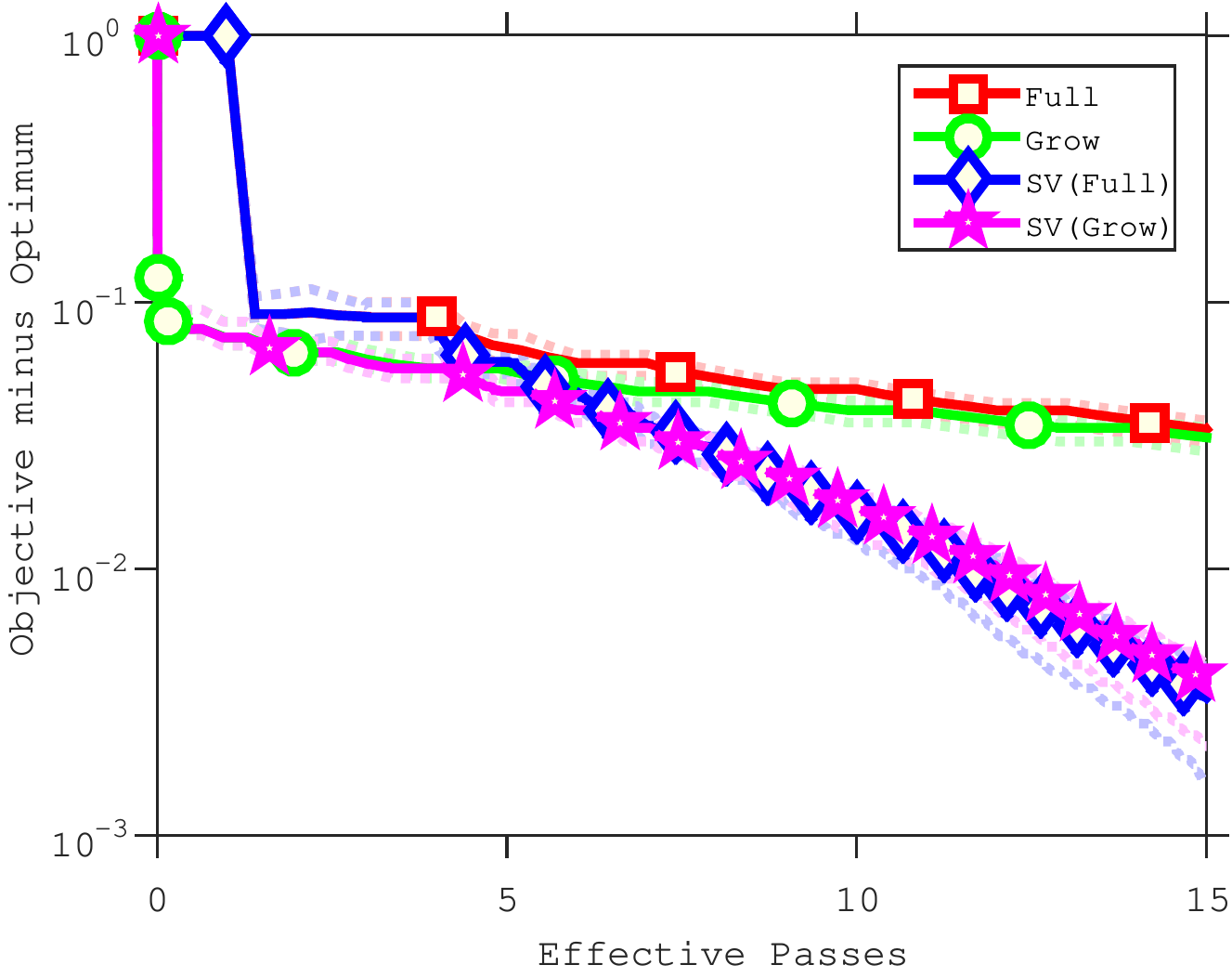}\\
\includegraphics[width=.32\textwidth]{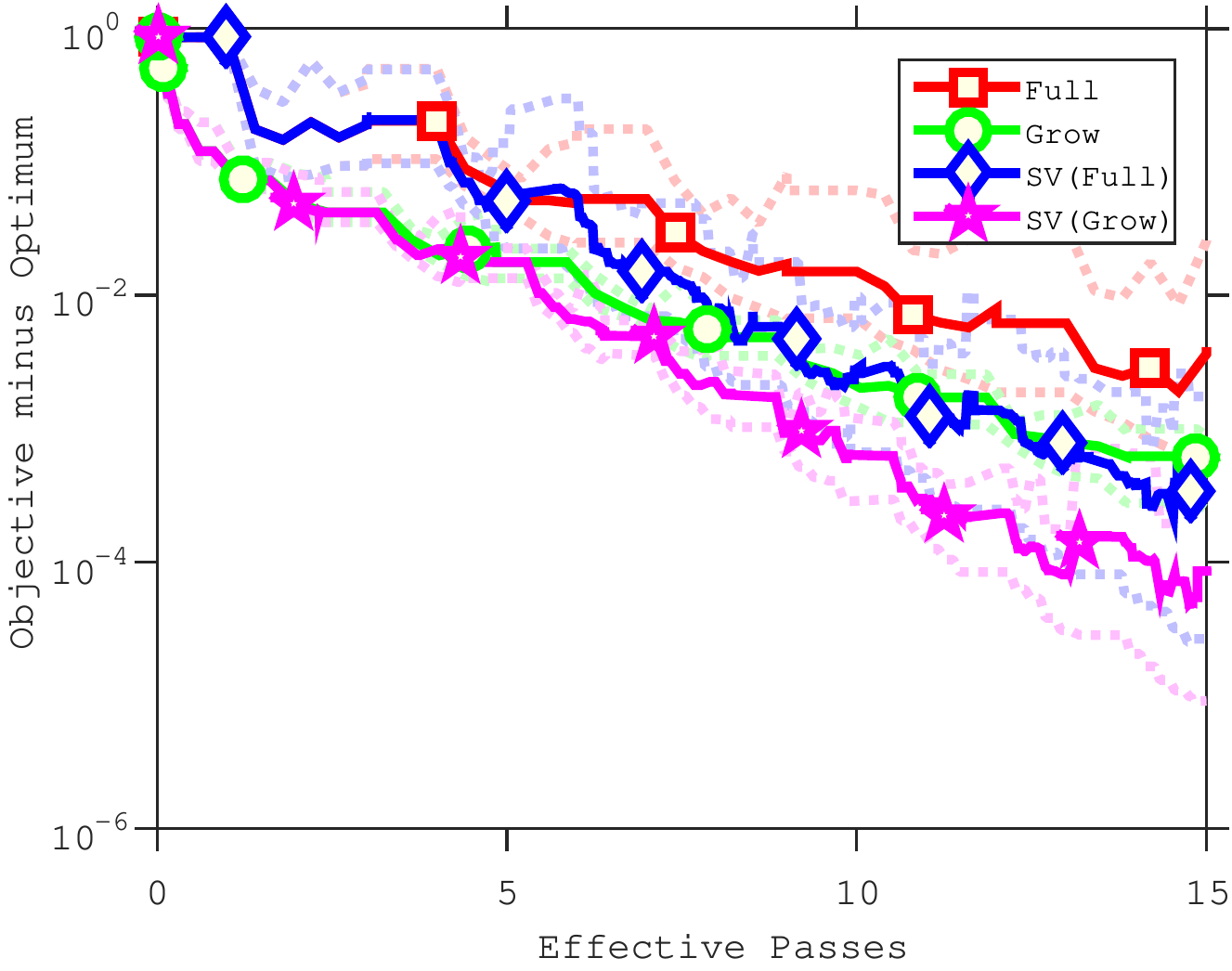}
\includegraphics[width=.32\textwidth]{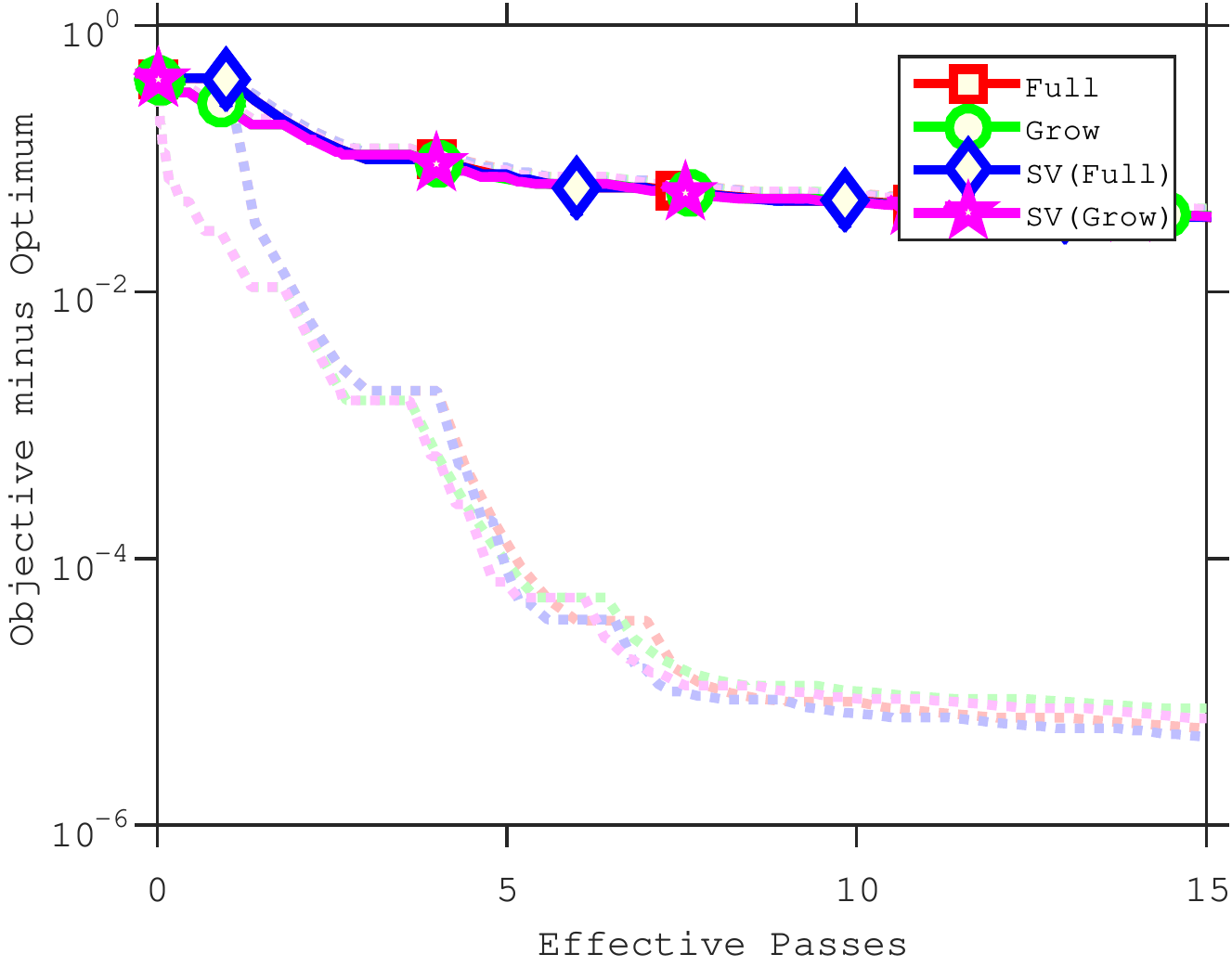}
\includegraphics[width=.32\textwidth]{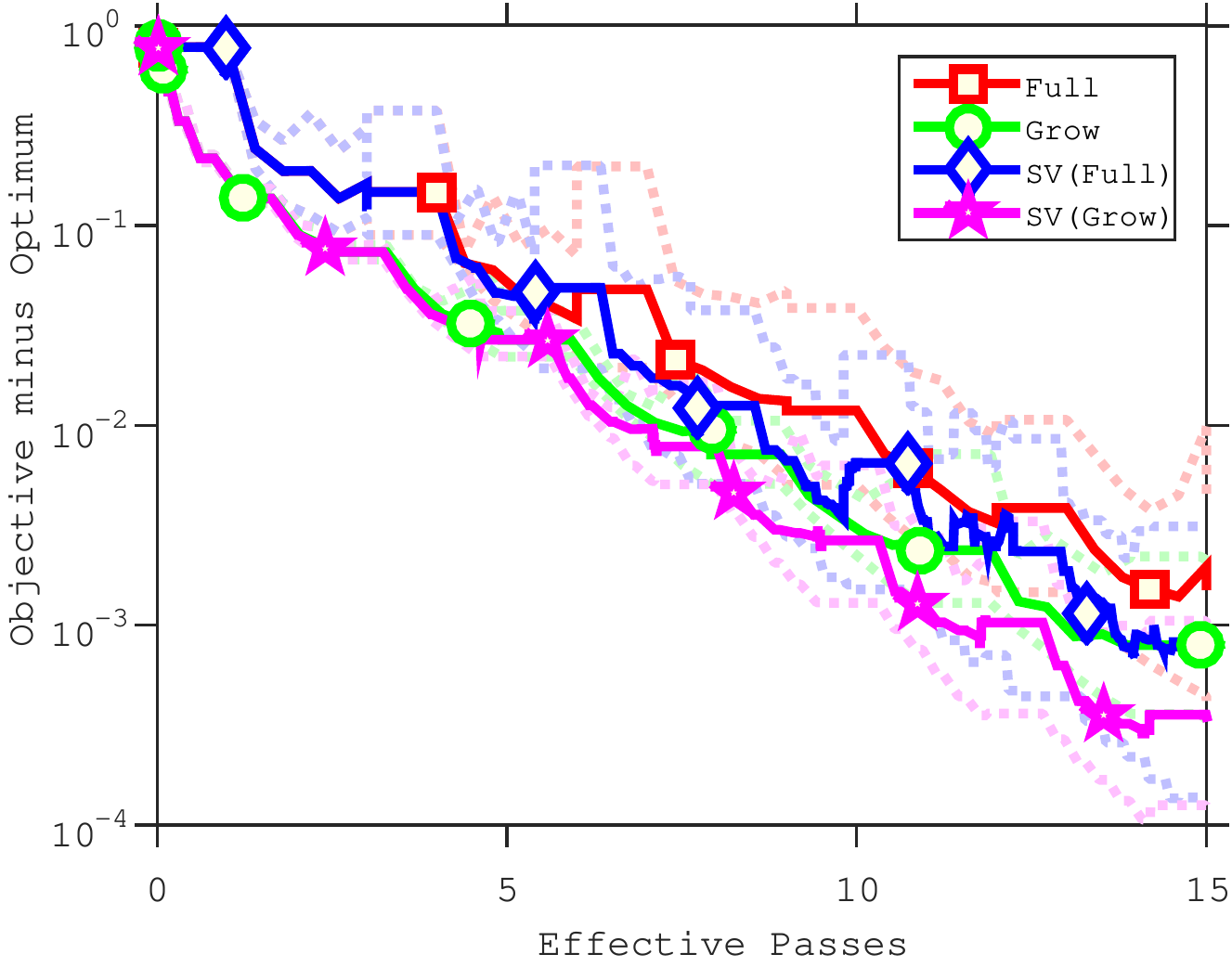}\\
\includegraphics[width=.32\textwidth]{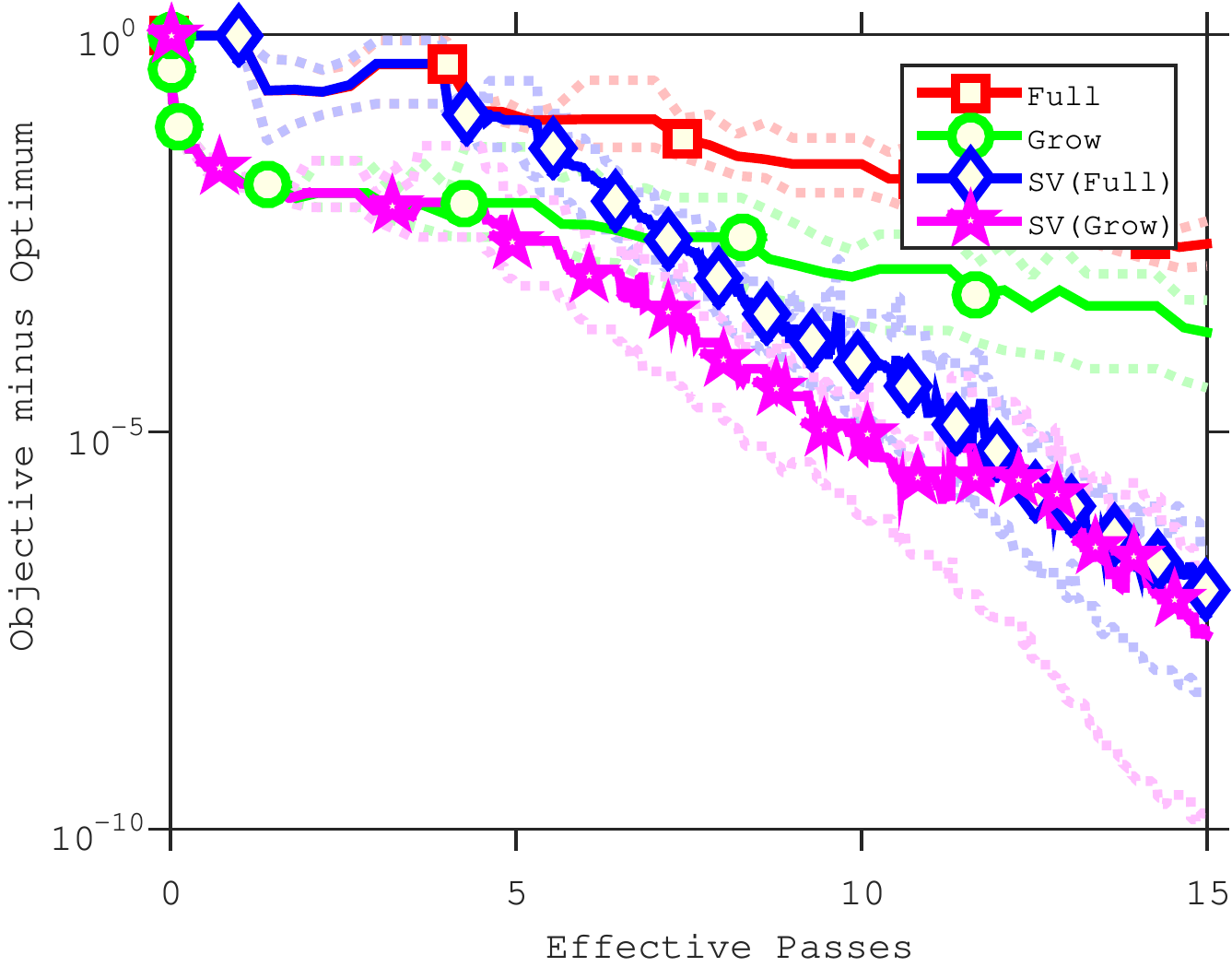}
\includegraphics[width=.32\textwidth]{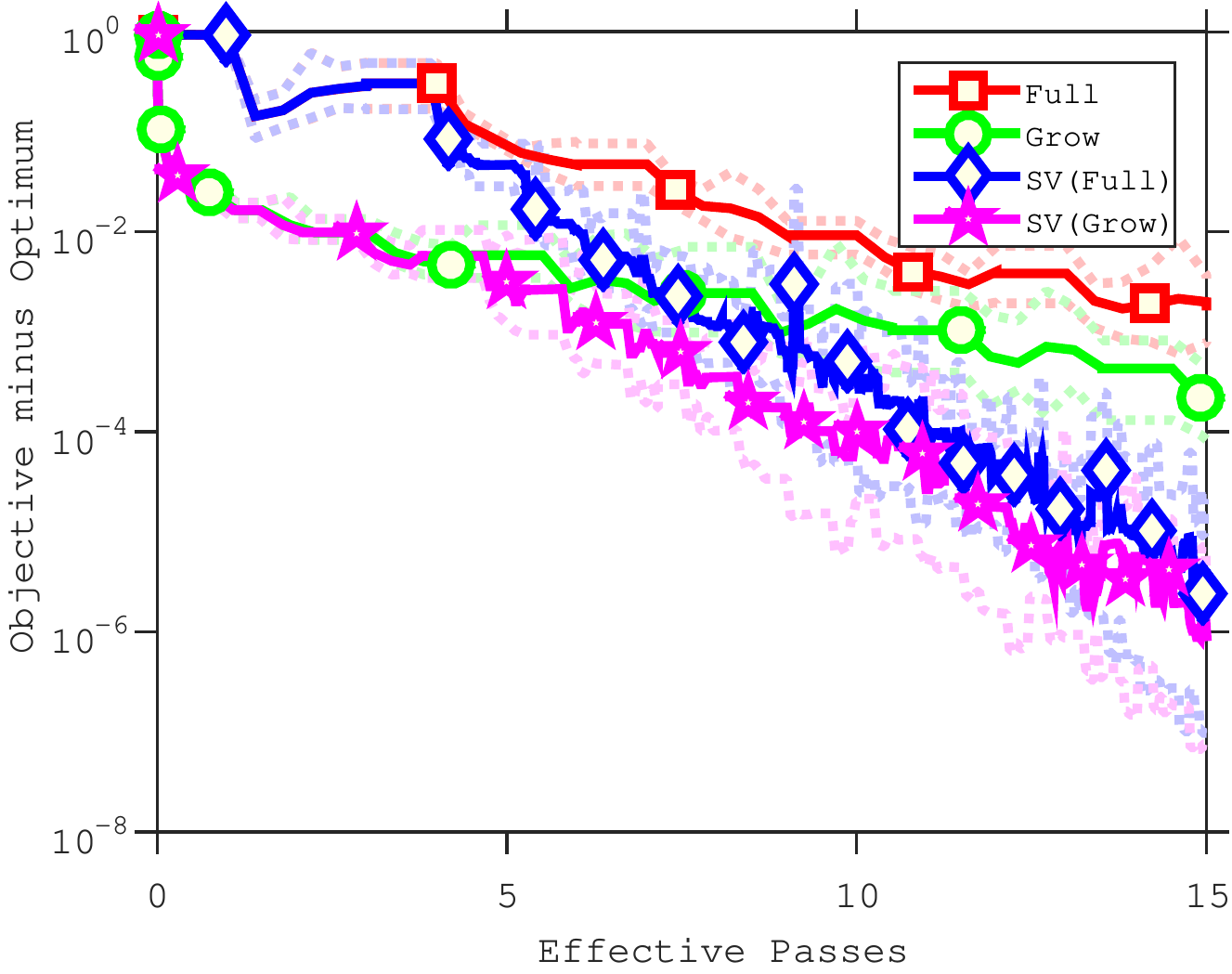}
\includegraphics[width=.32\textwidth]{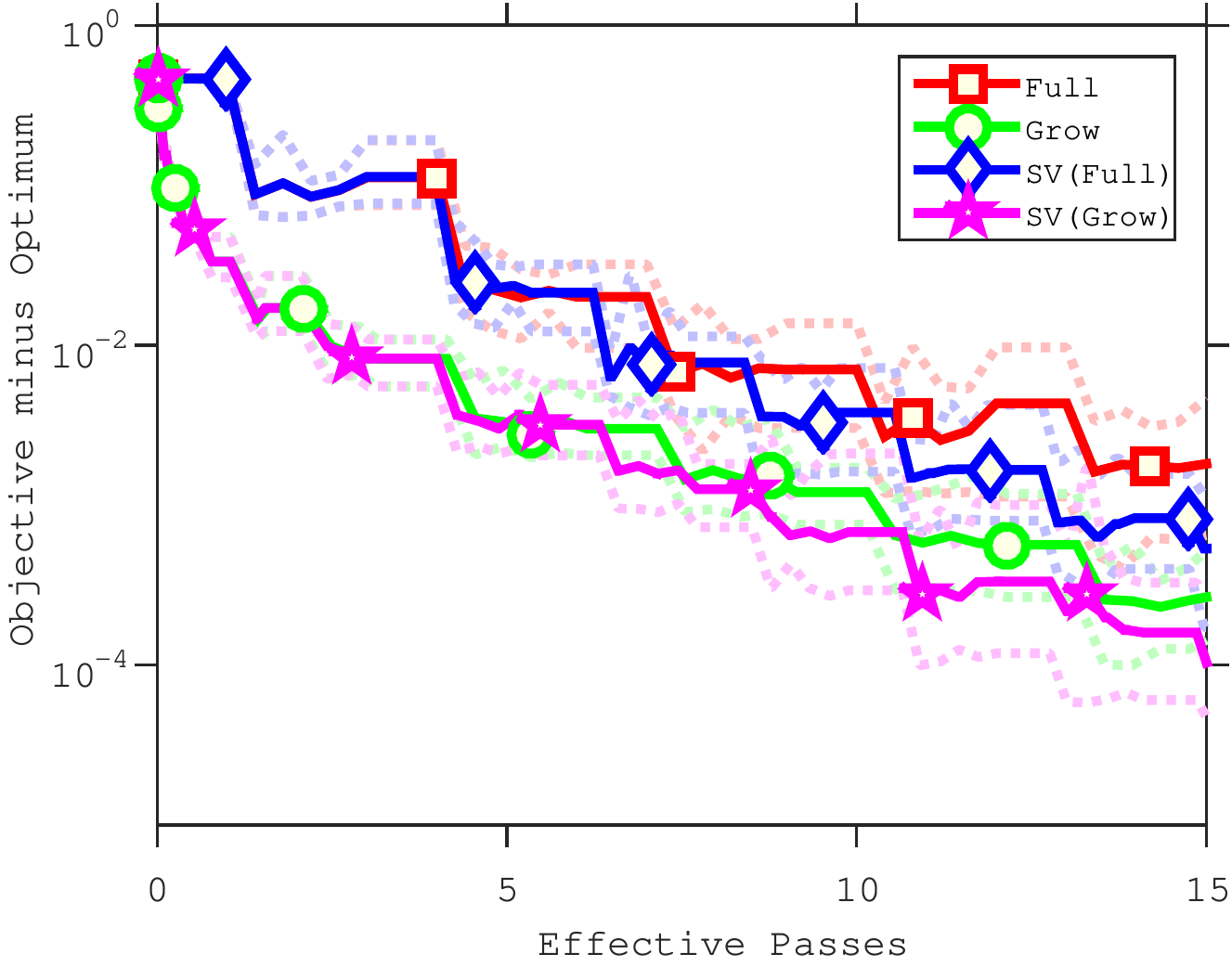}

\caption{Comparison of training objective of SVM for different datasets. The top row gives results on the \emph{quantum} (left), \emph{protein} (center) and \emph{sido} (right) datasets. The middle row gives results on the \emph{rcv11} (left), \emph{covertype} (center) and \emph{news} (right) datasets.  The bottom row gives results on the \emph{spam} (left), \emph{rcv1Full} (center), and \emph{alpha} (right) datasets.}
%\label{fig:3}
\end{figure*}

\begin{figure*}
\includegraphics[width=.32\textwidth]{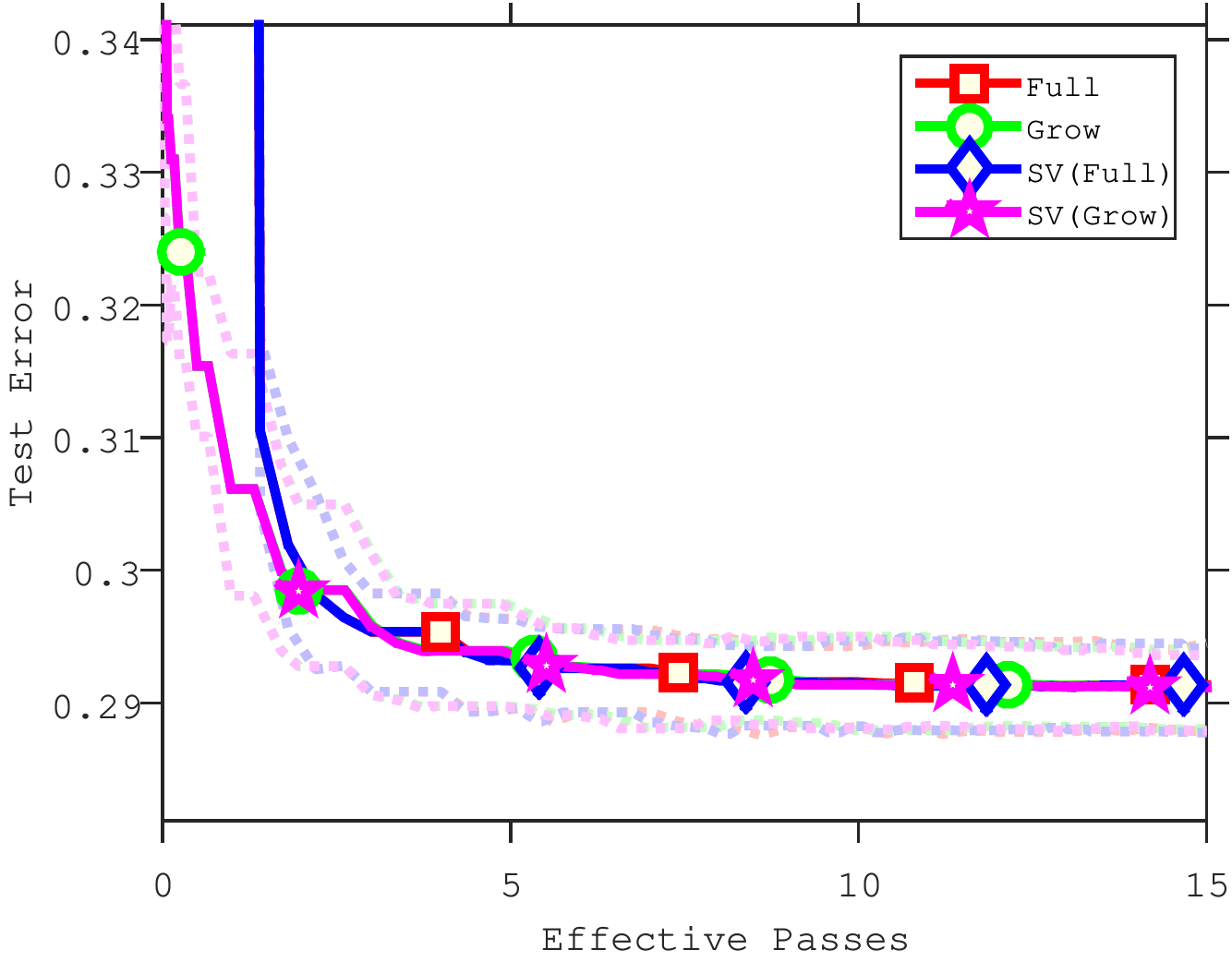}
\includegraphics[width=.32\textwidth]{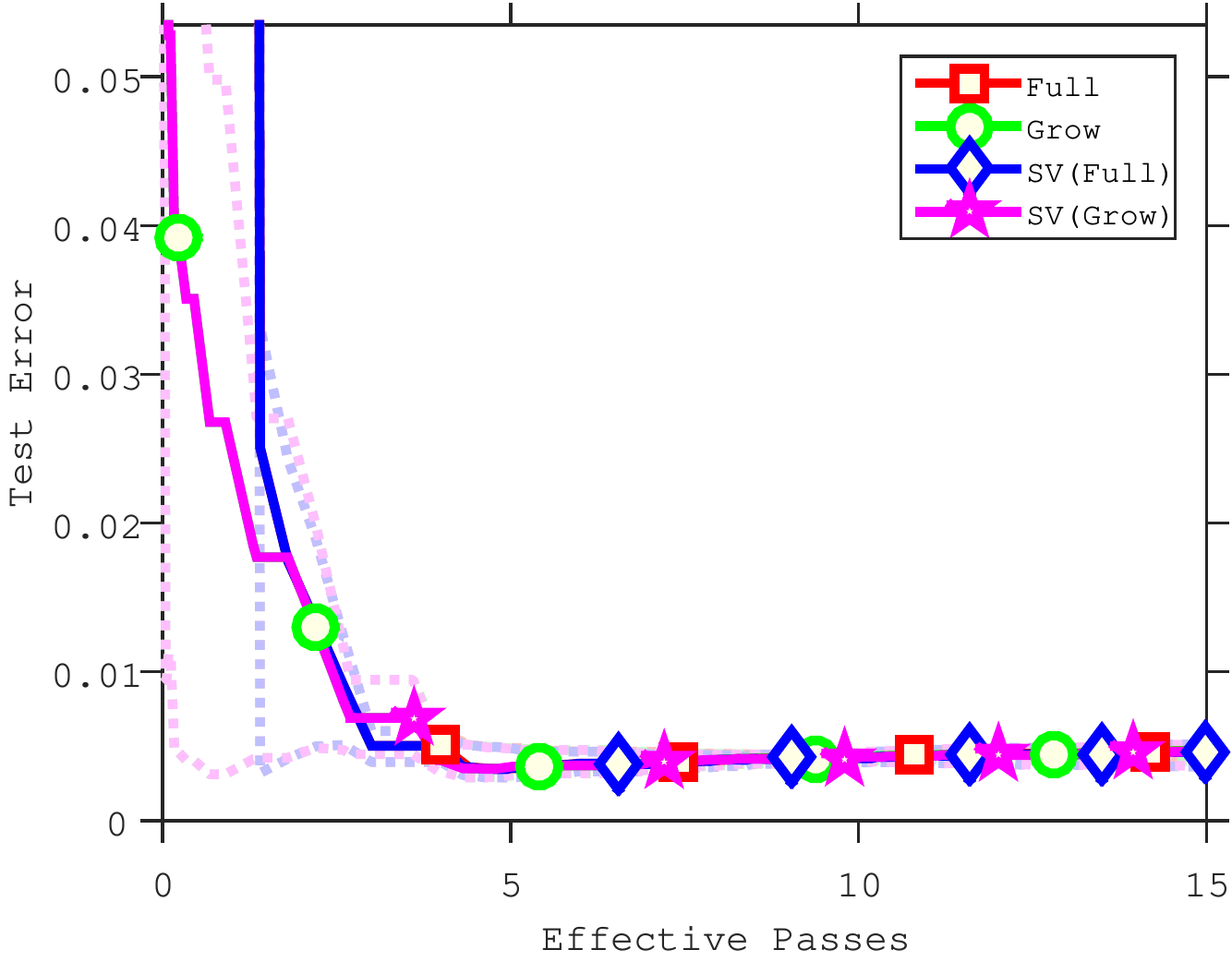}
\includegraphics[width=.32\textwidth]{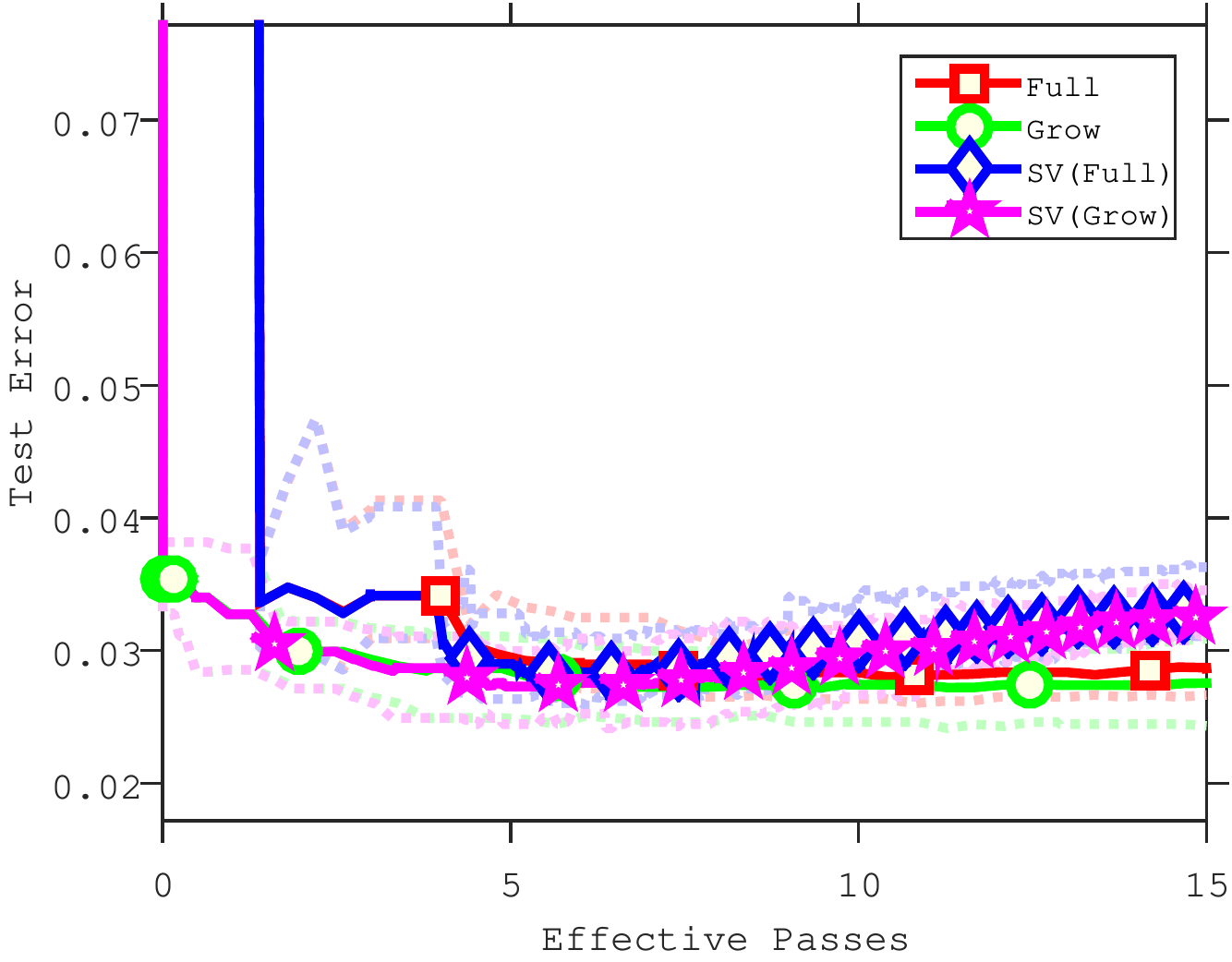}\\
\includegraphics[width=.32\textwidth]{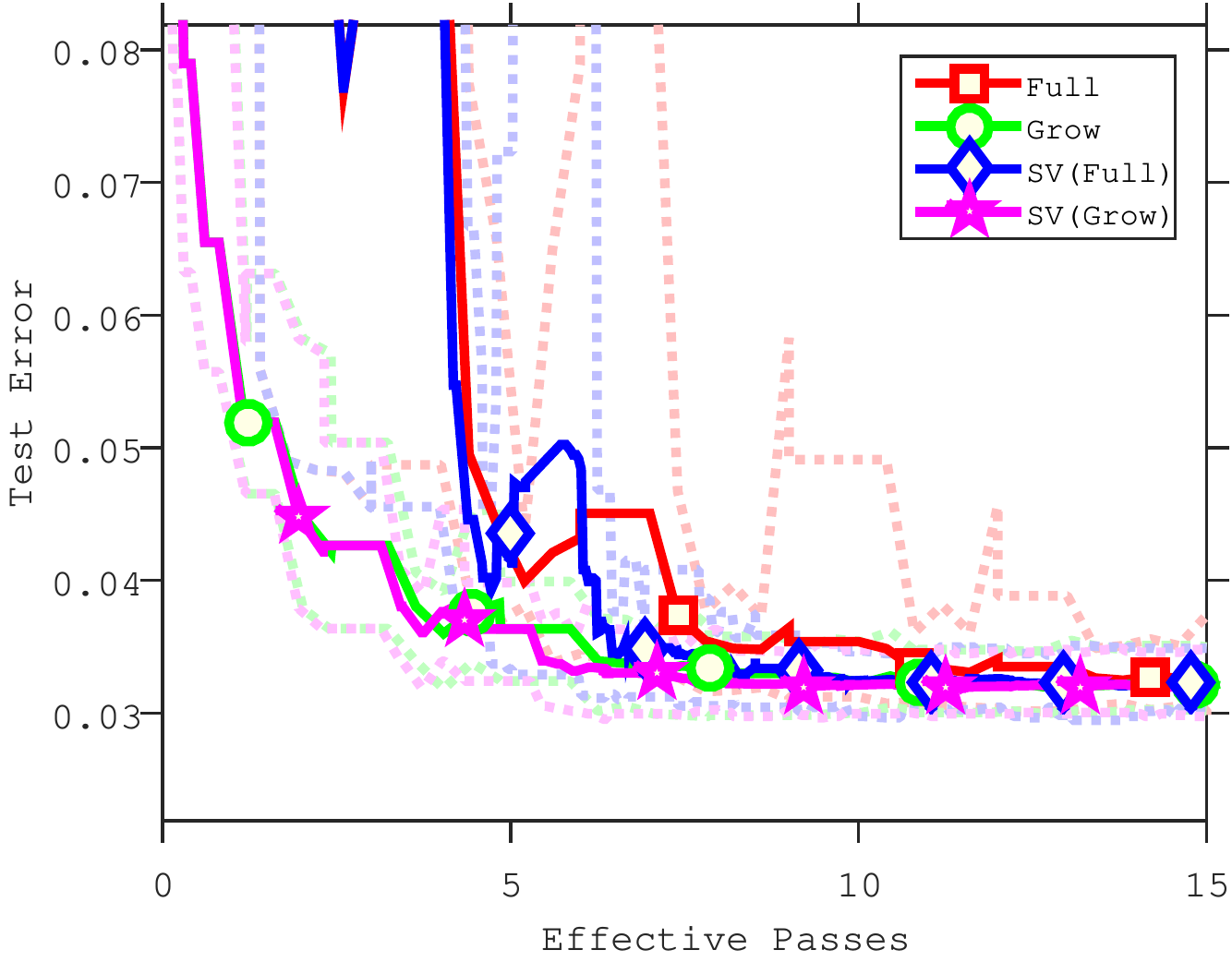}
\includegraphics[width=.32\textwidth]{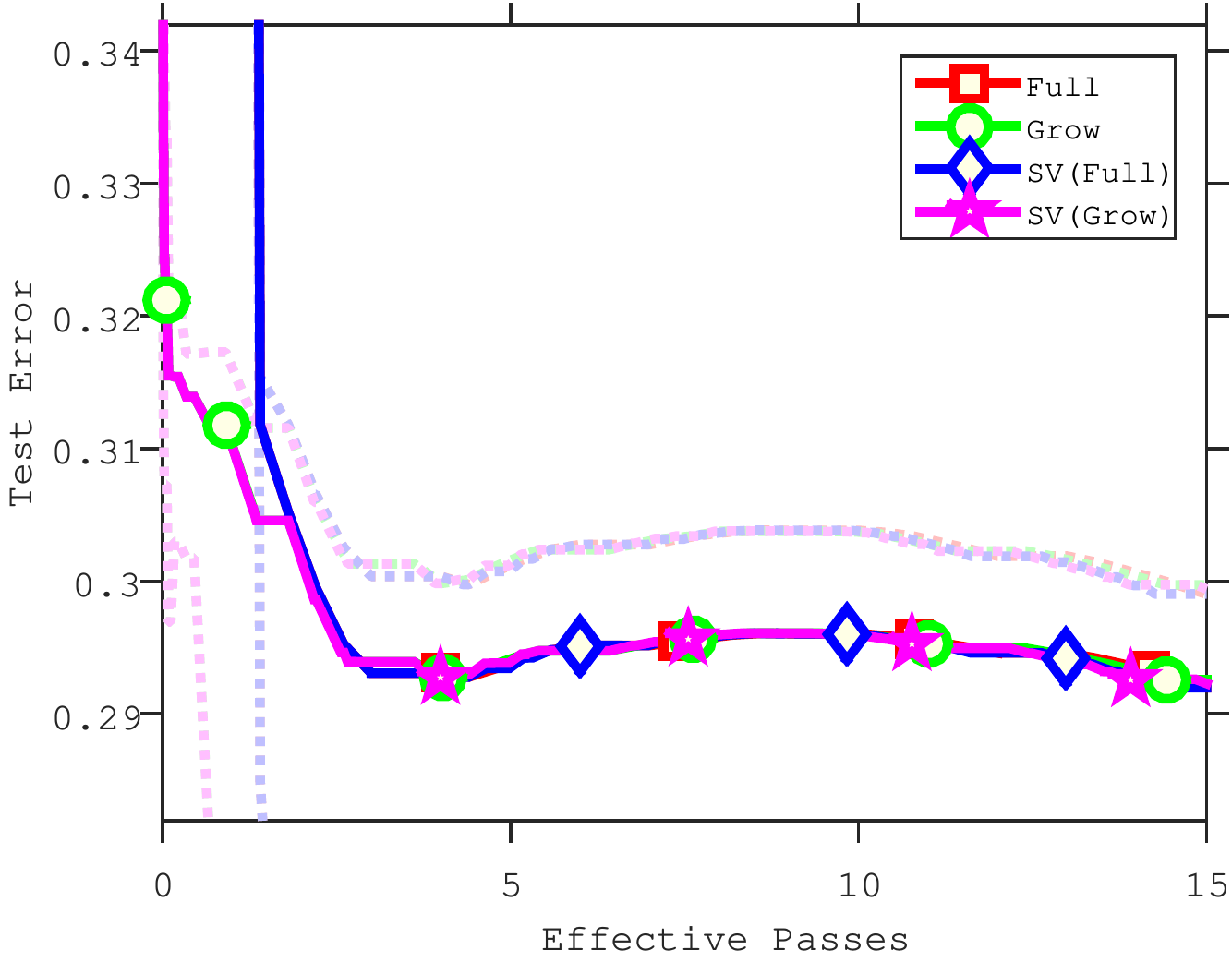}
\includegraphics[width=.32\textwidth]{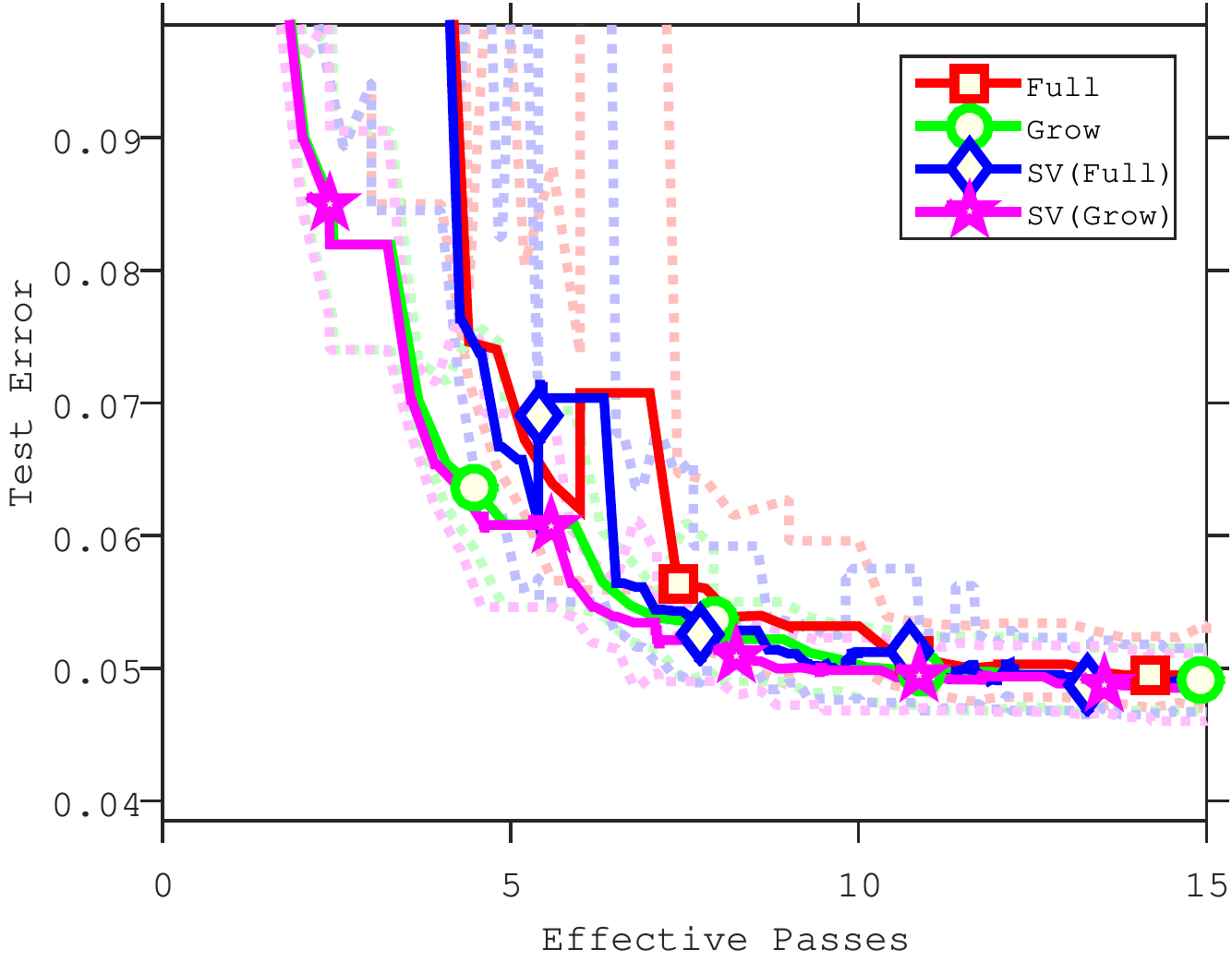}\\
\includegraphics[width=.32\textwidth]{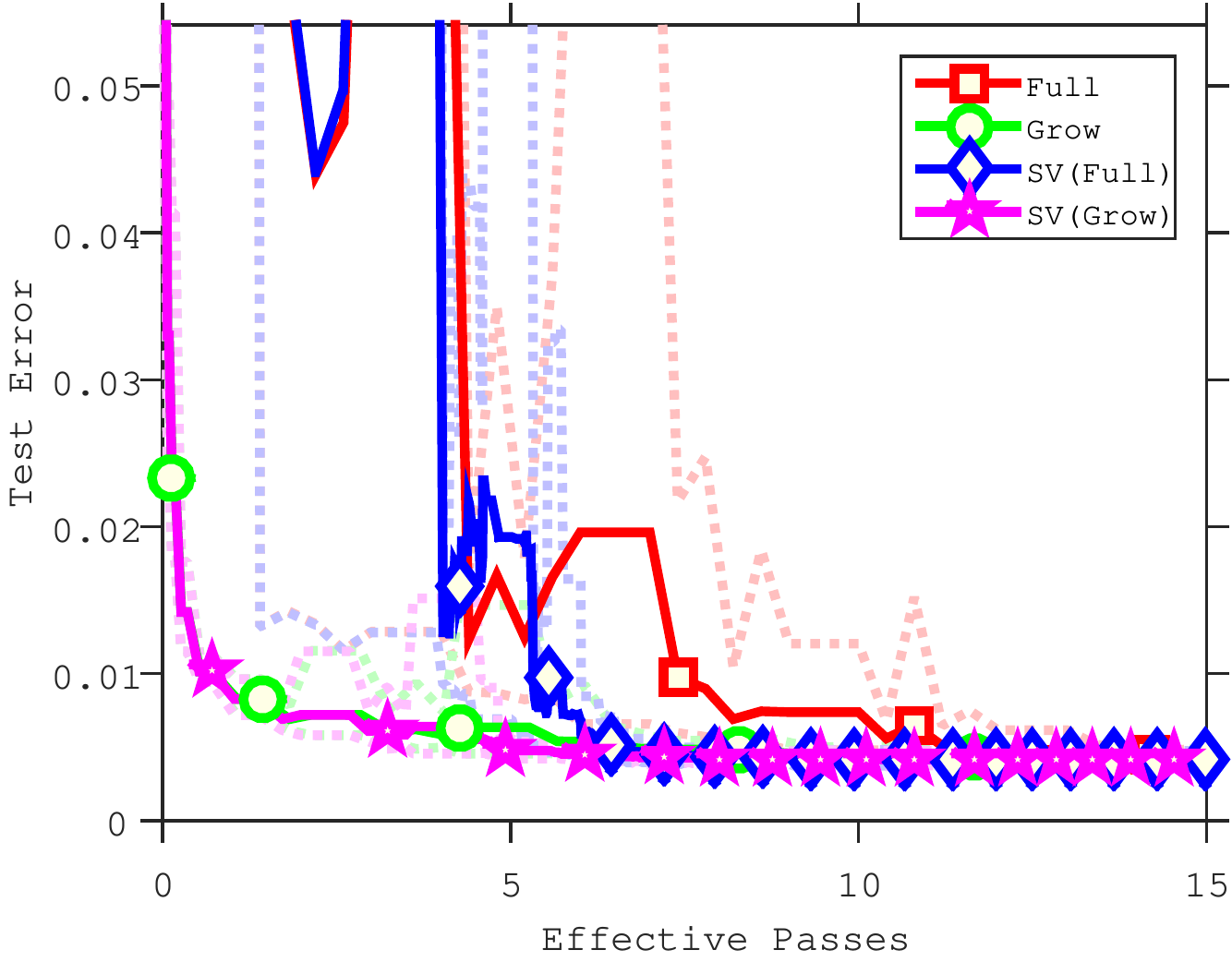}
\includegraphics[width=.32\textwidth]{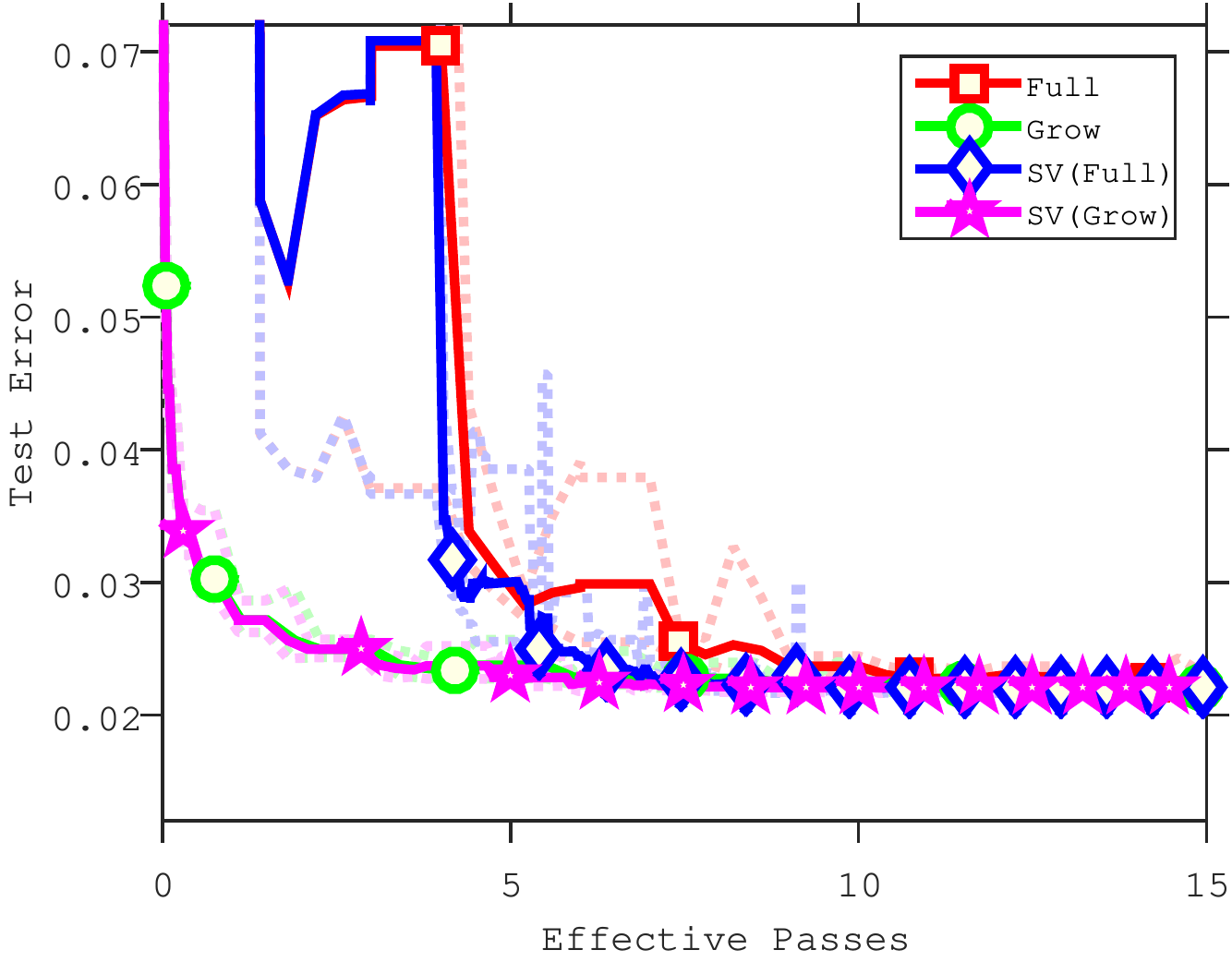}
\includegraphics[width=.32\textwidth]{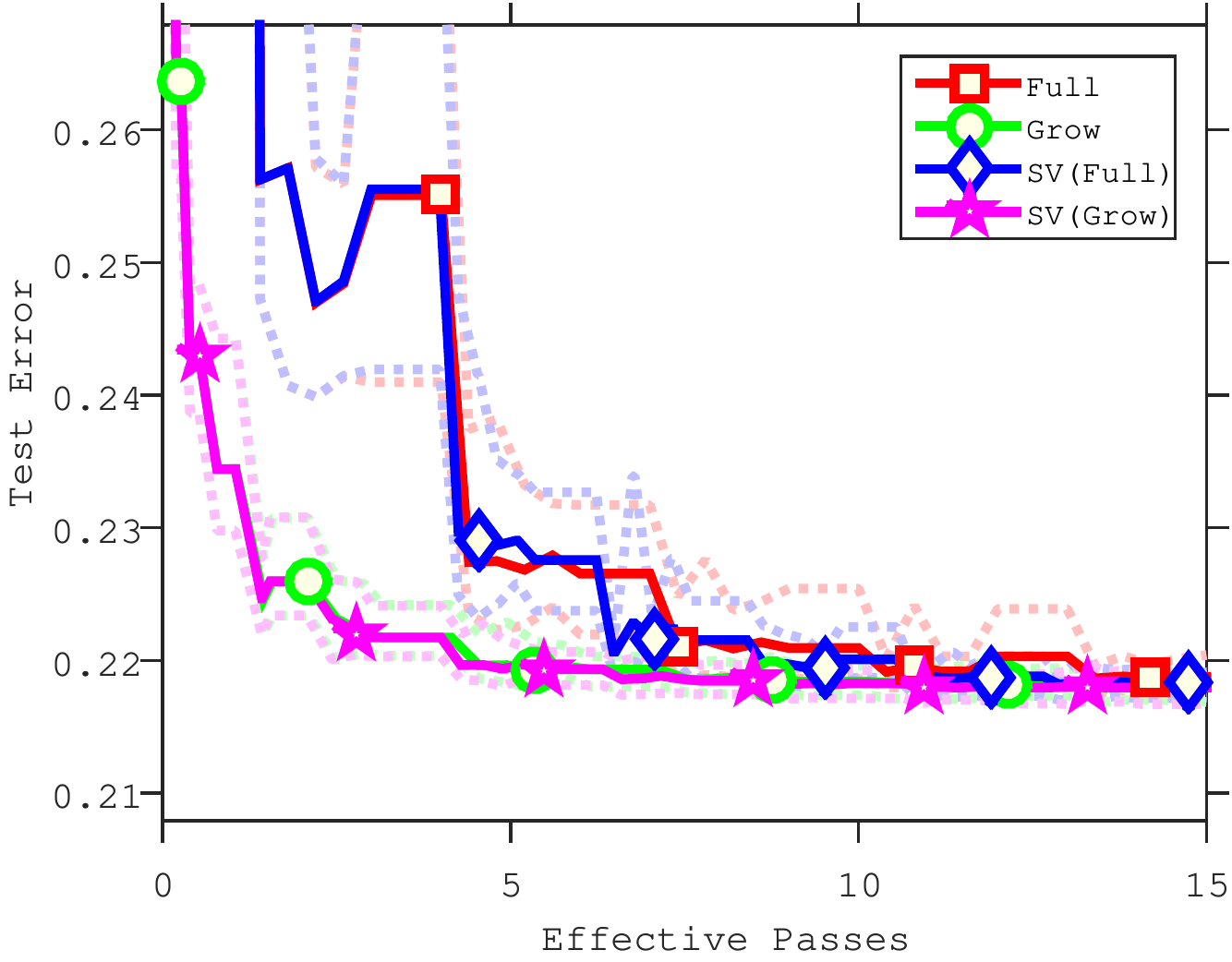}

\caption{Comparison of test error of SVM for different datasets. The top row gives results on the \emph{quantum} (left), \emph{protein} (center) and \emph{sido} (right) datasets. The middle row gives results on the \emph{rcv11} (left), \emph{covertype} (center) and \emph{news} (right) datasets.  The bottom row gives results on the \emph{spam} (left), \emph{rcv1Full} (center), and \emph{alpha} (right) datasets.}
%\label{fig:4}
\end{figure*}

\begin{figure*}
\includegraphics[width=.32\textwidth]{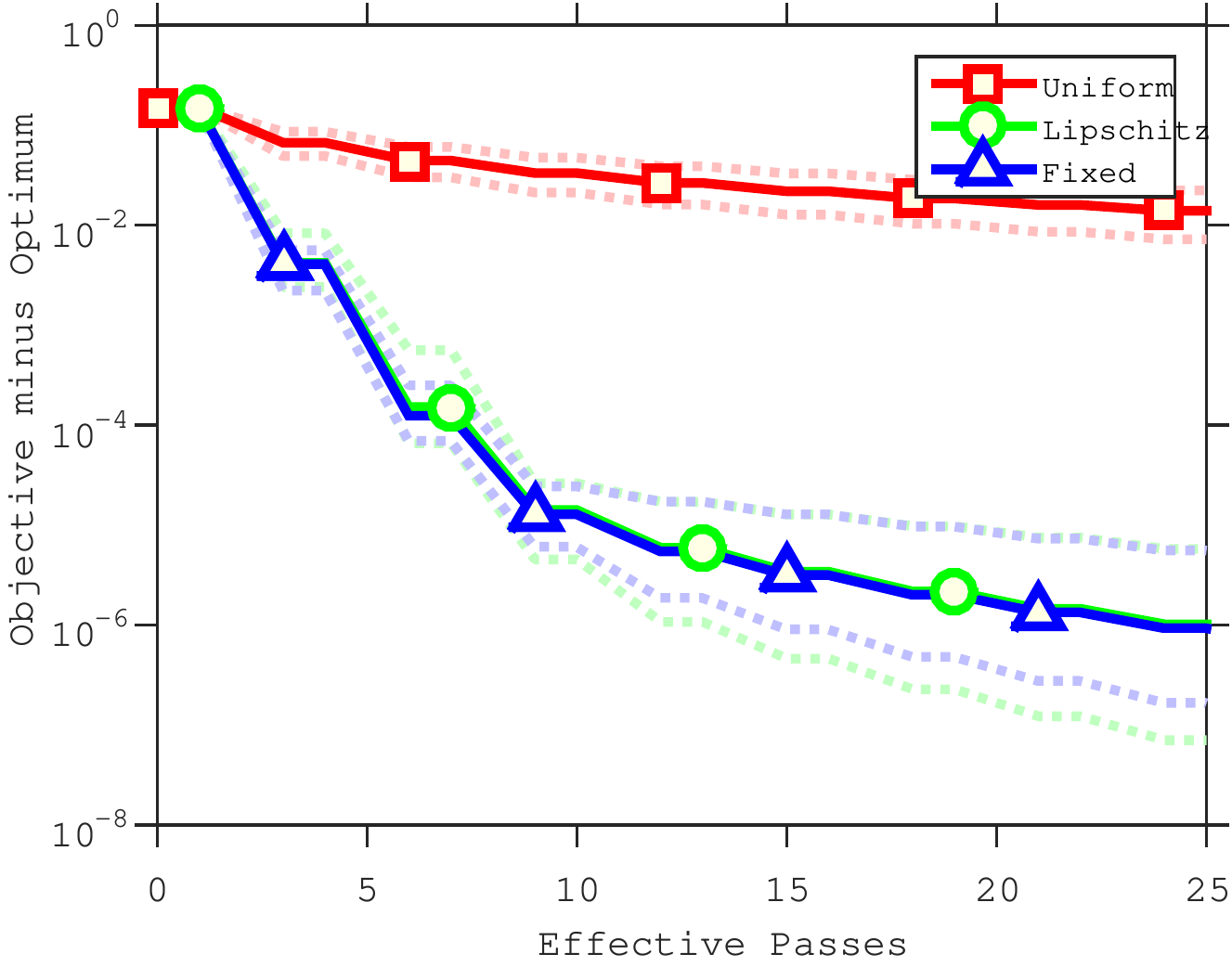}
\includegraphics[width=.32\textwidth]{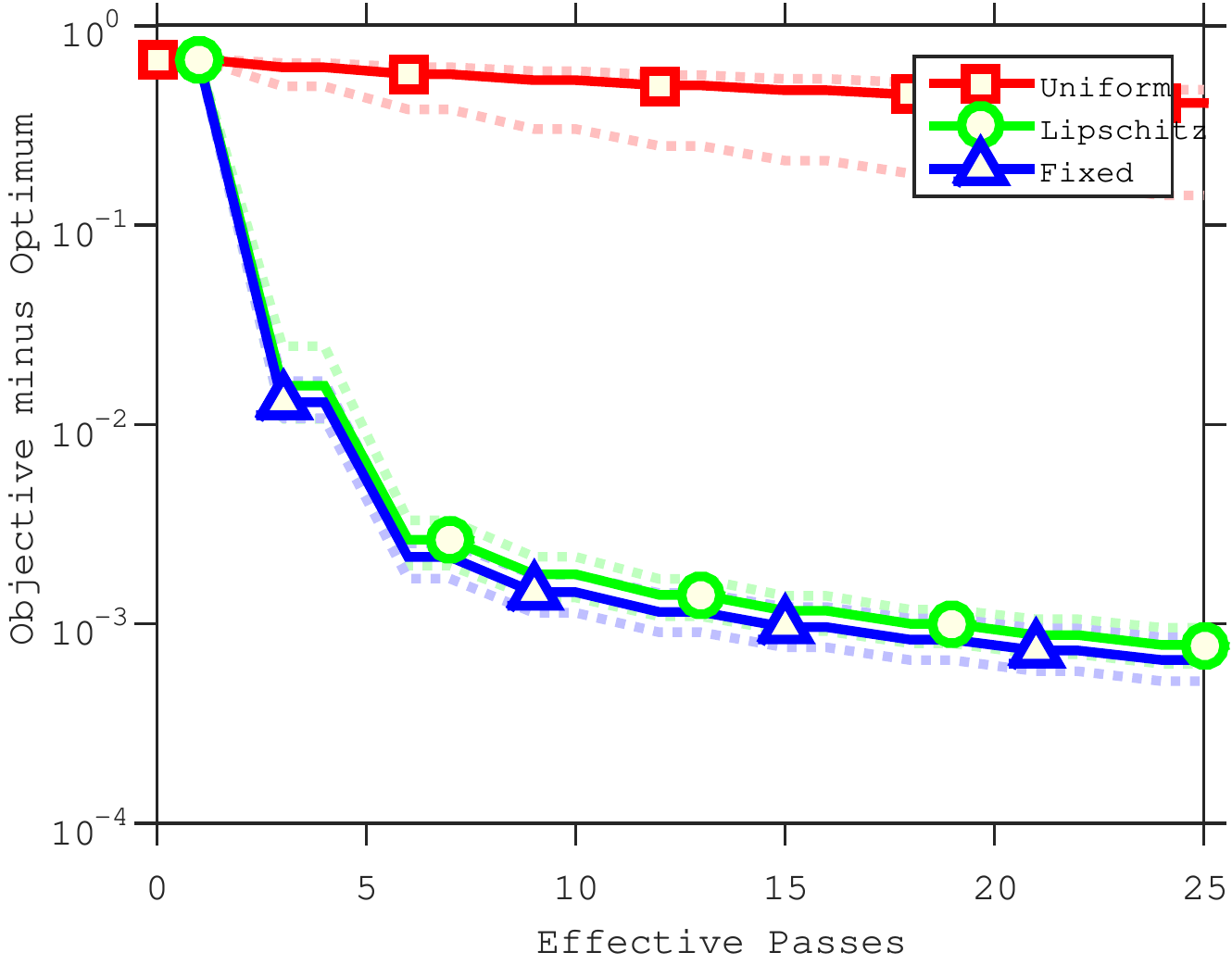}
\includegraphics[width=.32\textwidth]{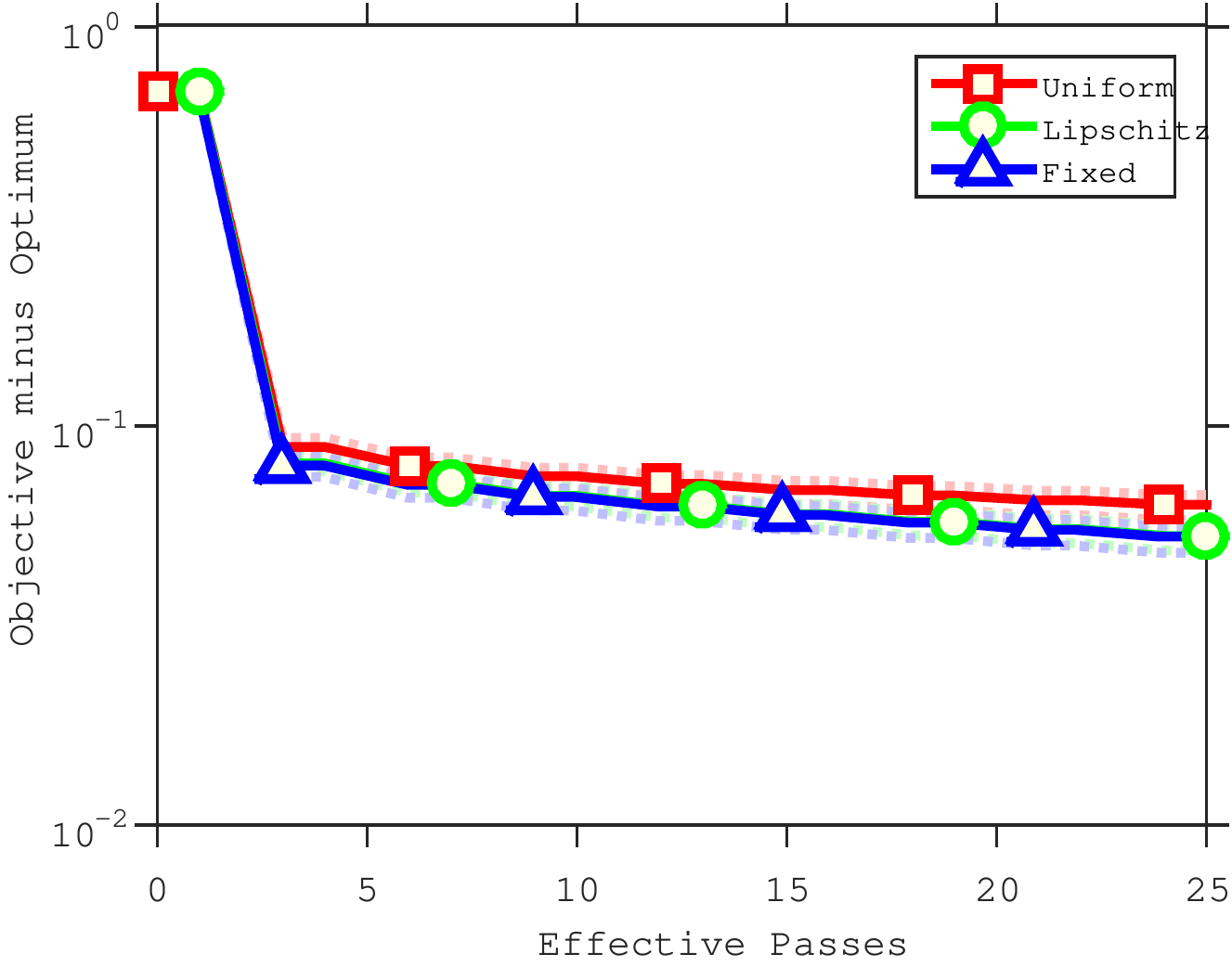}\\
\includegraphics[width=.32\textwidth]{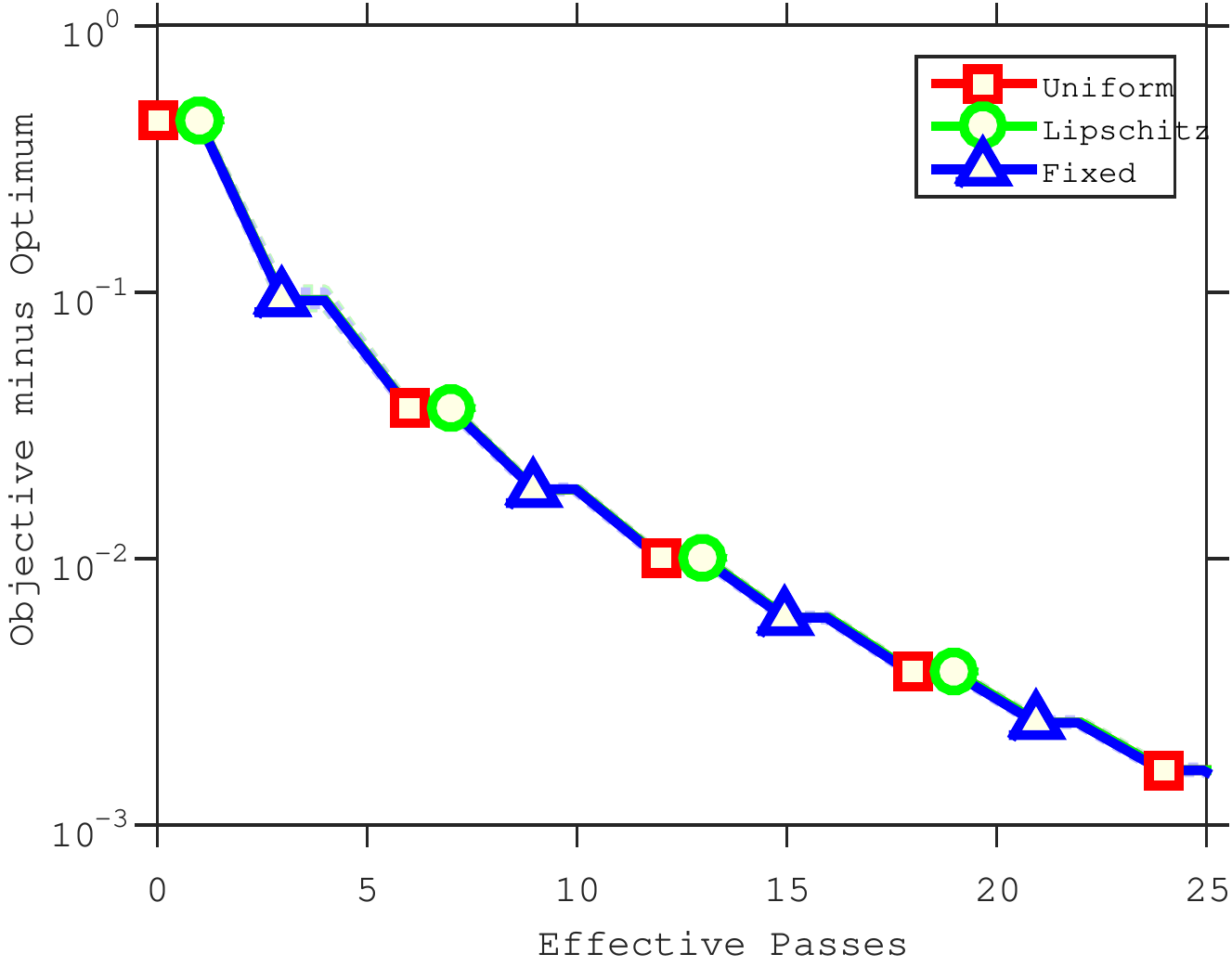}
\includegraphics[width=.32\textwidth]{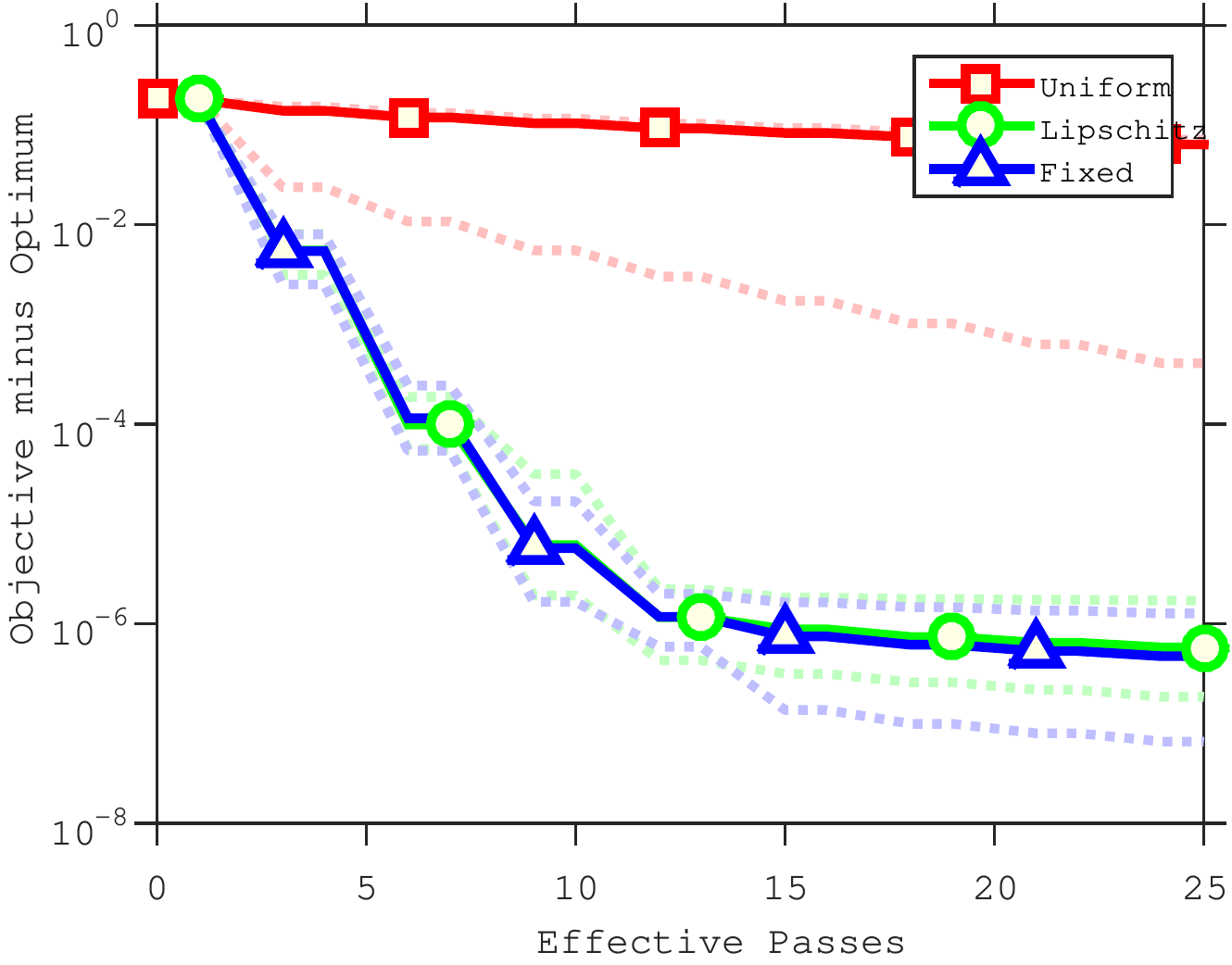}
\includegraphics[width=.32\textwidth]{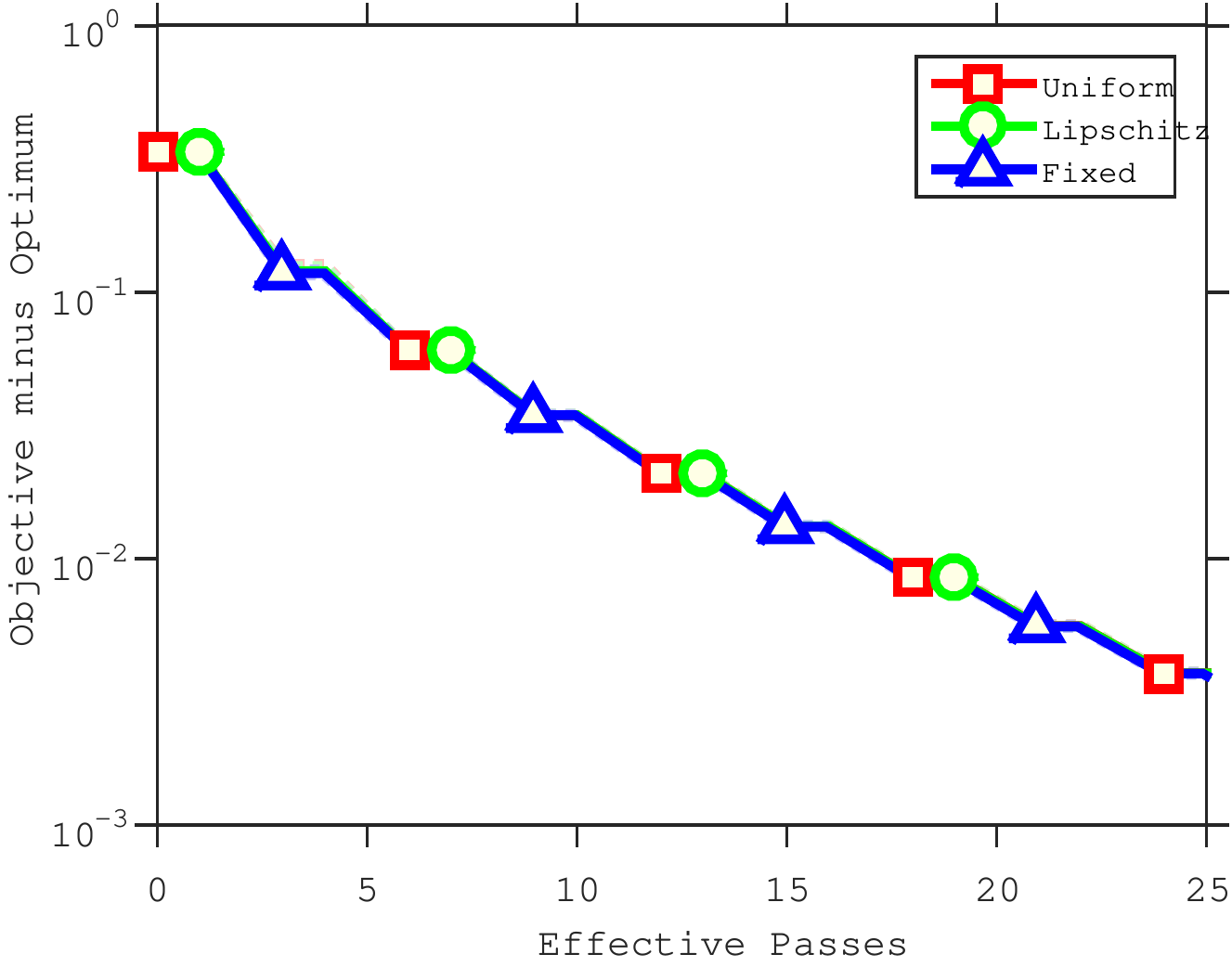}\\
\includegraphics[width=.32\textwidth]{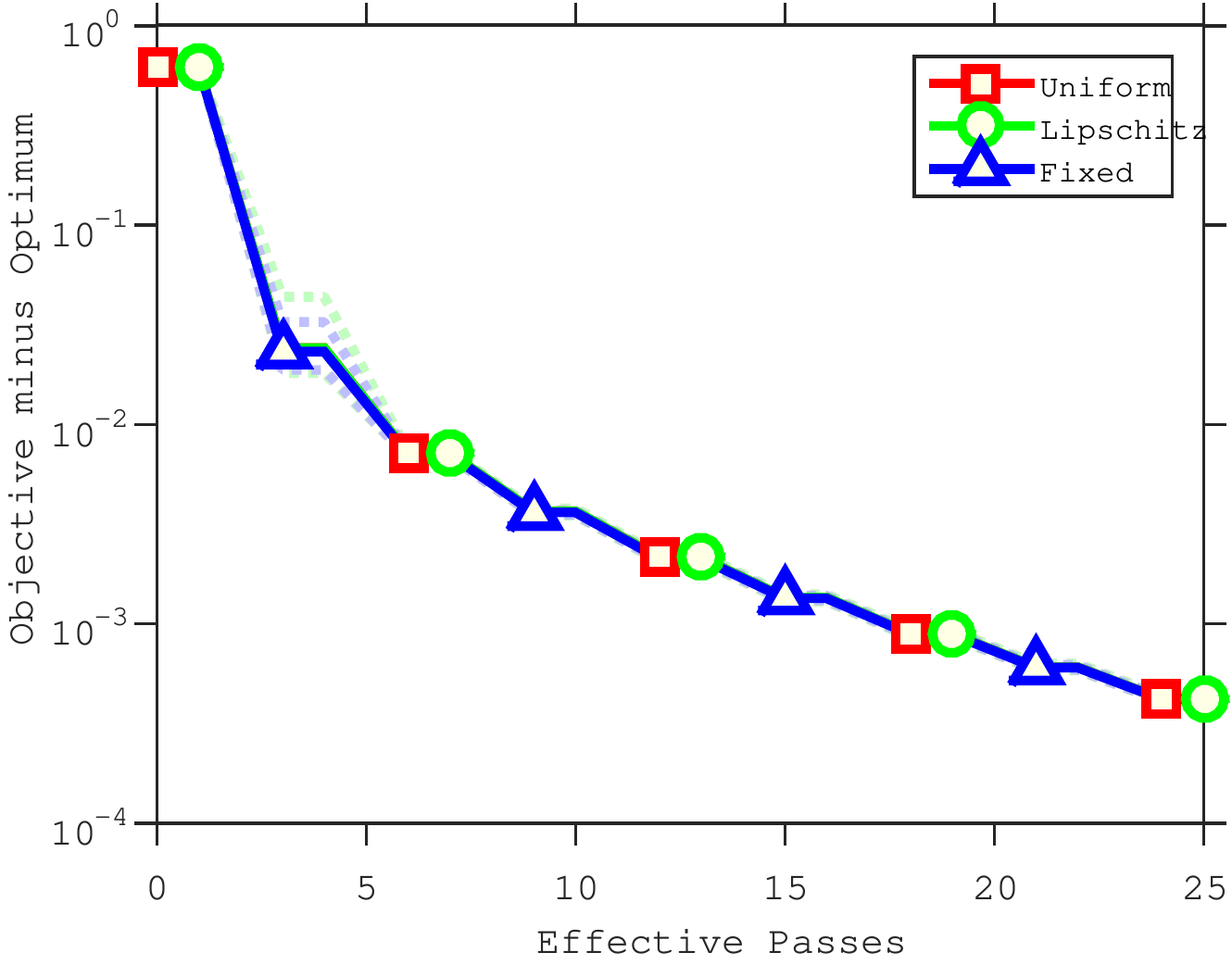}
\includegraphics[width=.32\textwidth]{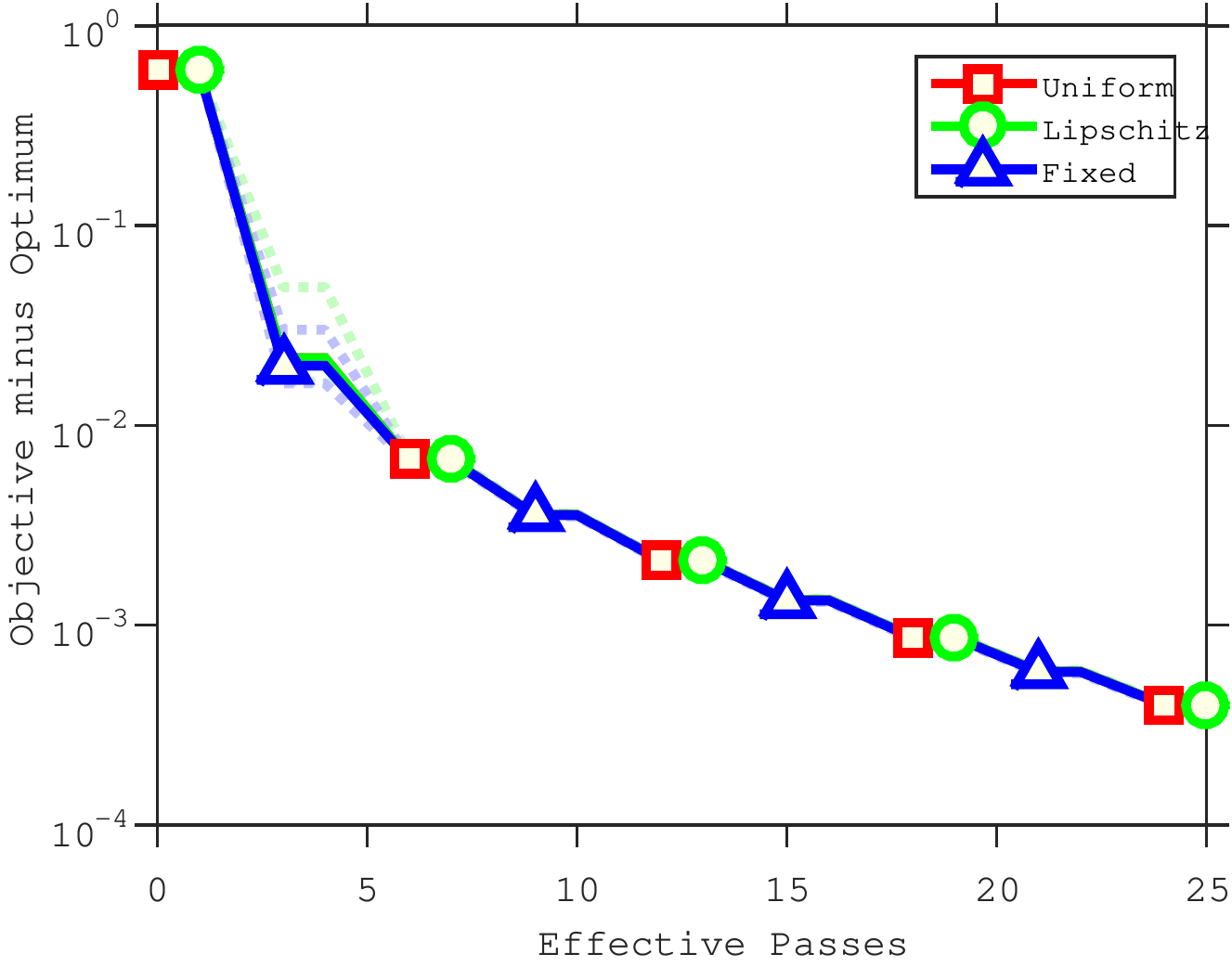}
\includegraphics[width=.32\textwidth]{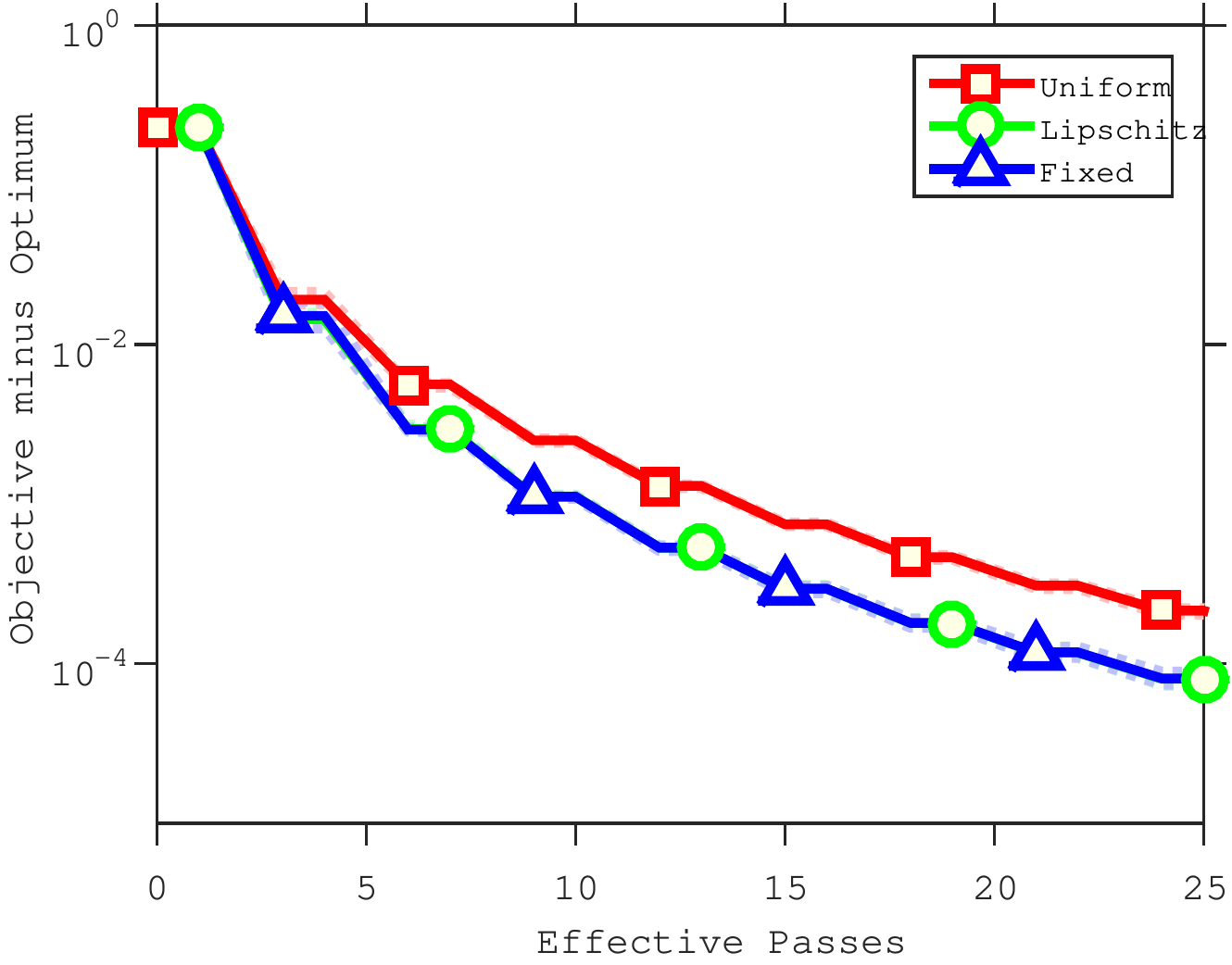}

\caption{Comparison of training objective of logistic regression with different mini-batch strategies. The top row gives results on the \emph{quantum} (left), \emph{protein} (center) and \emph{sido} (right) datasets. The middle row gives results on the \emph{rcv11} (left), \emph{covertype} (center) and \emph{news} (right) datasets.  The bottom row gives results on the \emph{spam} (left), \emph{rcv1Full} (center), and \emph{alpha} (right) datasets.}
%\label{fig:5}
\end{figure*}

\begin{figure*}
\includegraphics[width=.32\textwidth]{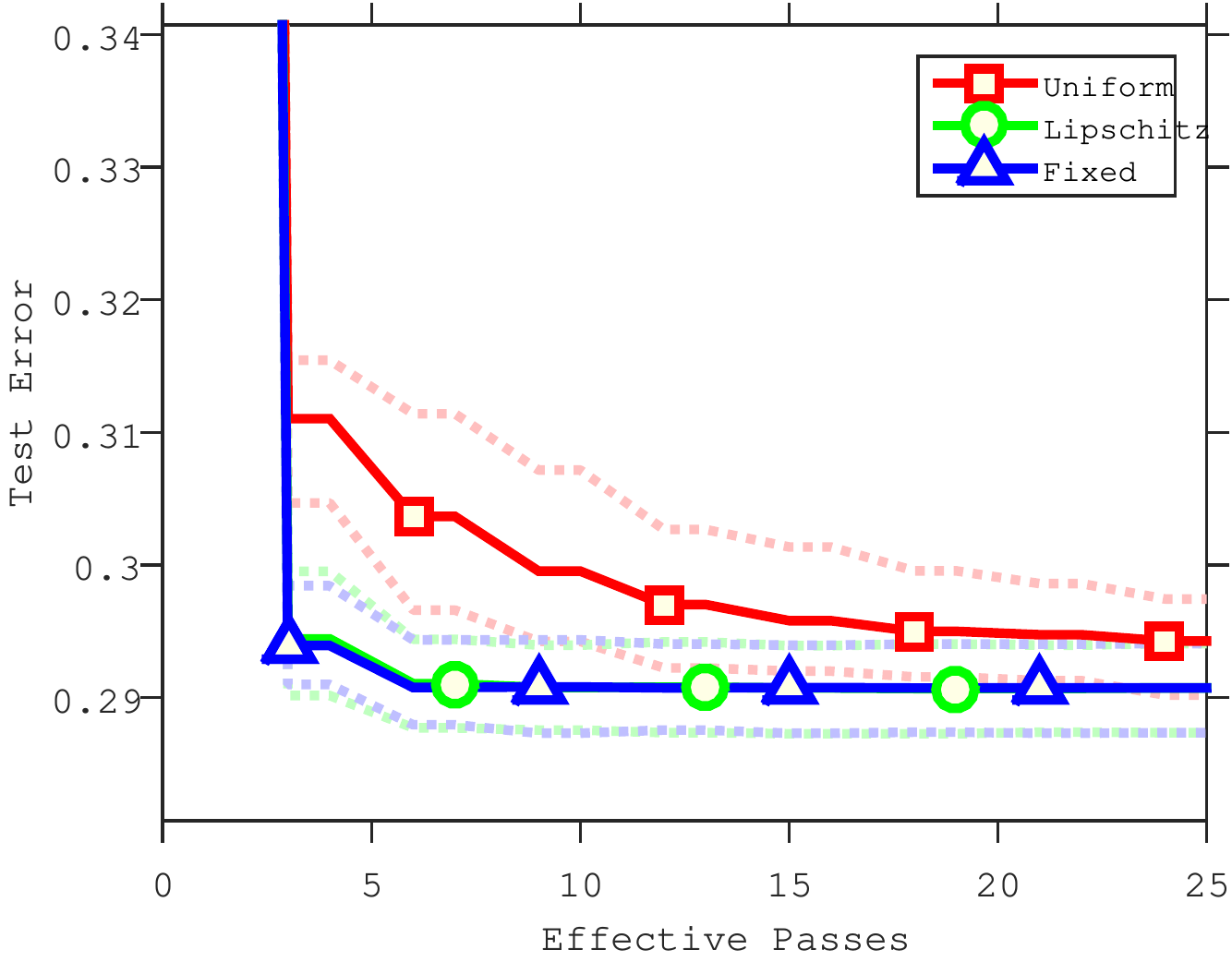}
\includegraphics[width=.32\textwidth]{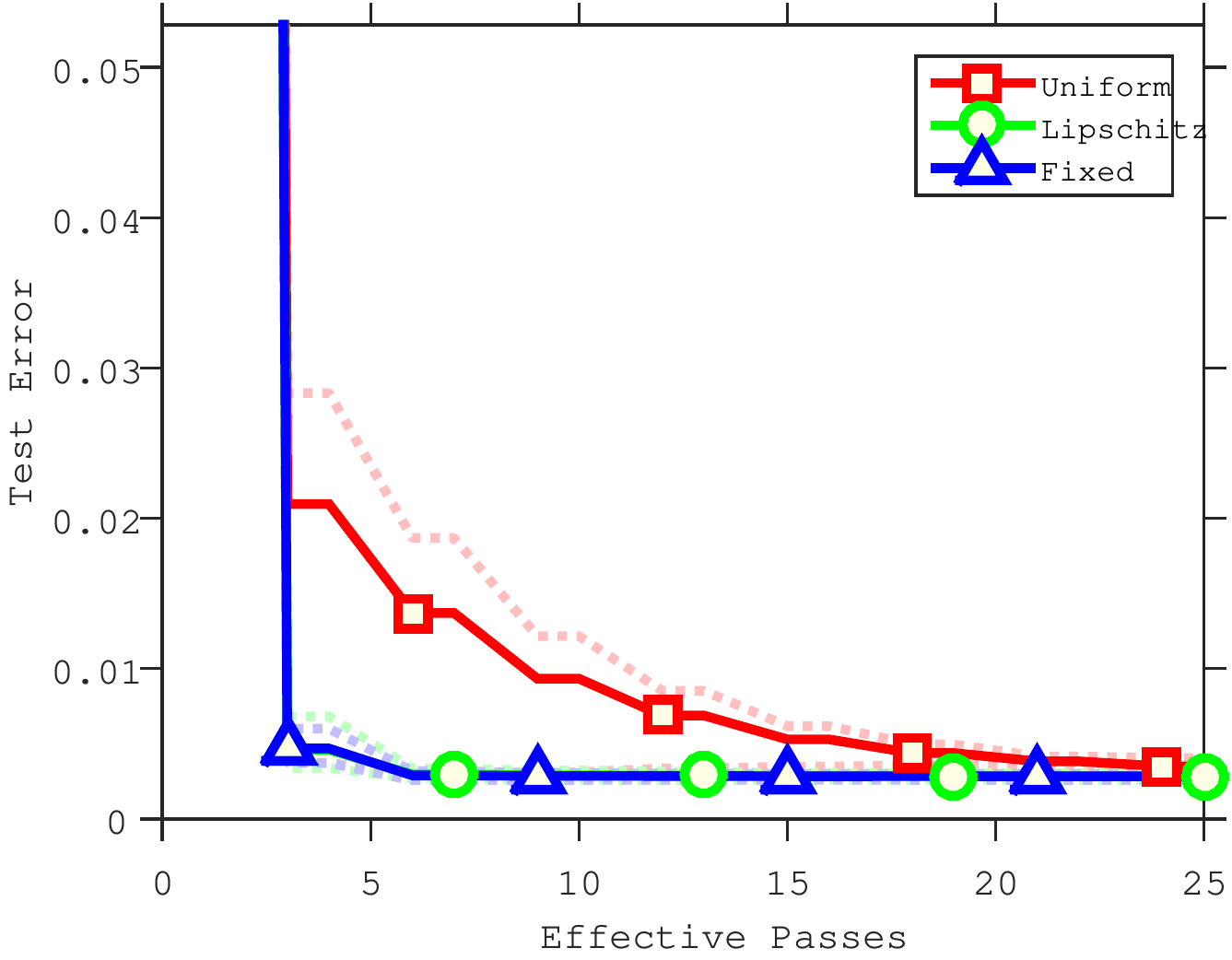}
\includegraphics[width=.32\textwidth]{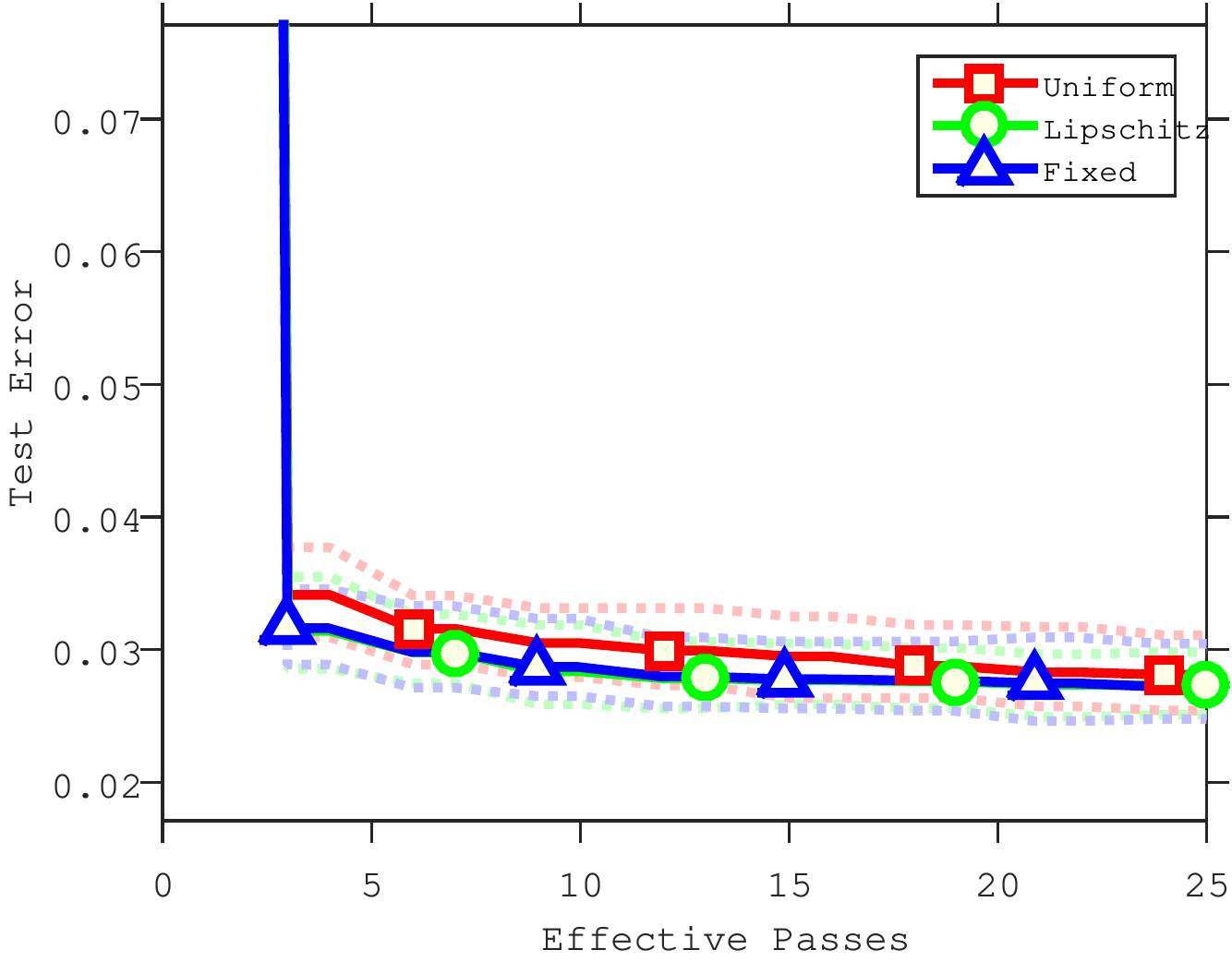}\\
\includegraphics[width=.32\textwidth]{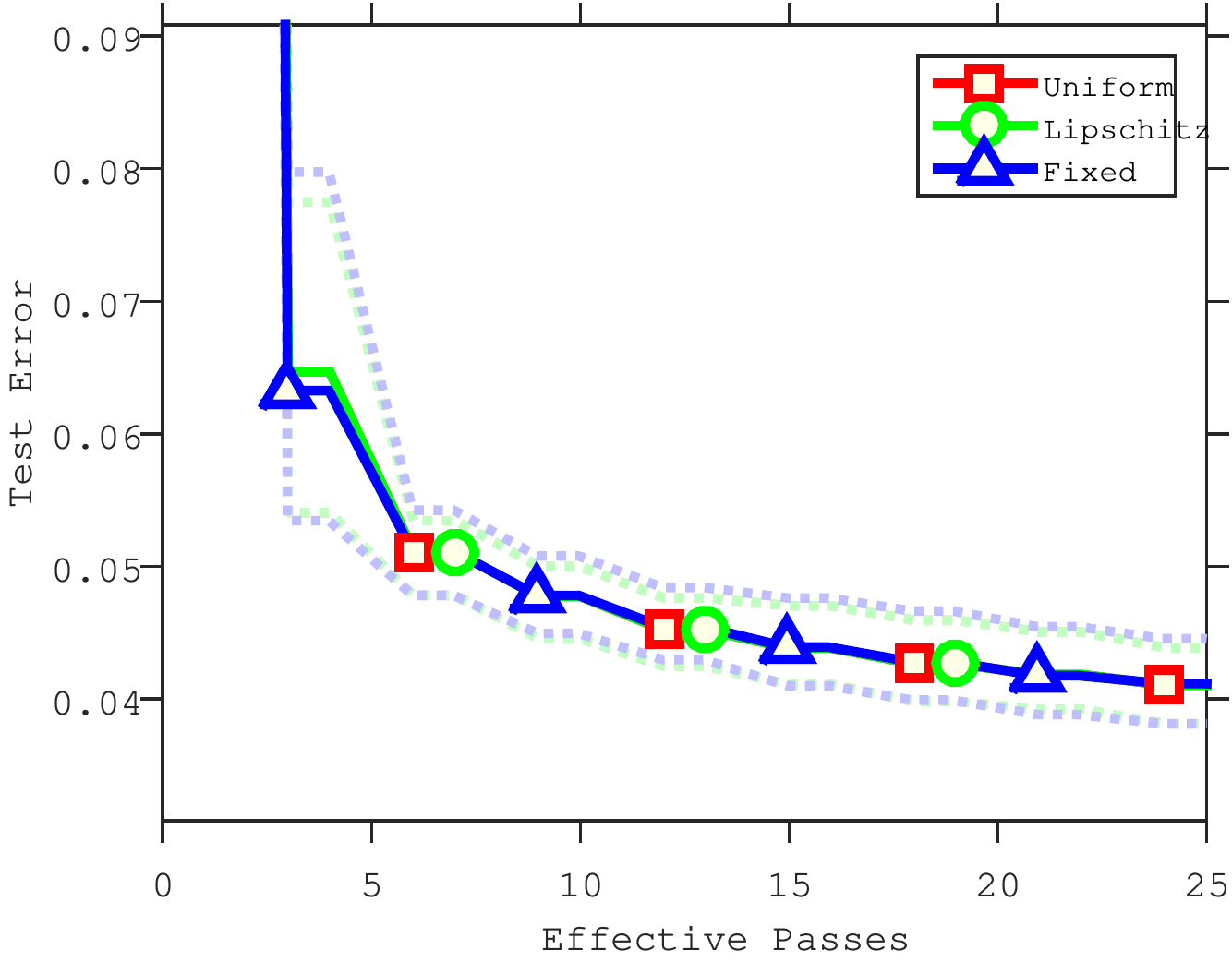}
\includegraphics[width=.32\textwidth]{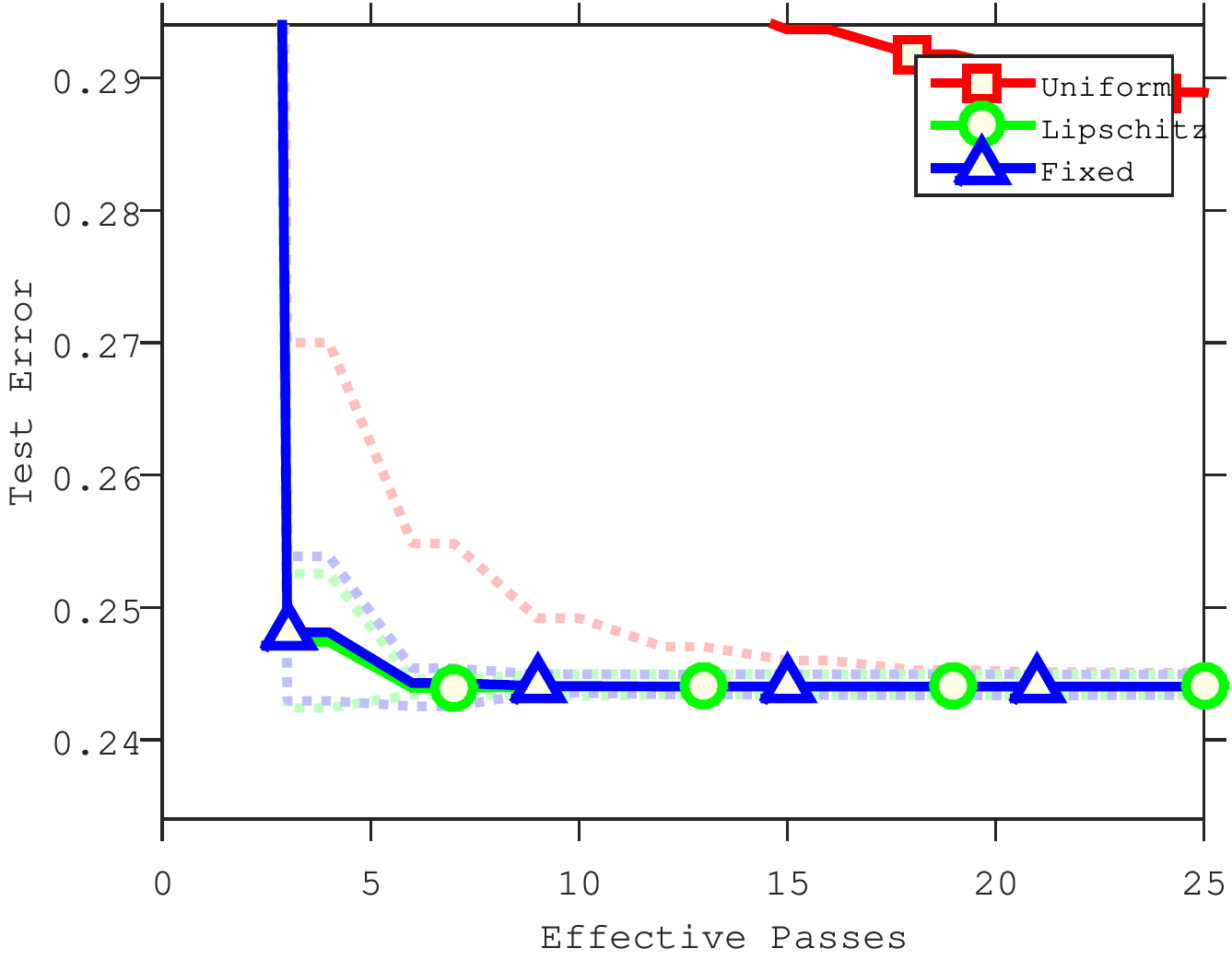}
\includegraphics[width=.32\textwidth]{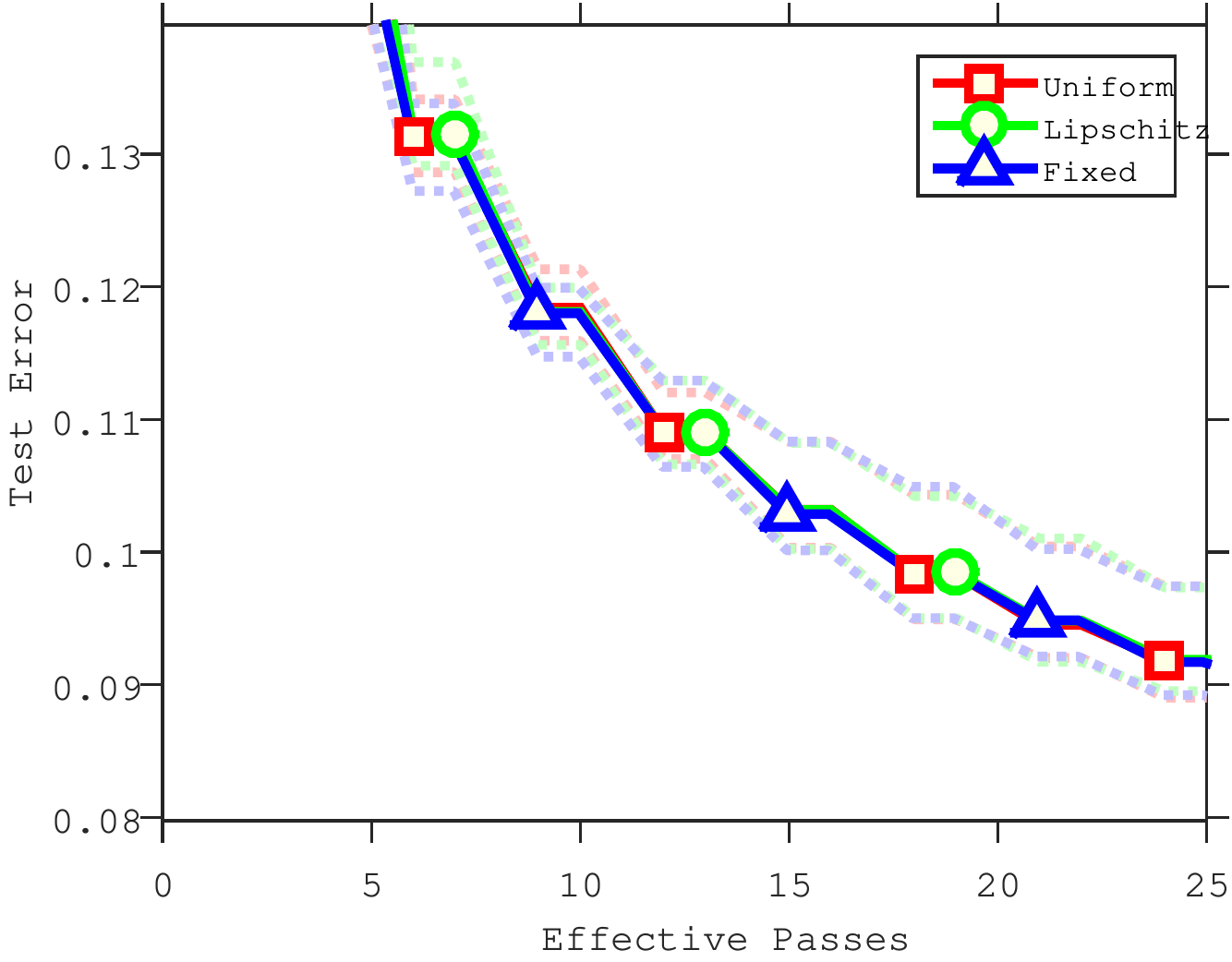}\\
\includegraphics[width=.32\textwidth]{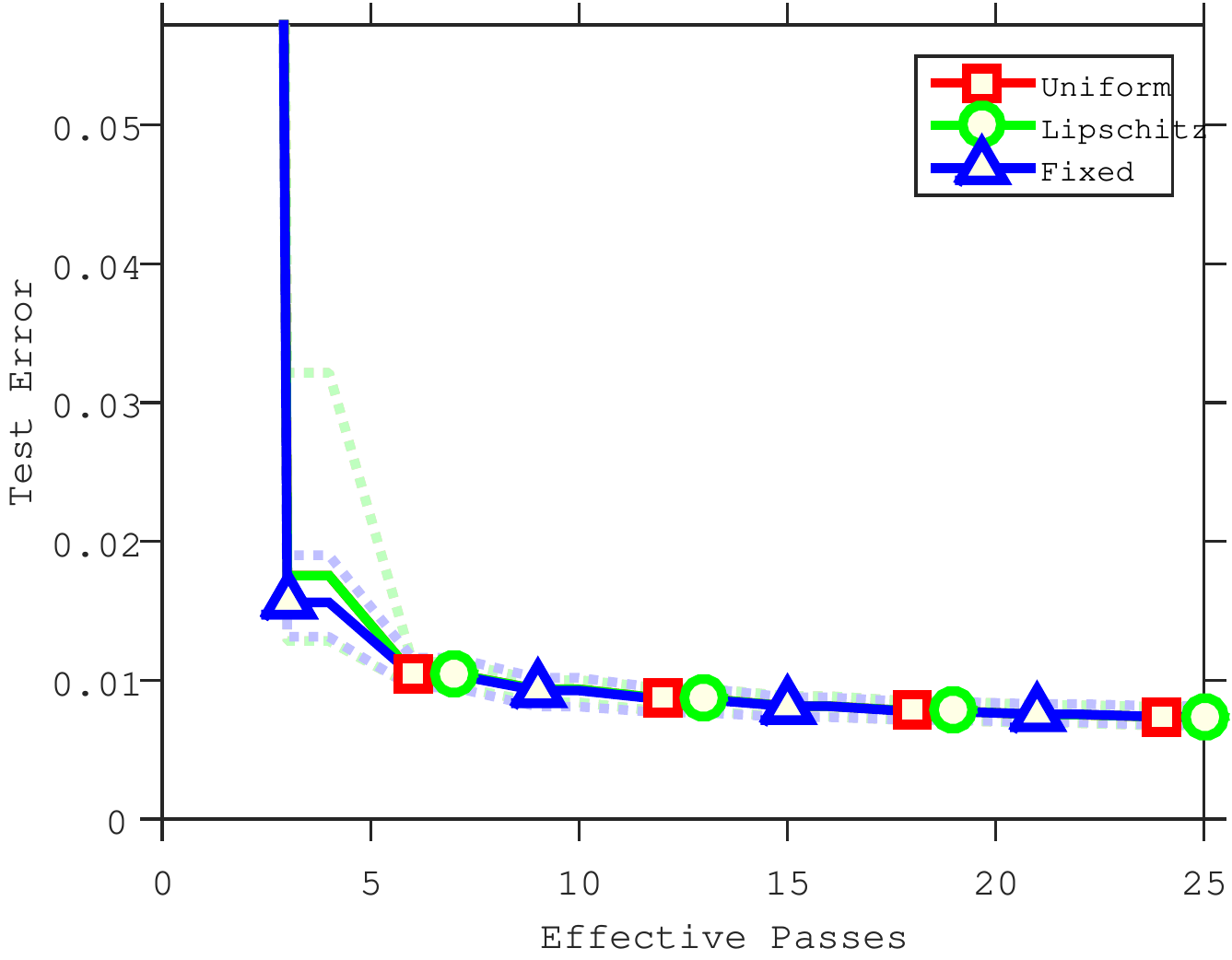}
\includegraphics[width=.32\textwidth]{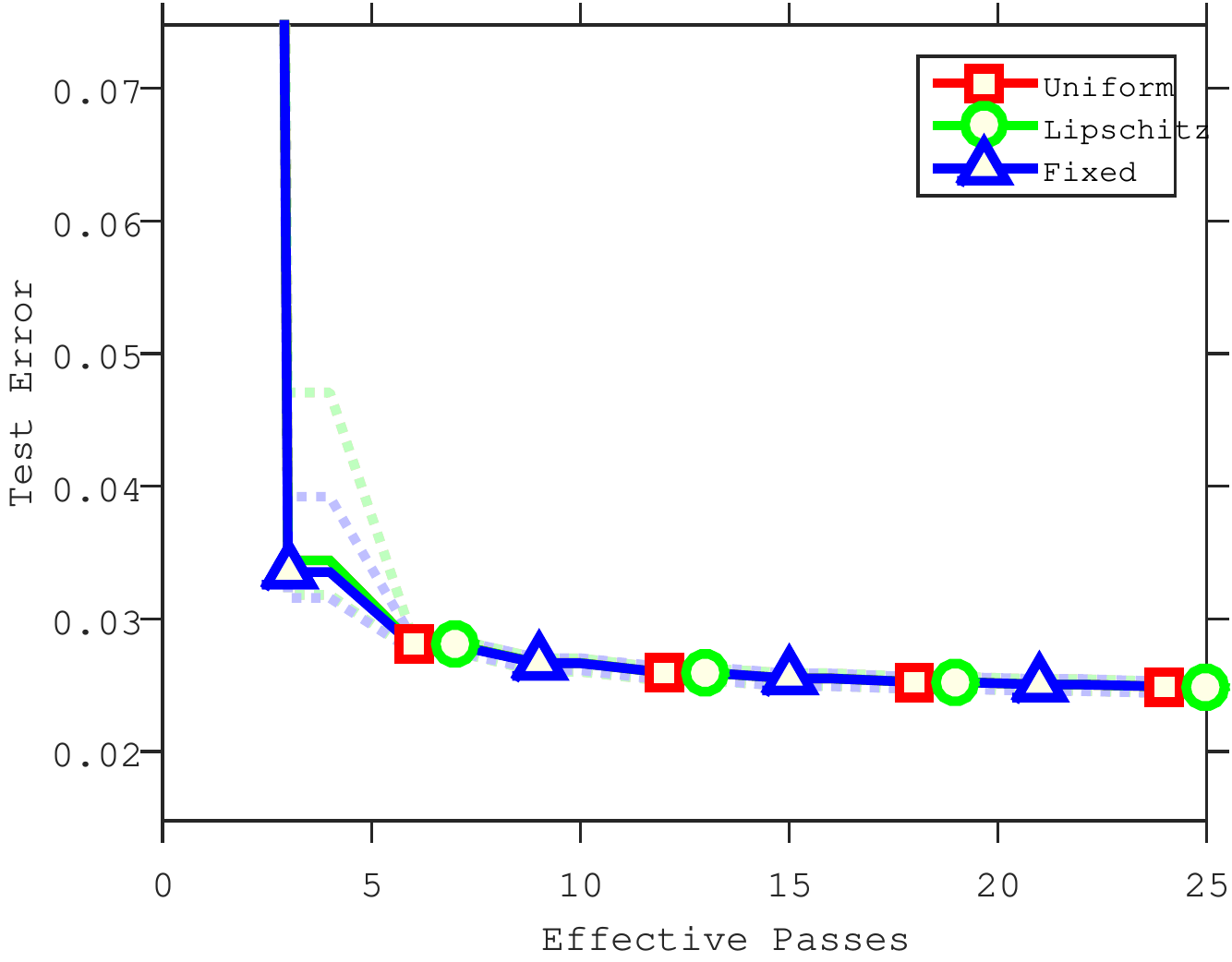}
\includegraphics[width=.32\textwidth]{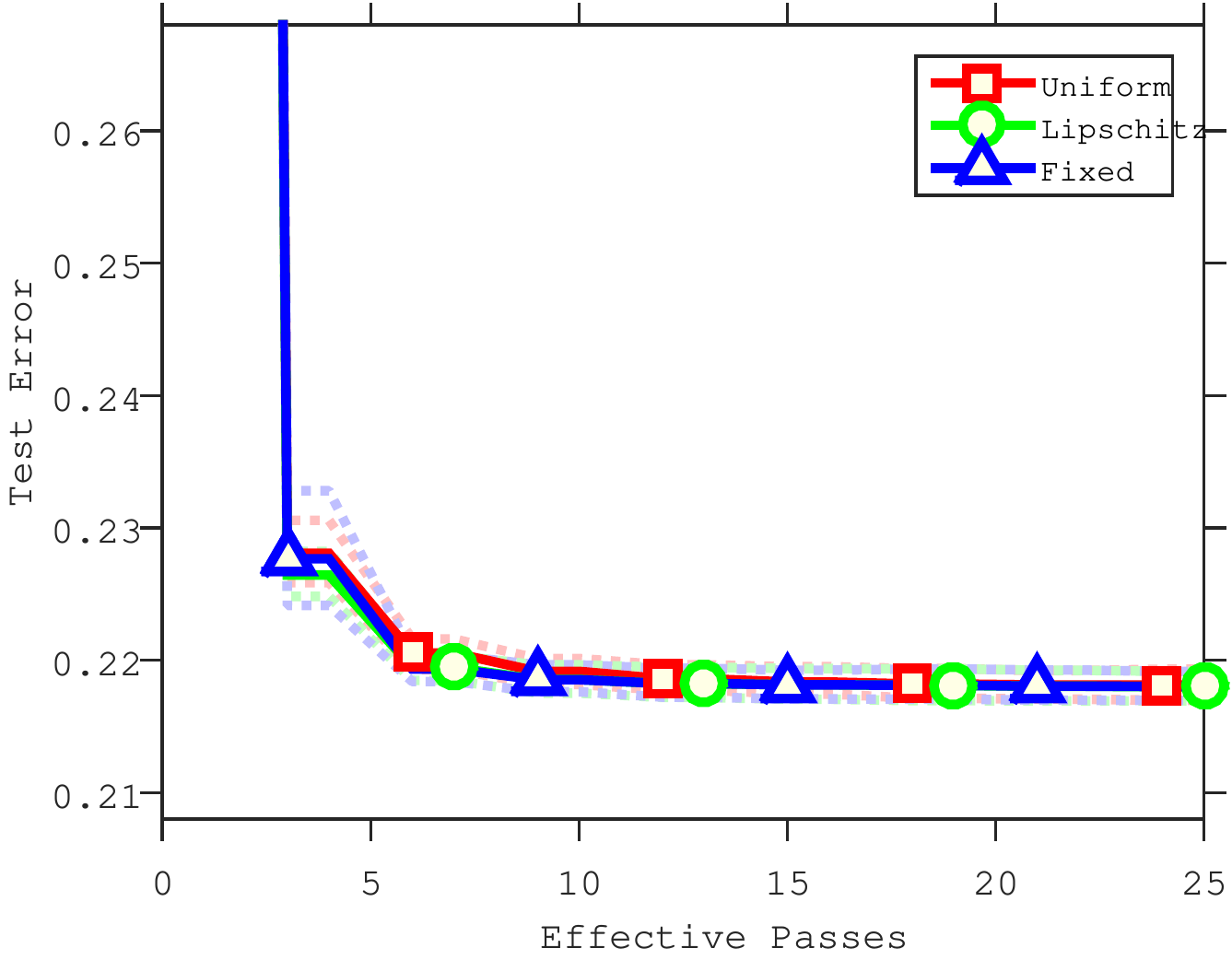}

\caption{Comparison of test error of  logistic regression with different mini-batch strategies.  The top row gives results on the \emph{quantum} (left), \emph{protein} (center) and \emph{sido} (right) datasets. The middle row gives results on the \emph{rcv11} (left), \emph{covertype} (center) and \emph{news} (right) datasets.  The bottom row gives results on the \emph{spam} (left), \emph{rcv1Full} (center), and \emph{alpha} (right) datasets.}
%\label{fig:6}
\end{figure*}

\begin{figure*}
\includegraphics[width=.32\textwidth]{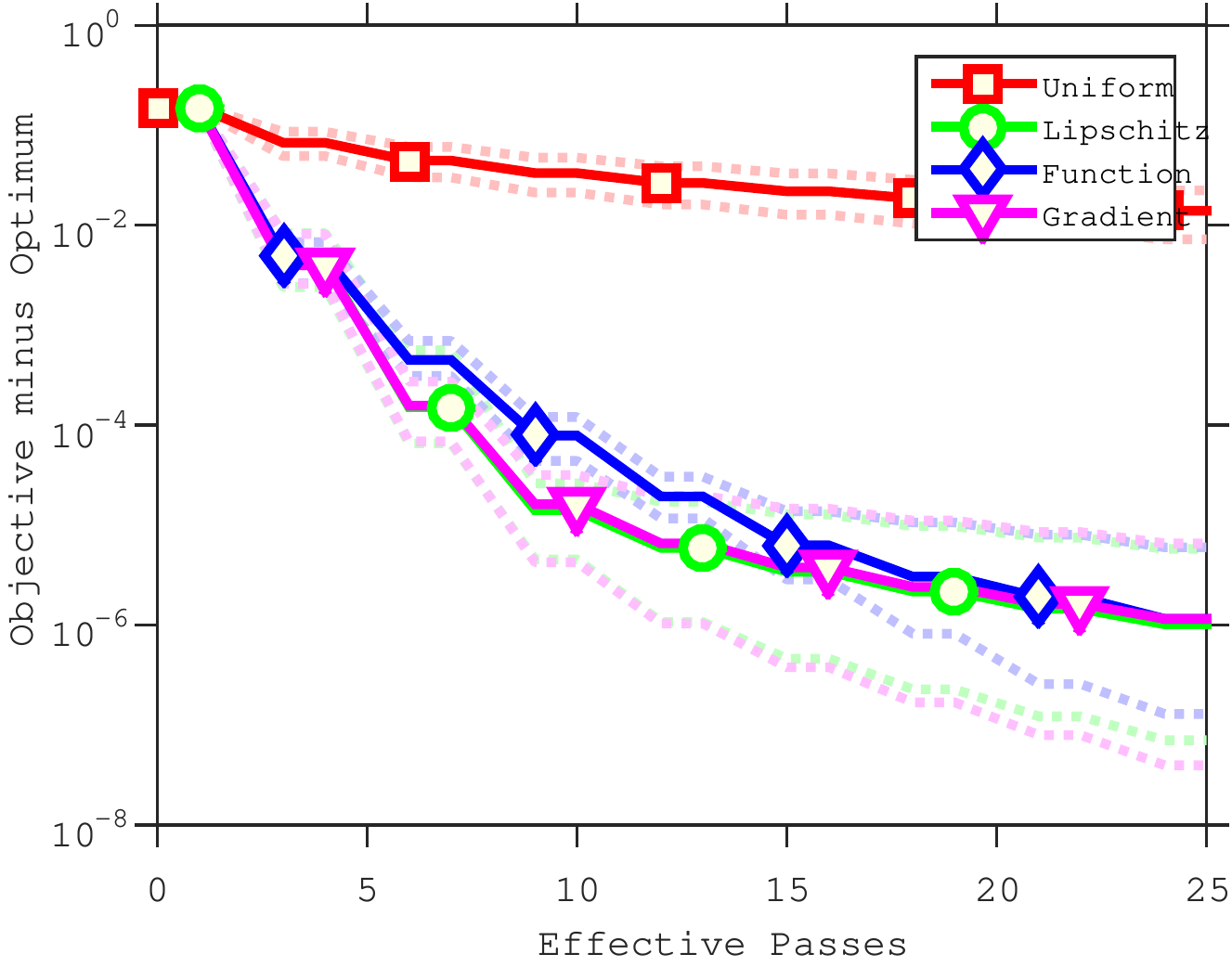}
\includegraphics[width=.32\textwidth]{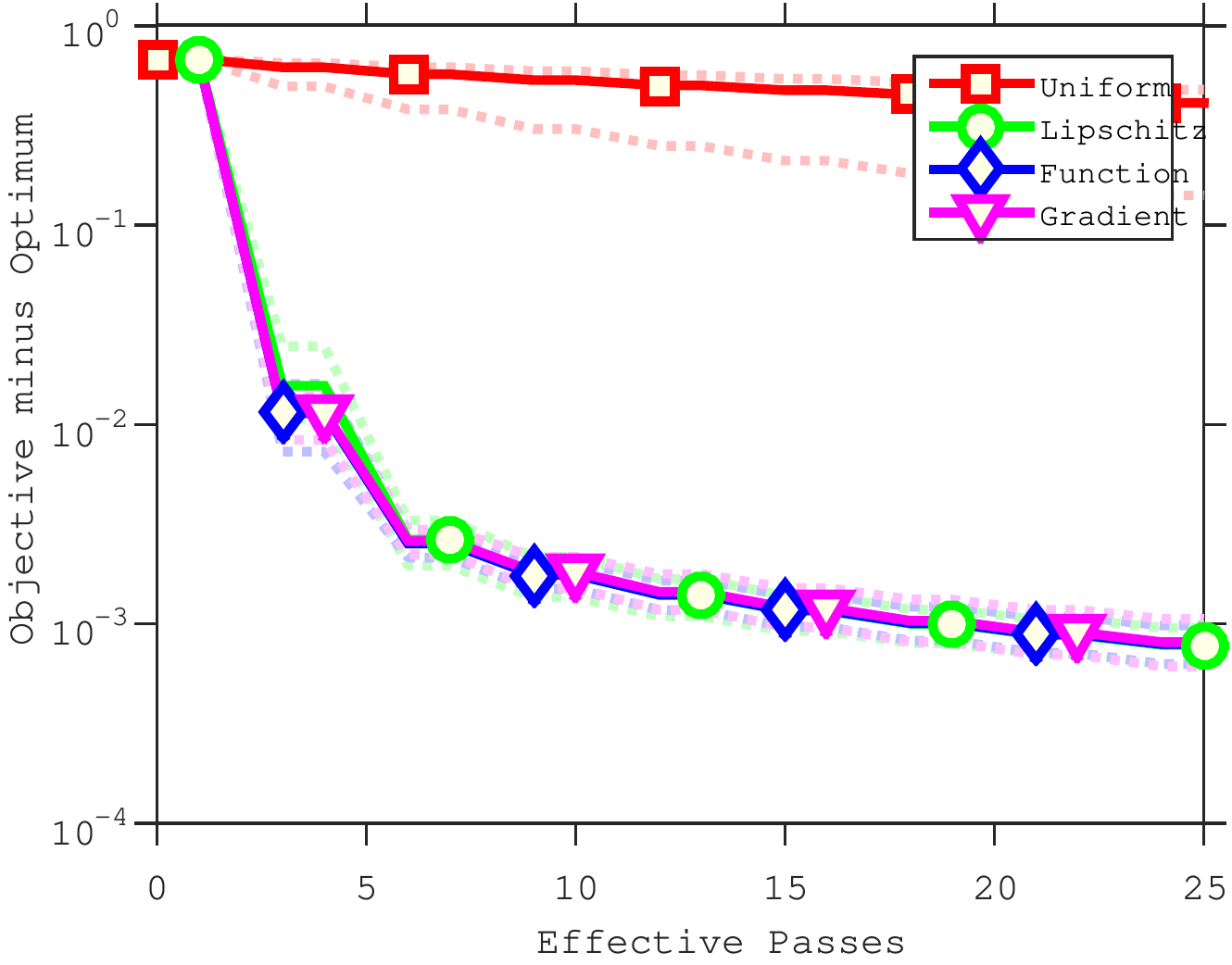}
\includegraphics[width=.32\textwidth]{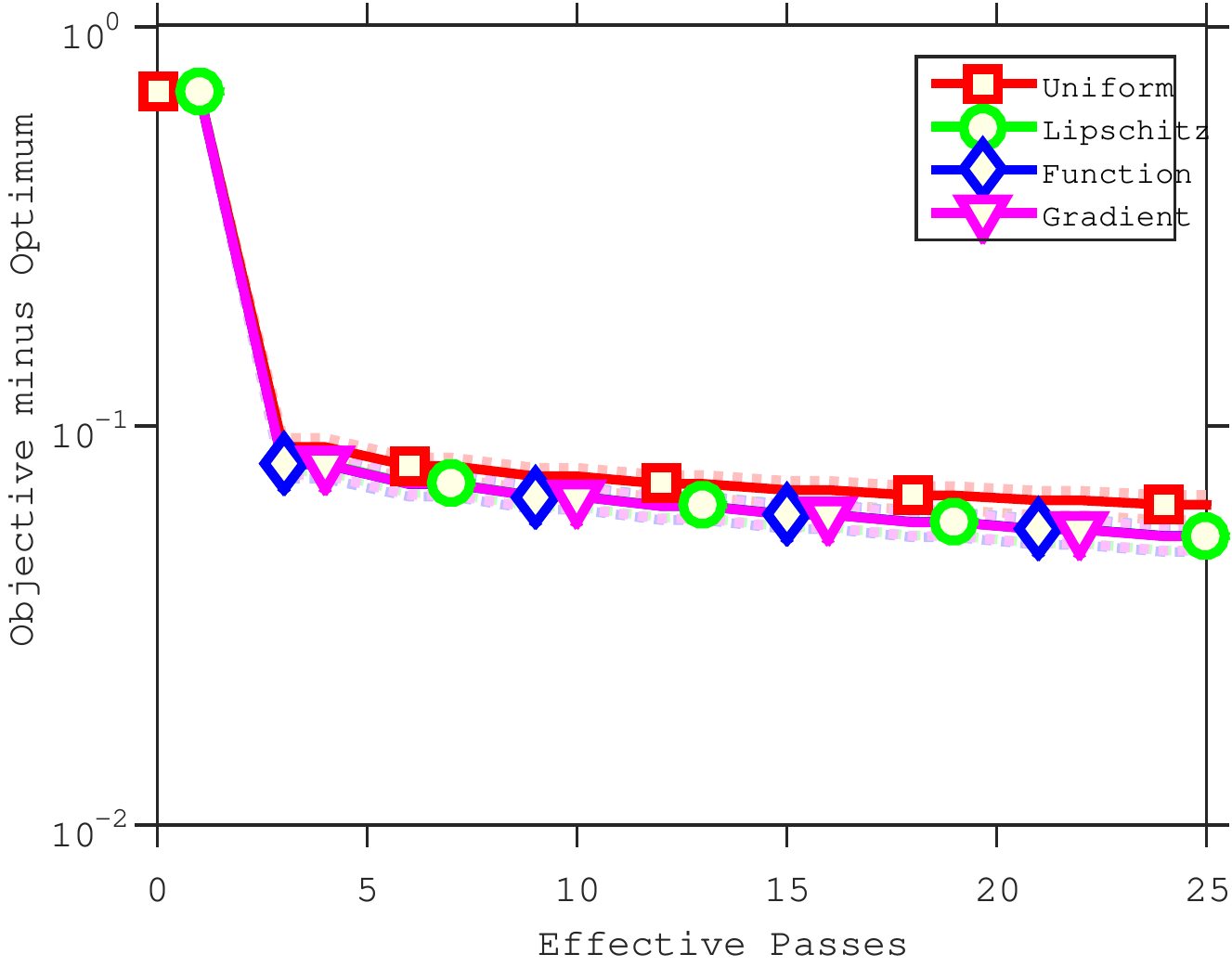}\\
\includegraphics[width=.32\textwidth]{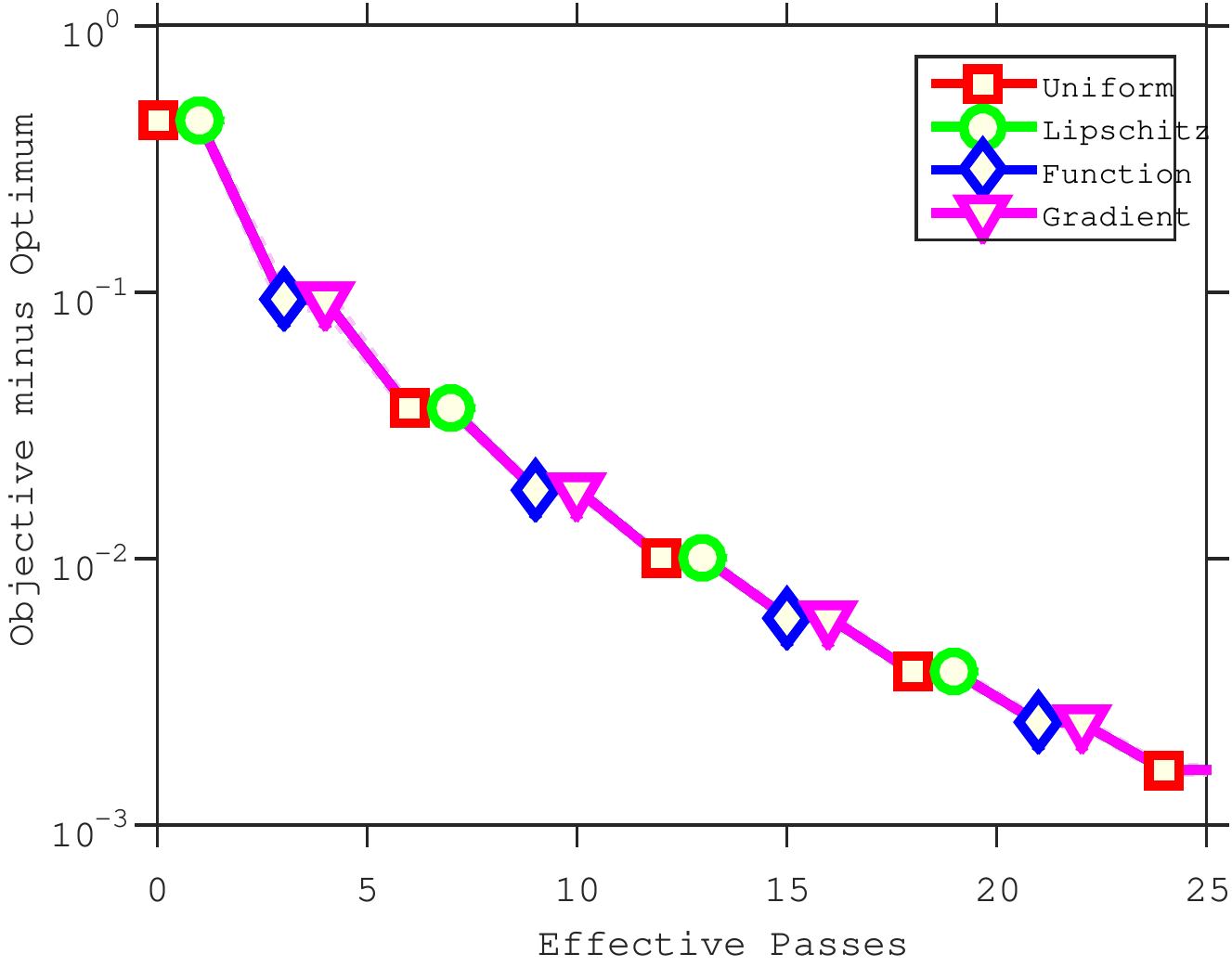}
\includegraphics[width=.32\textwidth]{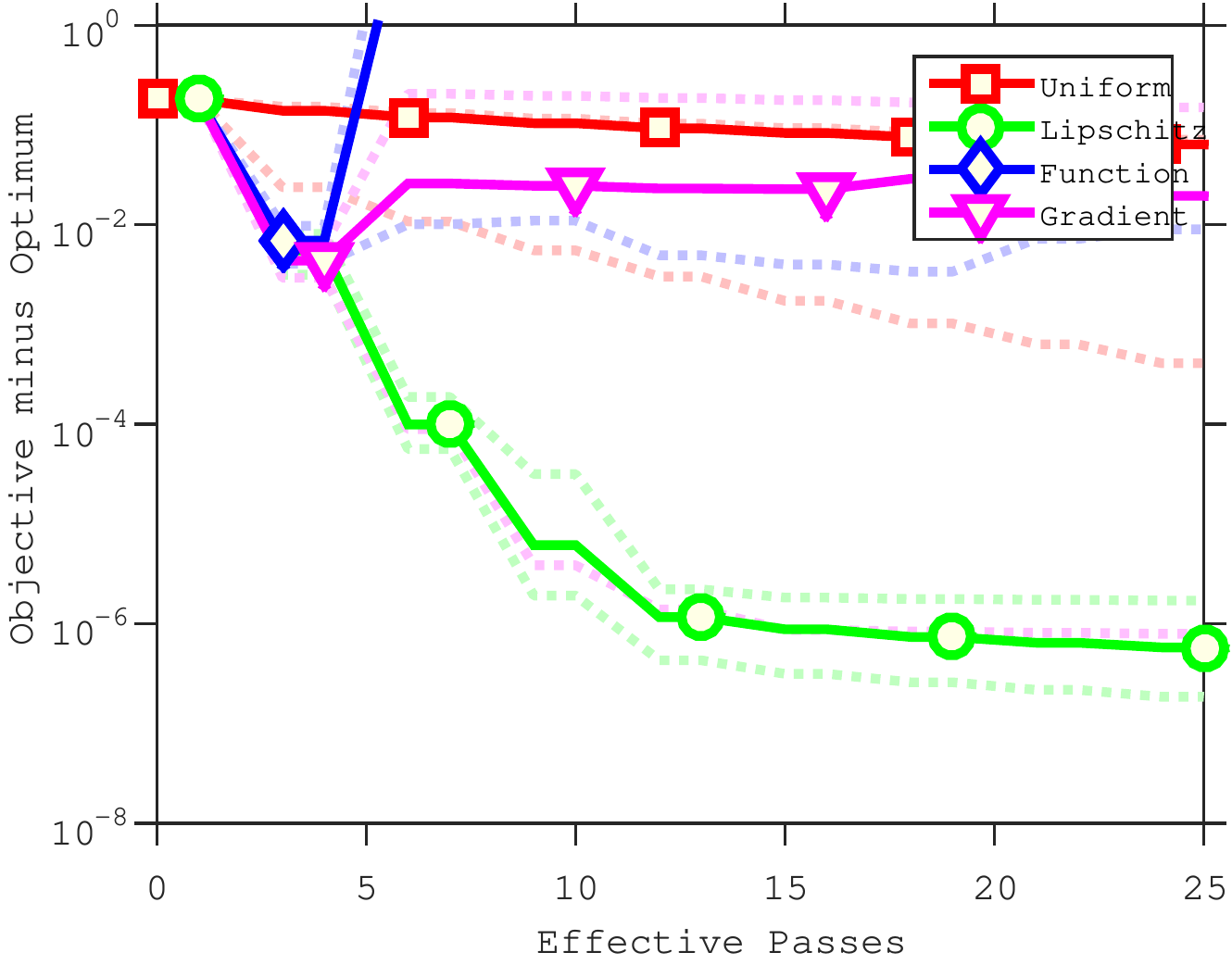}
\includegraphics[width=.32\textwidth]{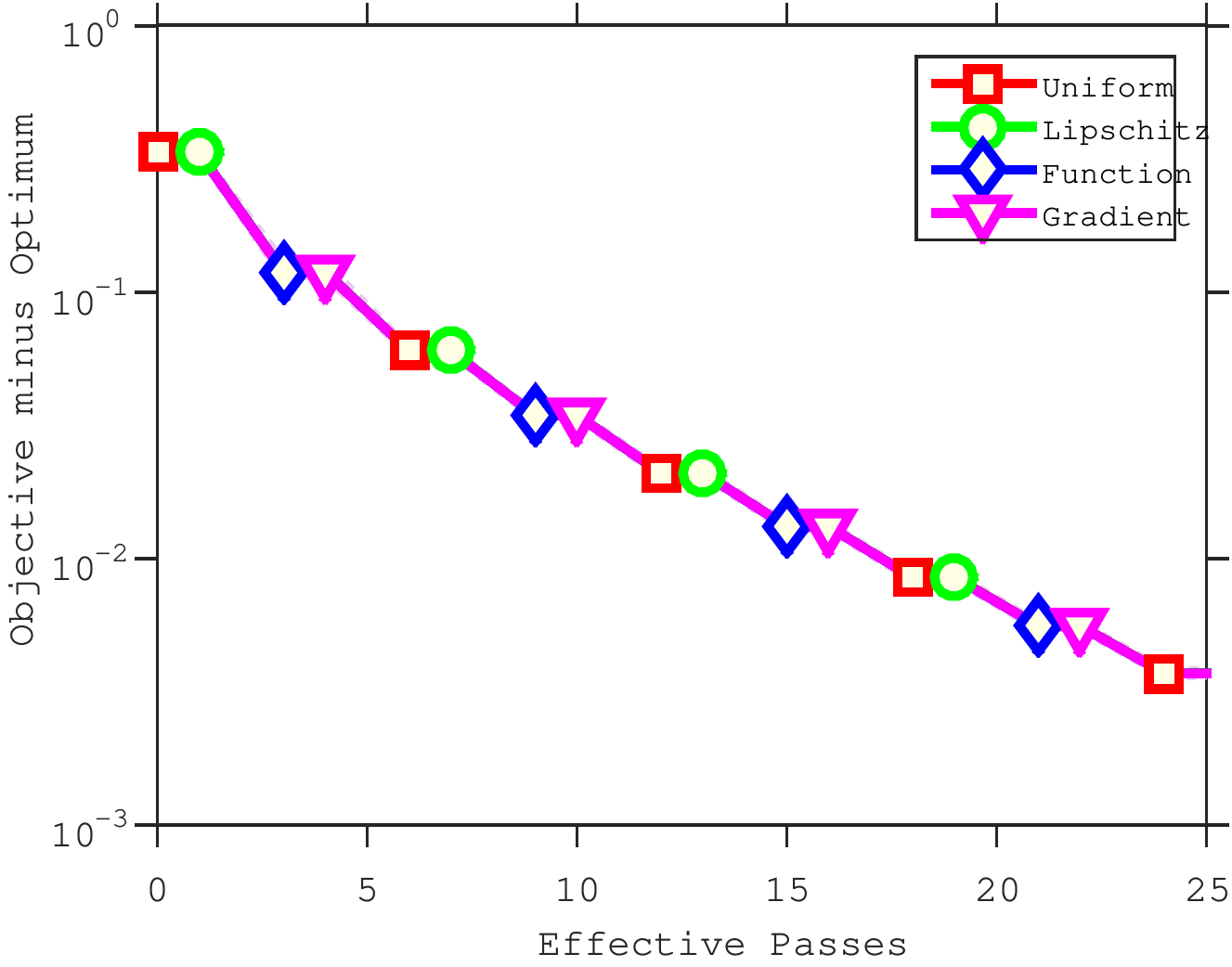}\\
\includegraphics[width=.32\textwidth]{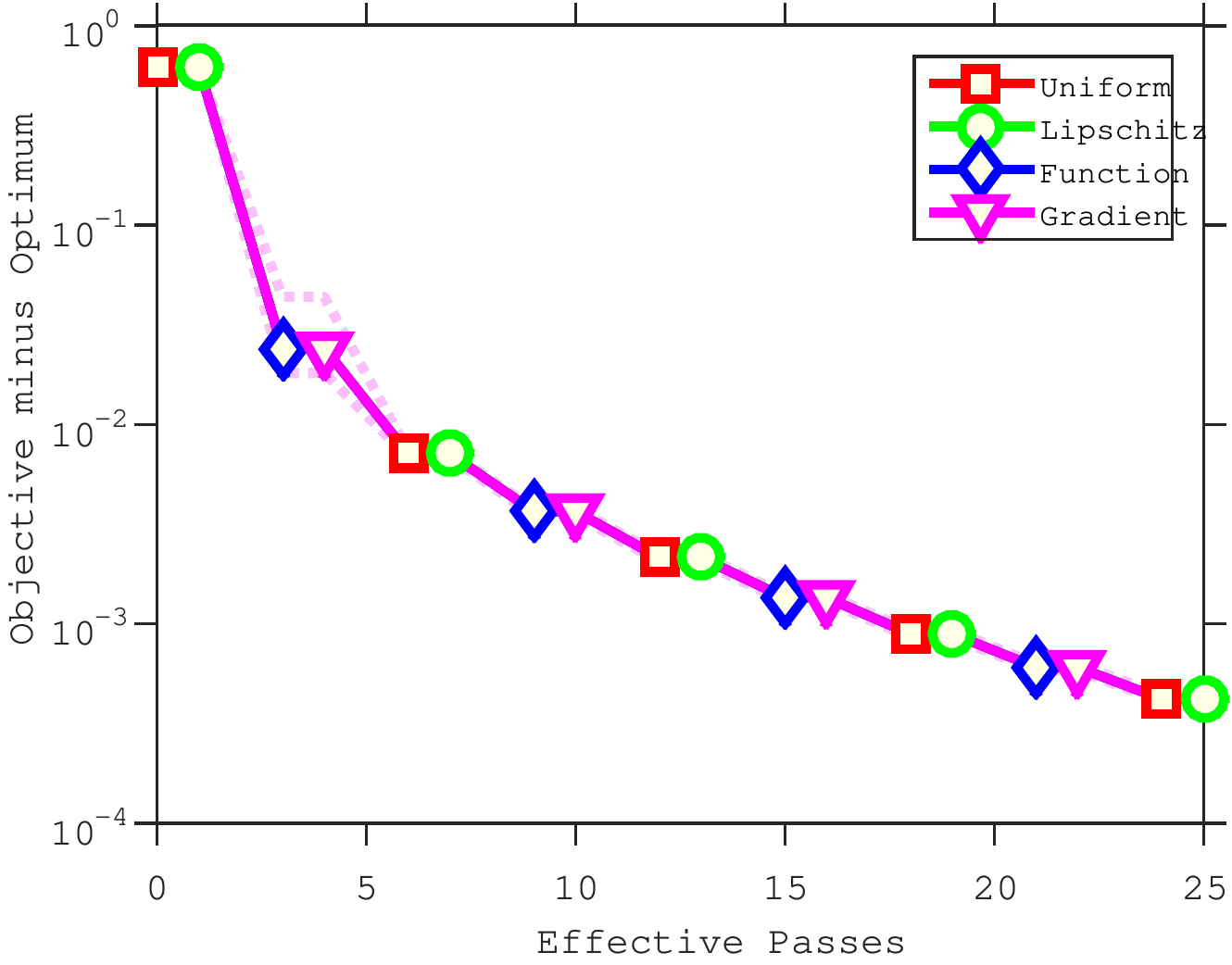}
\includegraphics[width=.32\textwidth]{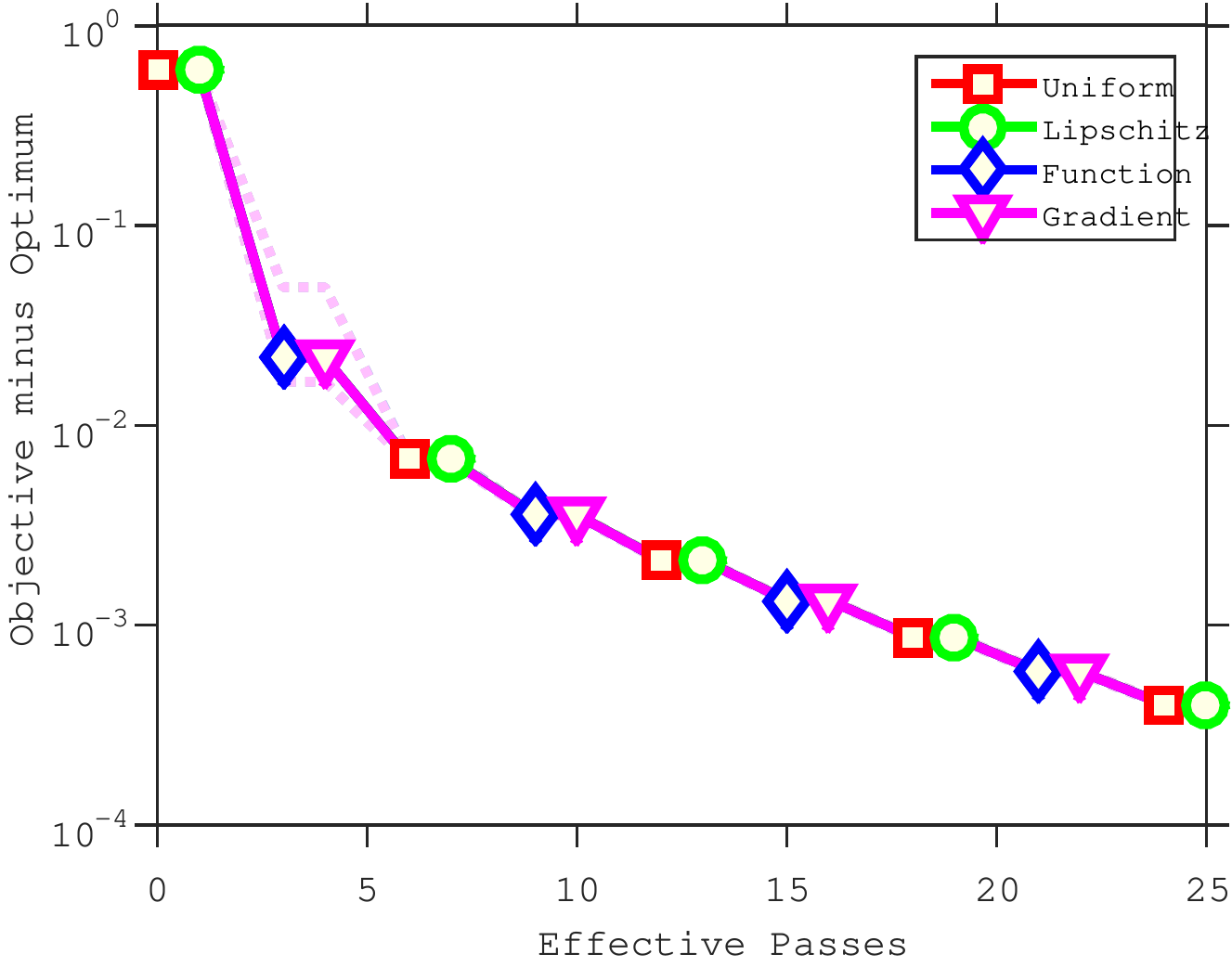}
\includegraphics[width=.32\textwidth]{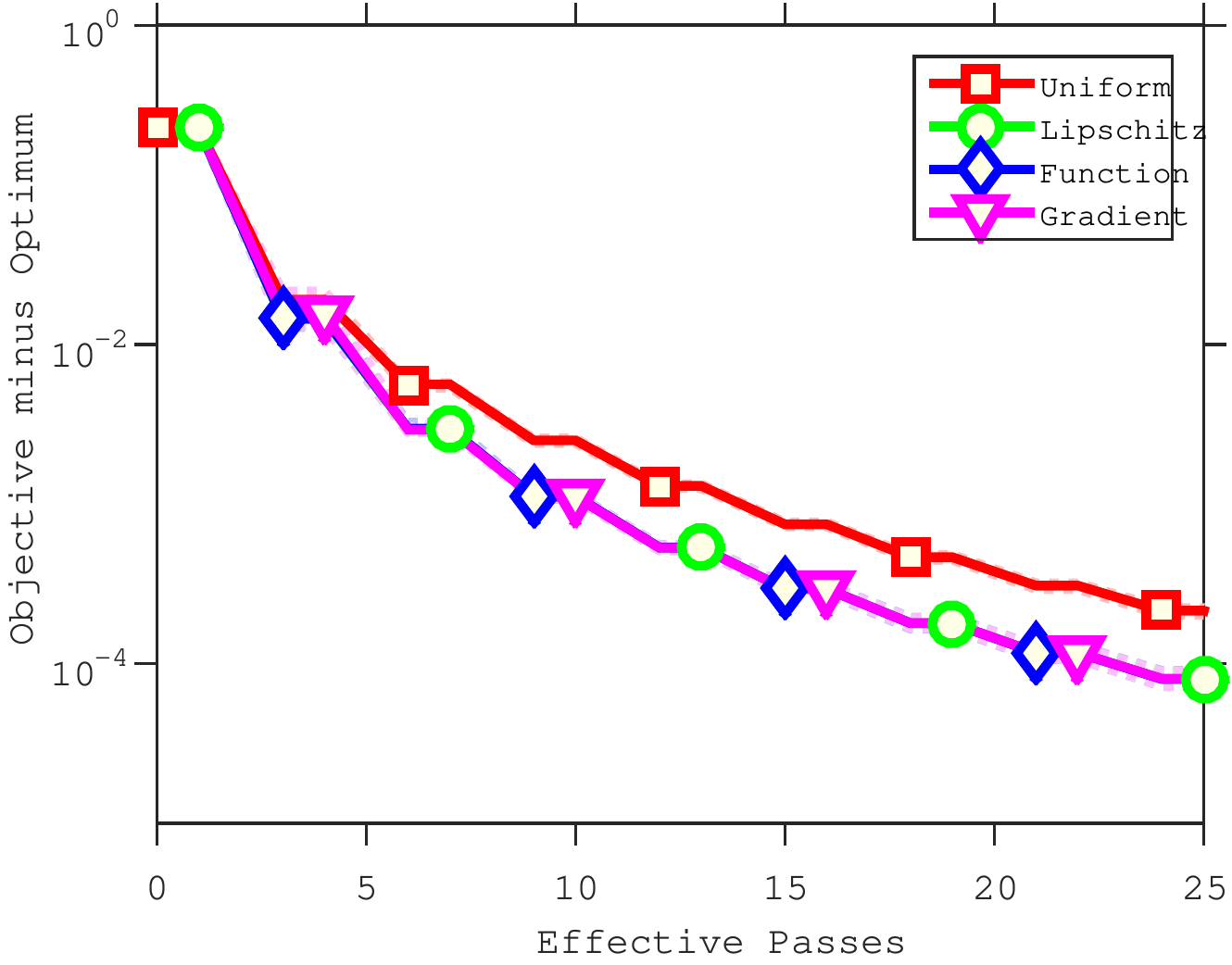}

\caption{Comparison of training objective of logistic regression with different mini-batch strategies. The top row gives results on the \emph{quantum} (left), \emph{protein} (center) and \emph{sido} (right) datasets. The middle row gives results on the \emph{rcv11} (left), \emph{covertype} (center) and \emph{news} (right) datasets.  The bottom row gives results on the \emph{spam} (left), \emph{rcv1Full} (center), and \emph{alpha} (right) datasets.}
%\label{fig:5}
\end{figure*}

\begin{figure*}
\includegraphics[width=.32\textwidth]{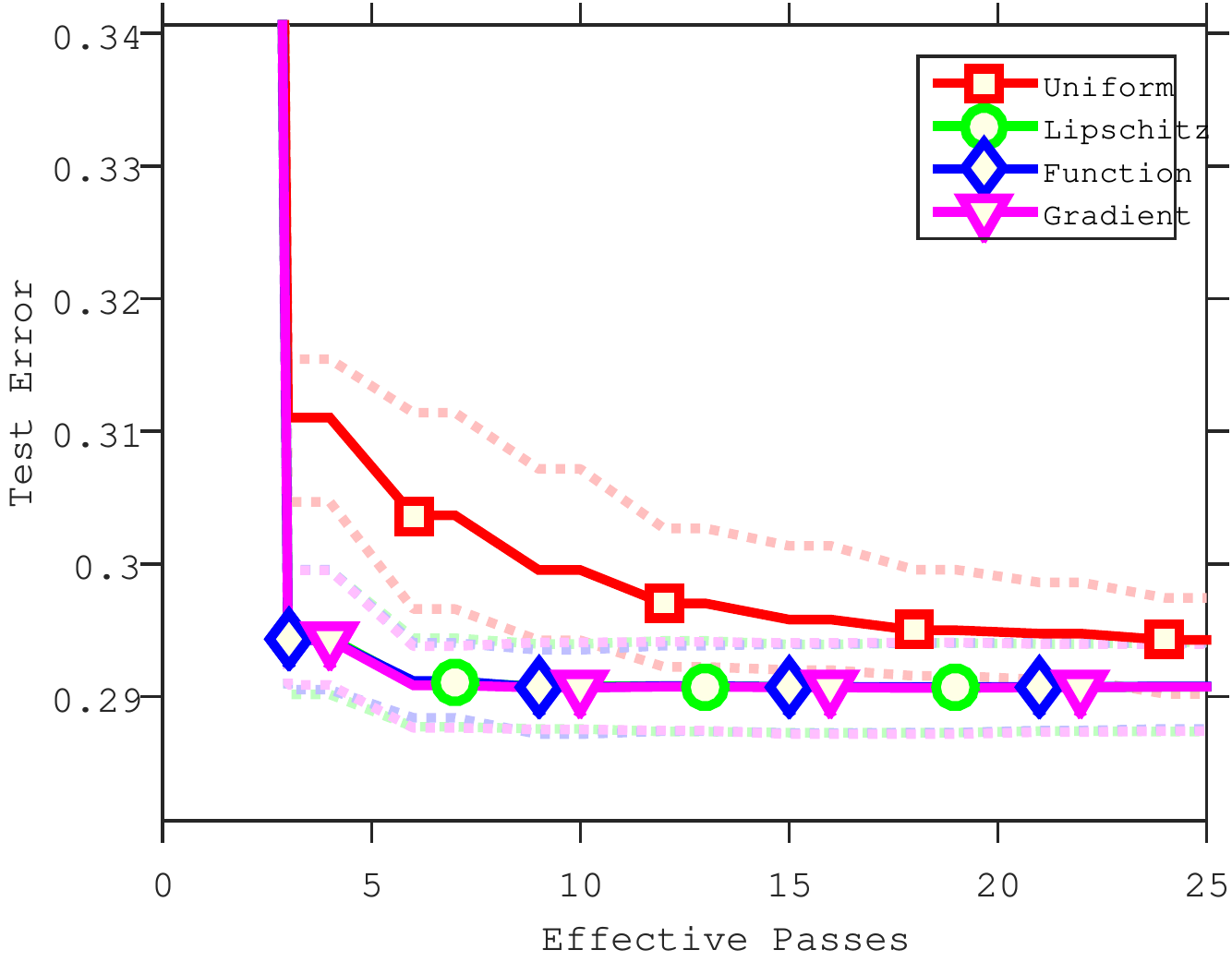}
\includegraphics[width=.32\textwidth]{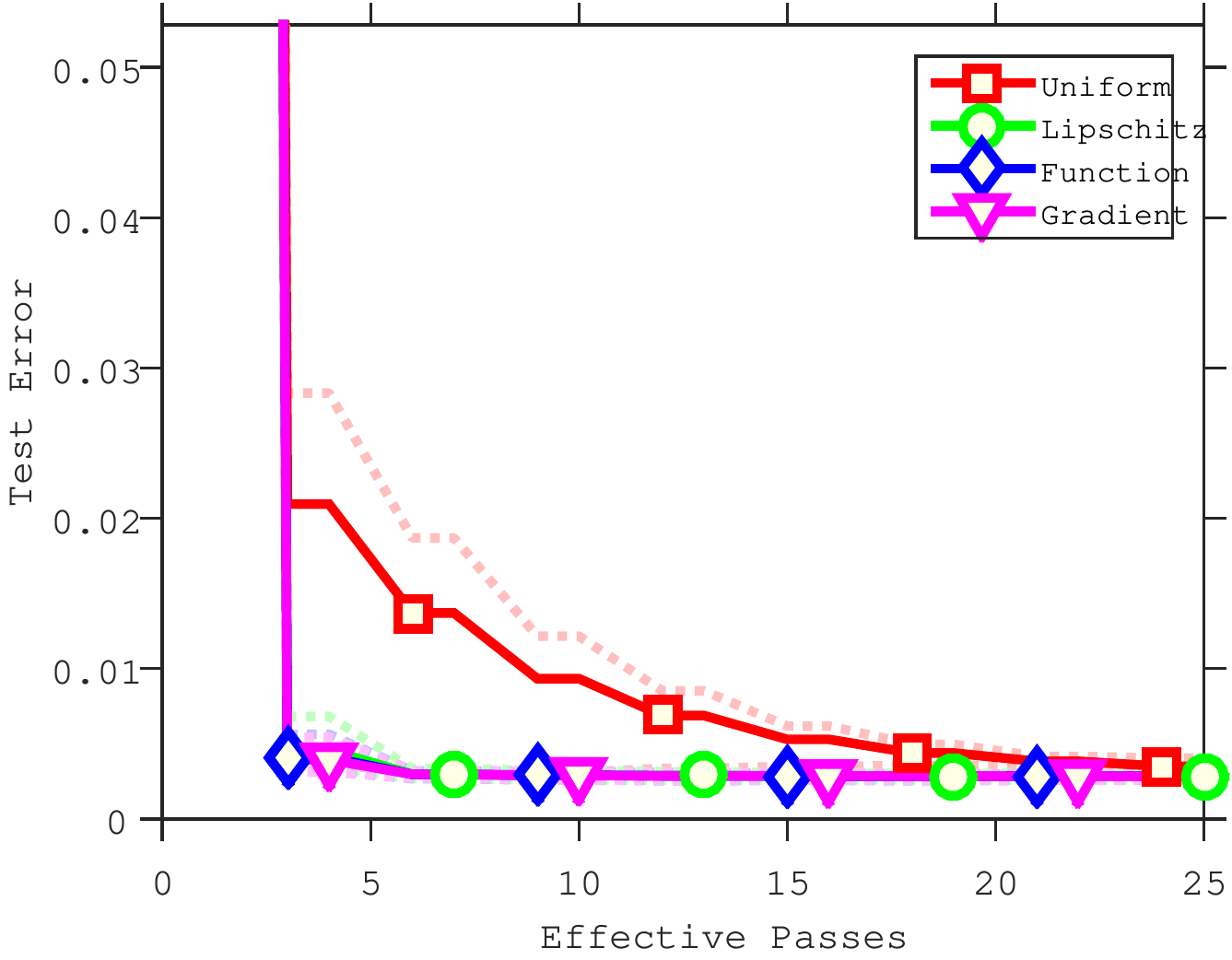}
\includegraphics[width=.32\textwidth]{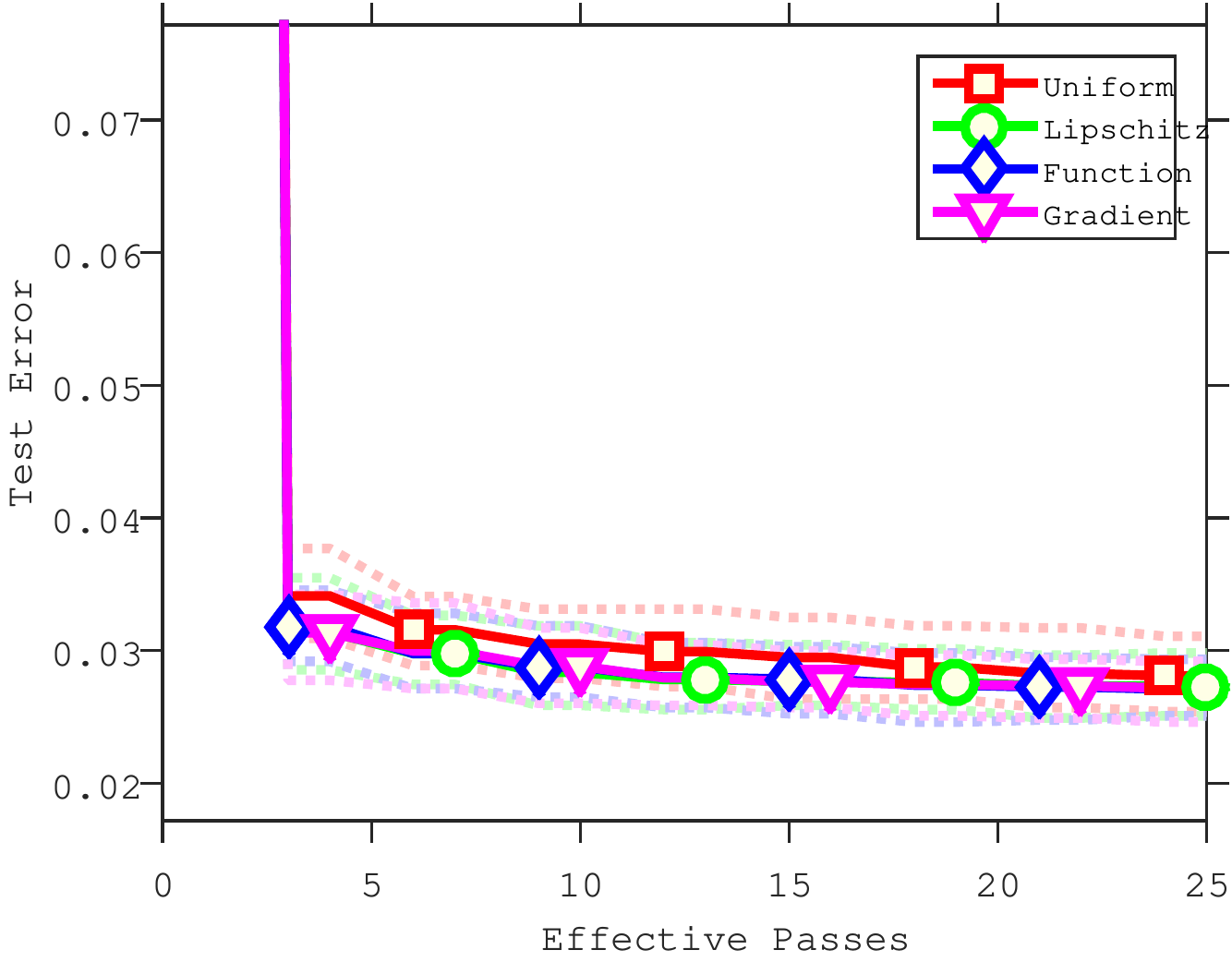}\\
\includegraphics[width=.32\textwidth]{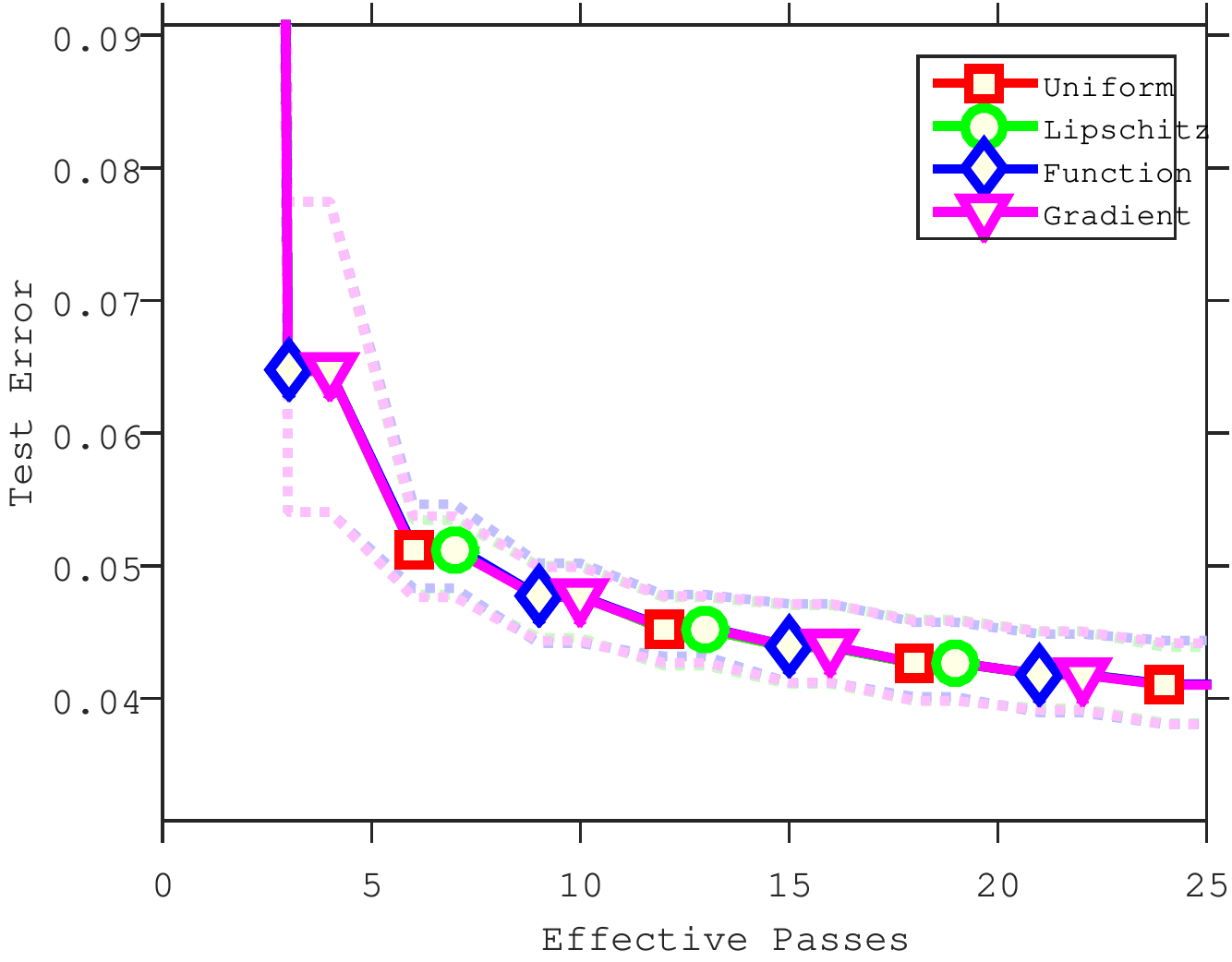}
\includegraphics[width=.32\textwidth]{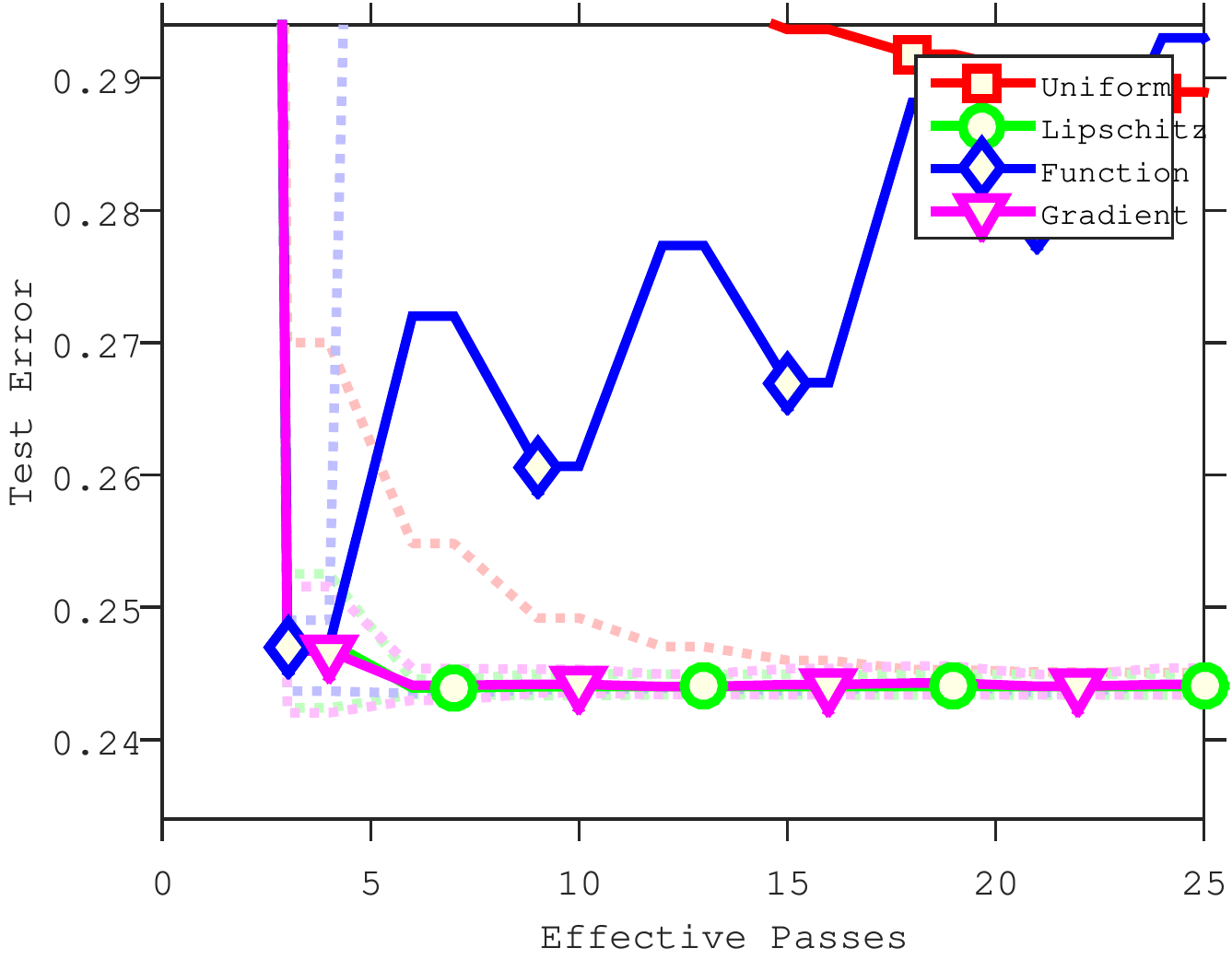}
\includegraphics[width=.32\textwidth]{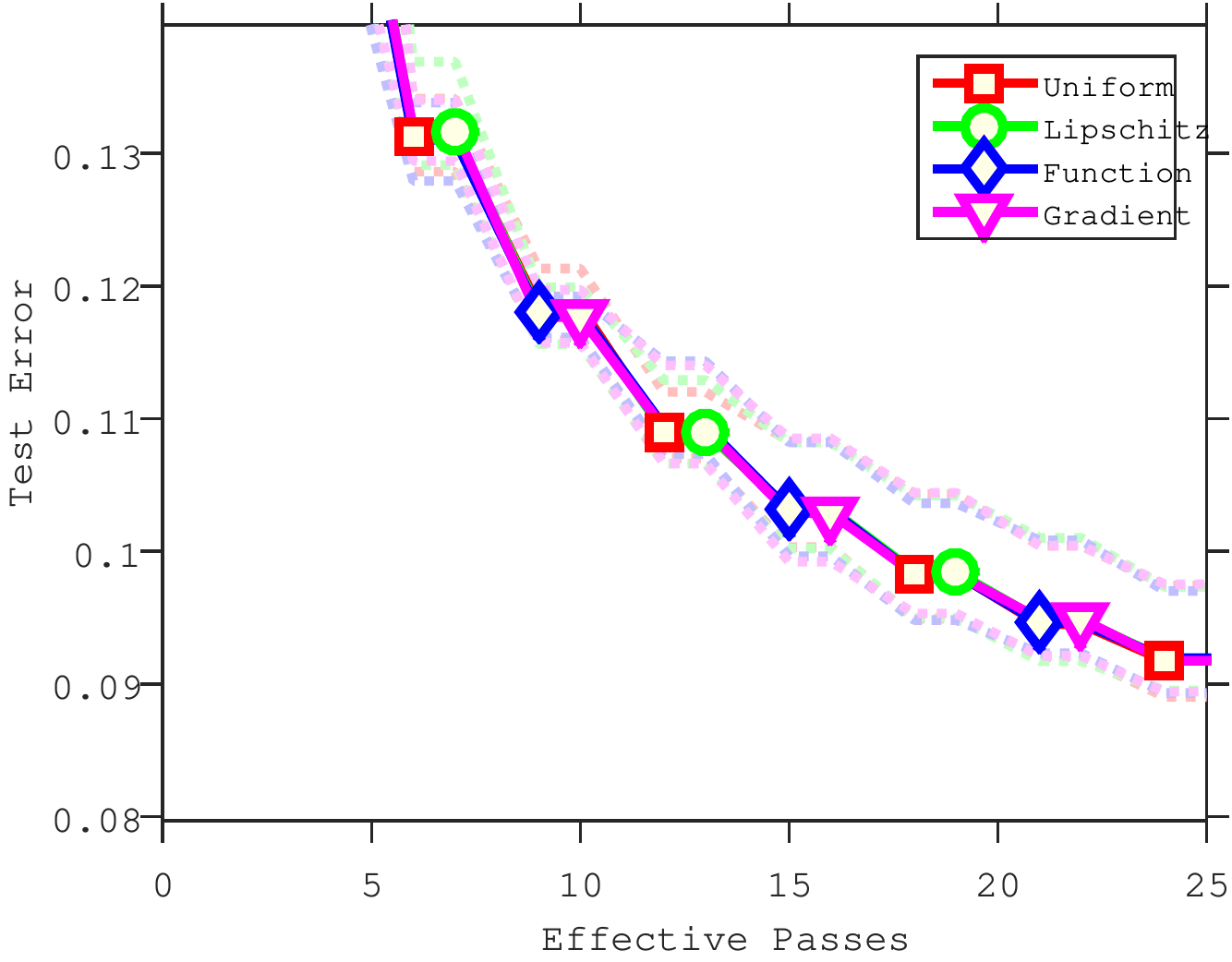}\\
\includegraphics[width=.32\textwidth]{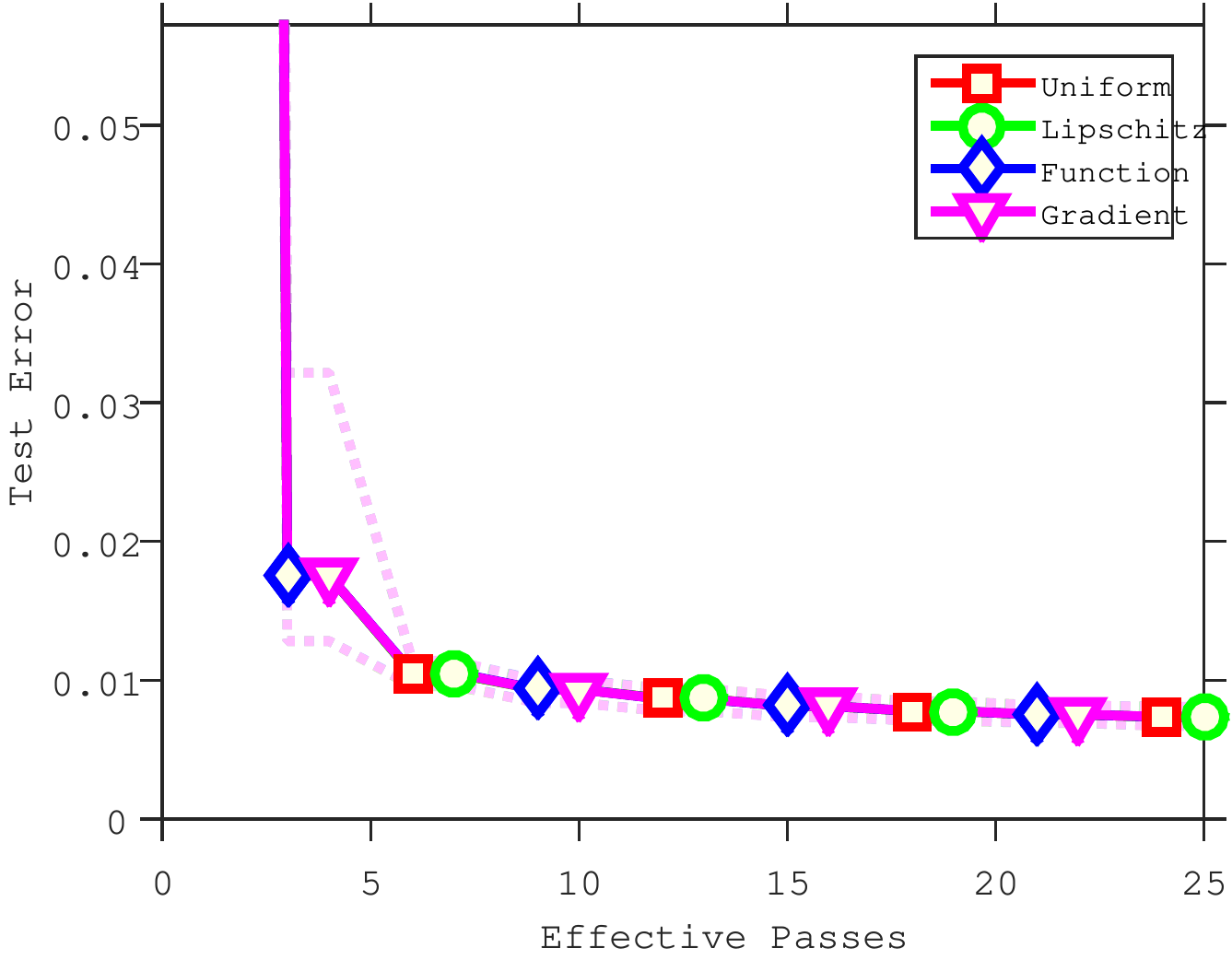}
\includegraphics[width=.32\textwidth]{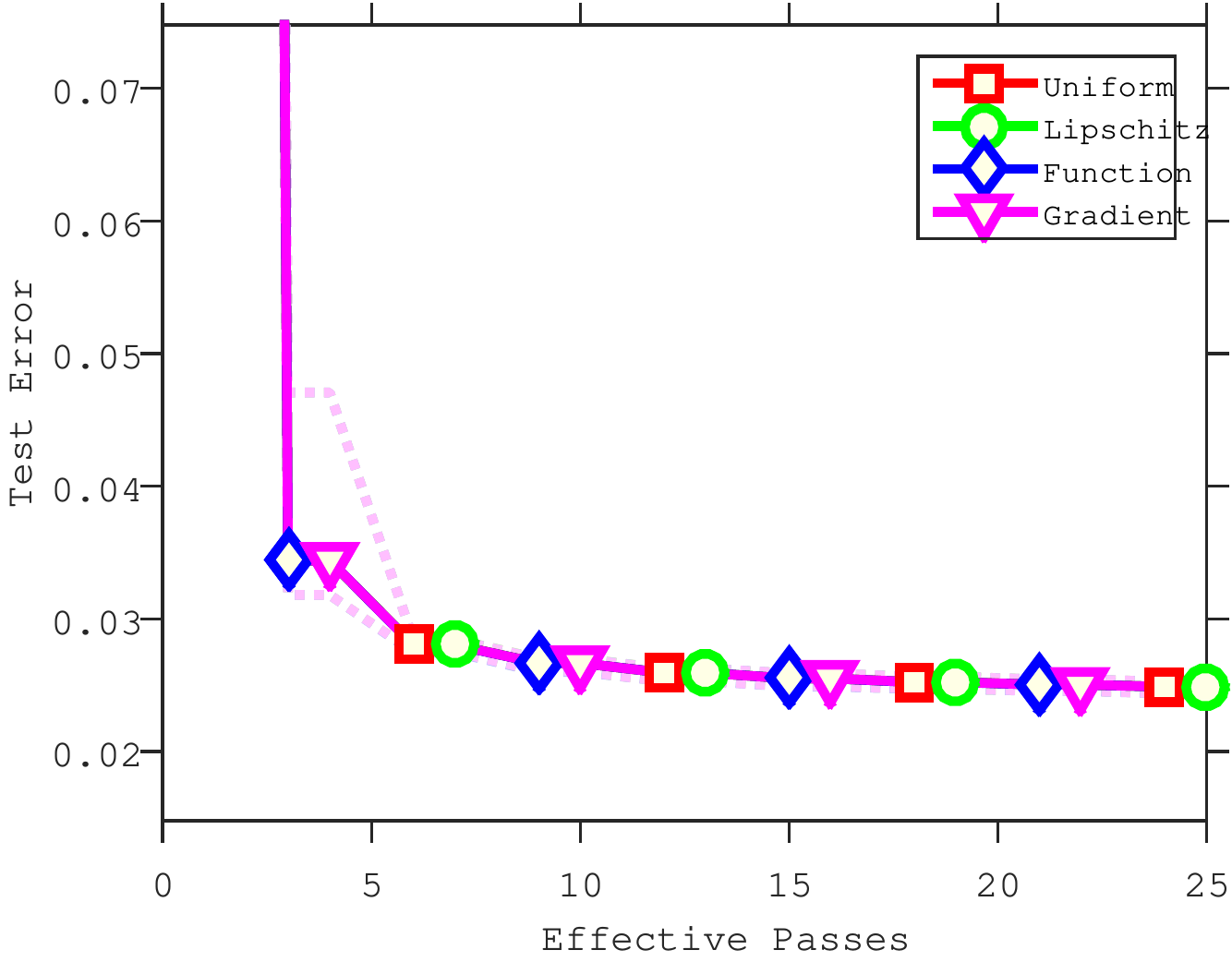}
\includegraphics[width=.32\textwidth]{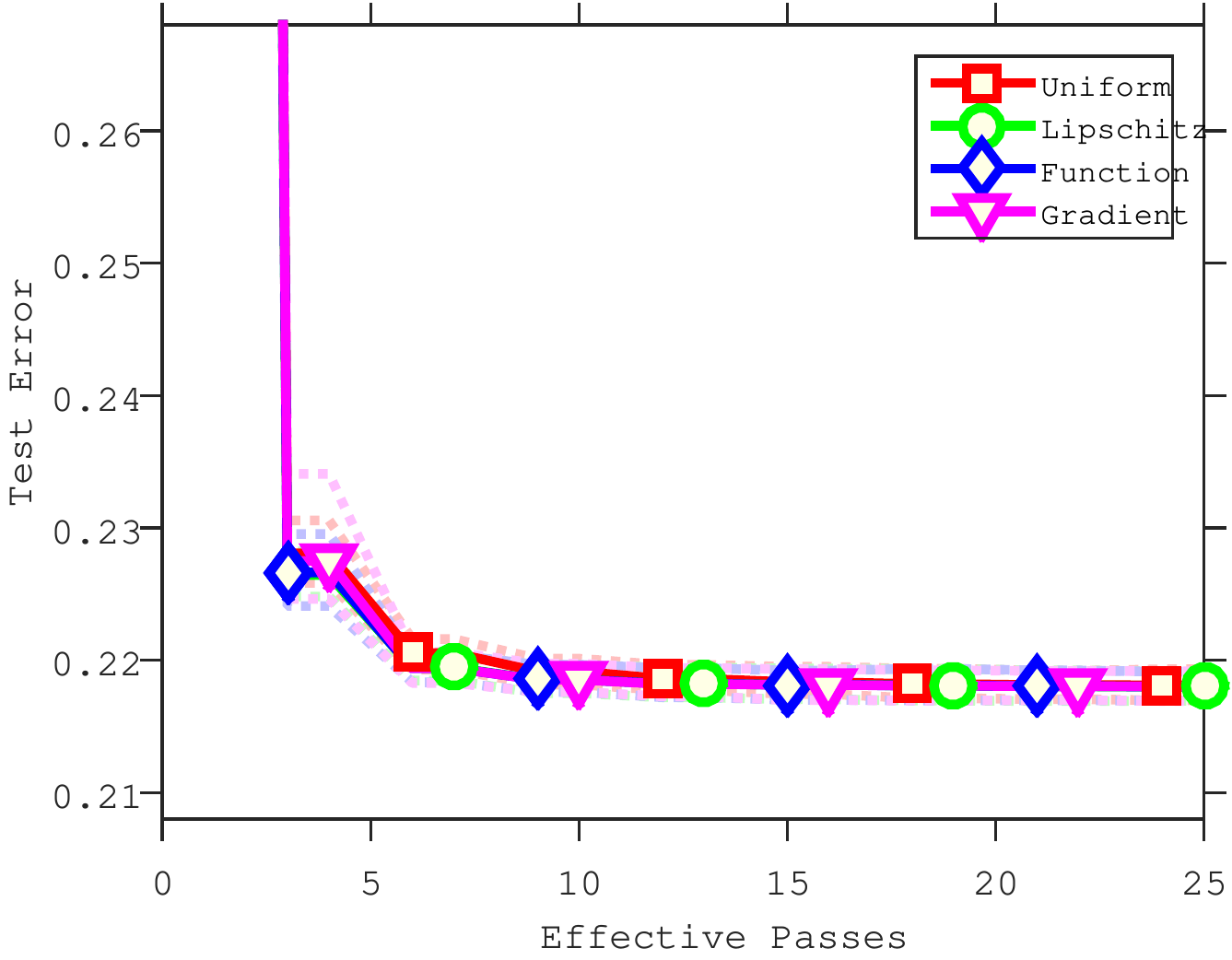}

\caption{Comparison of test error of  logistic regression with different mini-batch strategies.  The top row gives results on the \emph{quantum} (left), \emph{protein} (center) and \emph{sido} (right) datasets. The middle row gives results on the \emph{rcv11} (left), \emph{covertype} (center) and \emph{news} (right) datasets.  The bottom row gives results on the \emph{spam} (left), \emph{rcv1Full} (center), and \emph{alpha} (right) datasets.}
%\label{fig:6}
\end{figure*}

\bibliographystyle{ieeetr}
\bibliography{bib}

\end{document}